\documentclass[11pt,a4paper]{article}

\usepackage{amsthm,amsmath}
\usepackage{float} % my package to be loaded before hyperref
\usepackage[round]{natbib}
\usepackage[colorlinks,allcolors=blue]{hyperref}

% albert: start of our packages
\usepackage{amssymb}
\usepackage{amscd}
\usepackage{graphicx}
\usepackage{subfigure}

\usepackage{algpseudocode}
\usepackage{algorithm}
\makeatletter
\renewcommand{\ALG@beginalgorithmic}{\normalsize}
\makeatother

\usepackage{color}  % for \todo command
\usepackage{enumitem}  % for assumptions
\usepackage{authblk}
\usepackage{fullpage}
% albert: end of our packages 

% our definitions

\newcommand{\RR}{\mathbb{R}}

\newcommand{\roc}{{\rm  ROC}}
\newcommand{\mv}{{\rm  MV}}

\def\argmin{\mathop{\rm arg min}}
\def\supp{\mathop{\rm supp}}

\def\S{{\cal S}}

\def\X{{\cal X}}
\def\T{{\cal T}}

\newtheorem{theorem}{{\bf Theorem}}
\newtheorem{example}{{\bf Example}}
\newtheorem{proposition}{{\bf Proposition}}

\newtheorem{definition}{{\bf Definition}}
\newtheorem{corollary}{{\bf Corollary}}
\newtheorem{lemma}{{\bf Lemma}}
\newtheorem{remark}{{\bf Remark}}

% for assumptions
\newlist{assump}{enumerate}{10}
\setlist[assump]{label={\bf (A\arabic*)}}

\newlist{assump_boot}{enumerate}{10}
\setlist[assump_boot]{label={\bf (B\arabic*)}}

\newlist{assump_ml}{enumerate}{10}
\setlist[assump_ml]{label={\bf (C\arabic*)}}

% end of our definitions

\title{Mass Volume Curves and Anomaly Ranking}

\author[1]{Stephan Cl\'emen\c{c}on \thanks{stephan.clemencon@telecom-paristech.fr}}
\author[1,2]{Albert Thomas \thanks{albert.thomas@centraliens.net}}
\affil[1]{LTCI, T\'el\'ecom ParisTech, Universit\'e Paris Saclay, 75013, Paris, France}
\affil[2]{Airbus Group Innovations, 92150, Suresnes, France}

\date{}

\begin{document}

\maketitle

\begin{abstract}
This paper aims at formulating the issue of ranking multivariate unlabeled observations depending on their degree of abnormality as an unsupervised statistical learning task. In the 1-d situation, this problem is usually tackled by means of tail estimation techniques: univariate observations are viewed as all the more `abnormal' as they are located far in the tail(s) of the underlying probability distribution. It would be desirable as well to dispose of a scalar valued `scoring' function allowing for comparing the degree of abnormality of multivariate observations. Here we formulate the issue of scoring anomalies as a M-estimation problem by means of a novel functional performance criterion, referred to as the \textit{Mass Volume} curve (MV curve in short), whose optimal elements are strictly increasing transforms of the density almost everywhere on the support of the density. We first study the statistical estimation of the MV curve of a given scoring function and we provide a strategy to build confidence regions using a smoothed bootstrap approach. Optimization of this functional criterion over the set of piecewise constant scoring functions is next tackled. This boils down to estimating a sequence of empirical minimum volume sets whose levels are chosen adaptively from the data, so as to adjust to the variations of the optimal MV curve, while controling the bias of its approximation by a stepwise curve. Generalization bounds are then established for the difference in sup norm between the MV curve of the empirical scoring function thus obtained and the optimal MV curve.

\noindent \textbf{Keywords.} Anomaly ranking, Unsupervised learning, Bootstrap, M-estimation.
\end{abstract}

% albert: input sections
% !TEX root = mv_curves.tex

\section{Introduction} 
 
In a wide variety of applications, ranging from the monitoring of aircraft engines in aeronautics to non destructive control quality in the industry through fraud detection, network intrusion surveillance or system management in data centers (see for instance \citep{VCTWMS}), \textit{anomaly detection} is of crucial importance. In most common situations, anomalies correspond to `rare' observations and must be automatically detected based on an unlabeled dataset.
In practice, the very purpose of anomaly detection techniques is to rank observations by degree of abnormality/novelty, rather than simply assigning them a binary label, `abnormal' \textit{vs} `normal'.  In the case of univariate observations, abnormal values are generally those which are extremes, \textit{i.e.} `too large' or `too small' in regard to central quantities such as the mean or the median, and anomaly detection may then derive from standard tail distribution analysis: the farther in the tail the observation lies, the more `abnormal' it is considered. In contrast, it is far from easy to formulate the issue in a multivariate context. In the present paper, motivated by applications such as those aforementioned, we place ourselves in the situation where (unlabeled) observations take their values in a possibly very high-dimensional space, $\X\subset\mathbb{R}^d$ with $d\geq 1$ say, making approaches based on nonparametric density estimation or multivariate (heavy-) tail modeling hardly feasible, if not unfeasible, due to the curse of dimensionality. In this framework,  a variety of statistical techniques, relying on the concept of \textit{minimum volume set} investigated in the seminal contributions of \citet{EinmahlMason92} and \citet{Polonik97} (see Section~\ref{sec:background}), have been developed in order to split the feature space $\X$ into two halves and decide whether observations should be considered as `normal' (namely when lying in the {\it minimum volume} set $\Omega_{\alpha}\subset \X$ estimated on the basis of the dataset available) or not (when lying in the complementary set $\X\setminus \Omega_{\alpha}$) with a given confidence level $1-\alpha\in (0,1)$. One may also refer to \citep{ScottNowak06} and to \citep{Kolt97} for closely related notions. The problem considered here is of different nature, the goal pursued is not to assign to all possible observations a label `normal' \textit{vs} `abnormal', but to rank them according to their level of `abnormality'. The most natural way to define a preorder on the feature space $\X$ is to transport the natural order on the real line through some (measurable) \textit{scoring function} $s:\X\rightarrow \mathbb{R}_+$: the `smaller' the score $s(x)$, the more likely the observation $x$ is viewed as `abnormal'. This problem shall be here referred to as \textit{anomaly scoring}. It can be somehow related to the literature dedicated to \textit{statistical depth functions} in nonparametric statistics and operations research, see \citep{ZuoSerfling00} and the references therein. Such parametric functions are generally proposed \textit{ad hoc} in order to define a `center' for the probability distribution of interest and a notion of distance to the latter, \textit{e.g.} the concept of `center-outward ordering' induced by {\it halfspace depth} in \citep{Tukey75} or \citep{DonohoGasko}. The angle embraced in this paper is quite different, its objective is indeed twofold: 1) propose a performance criterion for the anomaly scoring problem so as to formulate it in terms of $M$-estimation 2) investigate the accuracy of scoring rules which optimize empirical estimates of the criterion thus tailored.

Due to the global nature of the ranking problem, the criterion we promote is \textit{functional}, just like the \textit{Receiver Operating Characteristic} (ROC) and {\it Precision-Recall} curves in the supervised ranking setup (\textit{i.e.} when a binary label, \textit{e.g.} `normal' \textit{vs} `abnormal', is assigned to the sampling data), and shall be referred to as the \textit{Mass Volume curve} ({\sc MV} curve in abbreviated form). The latter induces a partial preorder on the set of scoring functions: the collection of optimal elements is defined as the set of scoring functions whose {\sc MV} curve is minimum everywhere. Such optimal scoring functions are proved to coincide to strictly increasing transforms of the underlying probability density almost everywhere on the support of this underlying probability density (see Section~\ref{sec:problem} for the exact definition). In the unsupervised setting, $\mv$ curve analysis is shown to play a role quite comparable to that of $\roc$ curve analysis for supervised anomaly detection. The issue of estimating the $\mv$ curve of a given scoring function based on sampling data is then tackled and a smooth bootstrap method for constructing confidence regions is analyzed. A statistical methodology to build a nearly optimal scoring function is next described, which works as follows: first, a piecewise constant approximant of the optimal $\mv$ curve is estimated by solving a few minimum volume set estimation problems where confidence levels are chosen adaptively from the data to adjust to the variations of the optimal $\mv$ curve; second, a piecewise constant scoring function is built based on the sequence of estimated minimum volume sets. The $\mv$ curve of the scoring rule thus produced can be related to a stepwise approximant of the (unknown) optimal $\mv$ curve, which permits to establish the generalization ability of the algorithm through rate bounds in terms of $\sup$ norm in the $\mv$ space.
\par The rest of the article is structured as follows. Section~\ref{sec:background} describes the mathematical framework, sets out the main notations and recalls the crucial notions related to anomaly/novelty detection on which the analysis carried out in the paper relies. Section~\ref{sec:problem} first provides an informal description of the anomaly scoring problem and then introduces the $\mv$ curve criterion, dedicated to evaluate the performance of any scoring function. The set of optimal elements is described and statistical results related to the estimation of the $\mv$ curve of a given scoring function are stated in Section~\ref{sec:stat}. Statistical learning of an anomaly scoring function is then formulated as a functional $M$-estimation problem in Section~\ref{sec:form}, while Section~\ref{sec:learning} is devoted to the study of a specific algorithm for the design of nearly optimal anomaly scoring functions. Technical proofs are deferred to the Appendix section.

We finally point out that a very preliminary version of this work has been presented in the conference AISTATS 2013 (see \citep{AISTATS13}). The present article gives a better characterization of the optimal scoring functions and investigates much deeper the statistical assessment of the performance of a given scoring function. In particular, it shows how to construct confidence regions for the $\mv$ curve of a scoring function in a computationally feasible fashion, using the (smooth) bootstrap methodology for which we state a consistency result. This consistency result gives a rate of convergence which promotes the use of a smooth bootstrap approach, rather than a naive bootstrap technique. Additionally, in Section~\ref{sec:learning}, the confidence levels of the minimum volume sets to be estimated are chosen in a data-driven way (instead of considering a regular subdivision of the interval $[0,1]$ fixed in advance), giving to the statistical estimation and learning procedures proposed in this paper crucial adaptivity properties, with respect to the (unknown) shape of the optimal $\mv$ curve. Eventually, we give an example showing that the nature of the problem tackled here is very different than that of density estimation and we also give a simpler formula of the derivative of the optimal $\mv$ curve (that of the underlying density) compared to the one originally given in \citep{AISTATS13}.
% !TEX root = mv_curves.tex

\section{Background and Preliminaries} \label{sec:background}
As a first go, we start off with describing the mathematical setup and recalling key concepts in anomaly detection involved in the subsequent analysis.
 
\subsection{Framework and Notations} \label{sec:notations}
 
Here and throughout, we suppose that we observe independent and identically distributed realizations $X_1,\ldots,X_n$ of an unknown continuous probability distribution $F(dx)$, copies of a generic random variable $X$, taking their values in a (possibly very high dimensional) feature space $\mathcal{X}\subset\mathbb{R}^d$, with $d\geq 1$. The density of the random variable $X$ with respect to $\lambda(dx)$, the Lebesgue measure on $\mathbb{R}^d$, is denoted by $f$, its support $\{x \in \mathcal{X}, \, f(x) > 0 \}$ by $\supp(f)$ and the indicator function of any event $\mathcal{E}$ by $\mathbb{I}\{\mathcal{E}\}$. For any set $\mathcal{Z} \subset \mathcal{X}$, its complementary is denoted by $\overline{\mathcal{Z}} = \mathcal{X} \setminus \mathcal{Z}$. The sup norm of any real valued function $g: \mathbb{R}^d \rightarrow \mathbb{R}$ is denoted by $\Vert g \Vert_{\infty}$ and the Dirac mass at any point $a$ by $\delta_a$. The notation $O_{\mathbb{P}}(1)$ is used to mean boundedness in probability.
The quantile function $H^{\dag}$ of any real valued random variable $Z$ with cumulative distribution function $H(\cdot) = \mathbb{P}(Z \leq \cdot)$ is defined by $H^{\dag}(\alpha)=\inf\{t\in \RR: H(t)\geq \alpha\}$ for all $\alpha \in (0,1)$. For any real valued random variable $Z$, the generalized inverse $G^{-1}$ of the decreasing function $G(\cdot) = \mathbb{P}(Z \geq \cdot)$ is defined by $G^{-1}(\alpha)=\inf\{t\in \RR: G(t)\leq \alpha\}$ for all $\alpha \in (0,1)$.
For any function $s : \mathcal{X} \rightarrow \RR$, $F_s$ denotes the cumulative distribution function of the random variable $s(X)$. In addition, for any $\alpha\in (0,1)$, $Q(s,\alpha)= F_s^{\dag}(1-\alpha)$ denotes the quantile at level $1-\alpha$ of the distribution of $s(X)$. We also set $Q^*(\alpha)=Q(f,\alpha)$ for all $\alpha\in (0,1)$. Finally, constants in inequalities are denoted by either $C$ or $c$ and may vary at each occurrence.

A natural way of defining preorders on $\X$ is to map its elements onto $\RR_+$ and use the natural order on the real half-line.
\begin{definition}\label{def:scoring_function}{\sc (Scoring function)}
A scoring function is any measurable function $s : \mathcal{X} \rightarrow \mathbb{R}_+$ that is integrable with respect to the Lebesgue measure.
\end{definition}
The set of all scoring functions is denoted by $\mathcal{S}$ and we denote the level sets of any scoring function $s \in \mathcal{S}$ by:
\begin{equation*}
\Omega_{s,t}=\{x\in \X:\;\; s(x)\geq t\},\;\; t\in [-\infty,\;+\infty] \, .
\end{equation*}
Observe that the family is decreasing (for the inclusion, as $t$ increases from $-\infty$ to $+\infty$):
$$
\forall (t,t')\in \RR^2,\;\  t\geq t'\Rightarrow \Omega_{s,t}\subset \Omega_{s,t'}
$$
and that $\lim_{t\rightarrow +\infty}\Omega_{s,t}=\emptyset$ and $\lim_{t\rightarrow -\infty}\Omega_{s,t}=\X$\footnote{Recall that a sequence $(A_n)_{n\geq 1}$ of subsets of an ensemble $E$ is said to converge if and only if $\lim\sup A_n=\lim\inf A_n$. In such a case, one defines its limit, denoted by $\lim A_n$ as $\lim\sup A_n=\lim\inf A_n$.}.
 
The following quantities shall also be used in the sequel. For any scoring function $s$ and threshold level $t\geq 0$, define:
\begin{equation*}
\alpha_s(t)=\mathbb{P}\{s(X)\geq t\}\text{ and }
\lambda_s(t) =\lambda\left( \{x\in \X:\;\; s(x)\geq t\} \right) \, .
\end{equation*}
The quantity $ \alpha_s(t)$ is referred to as the \textit{mass} of the level set $\Omega_{s,t}$, while $\lambda_s(t)$ is generally termed the \textit{volume} (with respect to the Lebesgue measure).

\begin{remark} The integrability of a scoring function (see Definition \ref{def:scoring_function}) implies that the volumes are finite on $\RR_+^*$: for any $s \in \mathcal{S}$,
\begin{equation*}
\forall t> 0, \quad \lambda_s(t)=\int_{\X}\mathbb{I}\{x, s(x) \geq t\}dx \leq \int_{\X}\frac{s(x)}{t}dx<+\infty \, .
\end{equation*}
\end{remark}

Incidentally, we point out that, in the specific case $s=f$, the set $\Omega_{f,t} = \{x \in \mathcal{X}, f(x) \geq t \}$ is the \textit{density contour cluster} of the density function $f$ at level $t$, see \citep{Polonik95} for instance. Such a set is also referred to as a \textit{density level set}. Observe in addition that, using the terminology introduced in \citep{LPS99}, $\Omega_{s,t}$ is the \textit{region enclosed by the contour of depth} $t$ when $s$ is a depth function.

% Reciprocally, for any (total) preorder $\preceq$ on $\X$, one may define a scoring function $s$ such that $\preceq$ coincides with $\preceq_s$. Indeed, for all $x\in \X$, consider the (supposedly measurable) set $\Omega_{\preceq}(x)=\{x'\in\X:\;\; x'\preceq x \}$. Then, for any finite positive measure $\mu(dx)$ with $\X$ as support, define the scoring function $s_{\mu,\;\preceq}(x)=\int_{x'\in \X}\mathbb{I}\{x'\in \Omega_{\preceq}(x)\}\mu(dx')$. It is immediate to check that $s_{\mu,\;\preceq}$ induces the preorder $\preceq$ on $\X$.

% \begin{figure}[h] 
% \begin{center}
% \vspace{-0.5cm} 
% \includegraphics[width=8cm]{Level_set.pdf}
%\vspace{-0.5cm} 
%\caption{Level set} 
%\end{center}
%\end{figure} 

\subsection{Minimum Volume Sets}\label{subsec:mv_set}
The notion of minimum volume sets has been introduced and investigated in the seminal contributions of \citet{EinmahlMason92} and \citet{Polonik97} in order to describe regions where a multivariate random variable $X$ takes its values with highest/smallest probability.  Let $\alpha\in (0,1)$, a minimum volume set $\Omega^*_{\alpha}$ of mass at least $\alpha$ is any solution of the constrained minimization problem
\begin{equation}\label{eq:mvpb}
\min_{\Omega}\lambda(\Omega) \text{ subject to } \mathbb{P}\{X\in \Omega\}\geq \alpha \, ,
\end{equation}
where the minimum is taken over all measurable subsets $\Omega$ of $\X$.
Application of this concept includes in particular novelty/anomaly detection: for large values of $\alpha$, abnormal observations (outliers) are those which belong to the complementary set $\X\setminus\Omega^*_{\alpha}$ . In the continuous setting, it can be shown that there exists a threshold value equal to $Q^*(\alpha)$ such that the level set $\Omega_{f,Q^*(\alpha)}$ is a solution of the constrained optimization problem above.
The (generalized) quantile function is then defined by:
\begin{equation*}
\forall \alpha\in (0,1),\;\; \lambda^*(\alpha)\overset{def}{=}\lambda(\Omega^*_{\alpha}) \, .
\end{equation*}
The definition of $\lambda^*$ can be extended to the interval $[0,1]$ by setting $\lambda^*(0)=0$ and $\lambda^*(1) = \lambda(\supp(f)) \leq \infty$.
The following assumptions shall be used in the subsequent analysis.

\begin{assump}
\item \label{as:bounded_density} The density $f$ is bounded: $\Vert f \Vert_{\infty}<+\infty$.
\item \label{as:flat_parts} The density $f$ has no flat parts, \textit{i.e.} for any $t > 0$,
\begin{equation*}
\mathbb{P}\{f(X)=t\}=0 \, .
\end{equation*}
\end{assump}

Under the assumptions above, for any $\alpha\in (0,1)$, there exists a unique minimum volume set $\Omega^*_{\alpha}$ (equal to $\Omega_{f,Q^*(\alpha)}$ up to subsets of null $\lambda$-measure), whose mass is equal to $\alpha$ exactly. Additionally, the mapping $\lambda^*$ is continuous on $(0,1)$ and uniformly continuous on $[0,1-\varepsilon]$ for all $\varepsilon \in (0,1)$ (when the support of $F(dx)$ is compact, uniform continuity holds on the whole interval $[0,1]$).
 
From a statistical perspective, estimates $\widehat{\Omega}_{\alpha}$ of minimum volume sets are built by replacing the unknown probability distribution $F$ by its empirical version $\widehat F=(1/n)\sum_{i=1}^n\delta_{X_i}$ and restricting optimization to a collection $\mathcal{A}$ of borelian subsets of $\X$, supposed rich enough to include all density level sets (or reasonable approximations of the latter). In \citep{Polonik97}, functional limit results are derived for the generalized empirical quantile process $\{\lambda(\widehat{\Omega}^*_{\alpha})-\lambda^*(\alpha)\}$ under certain assumptions for the class $\mathcal{A}$ (stipulating in particular that $\mathcal{A}$ is a Glivenko-Cantelli class for $F(dx)$). In \citep{ScottNowak06}, it is proposed to replace the level $\alpha$ by $\alpha-\phi_n$ where $\phi_n$ plays the role of tolerance parameter. The latter should be chosen of the same order as the supremum $\sup_{\Omega\in \mathcal{A}}\vert \widehat F(\Omega)-F(\Omega) \vert$ roughly, complexity of the class $\mathcal{A}$ being controlled by the {\sc VC} dimension or by means of the concept of Rademacher averages, so as to establish rate bounds at $n<+\infty$ fixed. 
 
Alternatively, so-termed \textit{plug-in} techniques, consisting in computing first an estimate $\widehat{f}$ of the density $f$ and considering next level sets $\Omega_{\widehat{f}, t}$ of the resulting estimator have been investigated in several papers, among which \citep{Cavalier1997,Tsybakov97,Rigollet09,Cadre2006,Cadre2013}. Such an approach however yields significant computational issues even for moderate values of the dimension, inherent to the curse of dimensionality phenomenon.
% !TEX root = mv_curves.tex

 \section{Ranking Anomalies}\label{sec:problem}
 In this section, the issue of scoring observations depending on their level of novelty/abnormality is first described in an informal manner and next formulated quantitatively, as a functional optimization problem, by means of a novel concept, termed the \textit{Mass Volume curve}.

 \subsection{Overall Objective}
 
The problem considered in this article is to learn a scoring function $s$, based on training data $X_1,\ldots,X_n$, so as to describe the extremal behavior of the (high-dimensional) random vector $X$ by that of the univariate variable $s(X)$, which can be summarized by its tail behavior near $0$: hopefully, the smaller the score $s(x)$, the more abnormal/rare the observation $x$ should be considered. Hence, an optimal scoring function should naturally permit to rank observations $x$ by increasing order of magnitude of $f(x)$ (see Section~\ref{sec:functional_criterion} for the precise definition of the set of optimal scoring functions). The preorder on $\X$ induced by such an optimal scoring function could then be used to rank observations by their degree of abnormality: for any optimal scoring function $s$, an observation $x'$ is more abnormal than an observation $x$ if and only if $s(x') < s(x)$.

\subsection{A Functional Criterion: the Mass Volume Curve}
\label{sec:functional_criterion}
We now introduce the concept of {\sc Mass Volume} curve and shows that it is a natural criterion to evaluate the accuracy of decision rules in regard to anomaly scoring.

% \begin{definition}\label{def:mv}{\sc (True Mass Volume curve)} Let $s\in\S$. Its Mass Volume curve ({\sc MV} curve in abbreviated form) with respect to the probability distribution of the random variable $X$ is defined as the plot of the mapping
% \begin{equation*}
% \mv_s:\;\; \alpha\in (0,1)\mapsto \mv_s(\alpha)\overset{def}{=}\lambda_s \circ \alpha_s^{-1}(\alpha).
% \end{equation*}
% If $supp F$ is of infinite Lebesgue measure, one sets $\mv_s(1)=+\infty$ and $\mv_s(1)=\lambda(supp F)$ otherwise.
% \end{definition}

% If $\alpha_s \circ \alpha_s^{-1}(\alpha) = \alpha$, the {\sc MV} curve of the scoring function $s$ can also be viewed as the parametric curve
% \begin{equation*}
% t\in \RR_+\mapsto \left( \alpha_s(t), \lambda_s(t) \right)\in [0,1]\times[0,+\infty) \, .
% \end{equation*}
% Indeed, setting $t = \alpha_s^{-1}(\alpha)$, we have $\alpha_s(t)=\alpha_s(\alpha_s^{-1}(\alpha))=\alpha$ and $\lambda_s(t) = \lambda_s(\alpha_s^{-1}(\alpha))$. {\todo check this}

\begin{definition}\label{def:mv}{\sc (True Mass Volume curve)} Let $s\in\S$. Its Mass Volume curve ($\mv$ curve in abbreviated form) with respect to the probability distribution of the random variable $X$ is the plot of the function
\begin{equation*}
\mv_s:\;\; \alpha\in (0,1)\mapsto \mv_s(\alpha)\overset{def}{=}\lambda_s \circ \alpha_s^{-1}(\alpha) \, .
\end{equation*}
% If $supp F$ is of infinite Lebesgue measure, one sets $\mv_s(1)=+\infty$ and $\mv_s(1)=\lambda(supp F)$ otherwise.
If the scoring function $s$ is upper bounded, $\alpha_s^{-1}(0)$ exists and $\mv_s$ is defined on $[0,1)$.
\end{definition}

\begin{remark}{\sc (Parametric Mass Volume Curve)}
If $\alpha_s$ is invertible, and therefore $\alpha_s^{-1}$ is the ordinary inverse of $\alpha_s$, the Mass Volume curve of the scoring function $s$ can also be defined as the parametric curve:
\begin{equation*}
t\in \RR_+\mapsto \left( \alpha_s(t), \lambda_s(t) \right)\in [0,1)\times[0,+\infty) \, .
\end{equation*}
\end{remark}

\begin{figure}[htb]
\centering
\subfigure[Scoring functions]{%
\includegraphics[width=0.48\columnwidth]{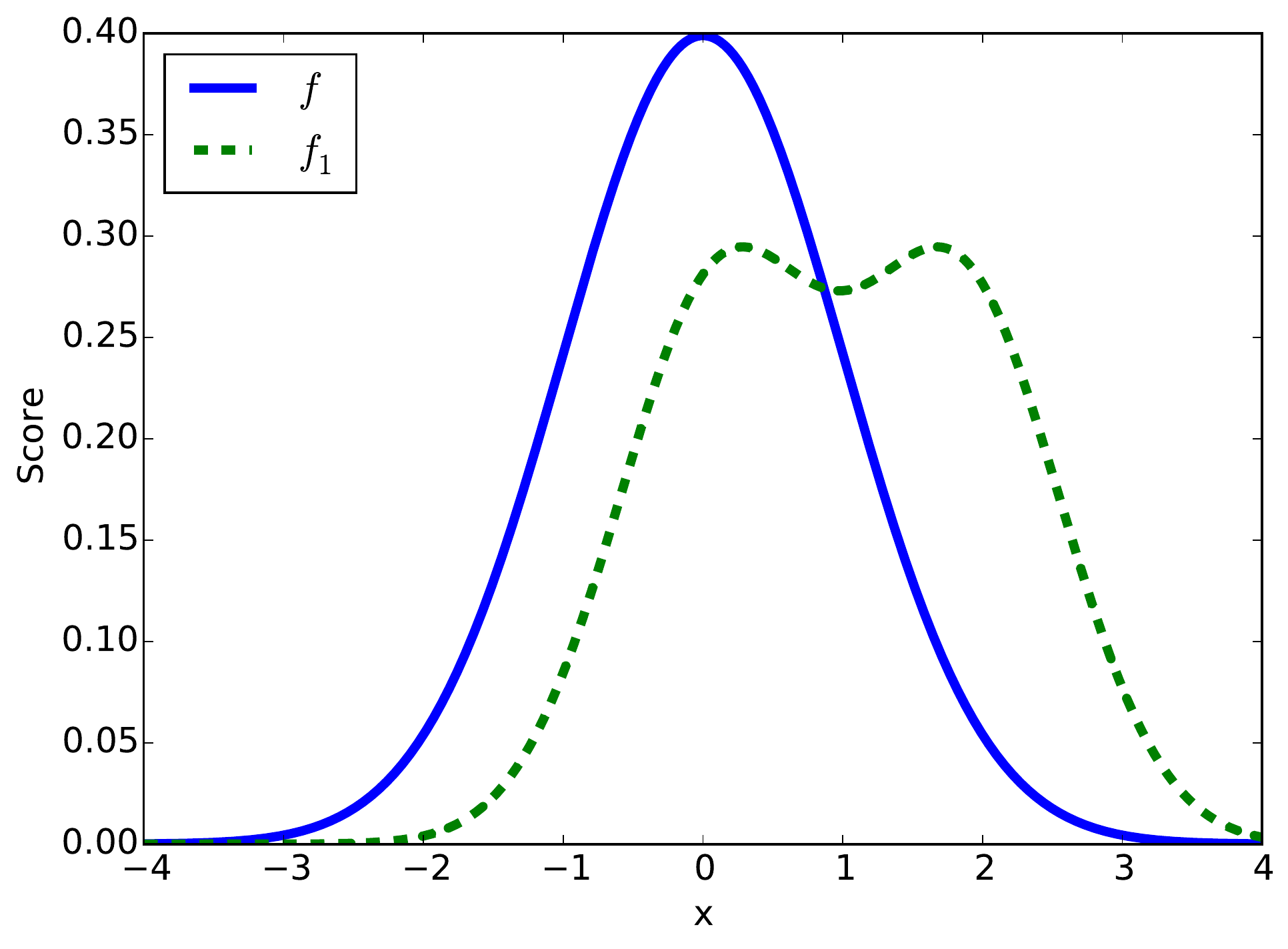}
\label{fig:examples_1}}
\subfigure[Mass Volume curves]{%
\includegraphics[width=0.48\columnwidth]{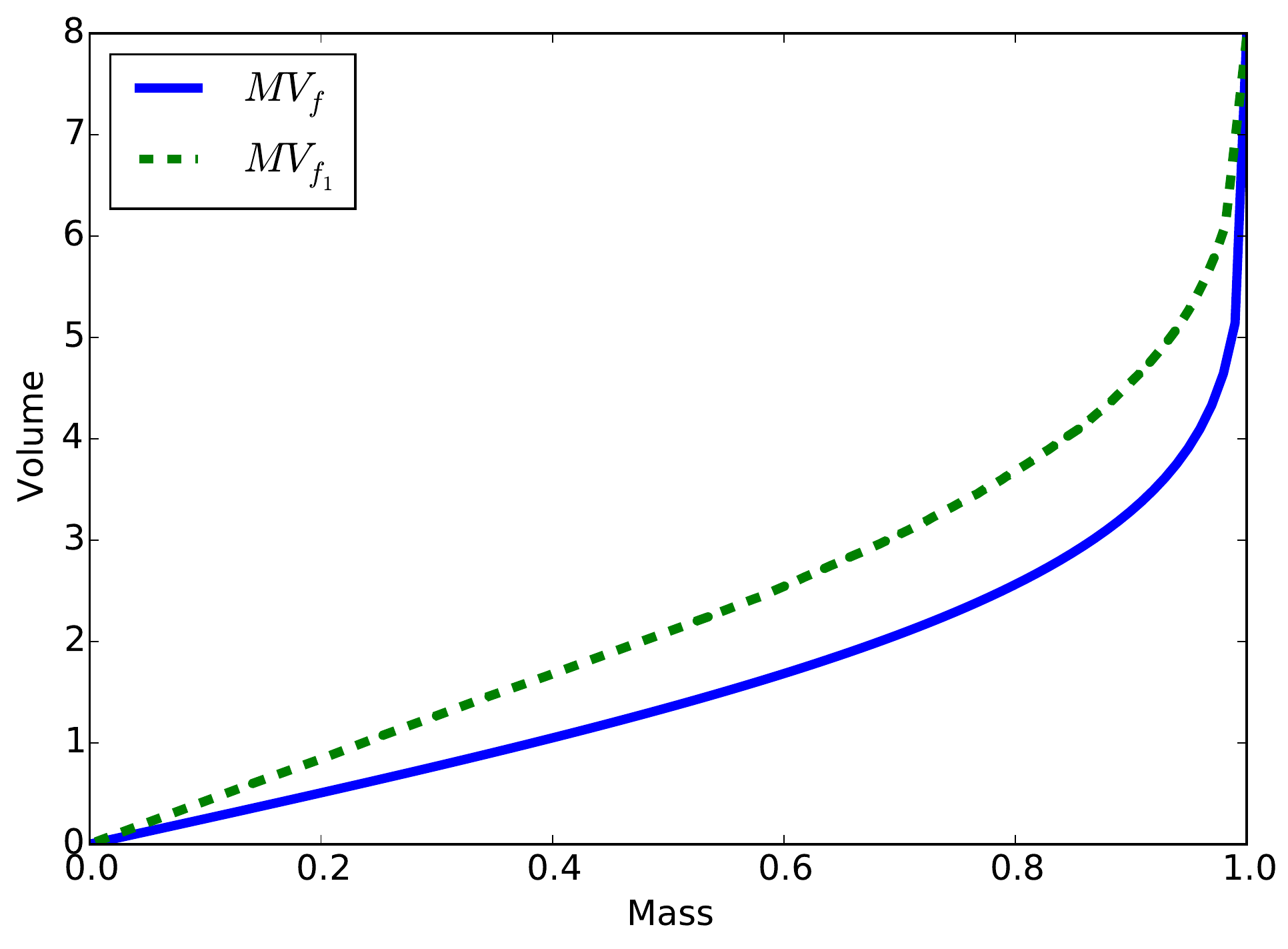}
\label{fig:examples_2}}
\caption{Scoring functions and their associated Mass Volume curves when considering a truncated Gaussian distribution with density $f$.}
\label{fig:mv_curves_examples}
\end{figure}

%  \begin{figure}[h]
%  \begin{center}
%  \vspace{-1cm} 
%  \includegraphics[height=14cm, width=14cm]{MVcurves.pdf}
% \vspace{-6cm} 
% \caption{Mass Volume curves} \label{fig:mv}
% \end{center}
% \end{figure} 

\begin{remark}{\sc (Connections to ROC/concentration analysis)}\label{rk:ROC}
We point out that the curve $\alpha \in (0,1) \mapsto 1 - \lambda_s\circ \alpha_s^{-1}(1-\alpha)$ resembles a receiver operating characteristic (ROC) curve of the test/diagnostic function $s$ (see \citep{Ega75}), except that the distribution under the alternative hypothesis is not a probability measure but the image of Lebesgue measure on $\X$ by the function $s$, while the distribution under the null hypothesis is the probability distribution of the random variable $s(X)$. Hence, the curvature of the $\mv$ graph somehow measures the extent to which the spread of the distribution of $s(X)$ differs from that of a uniform distribution. Observe also that, in the case where the support of $F(dx)$ coincides with the unit square $[0,1]^d$, the $\mv$ curve of any scoring function $s\in \mathcal{S}$ corresponds to the concentration function $(GF^{-1})_{\mathcal{C}_{\mathcal{S}}}$ introduced in \citep{Polonik99}, where $G(dx)$ is the uniform probability distribution on $[0,1]^d$ and $\mathcal{C}_{\mathcal{S}}=\{\Omega_{s,t}:\; t\geq 0  \}$, to compare the concentration of $F(dx)$ with that of $G(dx)$ with respect to the class of sublevel sets of the function $s$. Notice finally that, when $s$ is a depth function, $\mv_s$ is a scale curve, as defined in \citep{LPS99}.
\end{remark}

This functional criterion induces a partial order over the set of all scoring functions. Let $s_1$ and $s_2$ be two scoring functions on $\X$, the ordering provided by $s_1$ is better than that induced by $s_2$ when
$$
\forall \alpha \in (0,1),\;\; \mv_{s_1}(\alpha)\leq \mv_{s_2}(\alpha) \, .
$$
Typical $\mv$ curves are illustrated in Fig.~\ref{fig:mv_curves_examples}. A desirable $\mv$ curve increases slowly and rises near $1$, just like the lowest curve of Fig.~\ref{fig:mv_curves_examples}. This corresponds to the situation where the distribution of the random variable $X$ is much concentrated around its modes and the highest values (respectively, the lowest values) taken by $s$ are located near the modes of $F(dx)$ (respectively, in the tail region of $F(dx)$). The $\mv$ curve of the scoring function $s$ is then close to the right lower corner of the Mass Volume space.
We point out that, in certain situations, some parts of the {\sc MV} curve may be of interest solely, corresponding to large values of $\alpha$ when focus is on extremal observations (the tail region of the random variable $X$) and to small values of $\alpha$ when modes of the underlying distribution are investigated.

We define the set of optimal scoring functions $\mathcal{S}^*$ as follows. We recall here that for any set $\mathcal{Z} \subset \mathcal{X}$, its complementary is denoted by $\overline{\mathcal{Z}} = \mathcal{X} \setminus \mathcal{Z}$.
\begin{definition}
The set of optimal scoring functions $\mathcal{S}^*$ is the set of scoring functions $s \in \mathcal{S}$ such that there exist $\mathcal{Z} \subset \mathcal{X}$ and a function $T: Im f \rightarrow \mathbb{R}^+$ such that
\begin{enumerate}
\item[(i)] $\lambda(\overline{\mathcal{Z}} \cap \supp(f)) = 0$,
\item[(ii)] For all $x \in \mathcal{Z} \cap \supp(f)$, $s(x) = T \circ f(x)$,
\item[(iii)] $T_{\vert f(\mathcal{Z} \cap \supp(f))}$ is strictly increasing, 
\item[(iv)] For $\lambda$-almost all $x \in \overline{\supp(f)}$, $F_s(s(x)) = 0$.
\end{enumerate}
\end{definition}

One can note that, as expected, the density $f$ and strictly increasing transforms of the density belong to $\mathcal{S}^*$. More generally and roughly speaking, an optimal scoring function $s \in \mathcal{S^*}$ is a strictly increasing transform of the density $\lambda$-almost everywhere on the support of the density. Note also that on the one hand, if $a_s = \sup\{t, F_s(t) = 0 \}$, thanks to $(i)$, $(ii)$ and $(iii)$, we have $F_s(s(x)) = F_f(f(x)) > 0$ for almost all $x \in \supp(f)$ (because $\supp(f) = \{x, F_f(f(x)) > 0\}$ up to subsets of null $\lambda$ measure as shown in Lemma \ref{lem:support_z_f}). Therefore $s \geq a_s$ $\lambda$-almost everywhere on $\supp(f)$. On the other hand, thanks to $(iv)$, $F_s(s(x)) = 0$ for almost all $x \in \overline{\supp(f)}$ and therefore $s \leq a_s$ $\lambda$-almost everywhere on $\overline{\supp(f)}$. Thus the values of $s$ on $\overline{\supp(f)}$ are $\lambda$-almost everywhere lower than the values of $s$ on $\supp(f)$, where $s$ is almost everywhere a strictly increasing transform of the density $f$.

The result below shows that optimal scoring functions are those whose $\mv$ curves are minimum everywhere.

\begin{proposition} \label{prop:opt} {\sc (Optimal $\mv$ curve)} Let assumptions \ref{as:bounded_density} and \ref{as:flat_parts} be fulfilled.
The elements of the class $\S^*$ have the same $\mv$ curve, equal to $\mv_f$, and provide the best possible ordering of $\X$'s elements in regard to the $\mv$ curve criterion:
\begin{equation}\label{eq:opt}
s^* \in \mathcal{S}^* \iff \forall s \in \S \text{ and } \forall \alpha \in (0,1), \, \mv_{s^*}(\alpha)\leq \mv_s(\alpha) \, .
\end{equation}
In addition, we have: $\forall s \in \S$ and $\forall \alpha \in  (0,1)$, 
\begin{equation}\label{eq:bound1}
\mv_{s}(\alpha)-\mv^*(\alpha)\leq \lambda\left( \Omega^*_{\alpha} \Delta \Omega_{s,Q(s,\alpha)}\right) \, ,
\end{equation}
where $\mv^*(\alpha)=\mv_f(\alpha)$ for all $\alpha\in (0,1)$ and where $\Delta$ denotes the symmetric difference.
\end{proposition}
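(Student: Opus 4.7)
The proof hinges on identifying $\mv_s(\alpha)$ as the $\lambda$-volume of a specific admissible set in the minimum volume program \eqref{eq:mvpb}. Since the non-increasing map $t \mapsto \alpha_s(t)$ is left-continuous, one has $\alpha_s(\alpha_s^{-1}(\alpha)) \geq \alpha$, so $\Omega_{s,\alpha_s^{-1}(\alpha)}$ has $F$-mass at least $\alpha$. Hence
\[
\mv_s(\alpha) = \lambda(\Omega_{s,\alpha_s^{-1}(\alpha)}) \geq \lambda(\Omega^*_\alpha) = \mv^*(\alpha), \qquad \forall s \in \mathcal{S},\ \forall \alpha \in (0,1),
\]
and under \textbf{(A1)}--\textbf{(A2)} the recall from Section~\ref{subsec:mv_set} gives $\Omega^*_\alpha = \Omega_{f,Q^*(\alpha)}$ up to a $\lambda$-null set, whence $\mv_f \equiv \mv^*$.

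For the forward direction of \eqref{eq:opt}, I would show $\mv_{s^*} \equiv \mv^*$ for every $s^* \in \mathcal{S}^*$. Conditions (i)--(iii) in the definition of $\mathcal{S}^*$ give $s^* = T \circ f$ on $\supp(f)$ up to a $\lambda$-null set with $T$ strictly increasing on $f(\mathcal{Z} \cap \supp(f))$, while (iv) combined with the observation recorded just after the definition makes $s^* \leq a_{s^*}$ $\lambda$-a.e. on $\overline{\supp(f)}$ and $s^* \geq a_{s^*}$ $\lambda$-a.e. on $\supp(f)$. Since $\alpha_{s^*}^{-1}(\alpha) > a_{s^*}$ for every $\alpha \in (0,1)$ (because $\alpha_{s^*}(t) = 1$ for $t \leq a_{s^*}$), the comonotonicity of $s^*$ and $f$ on $\supp(f)$ together with the domination on the complement force $\Omega_{s^*,\alpha_{s^*}^{-1}(\alpha)}$ to coincide with $\Omega^*_\alpha$ up to a $\lambda$-null set, whence $\mv_{s^*}(\alpha) = \mv^*(\alpha)$.

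Conversely, if $\mv_s \equiv \mv^*$, each $\Omega_{s,\alpha_s^{-1}(\alpha)}$ is an admissible set realising the minimum volume and therefore, by the uniqueness statement in Section~\ref{subsec:mv_set}, coincides with $\Omega^*_\alpha$ up to a $\lambda$-null set. Running this through a countable dense set of levels and exploiting the monotone nesting of level sets, I would recover $s = T \circ f$ $\lambda$-a.e. on $\supp(f)$ for some strictly increasing $T$, which delivers (i)--(iii); condition (iv) then follows because each minimum volume set is contained in $\supp(f)$ up to a null set, forcing $s \leq a_s$ $\lambda$-a.e. on $\overline{\supp(f)}$. This converse is the principal obstacle of the proof, as extracting the pointwise increasing-transform structure from the family of set equalities requires care, especially at levels where $\alpha_s$ may jump.

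Finally, for \eqref{eq:bound1}, the definitions yield $\alpha_s^{-1}(\alpha) \geq Q(s,\alpha)$ (since $\alpha_s(t) \geq \alpha_s(t+)$ and $F_s(\cdot) = 1 - \alpha_s(\cdot +)$), so $\Omega_{s,\alpha_s^{-1}(\alpha)} \subseteq \Omega_{s,Q(s,\alpha)}$ and $\mv_s(\alpha) \leq \lambda(\Omega_{s,Q(s,\alpha)})$. The elementary set identity
\[
\lambda(\Omega_{s,Q(s,\alpha)}) - \lambda(\Omega^*_\alpha) = \lambda(\Omega_{s,Q(s,\alpha)} \setminus \Omega^*_\alpha) - \lambda(\Omega^*_\alpha \setminus \Omega_{s,Q(s,\alpha)}) \leq \lambda(\Omega^*_\alpha \Delta \Omega_{s,Q(s,\alpha)})
\]
then delivers the stated bound.
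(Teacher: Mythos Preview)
Your plan is correct and follows essentially the same route as the paper: admissibility of $\Omega_{s,Q(s,\alpha)}$ in the minimum volume program gives $\mv^*\le\mv_s$, properties (i)--(iv) force the level sets of $s^*\in\mathcal{S}^*$ to coincide with those of $f$, and the converse is obtained by invoking uniqueness of minimum volume sets at a countable dense family of levels, then extending by monotone nesting to recover the increasing transform and condition (iv). Two minor remarks: the paper shows directly that $\alpha_s^{-1}(\alpha)=Q(s,\alpha)$, so your extra inequality step for \eqref{eq:bound1} is unnecessary; and the paper streamlines the level-set arguments via the identity $\{s(x)\ge F_s^{\dag}(1-\alpha)\}=\{F_s(s(x))\ge 1-\alpha\}$, which reduces the forward direction to proving $F_{s^*}(s^*(x))=F_f(f(x))$ $\lambda$-a.e. on $\supp(f)$.
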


From now on we will thus use $\mv^*=\mv_f$ to denote the $\mv$ curve of elements of $\mathcal{S}^*$. The proof of Proposition \ref{prop:opt} can be found in Appendix~\ref{sec:appendix_proofs}. Incidentally, notice that, equipped with the notations introduced in \ref{subsec:mv_set}, we have $\lambda^*(\alpha)=\mv^*(\alpha)$ for all $\alpha\in (0,1)$. We also point out that bound \eqref{eq:bound1} reveals that the pointwise difference between the optimal $\mv$ curve and that of a scoring function candidate $s$ is controlled by the error made in recovering the specific minimum volume set $\Omega^*_{\alpha}$ through $\Omega_{s,Q(s,\alpha)}$.

\begin{remark}
In the framework we develop, anomaly scoring boils down to recovering the decreasing collection of all level sets of the density function $f$, $\{\Omega^*_{\alpha}:\; \alpha\in (0,1)\}$, without necessarily disposing of the corresponding levels. Indeed, one may check that any scoring function of the form
\begin{equation}\label{eq:form_opt}
s^*(x)=\int_0^1\mathbb{I}\{x\in \Omega^*_{\alpha}\}d\mu(\alpha) \, ,
\end{equation}
where $\mu(d\alpha)$ is an arbitrary finite positive measure dominating the Lebesgue measure on $(0,1)$, belongs to $\S^*$. Observe that $F_f\circ f$ corresponds to the case where $\mu$ is chosen to be the Lebesgue measure on $(0,1)$ in Eq. \eqref{eq:form_opt}. 
 The anomaly scoring problem can be thus cast as an overlaid collection of minimum volume set estimation problems. This observation shall turn out to be very useful when designing practical statistical learning strategies in Section~\ref{sec:learning}.
\end{remark}

We point out that the optimal $\mv$ curve  provides a measure of the mass concentration of the random variable $X$: the lower the curve $\mv^*$, the more concentrated the distribution $f(x)\lambda(dx)$. Indeed, the \textit{excess mass functional} introduced in \citep{MS91} can be expressed as $t\geq 0 \mapsto E(t)= \alpha_f(t)-t\lambda_f(t)$. Hence, we have $E(Q^*(\alpha))=\alpha-Q^*(\alpha)MV^*(\alpha)$ for all $\alpha\in (0,1)$. One may refer to \citep{Polonik95} for results related to the statistical estimation of density contour clusters and applications to multimodality testing using the excess mass functional.

\begin{example}{\sc (Univariate Gaussian distribution)} Let $X$ be random variable with Gaussian distribution $\mathcal{N}(0,1)$, we have $\mv^*(\alpha)=2\Phi^{-1}((1+\alpha)/2)$, where $\Phi$ is the cumulative distribution function of $X$.
\end{example}

\begin{example}{\sc (Multivariate Gaussian distribution)} Let $X \in \mathbb{R}^d$ be a multivariate random variable with Gaussian distribution $\mathcal{N}(0,\mathbf{\Sigma})$. We assume that $\Sigma$ is a diagonal matrix and we denote by $a_1,\dots,a_d \in \mathbb{R}^{+*}$ its diagonal coefficients. The density level set $\{x, f(x) \geq t \}$ is exactly the set $\{x, x^T\Sigma^{-1}x \leq c_t \}$ with $c_t > 0$. It is well known that $X^T\Sigma^{-1}X \sim \chi^2_d$, the $\chi^2$ distribution with $d$ degrees of freedom. Thus $\alpha_f^{-1}(\alpha) = c_t = \chi^2_d(\alpha)$, $\chi^2_d(\alpha)$ denoting the quantile of order $\alpha$ of the $\chi^2_d$ distribution. The set $\{x, x^T\Sigma^{-1}x \leq \chi^2_d(\alpha) \}$ is exactly the ellipsoid with semi-principal axes of length $a_1\sqrt{\chi^2_d(\alpha)}, \dots, a_d\sqrt{\chi^2_d(\alpha)}$. The formula of the volume of an ellipsoid gives $\mv^*(\alpha) = \pi^{d/2}/\Gamma(d/2 +1)(\chi^2_d(\alpha))^{d/2}\prod_{i=1}^d a_i$ where $\Gamma$ is the gamma function.
\end{example}

The following result reveals that the optimal $\mv$ curve is convex and provides a closed analytical form for its derivative. The proof is given in Appendix~\ref{sec:appendix_proofs}.

\begin{proposition}\label{prop:convex_derivative}{\sc (Convexity and derivative)}
Suppose that assumptions \ref{as:bounded_density} and \ref{as:flat_parts} are satisfied. Then, as $f$ is bounded, the optimal curve $\mv^*$ is defined on $[0,1)$. Furthermore,
\begin{enumerate}
	\item[(i)] $\alpha\in [0,1)\mapsto \mv^*(\alpha)$ is convex.
	\item[(ii)] Let $F_f$ be the cumulative distribution function of the random variable $f(X)$ and let $a = \lim_{\alpha \rightarrow 0^+}F_f^{\dag}(\alpha) = \sup\{t, F_f(t)=0\} \geq -\infty$. If $F_f$ is invertible on $(a,F_f^{\dag}(1)]$ then $\mv^*$ is differentiable on $[0, 1)$ and
	\begin{equation*}
	\forall \alpha \in [0,1), \quad \mv^{*\prime}(\alpha) = \frac{1}{Q^*(\alpha)} \, .
	\end{equation*}
\end{enumerate}
\end{proposition}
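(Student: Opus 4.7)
The strategy is to sandwich the forward secant slopes of $\mv^*$ between two evaluations of $1/Q^*$ and to read off both assertions from this single inequality. Under \ref{as:bounded_density} and \ref{as:flat_parts}, as recalled in Section \ref{subsec:mv_set}, for every $\alpha\in[0,1)$ the minimum volume set $\Omega^*_\alpha$ coincides $\lambda$-a.e.\ with the density level set $\Omega_{f,Q^*(\alpha)}$ and carries mass exactly $\alpha$; moreover $Q^*$ is non-increasing on $[0,1)$ and $Q^*(0)=\Vert f\Vert_\infty<+\infty$ by \ref{as:bounded_density}, so that $1/Q^*$ is well-defined and non-decreasing on $[0,1)$.

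Fix $0\leq \alpha_1<\alpha_2<1$. The nestedness $\Omega^*_{\alpha_1}\subset \Omega^*_{\alpha_2}$ (up to $\lambda$-null sets) together with $\mv^*(\alpha)=\lambda(\Omega^*_\alpha)$ gives
\[
\mv^*(\alpha_2)-\mv^*(\alpha_1)=\lambda\bigl(\Omega^*_{\alpha_2}\setminus \Omega^*_{\alpha_1}\bigr)\quad\text{and}\quad \alpha_2-\alpha_1=\int_{\Omega^*_{\alpha_2}\setminus\Omega^*_{\alpha_1}} f\,d\lambda,
\]
and on the annulus $\Omega^*_{\alpha_2}\setminus\Omega^*_{\alpha_1}$ the bracketing $Q^*(\alpha_2)\leq f\leq Q^*(\alpha_1)$ holds $\lambda$-a.e., whence, after dividing,
\[
\frac{1}{Q^*(\alpha_1)}\leq \frac{\mv^*(\alpha_2)-\mv^*(\alpha_1)}{\alpha_2-\alpha_1}\leq \frac{1}{Q^*(\alpha_2)}.
\]

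To prove (i), I apply this sandwich on the two subintervals $[\alpha_1,\beta]$ and $[\beta,\alpha_2]$ for any $\alpha_1<\beta<\alpha_2$: the left-hand secant is then bounded above by $1/Q^*(\beta)$ and the right-hand secant is bounded below by the same number, so the secants of $\mv^*$ are non-decreasing in the standard sense characterizing convexity on $[0,1)$; crucially this step only uses the monotonicity of $Q^*$ and no continuity. For (ii), the invertibility hypothesis forces $F_f$ to be continuous and strictly increasing on $(a,F_f^{\dag}(1)]$, so that $F_f^{\dag}$, and hence $Q^*(\alpha)=F_f^{\dag}(1-\alpha)$, is continuous on $[0,1)$; letting $\alpha_1\uparrow\alpha$ and $\alpha_2\downarrow\alpha$ in the sandwich then squeezes the two-sided difference quotient to the common limit $1/Q^*(\alpha)$, yielding both differentiability at $\alpha$ and the announced formula $\mv^{*\prime}(\alpha)=1/Q^*(\alpha)$.

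The only delicate step is the $\lambda$-a.e.\ bracketing $Q^*(\alpha_2)\leq f\leq Q^*(\alpha_1)$ on the annulus, which rests on the fact that $\Omega^*_\alpha$ is, up to a $\lambda$-null set, exactly $\{f\geq Q^*(\alpha)\}$ and that $\lambda(\{f=Q^*(\alpha)\})=0$. Both points are consequences of assumption \ref{as:flat_parts} combined with the characterization of minimum volume sets recalled in Section \ref{subsec:mv_set}; once they are spelled out, the rest of the argument is an elementary squeeze.
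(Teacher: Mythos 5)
Your proof is correct and takes essentially the same approach as the paper's. Both rest on the two-sided sandwich
\[
\frac{1}{Q^*(\alpha_1)}\;\leq\;\frac{\mv^*(\alpha_2)-\mv^*(\alpha_1)}{\alpha_2-\alpha_1}\;\leq\;\frac{1}{Q^*(\alpha_2)},
\]
obtained by bounding $f$ on the annulus $\Omega^*_{\alpha_2}\setminus\Omega^*_{\alpha_1}$; the paper phrases this annulus as $\{x:1-\alpha_2\leq F_f(f(x))<1-\alpha_1\}$ and uses the $1/f\cdot f$ trick together with the uniformity of $F_f(f(X))$ to compute the mass, while you phrase it directly in terms of level sets and use $F(\Omega^*_\alpha)=\alpha$, but these are the same computation. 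Your packaging is marginally cleaner in that it extracts both sides of the sandwich in one step and then reads off convexity and the derivative as corollaries, whereas the paper establishes the upper and lower secant bounds separately. Two small points worth making explicit if you were to write this up in full: at $\alpha=0$ only the right-hand difference quotient exists, and there the squeeze runs with $\alpha_1=0$ fixed and $\alpha_2\downarrow 0$ using $\mv^*(0)=0$ and $Q^*(0)=\Vert f\Vert_\infty$; and the $\lambda$-a.e.\ bracketing on the annulus indeed follows from \ref{as:flat_parts}, since $\mathbb{P}(f(X)=t)=0$ for $t>0$ forces $t\,\lambda(\{f=t\})=0$ and hence $\lambda(\{f=t\})=0$.
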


The convexity of $\mv^*$ might be explained as follows: when considering density level sets with increasing probabilities $\alpha$, their volumes $\lambda(\Omega^*_{\alpha})$ increase as well, but they increase more and more quickly since the mass of the distribution becomes less and less concentrated. 

% \begin{example}{\sc (Pareto distribution)} When the random variable $X$ is heavy-tailed with Pareto density $\mathbb{I}\{x\geq 1\}\theta x^{-\theta-1}$ for $\theta>0$, we have $\mv^*(\alpha)=(1-\alpha)^{-1/\theta}-1$.
% \end{example}

Elementary properties of $\mv$ curves are summarized in the following proposition.

 \begin{proposition}\label{prop:properties} {\sc (Properties of $\mv$ curves)}  For any $s\in \S$, the following assertions hold true.
 \begin{enumerate}
 % \item {\bf Limit values.}We have $\lim_{\alpha \rightarrow 0}\mv_s(\alpha)=0$ and $\lim_{\alpha\rightarrow 1}\mv_s(\alpha)= \lambda(supp F)$.
 \item[(i)] {\bf Invariance.} For any strictly increasing function $\psi:\RR_+ \rightarrow \RR_+$, we have $\mv_s=\mv_{\psi \circ s}$.
 \item[(ii)] {\bf Monotonicity.} The mapping $\alpha\in (0,1)\mapsto \mv_s(\alpha)$ is increasing.
 % \item {\bf Differentiability.} Suppose that the distribution of $s(X)$ is continuous and has no plateau. Then, if $t>0\mapsto \lambda_s(t)$ is differentiable and $s(x)$ is Lipschitz, then $\mv_s$ is differentiable on $(0,1)$.
 \end{enumerate}
 \end{proposition}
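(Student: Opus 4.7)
Both assertions are direct consequences of the defining identity $\mv_s(\alpha) = \lambda_s \circ \alpha_s^{-1}(\alpha)$, combined with the fact, noted just after Definition \ref{def:scoring_function}, that the collection of upper level sets $\{\Omega_{s,t}\}_{t\in\RR}$ is nested and decreasing in $t$.

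For (i), the main observation is that a strictly increasing map $\psi: \RR_+ \to \RR_+$ leaves the family of level sets invariant. More precisely, for any $t \geq 0$,
\begin{equation*}
\Omega_{s,t} \;=\; \{x \in \X : s(x) \geq t\} \;=\; \{x \in \X : \psi(s(x)) \geq \psi(t)\} \;=\; \Omega_{\psi\circ s,\,\psi(t)},
\end{equation*}
since $\psi$ is strictly increasing. Taking $\mathbb{P}$ and $\lambda$ respectively yields the pointwise relations $\alpha_{\psi\circ s}(\psi(t)) = \alpha_s(t)$ and $\lambda_{\psi\circ s}(\psi(t)) = \lambda_s(t)$. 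I would then plug $t = \alpha_s^{-1}(\alpha)$ into these and verify that $\alpha_{\psi\circ s}^{-1}(\alpha) = \psi(\alpha_s^{-1}(\alpha))$ by unfolding the generalized inverse $\alpha_{\psi\circ s}^{-1}(\alpha) = \inf\{u \geq 0 : \alpha_{\psi\circ s}(u) \leq \alpha\}$ and using the change of variable $u = \psi(t)$. Substituting into the definition of $\mv_{\psi\circ s}$ then gives $\mv_{\psi\circ s}(\alpha) = \lambda_{\psi\circ s}(\psi(\alpha_s^{-1}(\alpha))) = \lambda_s(\alpha_s^{-1}(\alpha)) = \mv_s(\alpha)$.

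For (ii), the plan is simply to check that each factor in the composition $\lambda_s \circ \alpha_s^{-1}$ is monotone and conclude. Since $\Omega_{s,t}$ is decreasing in $t$, both $t \mapsto \alpha_s(t) = \mathbb{P}\{X \in \Omega_{s,t}\}$ and $t \mapsto \lambda_s(t) = \lambda(\Omega_{s,t})$ are non-increasing on $\RR_+$. Then the generalized inverse $\alpha_s^{-1}$ is non-increasing on $(0,1)$: for $\alpha_1 < \alpha_2$, the inclusion $\{t : \alpha_s(t) \leq \alpha_1\} \subset \{t : \alpha_s(t) \leq \alpha_2\}$ yields $\alpha_s^{-1}(\alpha_1) \geq \alpha_s^{-1}(\alpha_2)$ by taking the infimum. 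Composing two non-increasing maps, $\mv_s = \lambda_s \circ \alpha_s^{-1}$ is therefore non-decreasing on $(0,1)$.

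The only delicate point is the identity $\alpha_{\psi\circ s}^{-1}(\alpha) = \psi(\alpha_s^{-1}(\alpha))$ used in (i), because $\psi$ need not be continuous or surjective, so a naive change of variable in the $\inf$ must be justified. This can be handled by noting that the points of discontinuity or ``gaps'' of $\psi$ correspond to values of $u$ that are not attained by $\psi(s(X))$, so that replacing $u$ by the nearest value in $\psi(\RR_+)$ does not alter $\alpha_{\psi\circ s}(u)$; alternatively, one can bypass this issue altogether by observing that the parametric curves $t \mapsto (\alpha_s(t),\lambda_s(t))$ and $u \mapsto (\alpha_{\psi\circ s}(u),\lambda_{\psi\circ s}(u))$ trace out the same planar set, whence the two $\mv$ curves coincide pointwise.
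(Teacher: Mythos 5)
Your proof of assertion (ii) follows the same reasoning as the paper's sketch: write $\mv_s=\lambda_s\circ\alpha_s^{-1}$ with $\alpha_s^{-1}(\alpha)=F_s^{\dag}(1-\alpha)$, observe that $\lambda_s$ and $\alpha_s^{-1}$ are both non-increasing, and conclude that the composition is non-decreasing (the paper's word \emph{increasing} should be read as non-decreasing here, since $\mv_s$ may well be flat on an interval). Nothing to change there.

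For assertion (i), however, you take a genuinely different route from the one the paper hints at, and the technical point you flag is a real one, not a cosmetic issue. The identity $\alpha_{\psi\circ s}^{-1}(\alpha)=\psi\bigl(\alpha_s^{-1}(\alpha)\bigr)$ can actually \emph{fail} when $\psi$ has a jump at $t^*=\alpha_s^{-1}(\alpha)$: if $\psi(t^{*-})<\psi(t^*)$, one can have $\alpha_{\psi\circ s}^{-1}(\alpha)\in\bigl[\psi(t^{*-}),\psi(t^*)\bigr)$, strictly below $\psi(t^*)$. The end result $\mv_{\psi\circ s}(\alpha)=\mv_s(\alpha)$ is still true because any $u$ in the gap $\bigl(\psi(t^{*-}),\psi(t^*)\bigr)$ gives the \emph{same level set} $\{x:\psi(s(x))\ge u\}=\{x:s(x)\ge t^*\}$, hence the same volume; this is exactly the repair you sketch, but it needs to be stated as a separate argument rather than as a footnote to a false intermediate identity. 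Your second fallback (``the parametric curves trace out the same planar set'') is intuitive but also requires an additional word to explain why tracing the same set forces the two generalized-inverse compositions to agree pointwise.

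By contrast, the route the paper points to --- the quantile-function property $z\ge H^{\dag}(\beta)\iff H(z)\ge\beta$ --- bypasses all of this. It rewrites
\begin{equation*}
\mv_s(\alpha)=\lambda\bigl\{x:s(x)\ge F_s^{\dag}(1-\alpha)\bigr\}=\lambda\bigl\{x: F_s\bigl(s(x)\bigr)\ge 1-\alpha\bigr\},
\end{equation*}
and then observes that, because $\psi$ is strictly increasing, $\psi\bigl(s(X)\bigr)\le\psi\bigl(s(x)\bigr)\iff s(X)\le s(x)$, so $F_{\psi\circ s}\bigl(\psi(s(x))\bigr)=F_s\bigl(s(x)\bigr)$ for every $x$. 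Hence $\mv_{\psi\circ s}(\alpha)=\lambda\{x:F_{\psi\circ s}(\psi(s(x)))\ge1-\alpha\}=\mv_s(\alpha)$, with no continuity or surjectivity assumption on $\psi$ needed. Your level-set approach is more hands-on and geometrically transparent, and it can be made rigorous; the paper's hint buys you a shorter argument that never touches the gaps of $\psi(\RR_+)$.
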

Assertion $(i)$ may be proved using the following property of the quantile function (see \textit{e.g.} Property 2.3 in \citep{Embrechts2013}): for any cumulative distribution function $H$, any $z \in \mathbb{R}$ and any $\alpha \in (0, 1)$, $z \geq H^{\dag}(\alpha)$ if and only if $H(z) \geq \alpha$. Assertion $(ii)$ derives from the fact that the quantile function $F_s^{\dag}$ is increasing. Details are left to the reader.

% !TEX root = mv_curves.tex

\section{Statistical Estimation}\label{sec:stat}
In practice, $\mv$ curves are unknown, just like the probability distribution of the random variable $X$, and must be estimated based on the observed sample $X_1,\dots,X_n$. Replacing the mass of each level set by its statistical counterpart in Definition \ref{def:mv} leads to define the notion of \textit{empirical $\mv$ curve}. We set, for all $t\geq 0$,
\begin{equation}\label{eq:mass_emp}
\widehat{\alpha}_s(t)\overset{def}{=}\frac{1}{n}\sum_{i=1}^n\mathbb{I}\{s(X_i) \geq t\}=\int_{\X}\mathbb{I}\{u \geq t\}\widehat{F}_s(du) \, ,
\end{equation}
where $\widehat{F}_s=(1/n)\sum_{i\leq n}\delta_{s(X_i)}$ denotes the empirical distribution of the $s(X_i), 1 \leq i \leq n$.
Notice that $\widehat{\alpha}_s$ takes its values in the set $\{k/n:\;\; k=0,\dots,n\}$.

\begin{definition}\label{def:emp_mv}{\sc (Empirical $\mv$ curve)}
Let $s\in \S$. By definition, the empirical $\mv$ curve of $s$ is the graph of the (piecewise constant) function
\begin{equation*}
\widehat{\mv}_s: \alpha\in[0, 1)\mapsto \lambda_s \circ \widehat{\alpha}^{-1}_s(\alpha) \, .
\end{equation*}

\end{definition}

\begin{remark}{\sc (On volume estimation)}. Except for very specific choices of the scoring function $s$ (\textit{e.g.} when $s$ is piecewise constant and the volumes of the subsets of $\mathcal{X}$ on which $s$ is constant can be explicitly computed), no closed analytic form for the volume $\lambda_s(t)$ is available in general and Monte-Carlo procedures should be used to estimate it (which may be practically challenging in a high-dimensional setting). Computation of volumes in high dimensions is an active topic of research \citep{Lovasz2006} and for simplicity, this is not taken into account in the subsequent analysis. 

\end{remark}

In order to obtain a smoothed version $\widetilde{\alpha}_s(t)$, a typical strategy consists in replacing the empirical distribution estimate $\widehat{F}_s$ involved in \eqref{eq:mass_emp} by the continuous distribution $\widetilde{F}_s$ with density $\widetilde{f}_s(t)=(1/n)\sum_{i=1}^nK_h(s(X_i)-t)$, with $K_h(t)=h^{-1}K(t/h)$ where $K\geq 0$ is a regularizing Parzen-Rosenblatt kernel (\textit{i.e.} a bounded square integrable function such that $\int K(v)dv=1$) and $h>0$ is the smoothing bandwidth (see for instance \citep{WandJones1994}).
 
\subsection{Consistency and Asymptotic Normality}
The theorem below reveals that, under mild assumptions, the empirical $\mv$ curve is a consistent and asymptotically Gaussian estimate of the $\mv$ curve, uniformly over any subinterval of $[0,1)$. It involves the assumptions listed below.

\begin{assump}[resume]
\item \label{as:bounded_score} The scoring function $s$ is bounded: $\Vert s \Vert_{\infty} < +\infty$.
\item \label{as:density_regularity_positive} The random variable $s(X)$ has a continuous cumulative distribution function $F_s$. Let $a = \sup\{t \in \mathbb{R}, \, F_s(t) =0\} \geq -\infty$ and $b = \inf\{t \in \mathbb{R}, \, F_s(t) =1\} \leq +\infty$. The distribution function $F_s$ is twice differentiable on $(a,b)$ and
\begin{equation}
\label{eq:fs_pos}
\forall t \in (a, b), \;\; F_s'(t) = f_s(t) > 0 \, .
\end{equation}
\item \label{as:tail} There exists $c>0$ such that
\begin{equation*}
\sup_{t \in (a,b)}F_s(t)(1-F_s(t)) \Bigl\vert \frac{f_s^{\prime}(t)}{f_s^2(t)} \Bigr\vert \leq c<+\infty \, .
\end{equation*}
\item $f_s$ has a limit $A > 0$ when $t$ tends towards $b$ from the left:
\begin{equation*}
\lim_{t \rightarrow b^-} f_s(t) = A < \infty \, .
\end{equation*}
\label{as:limit_alpha_1}
\item The mapping $\lambda_s$ is of class $\mathcal{C}^2$.
\label{as:regulartiy_lambda}
\end{assump}

Assumption \ref{as:bounded_score} implies that $F_s^{\dag}(1)$ exists and $F_s^{\dag}(1) = b$. Note also that if $s$ is not bounded, one can always consider $s' = \arctan \circ s$. Indeed $s'$ takes its values in $[0, \pi/2)$ and as $\arctan$ is strictly increasing $\mv_s = \mv_{s'}$ thanks to Proposition \ref{prop:properties}. Assumptions \ref{as:density_regularity_positive} and \ref{as:tail} are common assumptions for the strong approximation of the quantile process (see for instance \citep{CR78}). Condition \eqref{eq:fs_pos} and assumption \ref{as:tail} are respectively equivalent to $f_s(F_s^{\dag}(\alpha)) > 0$ for all $\alpha \in (0,1)$ and
\begin{equation*}
\sup_{\alpha \in (0,1)}\alpha(1-\alpha) \Bigl\vert \frac{f_s^{\prime}(F_s^{\dag}(\alpha))}{f_s^2(F_s^{\dag}(\alpha))} \Bigr\vert \leq c<+\infty \, .
\end{equation*}
Eventually, assumption \ref{as:limit_alpha_1} is equivalent to: $f_s$ has a limit $A > 0$ when $x$ tends towards $b$. The density $f_s$ can therefore be extended by continuity to the interval $(a,b]$. As $A > 0$, \eqref{eq:fs_pos} can then be replaced by $f_s(t)>0$ for all $t \in (a, b]$ which also gives $f_s(F_s^{\dag}(\alpha))>0$ for all $\alpha \in (0, 1]$. One can also note that $F_s$ is thus invertible on $(a, b]$ with inverse $F_s^{\dag}$. \newline
% because if \tilde f_s is the function obtained by extending f_s by continuity, f_s is equal to \tilde f_s lebesgue almost everywhere and therefore F_s is differentiable on (0, \Vert s \Vert_\infty].

In order to state part $(ii)$ of the following theorem, we assume that the probability space is rich enough in the sense that an infinite sequence of Brownian bridges can be defined on it.

\begin{theorem}\label{thm:estimation}
Let $\varepsilon \in (0,1]$ and $s\in \S$. Assume that assumptions \ref{as:bounded_score}-\ref{as:regulartiy_lambda} are fulfilled. The following assertions hold true.
\begin{enumerate}
\item[(i)] {\sc (Consistency)} With probability one, we have uniformly over $[0,1-\varepsilon]$:
\begin{equation*}
\lim_{n\rightarrow+\infty}\widehat{\mv}_s(\alpha)=\mv_s(\alpha) \, .
\end{equation*}
\item[(ii)] {\sc (Strong approximation)} There exists a sequence of Brownian bridges $\{B_n(\alpha), \alpha \in [0,1]\}_{n \geq 1}$ such that we almost surely have, uniformly over the compact interval $[0, 1-\varepsilon]$: as $n\rightarrow \infty$,
\begin{equation*}
\sqrt{n}\left( \widehat{\mv}_s(\alpha)-\mv_s(\alpha) \right)=Z_n(\alpha) +O(n^{-1/2}\log n) \, ,
\end{equation*}
where
\begin{equation*}
Z_n(\alpha)=\frac{\lambda^{\prime}_s(\alpha_s^{-1}(\alpha))}{f_s(\alpha_s^{-1}(\alpha))}B_n(\alpha), \text{ for }\alpha\in [0,1-\varepsilon] \, .
\end{equation*}
\end{enumerate}
\end{theorem}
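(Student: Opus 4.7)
The plan is to reduce everything to the empirical quantile process of the real-valued random variable $s(X)$, and then to combine a Komlós–Major–Tusnády-type strong approximation with a Taylor expansion of $\lambda_s$. Since $F_s$ is continuous under \ref{as:density_regularity_positive}, one has $\alpha_s(t)=1-F_s(t)$, hence $\alpha_s^{-1}(\alpha)=F_s^{\dag}(1-\alpha)$ and, up to ties occurring on a $\lambda$-null event, $\widehat{\alpha}_s^{-1}(\alpha)=\widehat{F}_s^{\dag}(1-\alpha)$. Consequently
\begin{equation*}
\widehat{\mv}_s(\alpha)-\mv_s(\alpha)=\lambda_s\bigl(\widehat{F}_s^{\dag}(1-\alpha)\bigr)-\lambda_s\bigl(F_s^{\dag}(1-\alpha)\bigr),
\end{equation*}
so the problem reduces to studying the quantile process of $s(X)$ composed with the smooth function $\lambda_s$.

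For \textbf{(i)}, I would invoke the Glivenko–Cantelli theorem, which gives $\sup_t|\widehat{F}_s(t)-F_s(t)|\to 0$ almost surely; coupled with the strict monotonicity and continuity of $F_s$ on $(a,b)$ provided by \ref{as:density_regularity_positive}, this classically yields uniform convergence of the empirical quantile $\widehat{F}_s^{\dag}$ on $[\varepsilon,1]$. Since $\lambda_s$ is of class $\mathcal{C}^2$ by \ref{as:regulartiy_lambda} and the quantiles in question eventually stay in a compact subinterval of $(a,b]$, $\lambda_s$ is Lipschitz there, and the composition $\lambda_s\circ\widehat{\alpha}_s^{-1}$ therefore converges uniformly to $\mv_s$ on $[0,1-\varepsilon]$.

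For \textbf{(ii)}, I would use the Komlós–Major–Tusnády strong approximation of the empirical process: there exist Brownian bridges $B_n$ with $\sup_{t}|\sqrt{n}(\widehat{F}_s(t)-F_s(t))-B_n(F_s(t))|=O(n^{-1/2}\log n)$ almost surely. Under \ref{as:density_regularity_positive}–\ref{as:limit_alpha_1}, the Csörgő–Révész strong approximation transfers this to the quantile process, yielding
\begin{equation*}
\sqrt{n}\bigl(\widehat{F}_s^{\dag}(\beta)-F_s^{\dag}(\beta)\bigr)=-\frac{B_n(\beta)}{f_s(F_s^{\dag}(\beta))}+O(n^{-1/2}\log n)
\end{equation*}
almost surely, uniformly on $[\varepsilon,1]$. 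Substituting $\beta=1-\alpha$ and setting $\widetilde{B}_n(\alpha):=-B_n(1-\alpha)$ (still a Brownian bridge), one gets $\sqrt{n}(\widehat{\alpha}_s^{-1}(\alpha)-\alpha_s^{-1}(\alpha))=\widetilde{B}_n(\alpha)/f_s(\alpha_s^{-1}(\alpha))+O(n^{-1/2}\log n)$, uniformly on $[0,1-\varepsilon]$. A second-order Taylor expansion of $\lambda_s\in\mathcal{C}^2$ at $\alpha_s^{-1}(\alpha)$ then gives
\begin{equation*}
\widehat{\mv}_s(\alpha)-\mv_s(\alpha)=\lambda_s'(\alpha_s^{-1}(\alpha))\bigl(\widehat{\alpha}_s^{-1}(\alpha)-\alpha_s^{-1}(\alpha)\bigr)+R_n(\alpha),
\end{equation*}
with $\sup_\alpha|R_n(\alpha)|\leq\tfrac{1}{2}\|\lambda_s''\|_\infty\sup_\alpha(\widehat{\alpha}_s^{-1}(\alpha)-\alpha_s^{-1}(\alpha))^2$. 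The law of the iterated logarithm for the quantile process bounds the latter supremum by $O(\log\log n/n)$ almost surely, which is $o(n^{-1/2}\log n)$. Inserting the strong approximation and relabeling $\widetilde{B}_n$ as $B_n$ produces exactly the stated representation.

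The main obstacle is the quantile-process step: transporting the empirical KMT bound $O(n^{-1/2}\log n)$ to the quantile process \emph{uniformly} on $[0,1-\varepsilon]$, i.e.\ up to the boundary $\beta=1$ where $F_s^{\dag}$ may misbehave if $f_s$ degenerates. This is precisely the content of the Csörgő–Révész strong approximation, and it is exactly why the authors impose \ref{as:density_regularity_positive}–\ref{as:limit_alpha_1}: positivity and smoothness of $f_s$ on $(a,b]$, the tail/hazard condition of \ref{as:tail}, and the positive limit $A>0$ of $f_s$ at $b$ guarantee that the score $f_s(F_s^{\dag}(\beta))$ in the denominator stays bounded away from zero near $\beta=1$. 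Once that is granted, everything else—the Taylor reduction and the absorption of the quadratic remainder via the LIL—is routine.
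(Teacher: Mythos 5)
Your proposal matches the paper's argument in all essentials: both reduce $\widehat{\mv}_s-\mv_s$ to the quantile process of $s(X)$, apply the Csörgő--Révész/KMT strong approximation of that process (with assumptions \ref{as:density_regularity_positive}--\ref{as:limit_alpha_1} doing exactly the work you identify near $\beta=1$), perform a second-order Taylor expansion of $\lambda_s$, and absorb the quadratic remainder into $O(n^{-1/2}\log n)$. The only cosmetic difference is that the paper bounds the remainder via a DKW-plus-Borel--Cantelli lemma giving $\sup|\widehat F_s^{\dag}-F_s^{\dag}|^2=O(\log n/n)$ a.s.\ rather than your invocation of the LIL, both of which suffice.
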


The technical proof is given in Appendix~\ref{sec:appendix_proofs}. One can note from the proof of assertion $(i)$ that this assertion does not require assumption \ref{as:tail} to be satisfied. The proof of assertion $(ii)$ relies on standard strong approximation results for the quantile process \citep{CR78,CR81,Csorgo1983}. Assertion $(ii)$ means that the fluctuation process $\{\sqrt{n}(\widehat{\mv}_s(\alpha)-\mv_s(\alpha)), \alpha\in [0,1-\varepsilon]\}$ converges in the space of c\`ad-l\`ag functions on $[0,1-\varepsilon]$ equipped with the sup norm, to the law of a Gaussian stochastic process $\{Z_1(\alpha), \alpha\in [0,1-\varepsilon]\}$.

\begin{remark} Assumption \ref{as:limit_alpha_1} is required in order to state the results of assertions $(i)$ and $(ii)$ on the interval $[0, 1-\varepsilon]$. However this assumption is restrictive as it implies that $f_s$ is discontinuous at $b$ since $A > 0$ whereas $f_s(t) = 0$ for all $t > b$. Instead of assumption \ref{as:limit_alpha_1}, if one assumes that $f_s$ is decreasing on an interval to the left of $b$, then the result of assertion $(ii)$ can be obtained on the interval $(0,1-\varepsilon]$ with the same rate of convergence if $c < 2$ and with the same rate of convergence up to $\log n$ and $\log \log n$ factors if $c \geq 2$. In this case, assertion $(i)$ also holds on $(0,1-\varepsilon]$ (see for instance \citep{CR78}).
\end{remark}

\begin{remark} {\sc (Asymptotic normality)} It results from assertion $(ii)$ in Theorem \ref{thm:estimation} that, for any $\alpha\in (0,1)$, the pointwise estimator $\widehat{\mv}_s(\alpha)$ is asymptotically Gaussian under assumptions \ref{as:bounded_score}-\ref{as:regulartiy_lambda}. For all $\alpha\in (0,1)$
$\sqrt{n}(\widehat{\mv}_s(\alpha)-\mv_s(\alpha))$ converges in distribution towards $\mathcal{N}(0,\sigma^2_s)$, as $n\rightarrow +\infty$, with $\sigma^2_s=\alpha(1-\alpha)(\lambda_s'(\alpha_s^{-1}(\alpha))/f_s(\alpha_s^{-1}(\alpha)))^2$.
\end{remark}
 
\subsection{Confidence Regions in the Mass Volume Space}\label{subsec:band}
The true $\mv$ curve of a given scoring function is unknown in practice and its performance must be statistically assessed based on a data sample. Beyond consistency of the empirical curve in sup norm and the asymptotic normality of the fluctuation process, we now tackle the question of constructing confidence bands in the $\mv$ space.
\begin{definition}
Based on a sample $\mathcal{D}_n=(X_1,\; \ldots,\; X_n)$, a (random) confidence region for the $\mv$ curve of a given scoring function $s\in \S$ at confidence level $\eta\in (0,1)$ is any borelian set $\mathcal{R}_{\eta}\subset [0,1]\times \mathbb{R}_+$ of the $\mv$ space (possibly depending on $\mathcal{D}_n$) that covers the curve $\mv_s$ with probability larger than $1-\eta$:
$$
\mathbb{P}\{ \mv_s \in \mathcal{R}_{\eta} \}\geq 1-\eta \, .
$$
\end{definition}
In practice, confidence regions shall be of the form of balls in the Skorohod's space $\mathbb{D}([0,1-\varepsilon])$ of c\`ad-l\`ag functions on $[0,1-\varepsilon]$ with respect to the $\sup$ norm and with the estimate $\widehat{\mv}_s$ introduced in Definition \ref{def:emp_mv} as center for some fixed $\varepsilon\in (0,1)$:
$$
\mathcal{B}(\widehat{\mv}_s, \nu)=\left\{g\in \mathbb{D}([0,1-\varepsilon]):\; \sup_{\alpha\in [0,1-\varepsilon]}\vert g(\alpha)- \widehat{\mv}_s(\alpha) \vert \leq \nu  \right\}.
$$
Constructing confidence regions based on the approximation stated in Theorem \ref{thm:estimation} would require to know the density $f_s$. Hence a bootstrap approach \citep{Efron} should be preferred. Following in the footsteps of \citet{SY87}, it is recommended to implement a smoothed bootstrap procedure. The asymptotic validity of such a resampling method derives from a strong approximation result similar to the one of Theorem \ref{thm:estimation}.
Let $r_n$ be the fluctuation process defined for all $\alpha \in [0, 1-\varepsilon]$ by
\begin{equation*}
r_n(\alpha)=\sqrt{n}(\widehat{\mv}_s(\alpha)-\mv_s(\alpha)) \, .
\end{equation*}
The bootstrap approach suggests to consider, as an estimate of the law of the fluctuation process $r_n$, the conditional law given the original sample $\mathcal{D}_n=(X_1,\ldots,X_n)$ of the \textit{naive bootstrapped fluctuation process}
\begin{equation}\label{eq:naive_fluctuat}
r^{Boot}_{n}=\{\sqrt{n}(\widehat{\mv}^{Boot}_s(\alpha)-\widehat{\mv}_s(\alpha))\}_{\alpha \in [0,1)} \, ,
\end{equation}
where, given $\mathcal{D}_n$, $\widehat{\mv}^{Boot}_s$ is the empirical $\mv$ curve of the scoring function $s$ based on a sample of \textit{i.i.d.} random variables with distribution $\widehat{F}_s=(1/n)\sum_{i\leq n}\delta_{s(X_i)}$.
The difficulty is twofold. First, the target is a distribution on a \textit{path space}, namely a subspace of the Skorohod's space $\mathbb{D}([0,1-\varepsilon])$ equipped with the $\sup$ norm. Second, $r_n$ is a functional of the quantile process $\{\widehat{F}_s^{\dag}(\alpha)\}_{\alpha\in [\varepsilon,1]}$. The \textit{naive bootstrap}, which consists in resampling from the raw empirical distribution $\widehat{F}_s$, generally provides bad approximations of the distribution of empirical quantiles: the rate of convergence for a given quantile is indeed of order $O_{\mathbb{P}}(n^{-1/4})$ \citep{Reiss89} whereas the rate of the Gaussian approximation is $O(n^{-1/2}\log n)$ (see \eqref{eq:first_bootstrap_sup} in Appendix~\ref{sec:appendix_proofs}). The same phenomenon may be naturally observed for $\mv$ curves. In a similar manner to what is usually recommended for empirical quantiles, a \textit{smoothed version} of the bootstrap algorithm shall be implemented in order to improve the approximation rate of the distribution of $\sup_{\alpha\in[0,1-\varepsilon]}\vert r_n(\alpha)\vert$, namely, to resample the data from a smoothed version $\widetilde{F}_s$ of the empirical distribution $\widehat{F}_s$. We thus consider the \emph{smooth boostrapped fluctuation process}
\begin{equation}\label{eq:smooth_fluctuat}
r^*_n=\{\sqrt{n}(\mv^{Boot}_s(\alpha)-\widetilde{\mv}_s(\alpha))\}_{\alpha \in [0,1)} \, ,
\end{equation}
where, given $\mathcal{D}_n$, $\mv^{Boot}_s=\lambda_s\circ (\alpha^{Boot}_s)^{-1}$ is the empirical $\mv$ curve of the scoring function $s$ based on a sample of \textit{i.i.d.} random variables with distribution $\widetilde{F}_s$ and where $\widetilde \mv_s = \lambda_s \circ \widetilde \alpha_s^{-1}$ is the smooth version of the empirical $\mv$ curve, $\widetilde \alpha_s^{-1}$ being the generalized inverse of $\widetilde \alpha_s$.
The algorithm for building a confidence band at level $1-\eta$ in the $\mv$ space from sampling data $\mathcal{D}_n=\{X_i:\; i=1,\; \ldots, \; n\}$ is described in Algorithm \ref{algo:bootstrap}.
% \begin{enumerate}
% \item Based on the sample sample $\mathcal{D}_n$, compute the empirical estimate $\widehat{\alpha}_s=1-\widehat{F}_s$, as well as its smoothed version $\widetilde{\alpha}_s=1-\widetilde{F}_s$,
% \item Plot the $\mv$ curve estimate $\{(\alpha,\widehat{\mv}_s(\alpha)):\; \alpha \in [0,1)\}$,
% \item Draw a bootstrap sample $\mathcal{D}_n^{Boot} \sim \widetilde{F}_s$ conditioned on the original sample $\mathcal{D}_n$,
% \item Based on $\mathcal{D}_n^{Boot}$, compute the bootstrap version $\alpha^{Boot}_s$ of the empirical estimate $\widehat{\alpha}_s$,
% \item Plot the bootstrap $\mv$ curve, \textit{i.e.} the graph of the mapping
% \begin{equation*}
% \alpha\in [0,1)\mapsto {\mv}^{Boot}_s(\alpha)=\lambda_s\circ (\alpha^{Boot}_s)^{-1}(\alpha) \, ,
% \end{equation*}
% \item Get the \textit{bootstrap confidence band at level $1-\eta$} defined by the ball of center $\widehat{\mv}_s$ and radius $\nu_{\eta}/\sqrt{n}$ in $\mathbb{D}([0,1-\varepsilon])$, where $\nu_{\eta}$ is defined by
% \begin{equation*}
% \mathbb{P}^*(\sup_{\alpha\in[0,1-\varepsilon]}\vert r^*_n(\alpha) \vert \leq \nu_{\eta}) =1-\eta \, ,
% \end{equation*}
% denoting by $\mathbb{P}^*(.)$ the conditional probability given the original data $\mathcal{D}_n$.
% \end{enumerate}

\begin{algorithm}
\caption{Smoothed $\mv$ curve bootstrap}
\label{algo:bootstrap}
\begin{algorithmic}[1]
\State Based on the sample $\mathcal{D}_n$, compute the smoothed version $\{\widetilde{\alpha}^{-1}_s(\alpha), \alpha \in [0,1)\}=\{\widetilde{F}^{\dag}_s(1-\alpha), \alpha \in [0,1)\}$ of the empirical estimate $\{\widehat \alpha^{-1}_s(\alpha), \alpha \in [0,1)\}$.
\State Plot the smooth $\mv$ curve estimate, \textit{i.e.}, the graph of the mapping
\begin{equation*}
\alpha \in [0,1) \mapsto \widetilde{\mv}_s(\alpha)=\lambda_s \circ \widetilde\alpha_s^{-1}(\alpha) \, .
\end{equation*}
\State Draw a bootstrap sample $\mathcal{D}_n^{Boot} \sim \widetilde{F}_s \; \vert \; \mathcal{D}_n $.
\State Based on $\mathcal{D}_n^{Boot}$, compute the smoothed bootstrap version $\{(\alpha^{Boot}_s)^{-1}(\alpha), \alpha \in [0,1)\}$ of the empirical estimate $\{\widehat \alpha^{-1}_s(\alpha), \alpha \in [0,1)\}$.
\State Plot the bootstrap $\mv$ curve, \textit{i.e.} the graph of the mapping
\begin{equation*}
\alpha\in [0,1)\mapsto {\mv}^{Boot}_s(\alpha)=\lambda_s\circ (\alpha^{Boot}_s)^{-1}(\alpha) \, .
\end{equation*}
\State Get the \textit{bootstrap confidence bands at level $1-\eta$} defined by the ball of center $\widehat{\mv}_s$ and radius $\nu_{\eta}/\sqrt{n}$ in $\mathbb{D}([\varepsilon,1-\varepsilon])$, where $\nu_{\eta}$ is defined by
\begin{equation*}
\mathbb{P}^*(\sup_{\alpha\in[\varepsilon,1-\varepsilon]}\vert r^*_n(\alpha) \vert \leq \nu_{\eta}) =1-\eta \, ,
\end{equation*}
denoting by $\mathbb{P}^*(.)$ the conditional probability given the original data $\mathcal{D}_n$.
\end{algorithmic}
\end{algorithm}

% \begin{center}

% \fbox{
% \begin{minipage}[t]{13cm}
% \medskip
% {\small
% \begin{center}
% {\sc Algorithm - Smoothed $\mv$ Curve Bootstrap}
% \end{center}
%    \begin{enumerate}
%    \item Based on the sample $\mathcal{D}_n$, compute the empirical estimate $\widehat{\alpha}_s=1-\widehat{F}_s$, as well as its smoothed version $\widetilde{\alpha}_s=1-\widetilde{F}_s$. Plot the $\mv$ curve estimate $\{(\alpha,\widehat{\mv}_s(\alpha)):\; \alpha \in [0,1)\}$.
%    \item From  the smooth distribution estimate $\widetilde{F}_s$,
%    draw a bootstrap sample $\mathcal{D}_n^{Boot}$ conditioned on the original sample $\mathcal{D}_n$.
%    \item Based on $\mathcal{D}_n^{Boot}$, compute the bootstrap version $\alpha^{Boot}_s$ of the empirical estimate $\widehat{\alpha}_s$. Plot the bootstrap $\mv$ curve, \textit{i.e.} the graph of the mapping
%    $$\alpha\in [0,1)\mapsto {\mv}^{Boot}_s(\alpha)=\lambda_s\circ (\alpha^{Boot}_s)^{-1}(\alpha).$$
%    \item Finally, get the \textit{bootstrap confidence bands at level $1-\eta$} defined by the ball of center $\widehat{\mv}_s$ and radius $\delta_{\eta}/\sqrt{n}$ in $\mathbb{D}([0,1-\varepsilon])$, where $\delta_{\eta}$ is defined by
%   $ \mathbb{P}^*(\sup_{\alpha\in[0,1-\varepsilon]}\vert r^*_n(\alpha) \vert \leq \delta_{\eta}) =1-\eta$,
%    denoting by $\mathbb{P}^*(.)$ the conditional probability given the original data $\mathcal{D}_n$.
%    \end{enumerate}
%    \smallskip
% }
% \end{minipage}
% }
% \end{center}

Before turning to the theoretical analysis of this algorithm, its description calls a few comments. From a computational perspective, the smoothed bootstrap distribution $\mathbb{P}^*(\sup_{\alpha\in[\varepsilon,1-\varepsilon]}\vert r^*_n(\alpha) \vert \leq \cdot)$ must be approximated in its turn, by means of a Monte-Carlo approximation scheme. Based on the $N$ bootstrap fluctuation processes obtained, $r_n^{*(j)}$ with $j=1,\; \ldots,\; N$, the radius $\nu_{\eta}$ then coincides with the empirical quantile at level $1-\eta$ of the statistical population $\{ \sup_{\alpha\in[\varepsilon,1-\varepsilon]}\vert r_n^{*(j)}(\alpha) \vert:\; j=1,\; \ldots,\; N \}$. Concerning the number of bootstrap replications, picking $N=n$ does not modify the rate of convergence. However, choosing $N$ of magnitude comparable to $n$ so that $(1+N)(1-\eta)$ is an integer may be more appropriate: the $(1-\eta)$-quantile of the approximate bootstrap distribution is then uniquely defined and this does not impact the rate of convergence neither \citep{Hall86}.

\subsubsection{Bootstrap consistency}
The next theorem reveals the asymptotic validity of the bootstrap estimate proposed above where we assume that the smoothed version $\widetilde F_s$ of the distribution function $\widehat F_s$ is computed at step 1 of Algorithm \ref{algo:bootstrap} using a kernel $K_{h_n}$. It requires the following assumptions.

\begin{assump_boot}[resume]
\item \label{as:density_bias} The density $f_s$ is bounded and of class $\mathcal{C}^3$.
\item \label{as:bandwidth_stute} The bandwidth $h_n$ decreases towards $0$ as $n \rightarrow +\infty$ in a way that $nh_n \rightarrow + \infty$, $nh_n/\log(h_n^{-1}) \rightarrow +\infty$ and $\log (h_n^{-1})/\log\log n\rightarrow +\infty$.
\item \label{as:kernel_bias} The kernel $K$ has finite support and satisfies the following conditions: $\int K(y)dy = 1$, $\int yK(y)dy = 0$ and $\int y^2K(y)dy < +\infty$.
\item \label{as:kernel_gine} The kernel $K$ is such that $\Vert K \Vert_2 < + \infty$ and is of the form $K(y) = \Phi_1(P_1(y))$, $P_1$ being a polynomial and $\Phi_1$ a bounded real function of bounded variation.
\item \label{as:kernelderivative_gine} The kernel $K$ is differentiable with derivative $K'$ such that $\Vert K' \Vert_2 < + \infty$ and of the form $K'(y) = \Phi_2(P_2(y))$, $P_2$ being a polynomial and $\Phi_2$ a bounded real function of bounded variation.
\item \label{as:bandwidth_derivative} The bandwidth $h_n$ is such that $\log(h_n^{-1})/(nh_n^3)$ tends to 0 as $n \rightarrow + \infty$.
\end{assump_boot}

Assumptions \ref{as:density_bias} and \ref{as:kernel_bias} are needed to control the biases $\sup_{\mathbb{R}}\vert \mathbb{E}[\widetilde f_s] - f_s \vert$ and $\sup_{\mathbb{R}}\vert \mathbb{E}[\widetilde{f_s}'] - f_s' \vert$. Assumption \ref{as:bandwidth_stute} on the bandwidth $h_n$ and assumption \ref{as:kernel_gine} (respectively assumption \ref{as:kernelderivative_gine}) are needed to control $\sup_{\mathbb{R}} \vert \widetilde f_s - \mathbb{E}[\widetilde f_s] \vert$ (respectively $\sup_{\mathbb{R}} \vert \widetilde{f_s}' - \mathbb{E}[\widetilde f_s'] \vert$) thanks to the result of \citet{Gine}. Assumption \ref{as:bandwidth_derivative} ensures that $\sup_{\mathbb{R}}\vert \widetilde{f_s}' - f_s'\vert$ tends to $0$ almost surely as $n \rightarrow \infty$. Assumptions \ref{as:kernel_bias}-\ref{as:kernelderivative_gine} are fulfilled, for instance, by the biweight kernel defined for all $t \in \mathbb{R}$ by:
\begin{equation}
\label{eq:biweight_kernel}
K(t) = \frac{15}{16}(1-t^2)^2 \cdot \mathbb{I}\{\vert t \vert \leq 1 \} \, .
\end{equation}

\begin{theorem} \label{thm:bootstrap} ({\sc Asymptotic validity}) Let $\varepsilon\in (0,1)$ and let $\mathbb{P}^*(.)$ denote the conditional probability given the original data $\mathcal{D}_n$. Suppose that assumptions \ref{as:bounded_score}-\ref{as:tail}, \ref{as:regulartiy_lambda} and \ref{as:density_bias}-\ref{as:bandwidth_derivative} are fulfilled. Then, the distribution estimate $\mathbb{P}^*(\sup_{\alpha\in[\varepsilon,1-\varepsilon]}\vert r^*_n(\alpha) \vert \leq \cdot)$ given by Algorithm \ref{algo:bootstrap} is such that we almost surely have
\begin{equation*}
\sup_{t\in \mathbb{R}}\left\vert\mathbb{P}^*\left(\sup_{\alpha\in[\varepsilon,1-\varepsilon]}\vert r^*_n(\alpha) \vert \leq t\right)-\mathbb{P}\left(\sup_{\alpha\in[\varepsilon,1-\varepsilon]}\vert r_n(\alpha) \vert \leq t\right)\right\vert = O(w_n).
\end{equation*}
where $w_n = \sqrt{\log(h_n^{-1})/nh_n} + h_n^2$.
\end{theorem}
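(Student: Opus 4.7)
The plan is to mirror the strong-approximation argument behind Theorem~\ref{thm:estimation}(ii) for both $r_n$ and the smoothed bootstrap process $r_n^{*}$, and then to read off the Kolmogorov distance from a uniform comparison of the resulting Gaussian coefficient functions. Theorem~\ref{thm:estimation}(ii) directly gives
\begin{equation*}
\sup_{\alpha \in [\varepsilon,\,1-\varepsilon]} \left| r_n(\alpha) - J(\alpha) B_n(\alpha) \right| = O(n^{-1/2} \log n) \quad \text{a.s.},
\end{equation*}
where $J(\alpha) = \lambda_s'(\alpha_s^{-1}(\alpha)) / f_s(\alpha_s^{-1}(\alpha))$ and $\{B_n\}_{n\geq 1}$ are Brownian bridges. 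The essential work is to derive the analogous statement conditionally on $\mathcal{D}_n$: the bootstrap data are i.i.d.\ draws from $\widetilde F_s$, whose density $\widetilde f_s$ almost surely satisfies, for $n$ large enough, the same regularity as $f_s$. Re-running the proof of Theorem~\ref{thm:estimation}(ii) on this conditional probability space would yield, almost surely in $\mathcal{D}_n$,
\begin{equation*}
\sup_{\alpha \in [\varepsilon,\,1-\varepsilon]} \left| r_n^{*}(\alpha) - \widetilde J(\alpha) \widetilde B_n(\alpha) \right| = O(n^{-1/2} \log n),
\end{equation*}
with $\widetilde J(\alpha) = \lambda_s'(\widetilde\alpha_s^{-1}(\alpha)) / \widetilde f_s(\widetilde\alpha_s^{-1}(\alpha))$ and $\widetilde B_n$ a Brownian bridge.

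Next I would prove the $w_n$-rate for the coefficient discrepancy $\sup_{\alpha \in [\varepsilon,1-\varepsilon]} |\widetilde J(\alpha) - J(\alpha)| = O(w_n)$. This rests on three uniform bounds. The bias estimates $\sup_{\mathbb{R}} |\mathbb{E}[\widetilde f_s] - f_s| = O(h_n^2)$ and $\sup_{\mathbb{R}}|\mathbb{E}[\widetilde f_s'] - f_s'| = O(h_n^2)$ follow from a Taylor expansion under \ref{as:density_bias} and \ref{as:kernel_bias}. The stochastic fluctuation $\sup_{\mathbb{R}}|\widetilde f_s - \mathbb{E}[\widetilde f_s]| = O(\sqrt{\log(h_n^{-1})/(nh_n)})$ and its analogue $\sup_{\mathbb{R}}|\widetilde f_s' - \mathbb{E}[\widetilde f_s']| = O(\sqrt{\log(h_n^{-1})/(nh_n^3)})$ follow from the uniform-in-bandwidth result of \citet{Gine} under \ref{as:bandwidth_stute}--\ref{as:kernel_gine} and \ref{as:kernelderivative_gine}--\ref{as:bandwidth_derivative} respectively. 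Uniform control of the quantile discrepancy, $\sup_\alpha |\widetilde\alpha_s^{-1}(\alpha) - \alpha_s^{-1}(\alpha)| = O(w_n)$, is then obtained by inverting $|\widetilde F_s - F_s| = O(w_n)$ on the compact interval $[\alpha_s^{-1}(1-\varepsilon), \alpha_s^{-1}(\varepsilon)]$, where $f_s$ is bounded below by \ref{as:density_regularity_positive}. The chain rule, combined with the $\mathcal{C}^2$ regularity of $\lambda_s$ from \ref{as:regulartiy_lambda}, then assembles these into the desired $w_n$-rate for $\|\widetilde J - J\|_\infty$.

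The final step is to convert the process-level bounds into a Kolmogorov bound on the laws of the suprema. Writing $T_n = \sup_\alpha |J(\alpha) B_n(\alpha)|$ and $T_n^{*} = \sup_\alpha |\widetilde J(\alpha) \widetilde B_n(\alpha)|$, the triangle inequality gives
\begin{equation*}
\Bigl| T_n^{*} - \sup_\alpha |J(\alpha) \widetilde B_n(\alpha)| \Bigr| \leq \|\widetilde J - J\|_\infty \cdot \sup_\alpha |\widetilde B_n(\alpha)| = O_{\mathbb{P}^{*}}(w_n),
\end{equation*}
while $\sup_\alpha |J(\alpha) \widetilde B_n(\alpha)|$ has the same law as $T_n$. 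An anti-concentration estimate for the supremum of the continuous nondegenerate Gaussian process $\{J(\alpha) B_n(\alpha)\}_{\alpha\in[\varepsilon,1-\varepsilon]}$, whose distribution function is Lipschitz on compact subsets of $(0,+\infty)$, then converts this $w_n$ stochastic perturbation into an $O(w_n)$ shift in Kolmogorov distance; the $O(n^{-1/2}\log n)$ remainders from the two strong approximations are absorbed, since \ref{as:bandwidth_stute} forces $n^{-1/2}\log n = o(w_n)$.

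The principal obstacle is the conditional strong approximation in the first step: one must adapt the Cs\"org\H{o}--R\'ev\'esz construction to the probability space of the bootstrap sample \emph{conditionally on} $\mathcal{D}_n$, and verify that the constants entering their rate (functions of $\widetilde f_s$, $\widetilde f_s'$, and the behaviour of $\widetilde F_s$ near the endpoints) remain uniformly controlled as $n$ grows, almost surely in $\mathcal{D}_n$. This is precisely the role of assumptions \ref{as:density_bias}--\ref{as:bandwidth_derivative}: they ensure the a.s.\ uniform convergence of $\widetilde f_s$ and $\widetilde f_s'$ to $f_s$ and $f_s'$, and thereby transfer the Cs\"org\H{o}--R\'ev\'esz rate from $F_s$ to $\widetilde F_s$ with only $o(w_n)$ loss.
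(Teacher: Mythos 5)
Your overall architecture mirrors the paper's: strong approximation of $r_n$ and $r_n^*$ by Gaussian processes with coefficient functions $J = -y_s$ and $\widetilde J = -\widetilde y_s$, uniform control of the coefficient discrepancy at rate $w_n$, and an anti-concentration estimate for the limiting supremum. But the final conversion to a Kolmogorov bound has a real gap. Your additive triangle inequality $\bigl|T_n^* - \sup_\alpha|J(\alpha)\widetilde B_n(\alpha)|\bigr| \leq \|\widetilde J - J\|_\infty \sup_\alpha|\widetilde B_n(\alpha)|$ produces a perturbation that is only $O_{\mathbb{P}^*}(w_n)$: the factor $\sup|\widetilde B_n|$ has the unbounded (sub-Gaussian) tail of a Brownian-bridge supremum. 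To pass to the distribution functions you would truncate at a level $M$, and the anti-concentration loss $\sim w_n M$ traded against the Gaussian tail $e^{-cM^2}$ forces $M\sim\sqrt{\log(1/w_n)}$, giving $O(w_n\sqrt{\log(1/w_n)})$, not $O(w_n)$. Your own phrasing — ``Lipschitz on compact subsets of $(0,+\infty)$'' — already hints at the problem: the perturbation is not confined to a compact set. The paper circumvents this by using the \emph{multiplicative} structure $Z_1^* = (\widetilde y_s/y_s)\,y_s B_1^*$ together with $|\widetilde y_s/y_s - 1| \leq Cw_n$ everywhere (Lemma~\ref{lemma:gine_guillou}), so that $\sup|Z_1^*|$ is sandwiched between $(1\pm Cw_n)\sup|y_sB_1^*|$. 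The resulting shift from $t$ to $t/(1\pm Cw_n)$ yields a mean-value remainder of the form $\phi_{\bar Z}(\xi)\,\xi\cdot w_n$, and the crucial ingredient is $\sup_{x\geq 0}|x\,\phi_{\bar Z}(x)|<\infty$ (via the Tsirelson 1976 tail remark), which makes this uniform over $t$. You can recover the same strength from your setup by writing $|\widetilde J(\alpha)-J(\alpha)|\leq Cw_n|J(\alpha)|$ --- legitimate because $J$ is bounded away from zero on $[\varepsilon,1-\varepsilon]$ by Lemma~\ref{lemma:y_s_positive} --- so that the perturbation of $\sup|J\widetilde B_n|$ is multiplicative rather than an additive $O_{\mathbb{P}^*}(w_n)$ term.

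A secondary issue: you invoke the almost-sure $O(n^{-1/2}\log n)$ bound from Theorem~\ref{thm:estimation}(ii), but the Kolmogorov-distance conversion (Sargan's device) needs a probability bound of the form $\mathbb{P}\bigl(\sup|r_n-Z_n|>Cv_n\bigr)=O(v_n)$ at each fixed $n$. An a.s.\ statement ``for $n$ large enough, depending on $\omega$'' does not by itself quantify the exceptional set at finite $n$. One must return to the exponential Cs\"{o}rg\H{o}--R\'{e}v\'{e}sz tail inequalities and choose the threshold so the exceptional probability is itself $O(v_n)$, as in the paper's Lemma~\ref{lem:strong_approx_ineq}; the same care is needed conditionally on $\mathcal{D}_n$ for $r_n^*$.
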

The proof is given in Appendix~\ref{sec:appendix_proofs}. Note that under assumption \ref{as:bandwidth_stute} $w_n$ tends to $0$ as $n \rightarrow + \infty$. The primary tuning parameters of Algorithm \ref{algo:bootstrap} concern the bandwidth $h_n$. The optimal bandwidth is obtained by minimizing the term $w_n$ with respect to $h_n$. This leads to $h_n \sim (\log n / n)^{1/5}$ and an approximation error of order $O((\log n/n)^{2/5})$ for the bootstrap estimate (see \citep{Stute1982} for details on the derivation of the optimal bandwidth). Notice that assumptions \ref{as:bandwidth_stute} and \ref{as:bandwidth_derivative} are fulfilled by such a bandwidth.

Although the rate of the bootstrap estimate is slower than that of the \mbox{Gaussian} approximation, the smoothed bootstrap method remains very appealing from a computational perspective: it is indeed very difficult to build confidence bands from simulated Brownian bridges in practice. Finally, as said above, it should be noticed that a non smoothed bootstrap of the $\mv$ curve would lead to worse approximation rates, of the order $O_{\mathbb{P}}(n^{-1/4})$ namely, see \citep{Reiss89}.

\begin{remark}
In the result of Theorem \ref{thm:bootstrap} we consider the supremum over $\alpha \in [\varepsilon, 1-\varepsilon]$ instead of the supremum over $\alpha \in [0, 1-\varepsilon]$ for several reasons. One of the reasons is that to obtain a rate of convergence for the bootstrap approximation we need to have the boundedness of the density of the supremum of the absolute value of the Gaussian process $Z_1$. This is almost immediate from the result of \citet{Pitt1979} if the supremum is over $[\varepsilon, 1-\varepsilon]$ (see the proof of Theorem \ref{thm:bootstrap} in Appendix~\ref{sec:appendix_proofs} for more details). This is more complicated if the supremum is over $[0, 1-\varepsilon]$. Using the result of \citet{Lifshits}, a closer look at the Gaussian process $Z_1$ might lead to the boundedness of the density of the supremum of $\vert Z_1 \vert$ over $[0, 1-\varepsilon]$. Another reason is that several arguments in the proof also use the fact that $\inf_{[\varepsilon, 1-\varepsilon]}f_s \circ \alpha_s^{-1} > 0$. However we do not have $\inf_{[0, 1-\varepsilon]}f_s \circ \alpha_s^{-1} > 0$ as under the assumptions of Theorem \ref{thm:bootstrap} $f_s$ is continuous on $\mathbb{R}$ and therefore $f_s(\alpha_s^{-1}(0)) = 0$.
\end{remark}

\subsubsection{Illustrative Numerical Experiments}

Let $x \in \mathbb{R} \mapsto \mathcal{N}(\mu, \Sigma)(x) \in \mathbb{R}$ denote the density of a Gaussian distribution with mean $\mu$ and covariance $\Sigma$. We consider a two-dimensional Gaussian mixture whose density is given by:
\begin{equation}
\label{eq:density_gm}
\forall x \in \mathbb{R}, \quad f(x) = 0.5\mathcal{N}(\mu_1,\Sigma_1)(x) + 0.5\mathcal{N}(\mu_2,\Sigma_2)(x)
\end{equation}
where $\mu_1 = (0, 0)$, $\mu_2 = (-1, -1)$,
\begin{equation*}
\Sigma_1 = 
 \begin{pmatrix}
  2 & 2 \\
  2 & 4
 \end{pmatrix}
\text{ and }
\Sigma_2 = 
 \begin{pmatrix}
  2 & 0 \\
  0 & 2
 \end{pmatrix}
 \, .
\end{equation*}
Density level sets of such a distribution are shown in Fig.~\ref{fig:disc}. We draw a sample of size $n=500$ from this two-dimensional Gaussian mixture and compute $\widehat \mv_f$.
We then apply Algorithm \ref{algo:bootstrap} using the biweight kernel defined in \eqref{eq:biweight_kernel} with a bandwidth $h=0.005$ to obtain a $90$\% confidence band. We take $\varepsilon=0.05$ and $N=n$ bootstrap replications to approximate $\mathbb{P}^*(\sup_{\alpha \in [\varepsilon, 1-\varepsilon]}\vert r_n^*(\alpha)\vert \leq \cdot)$. All the volumes required to compute the $\mv$ curves are estimated using Monte-Carlo integration, drawing uniformly $1,\!000,\!000$ samples in the hypercube enclosing the data. Fig.~\ref{fig:bootstrap_exp} shows the $90$\% confidence curves.

\begin{figure}[!ht]
\centering
\includegraphics[width=10cm]{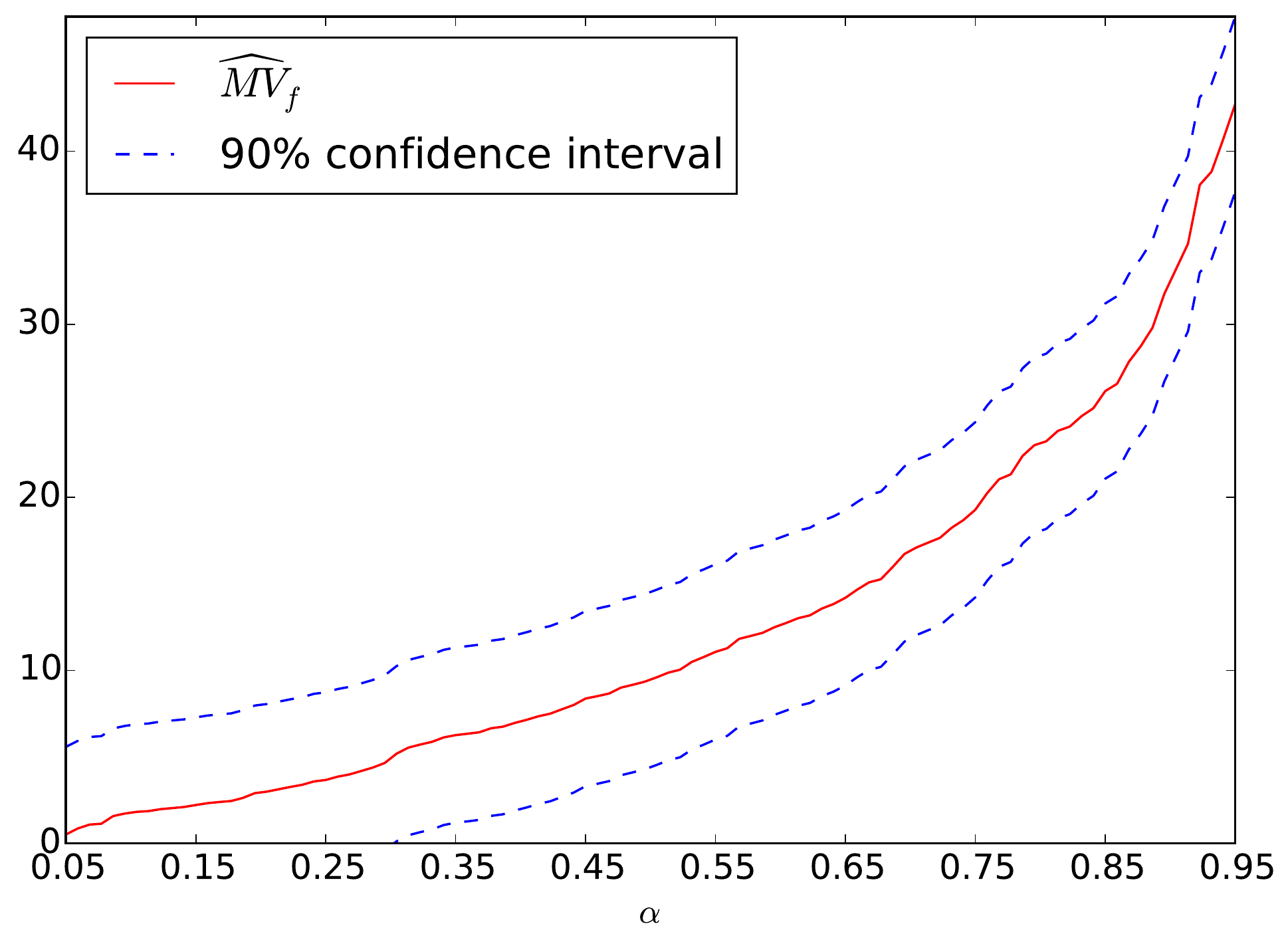}
\caption{\label{fig:bootstrap_exp} Empirical MV curve and $90$\% confidence interval curves obtained with the smooth bootstrap approach. Note that it seems that the $90$\% confidence interval curves get closer to the empirical $\mv$ curve as $\alpha$ increases but this is just an optical illusion.}
\end{figure}

%\subsubsection{Conclusion}

In this section, statistical estimation of the true $\mv$ curve of a given scoring function $s$ has been investigated, as well as the problem of building confidence regions in the Mass Volume space. In all the rest of the paper, focus is on statistical learning of a nearly optimal scoring function $s$ based on a training sample $X_1, \dots, X_n$ with respect to the $\mv$ curve criterion.
% !TEX root = mv_curves.tex

\section{A M-estimation Approach to Anomaly Scoring}\label{sec:form}
Now we are are equipped with the concept of Mass Volume curve, the anomaly scoring task can be formulated as the building of a scoring function $s$, based on the training set $X_1,\ldots,X_n$, such that $\mv_s$ is as close as possible to the optimum $\mv^*$. Due to the functional nature of the criterion performance, there are many ways of measuring how close the $\mv$ curve of a scoring function candidate and the optimal one are. The $L_p$-distances, for $1\leq p\leq +\infty$, provide a relevant collection of risk measures. Let $\varepsilon\in (0,1)$ be fixed (take $\varepsilon=0$ if $\lambda(\supp(f))<+\infty$) and consider the losses related to the $L_1$-distance and that related to the sup norm:
\begin{eqnarray*}
d_1(s,f)&=&\int_0^{1-\varepsilon}\vert \mv_s(\alpha)-\mv^*(\alpha)\vert d\alpha \, ,\\
d_{\infty}(s,f)&=&\sup_{\alpha\in [0,1-\varepsilon]}\{\mv_s(\alpha)-\mv^*(\alpha)\} \, .
\end{eqnarray*}
Observe that, by virtue of Proposition \ref{prop:opt}, the `excess-risk' decomposition applies in the $L_1$ case and the learning problem can be directly tackled through standard $M$-estimation arguments:
\begin{equation*}
d_1(s,f)=\int_0^{1-\varepsilon}\mv_s(\alpha)d\alpha-\int_0^{1-\varepsilon}\mv^*(\alpha)d\alpha \, .
\end{equation*}
Hence, possible learning techniques could be based on the minimization, over a set $\S_0\subset \S$ of candidates, of empirical counterparts of the area under the $\mv$ curve, such as $\int_0^{1-\varepsilon}\widehat{\mv}_s(\alpha)d\alpha$. In contrast, the approach cannot be straightforwardly extended to the sup norm situation. A possible strategy is to combine $M$-estimation with approximation methods so as to `discretize' the functional optimization task. This strategy can be implemented as follows. First, we replace the unknown target curve $\mv^*$ by an approximation that can be described by a finite number of scalar parameters, by a piecewise constant approximant $\mv^*_{\sigma}$ whose breakpoints are given by a subdivision $\sigma:\; 0<\alpha_1<\dots<\alpha_K=1-\varepsilon$ precisely, and that is itself a $\mv$ curve, namely the $\mv$ curve of the piecewise constant scoring function
\begin{equation}\label{eq:approximant}
s^*_{\sigma}(x)=\sum_{k=1}^K(K-k+1)\cdot \mathbb{I}\{x\in\Omega^*_{\alpha_{k}}\setminus\Omega^*_{\alpha_{k-1}} \}
\end{equation}
which is indeed a piecewise constant approximant of $\mv^*$ related to the meshgrid $\sigma$ (see \eqref{eq:piecewise_mv_curve}). Then, the $L_{\infty}$-risk can be decomposed as the sum of two terms 
\begin{equation*}
d_{\infty}(s,f)\leq  \sup_{\alpha\in [0,1-\varepsilon]}\vert \mv_s(\alpha) - \mv^*_{\sigma}(\alpha)\vert +
\sup_{\alpha\in [0,1-\varepsilon]}\{\mv^*_{\sigma}(\alpha)-\mv^*(\alpha)\} \, ,
\end{equation*}
the second term on the right-hand side being viewed as the \textit{bias} of the statistical method. Restricting optimization to the first term on the right-hand side of the $L_{\infty}$-risk decomposition, the problem thus boils down to recovering the bilevel sets $\mathcal{R}^*_k=\Omega^*_{\alpha_{k}}\setminus\Omega^*_{\alpha_{k-1}}$ for $k=1,\ldots,K$ as we obviously have
\begin{equation*}
s^*_{\sigma} \in \argmin_{s \in \mathcal{S}} \sup_{\alpha\in [0,1-\varepsilon]}\vert \mv_s(\alpha) - \mv^*_{\sigma}(\alpha)\vert \, .
\end{equation*}
This simple observation paves the way for designing scoring strategies relying on the estimation of a finite number of minimum volume sets.

In the next section, we describe a learning algorithm for anomaly ranking, that can be viewed to a certain extent as a statistical version of an adaptive approximation method by piecewise constants introduced in \citep{DeVore87} (see also Section~3.3 in \citep{DeVore98}), to build a piecewise constant estimate of the optimal curve $\mv^*$ and a nearly optimal piecewise constant scoring function, mimicking $s^*_{\sigma}$. The subdivision $\sigma$ is entirely learnt from the data, in order to produce accurate estimates of $\mv^*$ in an adaptive fashion: looking at Fig. \ref{fig:mv_curves_examples}, an ideal meshgrid should be loose where $\mv^*$ is nearly flat or grows very slowly (`near' $0$) and refined when it exhibits high degrees of variability (as one gets closer to $1$).

Before describing and analyzing a prototypal approach to $\mv$ curve optimization, a few remarks are in order.
\begin{remark}\label{rk:connect_sup} {\sc (Connections with supervised ranking)} Based on the observation made in Remark \ref{rk:ROC}, one may see that, in the specific case where the support of $F(dx)$ coincides with the unit square $[0,1]^d$, the $\mv$ curve of any scoring function $s(x)$ corresponds to the reflection about the first diagonal of the $\roc$ curve of $s(x)$ when the `negative distribution' is the uniform distribution on $[0,1]^d$ and the `positive distribution' is $F(dx)$. As shown in \citep{ClemRob14}, this permits to turn unsupervised ranking into supervised ranking in the compact support situation and to exploit supervised ranking algorithms combined with random sampling to solve the $\mv$ curve minimization problem. A similar idea had been proposed in \citep{SHS05} to turn anomaly detection into supervised binary classification.
\end{remark}

\begin{remark} {\sc (Plug-in)} As the density $f$ is an optimal scoring function, a natural strategy would be to estimate first the unknown density function $f$ by means of (non-) parametric techniques and next use the resulting estimator as a scoring function. Beyond the computational difficulties one would be confronted to for large or even moderate values of the dimension, we point out that the goal pursued in this paper is by nature very different from density estimation: the local properties of the density function are useless here, only the ordering of the possible observations $x\in \X$ it induces is of importance (see Proposition \ref{prop:opt}). One may also show that a candidate $f_1$ can be a better approximation of the density $f$ than a candidate $f_2$ for the $L_q$ loss say, but a worse approximation for the $\mv$ curve criterion (see Example \ref{ex:plugin}).
\end{remark}

\begin{example}
\label{ex:plugin}
Let $f$ be the density of the truncated normal distribution with support $[0, 1]$, mean equal to 0.5 and variance equal to $0.0225$. Let's consider $f_1 : x \in \mathbb{R} \mapsto f(x - 0.05) \in \mathbb{R}$ and $f_2$ defined for all $x \in \mathbb{R}$ as
\begin{equation*}
f_2(x)=
\begin{cases}
f(x) + \Vert f_1 - f\Vert_{L_2} + 2 \text{ if } x \in [0,1] \\
0 \text{ otherwise} \, .
\end{cases}
\end{equation*}

We thus have $\Vert f_2 - f \Vert_{L_2} = \Vert f_1 - f \Vert_{L_2} + 2 > \Vert f_1 - f \Vert_{L_2}$. Hence $f_1$ is a better approximation of $f$ with respect to the $L_2$ distance. However $\mv_{f_2} < \mv_{f_1}$ and $f_2$ is a better scoring function than $f_1$ with respect to the $\mv$ curve criterion. Indeed one can first show that $\mv_{f_2} = \mv_{f}$. This thus gives $\mv_{f_2} = \mv^* \leq \mv_{f_1}$. Now let $\alpha \in (0, 1)$, $\Omega_{f_1, Q(f_1, \alpha)} = \{x, f_1(x) \geq Q(f_1, \alpha) \}$ is of the form $[0.05 - x_{\alpha}, 0.05 + x_{\alpha}]$ with $x_{\alpha} > 0$ and $F(\Omega_{s, Q(f_1, \alpha)}) \geq \alpha$. The set $\Omega_{f, Q^*(\alpha)}$ is of the form $[- x^*_{\alpha}, x^*_{\alpha}]$ with $x^*_{\alpha} > 0$ and $F(\Omega_{f, Q^*(\alpha)}) = \alpha$. If $\mv_{f_1}(\alpha) = \mv^*(\alpha)$, i.e., $\lambda(\Omega_{f_1, Q(f_1, \alpha)}) = \lambda(\Omega_{f, Q^*(\alpha)})$, then on the one hand $x_{\alpha} = x^*_{\alpha}$ and on the other hand $\Omega_{f_1, Q(f_1, \alpha)} = \Omega_{f, Q^*(\alpha)}$ $\lambda$ almost everywhere by the uniqueness of the solution of $\eqref{eq:mvpb}$ which is impossible. Eventually, $\mv_{f_2} < \mv_{f_1}$. One can also observe that in contrary to $f_1$, $f_2$ preserves the order induced by $f$.
\end{example}

% !TEX root = mv_curves.tex

\section{The {\sc A-Rank} Algorithm} \label{sec:learning}
 
Now that the anomaly scoring problem has been rigorously formulated, we propose a statistical method to solve it and establish learning rates for the sup norm loss.

\subsection{Piecewise Constant Scoring Functions}
We focus on scoring functions of the simplest form, piecewise constant functions. Let $K\geq 1$ and consider a partition $\mathcal{W}$ of the feature space $\mathcal{X}$ in $K$ pairwise disjoint subsets of finite Lebesgue measure: $\mathcal{C}_1,\dots,\mathcal{C}_K$ and the subset $\mathcal{X} \setminus \cup_{k=1}^K \mathcal{C}_k$. When $\lambda(\supp(f))$ is finite, one may suppose $\X \setminus \cup_{k=1}^K \mathcal{C}_k$ of finite Lebesgue measure. Then, define the piecewise constant scoring function given by:
\begin{equation*}
\forall x\in \X, \quad s_{\mathcal{W}}(x)=\sum_{k=1}^K (K-k+1)\cdot\mathbb{I}\{x\in \mathcal{C}_{k}\} \, .
\end{equation*}
Its piecewise constant $\mv$ curve is given by: $\forall \alpha \in [0, F(\cup_{k=1}^K \mathcal{C}_k)),$
\begin{equation}
\label{eq:piecewise_mv_curve}
\mv_{s_{\mathcal{W}}}(\alpha)=\sum_{k=0}^{K-1}\lambda_{k+1}\cdot \mathbb{I}\{\alpha\in [\alpha_{k},\; \alpha_{k+1})\} \, ,
\end{equation}
where $\alpha_{0}=0$, $\alpha_{k}=F(\cup_{j=1}^k\mathcal{C}_j)$ and $\lambda_{k}=\lambda(\cup_{j=1}^k\mathcal{C}_j)$ for all $k \in \{1,\dots,K\}$. If $\lambda(\supp(f))$ is finite, $\mv_{s_{\mathcal{W}}}$ can be defined on the whole interval $[0, 1]$.

% \paragraph{Approximation of the optimal $\mv$ curve.} {\todo Let $\varepsilon\in (0,1)$ (take $\varepsilon=0$ when $F(dx)$ has compact support) and $\sigma:\;\alpha_0=0<\alpha_1<\dots<\alpha_K=1-\varepsilon$ be a subdivision of the interval $[0,1-\varepsilon]$. The $\mv$ curve of the piecewise constant scoring function
% \begin{equation}\label{eq:approximant}
% s^*_{\sigma}(x)=\sum_{k=1}^K(K-k+1)\cdot \mathbb{I}\{x\in\Omega^*_{\alpha_{k}}\setminus\Omega^*_{\alpha_{k-1}} \}
% %&=&\sum_{k=1}^K\mathbb{I}\{x\in \Omega^*_{\alpha_k}\}
% \end{equation}
% is a piecewise constant approximant of $\mv^*$ related to the meshgrid $\sigma$.}

\subsection{Adaptive Approximation of the Optimal MV Curve}\label{subsec:approx}

Using ideas developed in \citep{DeVore87} (see also Section~3.3 in \citep{DeVore98}), we propose to design an adaptive approximation scheme instead of a subdivision fixed in advance. In such a procedure, the subdivision is progressively refined by adding new breakpoints, as further information about the local variation of the target $\mv^*$ is gained: the subdivision will be coarse where the optimal $\mv^*$ is almost flat and fine where it grows rapidly.

We restrict ourselves only to dyadic subdivisions with breakpoints $\alpha_{j,k} = k(1-\varepsilon)/2^j$, with $j \in \mathbb{N}$ and $k \in \{0, \dots, 2^j \}$ and to partitions of the interval $[0,1-\varepsilon]$ produced by recursive dyadic partitioning: any dyadic subinterval $I_{j,k} = [\alpha_{j,k}, \alpha_{j, k+1})$ is possibly split into two halves, producing two siblings $I_{j+1,2k}$ and $I_{j+1,2k+1}$, depending on the local properties of the target $\mv^*$. This adaptive estimation algorithm can be viewed as a top-down search strategy through a binary tree structure.

A binary tree $\T$ is a tree where all internal nodes have exactly two children. The root of the tree $\T$ represents the interval $[0,1-\varepsilon]$ and each node of depth $j$ a subinterval $I_{j,k}$. The two children resulting of a split of the node $I_{j,k}$ are the nodes $I_{j+1,2k}$ and $I_{j+1,2k+1}$. A terminal node or a leaf is a node without children. In the sequel, $I_{j,k}$ will denote interchangeably the interval and the related node. Equipped with this flexible tree structure, $\T$ define a possibly very heterogeneous subdivision $\sigma_{\T}$ of $[0,1-\varepsilon]$

The approximant associated with the the binary tree $\T$ is given by: $\forall \alpha\in[0,1-\varepsilon]$,
\begin{equation*}
\mv_{\sigma_{\T}}(\alpha)=\sum_{\substack{
j,k \\
I_{j,k} \text{ leaf}}
} \mv^*(\alpha_{j,k+1}) \cdot \mathbb{I}\{\alpha \in I_{j,k}  \}
\end{equation*}
where the sum is over all the $j, k$ such that $I_{j,k}$ is a leaf. In order to decide whether a subinterval $I = [\alpha_1, \alpha_2] \subset [0, 1-\varepsilon]$ should be split or not, we use the local error on the subinterval $I$ defined by
\begin{equation*}
\mathcal{E}(I) = \mv^*(\alpha_2) -\mv^*(\alpha_1) \, .
\end{equation*}

The local error $\mathcal{E}(I)$ provides a simple way of estimating the variability of the nondecreasing function $\mv^*$ on the interval $I$. This measure is nonnegative and additive: for any siblings $I_1$ and $I_2$ of the same subinterval $I$,
\begin{equation*}
\mathcal{E}(I) = \mathcal{E}(I_1) + \mathcal{E}(I_2) \, .
\end{equation*}
From \citep{DeVore87}, we know that $\mathcal{E}$ controls the approximation rate of $\mv^*$ by a constant on any interval $I$ in the sense that:
\begin{equation*}
\inf_{c \in [0,\infty)} \Vert \mv^*(.) - c \Vert_{I} \leq \mathcal{E}(I) \, ,
\end{equation*}
where for any function $g : \mathbb{R} \rightarrow \mathbb{R}$, $\Vert g \Vert_{I}$ denotes the sup norm over the interval $I$.

Given a tolerance $\tau > 0$, the general concept of the algorithm generating a binary tree $\mathcal{T}$ and leading to its associated approximant is as follows: if $\mathcal{E}(I) > \tau$, the interval $I$ is split in two siblings, otherwise $I$ is not split and becomes a leaf. If the algorithm ends we thus have:
\begin{equation*}
\sup_{\alpha \in [0, 1-\varepsilon]} \{\mv^*_{\sigma_{\T}}(\alpha) - \mv^*(\alpha)\} \leq \tau.
\end{equation*}
%where $\sigma$ is the subdivision associated with the binary tree $\mathcal{T}$.

An illustration is given in Fig. \ref{fig:adaptive_example}: Fig. \ref{fig:disc} shows the piecewise constant approximant of $\mv^*$ as well as the corresponding piecewise scoring function $s_{\T}(x)$ and Fig. \ref{fig:tree} shows the related binary tree.

\begin{figure}[ht!]
\centering
\subfigure[Piecewise constant adaptive approximant of $\mv^*$ (on the left) and associated piecewise constant scoring function (on the right).]{%
\includegraphics[width=0.99\textwidth]{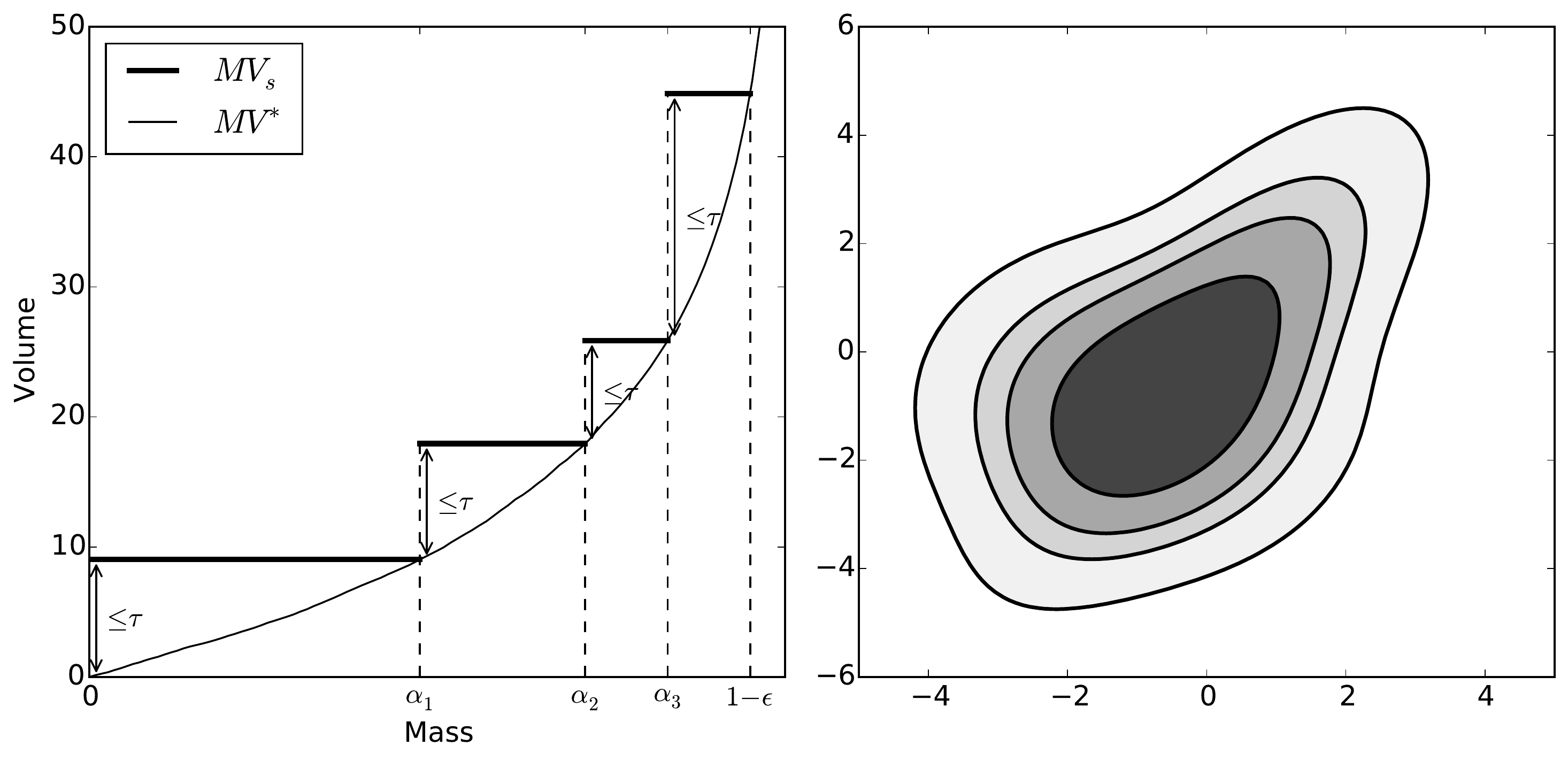}
\label{fig:disc}}
\subfigure[Associated binary tree]{%
\includegraphics[width=0.75\textwidth]{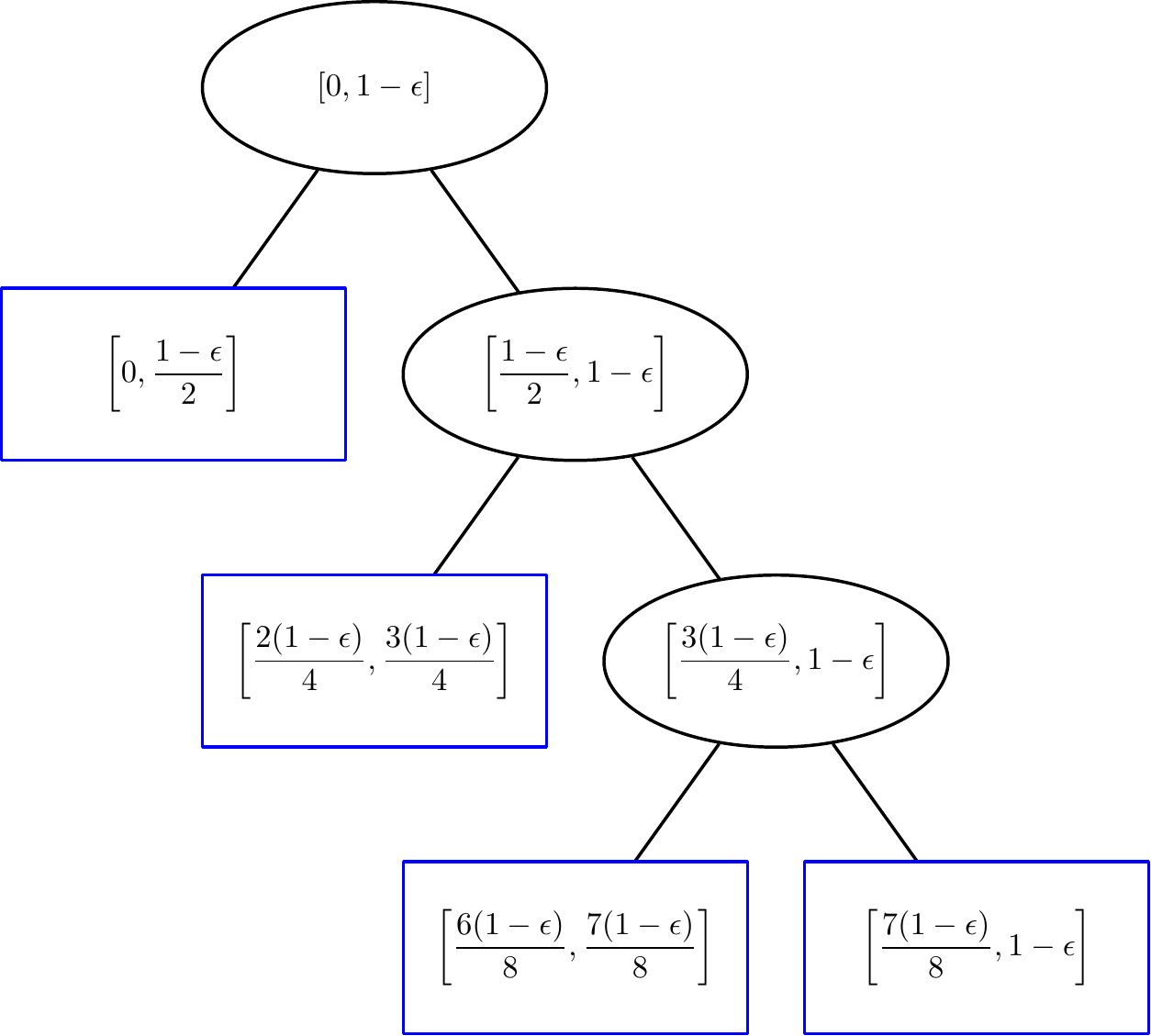}
\label{fig:tree}}
\caption{Illustration of the output of the adaptive approximation by piecewise constants of $\mv^*$ for a tolerance $\tau=20$ and where $\mv^*=\mv_f$, $f$ being the density given in \eqref{eq:density_gm}. Note that this algorithm is run assuming that we know $f$ and therefore that we have access to $\mv^*$ and the minimum volume sets $\{\Omega^*_{\alpha}$, $\alpha \in (0,1)\}$.}
\label{fig:adaptive_example}
\end{figure}

\subsection{Empirical Adaptive Estimation of the Optimal MV Curve}
The adaptive approximation procedure described above can be turned itself into an adaptive estimation technique.
As the optimal curve $\mv^*$ is unknown, the local error $\mathcal{E}(I)$ on a subinterval $I$ is estimated by its empirical counterpart
\begin{equation*}
\widehat{\mathcal{E}}(I) = \lambda(\widehat \Omega_{\alpha_2}) - \lambda(\widehat \Omega_{\alpha_1}) \, ,
\end{equation*}
where $\widehat{\Omega}_{\alpha}$ is an estimation of the minimum volume set $\Omega^*_{\alpha}$. 
Let $\mathcal{G}$ be a class of measurable subsets of the feature space $\X$ of finite Lebesgue measure and $\alpha\in (0,1)$. The empirical minimum volume set $\widehat \Omega_{\alpha}$ related to the class $\mathcal{G}$ and level $\alpha$ is the solution of the optimization problem:
\begin{equation}\label{eq:empminvolset}
\inf_{\Omega\in \mathcal{G}} \lambda(\Omega) \text{ subject to } \widehat F(\Omega)\geq \alpha-\phi \, ,
\end{equation}
where $\phi$ is a penalty term related to the complexity of the class, referred to as the \textit{penalty parameter}. The accuracy of the solution depends on the choices for the class $\mathcal{G}$ and for the penalty parameter (see \citep{ScottNowak06}). In this paper, we do not address the issue of designing algorithms for empirical minimum volume set estimation, we refer to Section~7 of \citep{ScottNowak06} for a description of off-the-shelf methods documented in the literature, including partition-based techniques.

As $\mathcal{E}(I)$, the empirical local error $\widehat{\mathcal{E}}(I)$ is nonnegative and additive:
\begin{equation*}
\widehat{\mathcal{E}}(I) = \widehat{\mathcal{E}}(I_1) + \widehat{\mathcal{E}}(I_2)
\end{equation*}
for any siblings $I_1$ and $I_2$ of the same subinterval.

\paragraph{Splitting criterion.} The empirical local error $\widehat{\mathcal{E}}$ will serve as a splitting criterion. Given a tolerance $\tau > 0$, if $\widehat{\mathcal{E}}(I) > \tau$, the interval $I$ is split in two subintervals $I_1$ and $I_2$. If $\widehat{\mathcal{E}}(I) \leq \tau$, the interval $I$ is not split and is a leaf of the tree $\T$.

\paragraph{Stopping criterion.} One can observe that the estimation algorithm does not necessarily terminate for any given tolerance $\tau > 0$. Indeed, as $\widehat F(\Omega) \in \{k/n: k = 0, \dots , n\}$ for any $\Omega \in \mathcal{G}$, the function $\widehat \lambda: \alpha \in [0,1) \mapsto \lambda(\widehat \Omega_{\alpha}) \in \mathbb{R}$ is a piecewise constant function with breakpoints $\{k/n + \phi: k = 0, \dots, n-1 \}$. This follows by observing that, for any $\alpha \in (k/n + \phi, (k+1)/n + \phi]$, $0 \leq k \leq n-1$, the solution of the empirical optimization problem \eqref{eq:empminvolset} is $\widehat \Omega_{(k+1)/n + \phi}$. Therefore if $\tau$ is strictly lower than the minimum of the amplitude of the jumps of the piecewise constant function $\widehat \lambda$, the algorithm does not stop.

However, the fact that the function $\widehat \lambda$ is a piecewise constant function tells us that the estimation algorithm should stop when the level $j_{\text{max}} = \lfloor \log n/\log 2 \rfloor + 1$ is reached, where $\lfloor \cdot \rfloor$ denotes the floor part function. Indeed if $j = j_{\text{max}}$, the interval $I_{j,k} = [\alpha_1, \alpha_2]$ is such that $\alpha_2 - \alpha_1 < 1/n$. This either means that $\alpha_1$ and $\alpha_2$ belong to the same interval $(k/n + \phi, (k+1)/n + \phi]$ or that $\alpha_1 \in (k/n + \phi, (k+1)/n + \phi]$ and $\alpha_2 \in ((k+1)/n + \phi, (k+2)/n + \phi]$. In the former case $\lambda(\widehat \Omega_{\alpha_2}) = \lambda(\widehat \Omega_{\alpha_1})$ and $\widehat{\mathcal{E}}(I_{j,k}) = 0 \leq \tau$. In the latter case, $\widehat{\mathcal{E}}(I_{j,k})$ is equal to the amplitude of the jump of $\widehat \lambda$ at $(k+1)/n + \phi$. If this amplitude is lower than $\tau$, $I_{j,k}$ is a leaf. If this amplitude is strictly greater than $\tau$, we should normally split $I_{j,k}$. However this split will not improve the error $\widehat{\mathcal{E}}$ as $\widehat{\mathcal{E}}$ cannot be lower than the amplitude of the jump of $\widehat \lambda$. We therefore add a condition $j < j_{\text{max}}$ to the algorithm. This ensures that the algorithm terminates while not changing the output when $\tau$ is greater that the maximum of the amplitudes of the jumps of $\widehat \lambda$. Note however that we do not necessarily have $\widehat{\mathcal{E}}(I_{j,k}) \leq \tau$ for all $j, k$ such that $I_{j,k}$ is a leaf. \newline

\begin{algorithm}[h]
\caption{Adaptive estimation of the optimal curve $\mv^*$}
\label{algo:adaptive}
\begin{algorithmic}[1]
\State \textbf{Inputs}: training set, penalty $\phi$, tolerance $\tau$
	\State Create a binary tree $\T$ with root node $I_{0,0} = [0,1-\varepsilon]$
	\State Create an empty list of nodes $\mathcal{L}$ 
	\State Add the node $I_{0,0}$ to the list $\mathcal{L}$
	\While{$\mathcal{L}$ is not empty}
		\State Get first element of $\mathcal{S}$: node $I_{j,k} = [\alpha_1, \alpha_2]$
		\If{$j < j_{\text{max}}$}
			\State Compute $\widehat{\mathcal{E}}(I_{j,k}) = \lambda(\widehat{\Omega}_{\alpha_2}) - \lambda(\widehat{\Omega}_{\alpha_1})$ where $\widehat{\Omega}_{\alpha}$ is the solution of \eqref{eq:empminvolset}
				\If{$\widehat{\mathcal{E}}(I_{j,k}) > \tau$} 
					\State Split $I_{j,k}$ in two siblings $I_{j+1,2k}$ and $I_{j+1,2k+1}$
					\State Add the node $I_{j+1,2k}$ to $\mathcal{L}$
					\State Add the node $I_{j+1,2k+1}$ to $\mathcal{L}$
				\EndIf
		\EndIf
	\EndWhile
\State \textbf{Output}: Let $\widehat \sigma_{\tau}$ denote the collection of the dyadic levels $\alpha_{j,k}$ corresponding to the leafs of the tree $I_{j,k}$.
\begin{equation*}
\widehat{\mv^*}(\alpha) = \sum_{\substack{
j,k \\
I_{j,k} \text{ leaf}}
} \lambda(\widehat{\Omega}_{\alpha_{j,{k+1}}}) \cdot \mathbb{I}\{\alpha \in I_{j,k}  \} = \sum_{\substack{
j,k \\
\alpha_{j,k} \in \widehat \sigma_{\tau}}
} \lambda(\widehat{\Omega}_{\alpha_{j,{k+1}}}) \cdot \mathbb{I}\{\alpha \in I_{j,k}  \} \, .
\end{equation*}
\end{algorithmic}
\end{algorithm}

The adaptive estimation method is summarized in Algorithm \ref{algo:adaptive}. We denote by $\hat \sigma_{\tau}: 0 = \alpha_0 < \dots < \alpha_{\widehat K} = 1-\varepsilon$ the subdivision output by the algorithm such that the empirical piecewise constant estimator of $\mv^*$ (and of $\mv^*_{\hat \sigma_{\tau}}$) is given by
\begin{equation*}
\forall \alpha \in [0, 1-\varepsilon],\; \widehat{\mv^*}(\alpha) = \sum_{k=0}^{\widehat K -1} \lambda(\widehat \Omega_{\alpha_{k+1}}) \cdot \mathbb{I} \{\alpha \in [\alpha_k, \alpha_{k+1}) \} \, .
\end{equation*}
Its bias incorporates in particular that caused by the approximation/discretization stage inherent to the method.

\subsection{The Anomaly Ranking algorithm {\sc A-Rank}}

The goal pursued is actually to build a scoring function $\hat s(x)$ whose $\mv$ curve is asymptotically close to the empirical estimate $\widehat{\mv^*}$. In this respect, one should pay attention to the fact that $\widehat{\mv^*}$ is not a $\mv$ curve in general. Indeed, the sequence of estimated minimum volume sets $\widehat{\Omega}_{\alpha_k}, k \in \{0, \dots, \widehat K \}$ sorted by increasing order of their mass $\alpha_k$, is not necessarily increasing (for the inclusion), in contrast to the true minimum level sets $\Omega^*_{\alpha_k}$. This explains the monotonicity step of the following algorithm. 

The {\sc A-Rank} algorithm is implemented in two stages. Stage 1 consists in running Algorithm \ref{algo:adaptive} to find the breakpoints $\alpha_0=0<\alpha_1<\dots<\alpha_{\widehat K}=1-\varepsilon<1$ and the corresponding empirical minimum volume sets $\widehat{\Omega}_{\alpha_0}, \dots, \widehat{\Omega}_{\alpha_{\widehat K}}$. Stage 2 consists in the monotonicity step before overlaying the estimated sets. The statistical learning method is described at length in Algorithm \ref{algo:arank}.

Before analyzing the statistical performance of the {\sc A-rank} algorithm, a few remarks are in order.

Notice first that we could alternatively build a monotone sequence of subsets $(\widetilde{\Omega}_k)_{0\leq k\leq \widehat K}$, recursively through: $\widetilde{\Omega}_{\widehat K}=\widehat{\Omega}_{\widehat K}$ and $\widetilde{\Omega}_{k}=\widehat{\Omega}_{k} \cap \widetilde{\Omega}_{k+1}$ for $k\in \{\widehat K-1,\ldots, 0\}$. The results established in the sequel straightforwardly extend to this construction.

One obtains as a byproduct of the algorithm a rough piecewise constant estimator of the (optimal scoring) function $(F_f\circ f)(x)$, namely $\sum_{k=1}^K (\alpha_{k+1}-\alpha_k)\cdot \mathbb{I}\{x \in \widetilde{\Omega}_k\}$, the Riemann sum approximating the integral \eqref{eq:form_opt} when $\mu$ is the Lebesgue measure.

\begin{algorithm}
\caption{{\sc A-Rank}}
\label{algo:arank}
\begin{algorithmic}[1]
\State \textbf{Inputs}: training set, penalty $\Phi$, tolerance $\tau$
	\State Get $(\alpha_k,\widehat{\Omega}_{\alpha_k})_{0 \leq k \leq \widehat K}$ from Algorithm \ref{algo:adaptive}
	\State Monotonicity step: $\widetilde{\Omega}_{\alpha_0} = \widehat{\Omega}_{\alpha_0}$
	\For{$k \in \{1, \dots, \widehat K \}$}
		\begin{equation*}
		\widetilde{\Omega}_{\alpha_k} = \widehat{\Omega}_{\alpha_k} \cup \widetilde{\Omega}_{\alpha_{k-1}}
		\end{equation*}
	\EndFor
\State \textbf{Output}:
\begin{equation*}
\hat s(x) = \sum_{k=0}^{\widehat K} (\widehat K - k + 1) \cdot \mathbb{I}\{ x \in \widetilde{\Omega}_{\alpha_k}\setminus \widetilde{\Omega}_{\alpha_{k-1}}\}
\end{equation*}
\end{algorithmic}
\end{algorithm}

\subsection{Performance Bounds for the {\sc A-Rank} Algorithm}
We now prove a result describing the performance of the scoring function produced by the algorithm proposed in the previous section to solve the anomaly scoring problem, as formulated in Section~\ref{sec:form}. The following assumptions are required.
 
\begin{assump_ml}[resume]
\item \label{as:mvset_in_class} $\forall \alpha \in[0, 1-\varepsilon]$, $\Omega^*_{\alpha}\in \mathcal{G}$.
\item \label{as:rademacher_order} Let $(\varepsilon_i)_{i\geq 1}$ be a Rademacher chaos independent from the $X_i$'s. The Rademacher average given by
\begin{equation}\label{eq:Rademacher}
\mathcal{R}_n=\mathbb{E}\left[ \sup_{\Omega\in \mathcal{G}}\frac{1}{n}\left\vert \sum_{i=1}^n\varepsilon_i \cdot \mathbb{I}\{X_i\in \Omega\} \right\vert \right] \, ,
\end{equation}
where the expectation is over all the random variables, is such that for all $n \geq 1$, $\mathcal{R}_n \leq Cn^{-1/2}$, where $C > 0$ is a constant.
\end{assump_ml}

Assumption \ref{as:rademacher_order} is very general, it is satisfied in particular when the class $\mathcal{G}$ is of finite {\sc VC} dimension, see \citep{Kolt06}. We recall here that for $\delta \in (0,1)$ we have \citep{ScottNowak06}:
\begin{equation}
	F\Bigl( \Bigl\{\sup_{\Omega \in \mathcal{G}} \vert \widehat F(\Omega) - F(\Omega)\vert > \phi_n(\delta) \Bigl\} \Bigr) \leq \delta
\end{equation}
with $\forall n \geq 1$,
\begin{equation}
\label{eq:phi_penalty}
\phi_n(\delta)=2\mathcal{R}_n+\sqrt{\frac{\log(1/\delta)}{2n}} \, .
\end{equation}
Therefore assumption \ref{as:rademacher_order} says that the rate of uniform convergence of true to empirical probabilities is of the order $O_{\mathbb{P}}(n^{-1/2})$. Assumption \ref{as:mvset_in_class} is in contrast very restrictive, it could be however relaxed at the price of a much more technical analysis, involving the study of the bias in empirical minimum volume set estimation under adequate assumptions on the smoothness of the boundaries of the $\Omega^*_{\alpha}$'s, like in \cite{Tsybakov97} for instance (see Remark \ref{rk:bias} below).

We state a rate of convergence for Algorithm \ref{algo:adaptive}. The following assumption shall be required.

\begin{assump_ml}[resume]
\item \label{as:llogl} The function $\mv^*$ is differentiable on $[0,1)$ with derivative $\mv^{*\prime}(\alpha)=1/Q^*(\alpha)$ for all $\alpha \in [0,1)$. Futhermore the derivative $\mv^{* \prime}$ belongs to the space $L \log L$ of Borel functions $g : (0,1) \rightarrow \mathbb{R}$ such that:
\begin{equation*}
\Vert g \Vert_{L \log L} \overset{def}{=} \int_0^1 (1 + \log \vert g(\alpha) \vert)\vert g(\alpha) \vert d\alpha < +\infty \, .
\end{equation*}
\end{assump_ml}

As explained in \citep{DeVore87}, the space $L \log L$ contains all spaces $L_p$, $p > 1$ but is strictly contained in $L_1$.
 
\begin{theorem}\label{thm:rate_bound}
Let $\delta \in (0,1)$. Suppose that assumptions \ref{as:bounded_density},\ref{as:flat_parts} and \ref{as:mvset_in_class}-\ref{as:llogl} hold true. Take $\tau = 5\phi_n(\delta)/Q^*(1-\varepsilon)$ with $\phi_n(\delta)$ as in \eqref{eq:phi_penalty}.
Then we have with probability at least $1 - \delta$:
\begin{equation*}
\forall n \geq 1, \sup_{\alpha \in [0, 1- \varepsilon]} \Bigl\{ \widehat{\mv^*}(\alpha) - \mv^*(\alpha) \Bigr\}
\leq \frac{1}{Q^*(1-\varepsilon)}\left(11\phi_n(\delta) + \frac{1}{n} \right) \, .
\end{equation*}
In addition, there exists a constant $C > 0$ such that with probability at least $1-\delta$ the number of terminal nodes $\text{\emph{card}}(\hat \sigma_{\tau})$ output by Algorithm \ref{algo:adaptive} is such that
\begin{equation*}
\text{\emph{card}}(\hat \sigma_{\tau}) \leq CQ^*(1-\varepsilon) \frac{\Vert \mv^{* \prime} \Vert_{L\log L}}{\phi_n(\delta)} \, .
\end{equation*}
\end{theorem}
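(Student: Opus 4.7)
The proof relies on three ingredients: (i) a high-probability uniform bound on $\widehat F - F$ over $\mathcal{G}$, (ii) a Lipschitz estimate on $\mv^*$ coming from \ref{as:llogl}, and (iii) the $L\log L$ adaptive approximation inequality of \citet{DeVore87}. Set $\phi=\phi_n(\delta)$ and $L=1/Q^*(1-\varepsilon)$, and condition on the event
\[ E_\delta=\Bigl\{\sup_{\Omega\in\mathcal{G}}|\widehat F(\Omega)-F(\Omega)|\le \phi\Bigr\},\]
which has probability at least $1-\delta$ by \ref{as:rademacher_order} and the bound recalled just before \eqref{eq:phi_penalty}. By \ref{as:llogl}, $\mv^{*\prime}(\alpha)=1/Q^*(\alpha)$, and since $Q^*$ is nonincreasing, $\mv^*$ is $L$-Lipschitz on $[0,1-\varepsilon]$.

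The first step is a two-sided comparison of $\lambda(\widehat\Omega_\alpha)$ with $\mv^*(\alpha)$. Assumption \ref{as:mvset_in_class} gives $\Omega^*_\alpha\in\mathcal{G}$, so on $E_\delta$, $\widehat F(\Omega^*_\alpha)\ge \alpha-\phi$ makes $\Omega^*_\alpha$ feasible for \eqref{eq:empminvolset}, yielding $\lambda(\widehat\Omega_\alpha)\le \mv^*(\alpha)$. Conversely $\widehat F(\widehat\Omega_\alpha)\ge \alpha-\phi$ together with $E_\delta$ implies $F(\widehat\Omega_\alpha)\ge \alpha-2\phi$, so $\widehat\Omega_\alpha$ is feasible for the true minimum volume problem at level $\alpha-2\phi$, giving $\lambda(\widehat\Omega_\alpha)\ge \mv^*((\alpha-2\phi)_+)$. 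Lipschitz continuity then produces the pointwise estimate $0\le \mv^*(\alpha)-\lambda(\widehat\Omega_\alpha)\le 2L\phi$ for all $\alpha\in[0,1-\varepsilon]$, and hence the two-sided control
\[ |\mathcal{E}(I)-\widehat{\mathcal{E}}(I)|\le 2L\phi \text{ for every dyadic subinterval } I\subset[0,1-\varepsilon].\]

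For the sup-norm bound, fix a leaf $I_{j,k}=[\alpha_{j,k},\alpha_{j,k+1})$ and $\alpha\in I_{j,k}$. Combining monotonicity of $\mv^*$ with $\lambda(\widehat\Omega_{\alpha_{j,k+1}})\le \mv^*(\alpha_{j,k+1})$ gives
\[ \widehat{\mv^*}(\alpha)-\mv^*(\alpha)\le \mv^*(\alpha_{j,k+1})-\mv^*(\alpha_{j,k})=\mathcal{E}(I_{j,k}).\]
When $j<j_{\max}$, the splitting rule ensures $\widehat{\mathcal{E}}(I_{j,k})\le \tau$, so $\mathcal{E}(I_{j,k})\le \tau+2L\phi=7L\phi$. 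When $j=j_{\max}$, the interval length is at most $(1-\varepsilon)/2^{j_{\max}}\le 1/n$, and Lipschitz continuity gives $\mathcal{E}(I_{j,k})\le L/n$. Taking the maximum over leaves gives a sup-norm bound of the form $L(C\phi+1/n)$, with absolute constant $C$ (loose bookkeeping of the deviations above absorbs into the stated $C=11$). For the leaf count, every internal node $I$ of the output tree satisfies $\widehat{\mathcal{E}}(I)>\tau$, hence $\mathcal{E}(I)>\tau-2L\phi=3L\phi$. Compare to the ideal dyadic tree $\mathcal{T}^*$ obtained by bisecting iff $\mathcal{E}(I)>3L\phi$: a trivial induction shows every internal node of the algorithm's tree is an internal node of $\mathcal{T}^*$. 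Applying DeVore's $L\log L$ adaptive approximation inequality \citep{DeVore87,DeVore98} to $g=\mv^*$ with $g'=\mv^{*\prime}\in L\log L$ at threshold $3L\phi$ bounds the internal nodes of $\mathcal{T}^*$ by $c\|\mv^{*\prime}\|_{L\log L}/(3L\phi)$, and in a full binary tree leaves equal internal nodes plus one, yielding the advertised bound on $\operatorname{card}(\widehat\sigma_\tau)$.

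The main obstacle is the DeVore $L\log L$ approximation count, which is what lets the leaf bound scale with $\|\mv^{*\prime}\|_{L\log L}$ rather than with, say, the total variation of $\mv^*$; the Lipschitz-plus-concentration estimate $|\mathcal{E}(I)-\widehat{\mathcal{E}}(I)|\le 2L\phi$ is what converts the empirical splitting criterion into a true local-error condition with a strictly positive threshold (which is why $\tau=5L\phi$ is chosen so that $\tau-2L\phi>0$). A minor but unavoidable nuisance is the artificial cutoff at depth $j_{\max}$: on those leaves the splitting criterion is overridden and one must fall back on the raw Lipschitz bound, which is precisely what produces the residual $1/n$ term in the sup-norm estimate.
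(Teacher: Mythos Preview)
Your proof is correct and follows essentially the same strategy as the paper: condition on a high-probability concentration event, derive two-sided volume comparisons for $\widehat\Omega_\alpha$ (the paper packages these as Proposition~\ref{prop:SN06}), convert them into a uniform bound on $|\mathcal{E}(I)-\widehat{\mathcal{E}}(I)|$, sandwich the empirical tree between two ideal trees, and invoke the DeVore $L\log L$ leaf-count lemma. Your sup-norm step is slightly more direct than the paper's (which decomposes through the intermediate piecewise constant $E_{\widehat\sigma_\tau}(\mv^*)$), and by exploiting the one-sidedness of $\mv^*(\alpha)-\lambda(\widehat\Omega_\alpha)$ you get $|\mathcal{E}-\widehat{\mathcal{E}}|\le 2L\phi$ rather than the paper's $4L\phi$ via triangle inequality, which is why your bookkeeping gives $7$ instead of the stated $11$.
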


The proof of this result can be found in Appendix~\ref{sec:appendix_proofs}. Its argument essentially combines the analysis of the approximation scheme described in Section~\ref{subsec:approx} and the generalization bounds for empirical minimum volume sets established in in \citep{ScottNowak06}, see Theorem 3 and Lemma 19 therein. Here, the rate obtained is not proved to be optimal in the minimax sense, lower bounds will be investigated in a future work.

\begin{corollary}
Suppose that assumptions of Theorem \ref{thm:rate_bound} are satisfied. Given assumption \ref{as:rademacher_order}, we have with probability at least $1-\delta$, $\forall n \geq 1$,
\begin{equation*}
\sup_{\alpha \in [0, 1- \varepsilon]} \Bigl\{ \widehat{\mv^*}(\alpha) - \mv^*(\alpha) \Bigr\}
\leq \frac{1}{Q^*(1-\varepsilon)}\left(11\cdot \left(\frac{2C}{\sqrt{n}} + \sqrt{\frac{\log(1/\delta)}{2n}} \right) + \frac{1}{n} \right)  \, .
\end{equation*}
\end{corollary}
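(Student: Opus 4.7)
The corollary is an immediate consequence of Theorem \ref{thm:rate_bound} combined with the quantitative bound on the Rademacher average supplied by assumption \ref{as:rademacher_order}. My plan is therefore to do nothing more than substitute.

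First, I would recall the definition of the penalty parameter from equation \eqref{eq:phi_penalty}, namely
\begin{equation*}
\phi_n(\delta) = 2\mathcal{R}_n + \sqrt{\frac{\log(1/\delta)}{2n}} \, .
\end{equation*}
Under assumption \ref{as:rademacher_order}, the Rademacher average satisfies $\mathcal{R}_n \leq C n^{-1/2}$ for all $n \geq 1$, so a direct substitution yields
\begin{equation*}
\phi_n(\delta) \leq \frac{2C}{\sqrt{n}} + \sqrt{\frac{\log(1/\delta)}{2n}} \, .
\end{equation*}

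Second, I would invoke Theorem \ref{thm:rate_bound}: with probability at least $1-\delta$, for every $n \geq 1$,
\begin{equation*}
\sup_{\alpha \in [0, 1- \varepsilon]} \Bigl\{ \widehat{\mv^*}(\alpha) - \mv^*(\alpha) \Bigr\} \leq \frac{1}{Q^*(1-\varepsilon)}\left(11\phi_n(\delta) + \frac{1}{n}\right) \, .
\end{equation*}
Plugging in the previous upper bound for $\phi_n(\delta)$ and using monotonicity of the right-hand side in $\phi_n(\delta)$ immediately yields the claimed inequality.

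There is no real obstacle here: the corollary is a purely cosmetic restatement of Theorem \ref{thm:rate_bound} that makes the explicit $n^{-1/2}$ rate of convergence visible once the abstract complexity assumption \ref{as:rademacher_order} is invoked. The only thing worth noting is that the event on which the bound holds is exactly the event of Theorem \ref{thm:rate_bound} (no union bound is needed), since assumption \ref{as:rademacher_order} is a deterministic bound on the expectation $\mathcal{R}_n$.
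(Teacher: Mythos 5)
Your proof is correct and is exactly the intended argument: the corollary follows by substituting the deterministic bound $\mathcal{R}_n \leq Cn^{-1/2}$ from assumption (C2) into the expression for $\phi_n(\delta)$ in \eqref{eq:phi_penalty} and then plugging the result into the bound of Theorem \ref{thm:rate_bound}, on the same high-probability event. Your remark that no union bound is needed because assumption (C2) is a deterministic statement about the expectation $\mathcal{R}_n$ is also accurate.
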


To derive the rate of convergence of the $\mv$ curve of the estimated scoring function $\hat s$ towards $\mv^*$ we need the following assumption on the density $f$:

\begin{assump_ml}[resume]
\item \label{as:fast_rates} There exist constants $\gamma \geq 0$ and $C > 0$ such that for all $t > 0$ sufficiently small
\begin{equation*}
\sup_{q > 0} \lambda(\{x, \vert f(x) - q \vert \leqslant t\}) \leqslant Ct^{\gamma} \, .
\end{equation*}
\end{assump_ml}

This condition (stated with $\mathbb{P}$ instead of $\lambda$) was first introduced in \citep{Polonik95} and used in \citep{Polonik97} to derive rates of convergence of $F(\widehat \Omega_{\alpha}\Delta \Omega^*_{\alpha})$. It is also used in \citep{Rigollet09} to obtain fast rates for plug-in estimators of density level sets. The exponent $\gamma$ controls the slope of the function $f$ around any level $q > 0$. It is related to Tsybakov's margin assumption stated for the binary classification framework. The relation between Tsybakov's margin assumption and the $\gamma$-exponent assumption is given in \citep{SHS05}.

\begin{theorem}\label{thm:rate_bound_score}
Let $\delta \in (0,1)$. Suppose that assumptions \ref{as:bounded_density},\ref{as:flat_parts} and \ref{as:mvset_in_class}-\ref{as:fast_rates} hold true. Then, with probability at least $1 - \delta$,
\begin{equation*}
\sup_{\alpha \in [0, 1- \varepsilon]} \Bigl\{ \mv_{\hat s}(\alpha) - \mv^*(\alpha) \Bigr\} = O\left(n^{-\frac{\gamma}{4(1+\gamma)}}\right) \, .
\end{equation*}
\end{theorem}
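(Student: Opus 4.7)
My plan is to combine the analysis behind Theorem \ref{thm:rate_bound} with a margin-driven refinement of the symmetric-difference rate for individual empirical minimum volume sets, as permitted by \ref{as:fast_rates}, and then to tune the tolerance $\tau$ so as to balance approximation bias against cumulative estimation error.

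The first step is to establish a sharper individual-set rate. By feasibility of $\Omega^*_\alpha$ for the empirical minimum volume problem (on the event of probability at least $1-\delta$ controlled by $\phi_n(\delta)$) together with optimality of $\widehat\Omega_\alpha$, one has $\lambda(\widehat\Omega_\alpha)\leq \lambda(\Omega^*_\alpha)$; combined with $F(\widehat\Omega_\alpha)\geq \alpha - 2\phi_n(\delta)$, this yields the excess-mass bound
$$\int_{\widehat\Omega_\alpha \Delta \Omega^*_\alpha}|f-Q^*(\alpha)|\,d\lambda = E_F(\Omega^*_\alpha) - E_F(\widehat\Omega_\alpha) \leq C\phi_n(\delta).$$
Splitting $\widehat\Omega_\alpha\Delta\Omega^*_\alpha$ according to whether $|f-Q^*(\alpha)|\leq t$ or $>t$, applying \ref{as:fast_rates} to the boundary part, and optimizing over $t$, yields
$$\lambda(\widehat\Omega_{\alpha_j}\Delta\Omega^*_{\alpha_j}) = O\bigl(\phi_n(\delta)^{\gamma/(1+\gamma)}\bigr) = O\bigl(n^{-\gamma/(2(1+\gamma))}\bigr)$$
uniformly over the breakpoints $\alpha_j$ produced by Algorithm \ref{algo:adaptive}.

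The second step controls the monotonization $\widetilde\Omega_{\alpha_k} = \bigcup_{j\leq k}\widehat\Omega_{\alpha_j}$. Using that the true minimum volume sets are nested, so that $\widehat\Omega_{\alpha_j}\setminus \Omega^*_{\alpha_k}\subset \widehat\Omega_{\alpha_j}\setminus\Omega^*_{\alpha_j}$ for $j\leq k$, a union bound gives
$$\lambda(\widetilde\Omega_{\alpha_k}\Delta\Omega^*_{\alpha_k})\leq 2\sum_{j\leq k}\lambda(\widehat\Omega_{\alpha_j}\Delta\Omega^*_{\alpha_j}) \leq C\,\widehat K\cdot n^{-\gamma/(2(1+\gamma))}.$$
Since $\mv_{\hat s}$ is piecewise constant with value $\lambda(\widetilde\Omega_{\alpha_k})$ on the interval attached to the $k$-th leaf, decomposing $\mv_{\hat s}(\alpha)-\mv^*(\alpha) = [\lambda(\widetilde\Omega_{\alpha_k})-\lambda(\Omega^*_{\alpha_k})] + [\lambda(\Omega^*_{\alpha_k})-\mv^*(\alpha)]$ bounds the first bracket by the previous display, while the second, which captures the local variability of $\mv^*$ on a leaf interval, is controlled by $\tau$ plus a residual of order $\phi_n(\delta)$, exactly as in the proof of Theorem \ref{thm:rate_bound}.

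The third step is to tune $\tau_n$. Adaptive approximation theory (the counting argument behind Theorem \ref{thm:rate_bound}) yields $\widehat K = O(\|\mv^{*\prime}\|_{L\log L}/\tau_n)$, so the estimation contribution is $O(n^{-\gamma/(2(1+\gamma))}/\tau_n)$, while the approximation contribution is $O(\tau_n)$. Equating them gives $\tau_n \asymp n^{-\gamma/(4(1+\gamma))}$, which delivers the announced rate. The main obstacle is the monotonization step: the crude union bound is likely loose, and one must verify that retuning $\tau$ away from the Theorem \ref{thm:rate_bound} choice $\tau\asymp\phi_n(\delta)/Q^*(1-\varepsilon)$ still preserves both the $O(1/\tau_n)$ bound on the number of leaves and the $O(\tau_n)$ approximation bound on the whole interval $[0,1-\varepsilon]$, with the required high probability.
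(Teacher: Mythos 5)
Your strategy matches the paper's: a Polonik-style $\gamma$-exponent bound $\lambda(\widehat\Omega_\alpha\Delta\Omega^*_\alpha)=O(\phi_n(\delta)^{\gamma/(1+\gamma)})$, a union bound across breakpoints to control the monotonized sets $\widetilde\Omega_{\alpha_k}=\bigcup_{j\le k}\widehat\Omega_{\alpha_j}$ (picking up the factor $\widehat K$), and a balancing of $\widehat K\cdot n^{-\gamma/(2(1+\gamma))}\asymp \tau_n$ with $\widehat K = O(1/\tau_n)$ to get $\tau_n\asymp n^{-\gamma/(4(1+\gamma))}$. Your first two steps are correct; in particular the factor $2$ in $\lambda(\widetilde\Omega_{\alpha_k}\Delta\Omega^*_{\alpha_k})\le 2\sum_{j\le k}\lambda(\widehat\Omega_{\alpha_j}\Delta\Omega^*_{\alpha_j})$ does work out because $\Omega^*_{\alpha_k}\setminus\widetilde\Omega_{\alpha_k}\subset\Omega^*_{\alpha_k}\setminus\widehat\Omega_{\alpha_k}$ and $\widetilde\Omega_{\alpha_k}\setminus\Omega^*_{\alpha_k}\subset\bigcup_{j\le k}(\widehat\Omega_{\alpha_j}\setminus\Omega^*_{\alpha_j})$.

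The genuine gap is in your third step. Write the breakpoints of $\mv_{\hat s}$ honestly: $\hat s$ is constant on $\widetilde\Omega_{\alpha_{k+1}}\setminus\widetilde\Omega_{\alpha_k}$, so $\mv_{\hat s}(\alpha)=\lambda(\widetilde\Omega_{\alpha_{k+1}})$ for $\alpha\in[\widetilde\alpha_k,\widetilde\alpha_{k+1})$ where $\widetilde\alpha_k=F(\widetilde\Omega_{\alpha_k})$, not for $\alpha\in[\alpha_k,\alpha_{k+1})$. Your decomposition $\mv_{\hat s}(\alpha)-\mv^*(\alpha)=[\lambda(\widetilde\Omega_{\alpha_k})-\lambda(\Omega^*_{\alpha_k})]+[\lambda(\Omega^*_{\alpha_k})-\mv^*(\alpha)]$ silently treats the second bracket as a ``leaf-interval variability of $\mv^*$'' of size $O(\tau)+O(\phi_n)$ as in Theorem~\ref{thm:rate_bound}, but the running $\alpha$ lives in the \emph{shifted} interval $[\widetilde\alpha_k,\widetilde\alpha_{k+1})$, whose endpoints differ from the dyadic $\alpha_k$ by up to $\max(kC_1\phi_n(\delta)^{\gamma/(\gamma+1)},2\phi_n(\delta))$. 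Because of this horizontal shift, $\alpha$ may fall in a different leaf of $\widehat\sigma_\tau$, and one must insert an extra comparison $|E_{\widehat\sigma_\tau}(\mv^*)(\alpha_k)-E_{\widehat\sigma_\tau}(\mv^*)(\widetilde\alpha_k)|$, which is bounded by using $|\widetilde\alpha_k-\alpha_k|$ together with the Lipschitz constant $1/Q^*(1-\varepsilon)$ of $\mv^*$ on $[0,1-\varepsilon]$ and the per-leaf oscillation bound $\tau+O(\phi_n)$. This extra term is of the same order $\widehat K\,\phi_n^{\gamma/(\gamma+1)}/Q^*(1-\varepsilon)$, so the final rate survives, but the step is not ``exactly as in Theorem~\ref{thm:rate_bound}'' and must be argued. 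You correctly flagged monotonization as the obstacle, but the issue is not that the union bound is loose; it is this coordinate shift between the intended breakpoints $\alpha_k$ and the realized masses $\widetilde\alpha_k$.
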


Notice that the convergence rate is always slower than $n^{-1/4}$. Furthermore, because of assumption \ref{as:mvset_in_class}, the convergence rate only depends on the dimension through the parameter $\gamma$ (refer to \citep{Polonik95} for examples of this parameter). However, assumption \ref{as:mvset_in_class}, made here for simplicity's sake, is very restrictive and relaxing this assumption is discussed in Remark \ref{rk:bias}.
 
\begin{remark}\label{rk:bias}{\sc (Bias analysis)} Whereas the bias resulting from the discretization of the anomaly ranking problem discussed in Section~\ref{subsec:approx} is taken into account in the present rate bound analysis, that related to the approximation of the minimum volume sets $\Omega^*_{\alpha}$ has been neglected here for simplicity's sake (assumption \ref{as:mvset_in_class}). We point out that, under smoothness assumptions on the density sublevel sets, it is possible to control such a bias term. Indeed, consider for instance the case where the boundary $\partial \Omega_{\alpha}^*$ is of finite perimeter $per(\partial \Omega^*_{\alpha})<\infty$ (note that this is the case as soon as $f(x)$ is of bounded variation, the boundary being then $\partial \Omega^*_{\alpha}=\{x\in \X:\; f(x)=t^*_{\alpha}  \}$ by virtue of $f$'s continuity). In this case, if $\mathcal{G}=\mathcal{G}_j$ is the collection of all subsets of $\mathbb{R}^d$ obtained by binding together an arbitrary number of hypercubes of side length $2^{-j}$, cartesian products of intervals of the form $[k/2^j,\; (k+1)/2^j)$ for $k\in\mathbb{Z}$, the bias term inherent to the estimation of the minimum volume set $\Omega^*_{\alpha}$ is bounded by $per(\partial \Omega_{\alpha}^*)2^{-jd}$, up to a multiplicative constant (see Proposition 9.7 in \citep{Mallat90}). Smaller bounds can be naturally established under more restrictive assumptions involving a regularity parameter $\theta$ of $\partial \Omega^*_{\alpha}$, such as its \textit{box dimension}. Although the optimal choice for $j$ would then depend on $\theta$, a standard fashion of nearly achieving the optimal rate of convergence is to perform model selection (see Section~4 in \citep{ScottNowak06}).
\end{remark}

% {\todo check this}
% \begin{remark} ({\sc On faster rate bounds}) We also underline that, neglecting the bias component, rate bounds faster than $O_{\mathbb{P}}(1/\sqrt{n})$ can be attained for the statistical recovery of $\Omega^*_{\alpha}$ (\textit{cf} subsection \ref{subsec:mv_set}), just like in the classification setting, except that, here, it is the behavior of $f(x)$ in the vicinity of $Q(f,\alpha)$ that describes the complexity of the problem. As shown at length in \citep{CV09CA}, a modified version of Tsybakov noise condition, namely
% \begin{equation}\label{eq:noise}
% \mathbb{P}\left\{ \vert f(X)-Q^*(f,\alpha)\vert \leq t \right\}\leq D \cdot t^{a/(1-a)}
% \end{equation}
% for some $a\in (0,1)$ and  $D<+\infty$, yields a rate bound of order $O_{\mathbb{P}}(n^{-1/(2-a)})$ for
% \begin{equation*}
% \vert \inf_{\Omega\in \mathcal{G}:\; F_n(\Omega)\geq \alpha-\phi_n}\lambda(G) -\mv^*(\alpha)\vert
% \end{equation*}
% Observe that condition \eqref{eq:noise} is fulfilled for $a=1/2$ as soon as the density of the r.v. $f(X)$ is bounded. Similar conditions are considered in \citep{Rigollet09} in the context of density level set estimation.
% \end{remark}

The general approach described above can be extended in various ways. The class $\mathcal{G}$ over which minimum volume set estimation is performed (and the penalty parameter as well) could vary depending on the mass target $\alpha$. Additionally assumption \ref{as:mvset_in_class} could be relaxed as discussed in Remark \ref{rk:bias}. In such a case, if $\Omega^{\mathcal{G}}_{\alpha}$ denotes the solution of the minimum volume set optimization problem \eqref{eq:mvpb} over the class $\mathcal{G}$, the error term $\lambda(\widehat \Omega_{\alpha}) -  \lambda(\Omega^*_{\alpha})$ could be decomposed into the sum of a stochastic error and an approximation error:
\begin{equation*}
\bigl(\lambda(\widehat \Omega_{\alpha}) - \lambda(\Omega^{\mathcal{G}}_{\alpha})\bigr) + \bigl(\lambda(\Omega^{\mathcal{G}}_{\alpha}) - \lambda(\Omega^*_{\alpha})\bigr) \, .
\end{equation*}
The choice of the class $\mathcal{G}$ should then balanced the two errors. On the one hand, if the class $\mathcal{G}$ is small, \textit{e.g.}~of finite VC dimension, then the stochastic error can be controlled thanks to assumption \ref{as:rademacher_order} but the approximation error may be very large for some distributions. On the other hand, if the class $\mathcal{G}$ is large, \textit{e.g.}~of infinite VC dimension, then the stochastic error will be very large for some distributions. A model selection approach could thus be incorporated, so as to select an adequate class $\mathcal{G}$ (refer to Section~4 in \citep{ScottNowak06}).
  
In the present analysis, we have restricted our attention to a truncated part of the $\mv$ space, corresponding to mass levels in the interval $[0,1-\varepsilon]$ and avoiding thus out-of-sample tail regions (in the case where $\supp(f)$ is of infinite Lebesgue measure). An interesting direction for further research could consist in investigating the accuracy of anomaly scoring algorithms when letting $\varepsilon=\varepsilon_n$ slowly decay to zero as $n\rightarrow +\infty$, under adequate assumptions on the asymptotic behavior of $\mv^*$. 

Finally, we emphasize that the {\sc A-Rank} algorithm is prototypal, the essential objective of its description/analysis in this paper is to provide an insight into the nature of the anomaly scoring issue, viewed here as a continuum of minimum volume set estimation problems.  Beyond the possible improvements mentioned above (\textit{e.g.} target mass level dependent classes $\mathcal{G}$), the main limitation for the practical implementation of such an approach arises from the apparent lack of minimum volume set estimation techniques that can be scaled to high-dimensional settings documented in the literature (except the dyadic recursive partitioning method considered in \citep{ScottNowak06}, see Section~6.3 therein). However, as observed in \citep{SHS05} (see also \citep{ClemRob14}), supervised methods combined with sampling procedures can be used for this purpose in moderate dimensions. Refer also to Remark \ref{rk:connect_sup}.

% !TEX root = mv_curves.tex

\section{Conclusion}

Motivated by a wide variety of applications including health monitoring of complex infrastructures or fraud detection for instance, we have formulated the issue of learning how to rank observations in the same order as that induced by the density function, which we called \textit{anomaly scoring} here. For this problem, much less ambitious than estimation of the local values taken by the density, a functional performance criterion, the Mass Volume curve namely, is proposed. What $\mv$ curve analysis achieves for unsupervised anomaly detection is quite akin to what ROC curve analysis accomplishes in the supervised setting.

The statistical estimation of $\mv$ curves has been investigated from an asymptotic perspective and we have provided a strategy, where the feature space is overlaid with a few well-chosen empirical minimum volume sets, to build a scoring function with statistical guarantees in terms of rate of convergence for the $\sup$ norm in the $\mv$ space.

\section*{Acknowledgements}
We are grateful to Fran\c{c}ois Portier and Prof. Patrice Bertail for very helpful discussions. We also thank an anomymous reviewer for raising an important question which led to an improvement of Proposition \ref{prop:opt}.

% !TEX root = mv_curves.tex

\appendix

\section{Proofs}
\label{sec:appendix_proofs}

\subsection{Properties of the MV Curve}

To prove Proposition \ref{prop:opt} we need the following lemma.

\begin{lemma}
\label{lem:support_z_f}
The support of the density $\supp(f) = \{x, f(x) > 0\}$ is equal to the set $\mathcal{Z}_f = \{x, F_f(f(x)) > 0 \}$ up to subsets of null $\lambda$ measure, \textit{i.e.}
\begin{equation*}
\lambda(\supp(f) \Delta \mathcal{Z}_f) = 0 \, .
\end{equation*}
\end{lemma}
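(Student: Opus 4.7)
The plan is to write $\supp(f) \Delta \mathcal{Z}_f$ as the disjoint union of $\mathcal{Z}_f \setminus \supp(f)$ and $\supp(f) \setminus \mathcal{Z}_f$, show that the first set is empty, and show that the second has $\lambda$-measure zero. The central tool is the identity $\mathbb{P}(X \in A) = \int_A f \, d\lambda$, used to go back and forth between the law of $X$ on $\X$ and the law of the real-valued random variable $f(X)$.

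First I would verify the inclusion $\mathcal{Z}_f \subseteq \supp(f)$. Since $f \geq 0$ everywhere, $f(X) \geq 0$ almost surely, so
\begin{equation*}
F_f(0) \;=\; \mathbb{P}\{f(X) \leq 0\} \;=\; \mathbb{P}\{f(X) = 0\} \;=\; \int_{\{f = 0\}} f\, d\lambda \;=\; 0.
\end{equation*}
Hence $F_f(f(x)) > 0$ already forces $f(x) > 0$, i.e.\ $x \in \supp(f)$, which yields $\mathcal{Z}_f \setminus \supp(f) = \emptyset$.

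For the remaining direction, set $a = \sup\{t \geq 0 : F_f(t) = 0\} \geq 0$. By monotonicity and right-continuity of $F_f$, one has $F_f(t) = 0$ for all $t < a$, so
\begin{equation*}
\supp(f) \setminus \mathcal{Z}_f \;=\; \{x : f(x) > 0,\; F_f(f(x)) = 0\} \;\subseteq\; \{x : 0 < f(x) \leq a\}.
\end{equation*}
On the piece $\{0 < f < a\}$, the change-of-measure identity gives
\begin{equation*}
\int_{\{0 < f < a\}} f\, d\lambda \;=\; \mathbb{P}\{0 < f(X) < a\} \;=\; F_f(a^-) \;=\; 0,
\end{equation*}
and since the integrand is strictly positive on this set, $\lambda(\{0 < f < a\}) = 0$. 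It remains to handle the level $\{f = a\}$: if $F_f(a) > 0$ then any $x$ with $f(x) = a$ actually lies in $\mathcal{Z}_f$ and contributes nothing; if instead $F_f(a) = 0$ (which in particular occurs whenever assumption \ref{as:flat_parts} holds and $a > 0$), then $\mathbb{P}\{f(X) = a\} = F_f(a) - F_f(a^-) = 0$, so $a\cdot \lambda(\{f = a\}) = \int_{\{f=a\}} f\, d\lambda = 0$, which yields $\lambda(\{f = a\}) = 0$ when $a > 0$ and is vacuous when $a = 0$.

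The main care point is this boundary level $a$, where $F_f$ may fail to be continuous; but the two contingencies $F_f(a) > 0$ and $F_f(a) = 0$ are both quickly disposed of as above. Combining the two steps gives $\lambda(\supp(f) \Delta \mathcal{Z}_f) = 0$, as desired.
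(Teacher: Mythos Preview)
Your proof is correct and follows the same two-step skeleton as the paper: first establish $\mathcal{Z}_f \subseteq \supp(f)$ via $F_f(0)=0$, then show the remaining piece $\supp(f)\setminus \mathcal{Z}_f$ has $\lambda$-measure zero by an integral argument.

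The execution of the second step differs in a useful way. The paper argues by contradiction, defining $f^* = \sup\{f(u): u \in \overline{\mathcal{Z}_f}\cap\supp(f)\}$ and invoking continuity of $F_f$ (i.e.\ assumption \ref{as:flat_parts}) to pass to the limit and get $F_f(f^*)=0$. You instead work directly with $a = \sup\{t\geq 0: F_f(t)=0\}$, show $\supp(f)\setminus\mathcal{Z}_f \subseteq \{0<f\leq a\}$, and then split off the boundary level $\{f=a\}$ with a short case analysis. Your route is a bit cleaner and, notably, does not use continuity of $F_f$: the lemma as stated carries no hypotheses, and your argument matches that, whereas the paper's proof silently relies on \ref{as:flat_parts}. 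The trade-off is that the paper's single contradiction is marginally shorter once continuity is granted.
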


\noindent The proof is deferred to Appendix~\ref{sec:appendix_technical_results}.

\begin{proof}[Proof of Proposition \ref{prop:opt}]
We start by proving that all elements of $\mathcal{S}^*$ have the same $\mv$ curve equal to $\mv_f$. One can first show that for all $s \in \mathcal{S}$ and all $\alpha \in (0,1)$, $\alpha^{-1}_s(\alpha) = F_s^{\dag}(1 - \alpha) = Q(s,\alpha)$. Let $s^* \in \mathcal{S}^*$ and let $\alpha \in (0, 1)$,
\begin{equation*}
\begin{aligned}
\mv_{s^*}(\alpha) &= \lambda\{x, s^*(x) \geq F^{\dag}_{s^*}(1-\alpha)\} = \lambda\{x, F_{s^*}(s^*(x)) \geq 1-\alpha\} \\
&= \lambda\{x \in \supp(f), F_{s^*}(s^*(x)) \geq 1-\alpha\} \\
&\phantom{=}+ \lambda\{x \in \overline{\supp(f)}, F_{s^*}(s^*(x)) \geq 1-\alpha\}
\end{aligned}
\end{equation*}
where the second equality holds because for any $x$ and any $\alpha \in (0, 1)$, $s^*(x) \geq F^{\dag}_{s^*}(1-\alpha)$ if and only if $F_{s^*}(s^*(x)) \geq 1-\alpha$ (see \textit{e.g.} Property 2.3 in \citep{Embrechts2013}). Thanks to assertion $(iv)$ in the definition of $\mathcal{S}^*$, for $x \in \overline{\supp(f)}$, $F_{s^*}(s^*(x)) = 0$ and as $1- \alpha > 0$, the term of the last line of the previous equation is equal to $0$. Using the notations introduced in the definition of $\mathcal{S}^*$, the term $\lambda\{x \in \supp(f), F_{s^*}(s^*(x)) \geq 1-\alpha\}$ can be decomposed as follows:
\begin{equation*}
\lambda\{x \in \supp(f) \cap \overline{\mathcal{Z}}, F_{s^*}(s^*(x)) \geq 1-\alpha\} + \lambda\{x \in \supp(f) \cap \mathcal{Z}, F_{s^*}(s^*(x)) \geq 1-\alpha\} \, .
\end{equation*}
The first term is lower than $\lambda(\supp(f) \cap \overline{\mathcal{Z}}) = 0$ (thanks to assertion $(i)$ in the definition of $\mathcal{S}^*$) and is therefore equal to $0$. We deal with the second term as follows. Let $x \in \supp(f) \cap \mathcal{Z}$,
\begin{equation*}
\begin{aligned}
F_{s^*}(s^*(x)) &= \mathbb{P}(s^*(X) \leq s^*(x)) \\
&= \mathbb{P}(X \in \supp(f), s^*(X) \leq s^*(x)) + \mathbb{P}(X \in \overline{\supp(f)}, s^*(X) \leq s^*(x)) \, .
\end{aligned}
\end{equation*}
The second term is lower than $\mathbb{P}(X \in \overline{\supp(f)}) = 0$ and is therefore equal to 0. Thanks again to assertion $(i)$, the first term is equal to
\begin{equation*}
\mathbb{P}(X \in \supp(f) \cap \mathcal{Z}, s^*(X) \leq s^*(x))
= \mathbb{P}(X \in \supp(f) \cap \mathcal{Z}, f(X) \leq f(x))
\end{equation*}
where we also used assertions $(ii)$ and $(iii)$. As done above for $s^*$, this last term can be shown to be equal to $F_f(f(x))$. Therefore,
\begin{equation*}
\lambda\{x \in \supp(f) \cap \mathcal{Z}, F_{s^*}(s^*(x)) \geq 1-\alpha\} = \lambda\{x \in \supp(f) \cap \mathcal{Z}, F_f(f(x)) \geq 1-\alpha\}
\end{equation*}
and as done above for $s^*$, this can be shown to be equal to $\mv_f(\alpha)$. \newline

We now show \eqref{eq:bound1} and $\implies$ in \eqref{eq:opt}. Let $s \in \mathcal{S}$, $\alpha \in (0,1)$. We have,
\begin{equation*}
\mv_s(\alpha) - \mv^*(\alpha) = \lambda(\Omega_{s, Q(s, \alpha)}) - \lambda(\Omega^*_{\alpha}) \leq \lambda(\Omega^*_{\alpha} \Delta \Omega_{s, Q(s, \alpha)}) \, .
\end{equation*}
Now let $s^* \in \mathcal{S^*}$. As shown above, we have $\mv_{s^*} = \mv_f = \mv^*$. Furthermore, $F(\Omega_{s, Q(s, \alpha)})= \mathbb{P}(s(X) \geq F_s^{\dag}(1-\alpha)) = 1 - \mathbb{P}(s(X) < F_s^{\dag}(1-\alpha))$ and one can show that $\mathbb{P}(s(X) < F_s^{\dag}(1-\alpha)) \leq 1 - \alpha$. We therefore have $\mathbb{P}(\Omega_{s, Q(s, \alpha)}) \geq \alpha$. As $\Omega^*_{\alpha}$ minimizes $\lambda$ over all sets with mass at least $\alpha$, we have $\mv_s(\alpha) - \mv^*(\alpha) \geq 0$. \newline

Finally we prove $\impliedby$ in \eqref{eq:opt}. On the one hand, for $s = f$ we have $\mv_{s^*} \leq \mv_f$. On the other hand, as $f \in \mathcal{S}^*$, thanks to $\implies$ in \eqref{eq:opt} (which has been proven above), for all scoring function $s$, $\mv_f \leq \mv_s$. Therefore $\mv_f \leq \mv_{s^*}$. We thus have $\mv_f = \mv_{s^*}$ on $(0,1)$. From now on, we replace $s^*$ by $s$ for the sake of clarity. We thus want to prove that $\mv_f = \mv_{s}$ on $(0,1)$ implies $s \in \mathcal{S}^*$. The proof is decomposed in three steps.

\paragraph{First step.} We first show that for $\lambda$-almost all $x \in \mathcal{X}$,
\begin{equation*}
\mathbb{I}\{x \in \Omega^*_{\alpha} \} = \mathbb{I}\{x \in \Omega_{s, Q(s,\alpha)} \} \text{ for all } \alpha \in (0, 1) \,
\end{equation*}
and
\begin{equation*}
F_f(f(x)) = F_s(s(x)) \, .
\end{equation*}

Let $\alpha \in (0, 1)$ be fixed. We have $\lambda(\Omega^*_{\alpha}) = \lambda(\Omega_{s, Q(s,\alpha)})$. Since $\mathbb{P}(X \in \Omega_{s, Q(s,\alpha)}) \geq \alpha$, by uniqueness of the solution of the minimum volume set optimization problem up to subsets of null $\lambda$ measure, we have $\lambda(\Omega^*_{\alpha} \Delta \Omega_{s, Q(s,\alpha)}) = 0$ which is equivalent to: for $\lambda$-almost all $x \in \mathcal{X}$
\begin{equation*}
\mathbb{I}\{x \in \Omega^*_{\alpha} \} = \mathbb{I}\{x \in \Omega_{s, Q(s,\alpha)} \}
\end{equation*}
\textit{i.e.}~there exists $\mathcal{Z}_{\alpha} \subset \mathcal{X}$ such that $\lambda(\overline{\mathcal{Z}_{\alpha}}) = 0$ and for all $x \in \mathcal{Z}_{\alpha}$, $\mathbb{I}\{x \in \Omega^*_{\alpha} \} = \mathbb{I}\{x \in \Omega_{s, Q(s,\alpha)} \}$. Note that $\overline{\mathcal{Z}_{\alpha}}$ depends on $\alpha$ and we cannot consider to take the union over all $\alpha \in (0, 1)$ of the $\overline{\mathcal{Z}_{\alpha}}$ because this union would not necessarily be of null $\lambda$ measure (as we have uncountably many $\alpha$). However considering the union over the rationals of $(0,1)$ is sufficient. Indeed, let $\mathcal{Z}_{\mathbb{Q}}$ denote such a union:
\begin{equation*}
\overline{\mathcal{Z}_{\mathbb{Q}}} = \bigcup_{\alpha \in (0,1) \cap \mathbb{Q}} \overline{\mathcal{Z}_{\alpha}} \, .
\end{equation*}
As it is a union over a countable set of $\alpha$, $\lambda(\overline{\mathcal{Z}_{\mathbb{Q}}}) = 0$ and for all $x \in \mathcal{Z}_{\mathbb{Q}} = \cap_{\alpha \in (0,1) \cap \mathbb{Q}} \mathcal{Z}_{\alpha}$ and all $\alpha \in (0,1) \cap \mathbb{Q}$,
\begin{equation*}
\mathbb{I}\{x \in \Omega^*_{\alpha} \} = \mathbb{I}\{x \in \Omega_{s, Q(s,\alpha)} \} \, .
\end{equation*}

Let $x \in \mathcal{Z}_{\mathbb{Q}}$ and let $\alpha \in (0,1) \cap \mathbb{R} \setminus \mathbb{Q}$. We know that there exists a decreasing sequence $(\alpha_m)_{m \geq 0}$ of elements of $(0,1) \cap \mathbb{Q}$ such that $\alpha_m$ converges towards $\alpha$ when $m$ tends to infinity. Now, thanks to a property of the quantile function (see e.g.~assertion (5) of Property 2.3 in (Embrechts, 2013)), we have, for all $\alpha' \in (0,1)$,
\begin{equation*}
\Omega^*_{\alpha'} = \{x', f(x') \geq F_f^{\dag}(1-\alpha') \} = \{x', F_f(f(x')) \geq 1-\alpha' \} \, .
\end{equation*}
Thus the sequence $(\Omega^*_{\alpha_m})_{m \geq 0}$ is decreasing (with respect to set inclusion) and
\begin{equation*}
\lim_{m \rightarrow +\infty} \mathbb{I}\{x \in \Omega^*_{\alpha_m} \} = \mathbb{I}\{x \in \bigcap_{m \geq 0} \Omega^*_{\alpha_m} \} = \mathbb{I}\{x \in \Omega^*_{\alpha} \}
\end{equation*}
as one can show that
\begin{equation*}
\bigcap_{m \geq 0} \Omega^*_{\alpha_m} = \Omega^*_{\alpha} \, .
\end{equation*}
Similarly,
\begin{equation*}
\lim_{m \rightarrow +\infty} \mathbb{I}\{x \in \Omega_{Q(s,\alpha_m)} \} = \mathbb{I}\{x \in \Omega_{Q(s,\alpha)} \} \, .
\end{equation*}
For all $m \geq 0$, $\mathbb{I}\{x \in \Omega^*_{\alpha_m} \} = \mathbb{I}\{x \in \Omega_{Q(s,\alpha_m)} \}$ (because $\alpha_m \in (0,1) \cap \mathbb{Q}$) and thus the two limits are equal. Hence the first result.

Now this also implies that for $x \in \mathcal{Z}_{\mathbb{Q}}$,
\begin{equation*}
\int_0^1 \mathbb{I}\{x \in \Omega^*_{\alpha} \} \mathrm{d}\alpha = \int_0^1 \mathbb{I}\{x \in \Omega_{s, Q(s,\alpha)} \} \mathrm{d}\alpha \, .
\end{equation*}
If $U$ denotes a random variable with uniform distribution on $(0,1)$, the left-hand side can be rewritten as
\begin{equation*}
\mathbb{E}_U[\mathbb{I}\{f(x) \geq F_f^{\dag}(1-U) \}] = \mathbb{E}_X[\mathbb{I}\{f(x) \geq f(X) \}] = F_f(f(x)) \, .
\end{equation*}
Similarly, the right-hand side is equal to $F_s(s(x))$ and therefore for all $x \in \mathcal{Z}_{\mathbb{Q}}$, $F_f(f(x)) = F_s(s(x))$.

\paragraph{Second step.} We can now prove assertions $(i)$, $(ii)$, $(iii)$ and $(iv)$. We will first show that for almost all $x \in \supp(f)$, there exists $\alpha' \in (0,1]$ such that $f(x) = F_f^{\dag}(\alpha')$.

One may first observe that $\lambda(\overline{\mathcal{Z}_f} \cap \supp(f)) = 0$ where $\mathcal{Z}_f = \{x, F_f(f(x)) > 0\}$. Indeed, $\mathcal{Z}_f \subset \supp(f)$ (if $f(x) = 0$ then $F_f(f(x)) = 0$) and therefore $\lambda(\overline{\mathcal{Z}_f} \cap \supp(f)) = \lambda(\supp(f) \setminus \mathcal{Z}_f)$ which is equal to 0 since, thanks to Lemma \ref{lem:support_z_f}, $\lambda(\supp(f) \Delta \mathcal{Z}_f) = 0$.

Let $x \in \supp(f) \cap \mathcal{Z}_f$. If $F_f^{\dag}$ is continuous at $\alpha' = F_f(f(x)) \in (0, 1]$ then $F_f^{\dag}(\alpha') = F_f^{\dag}(F_f(f(x))) = f(x)$. Now, since $F_f^{\dag}$ is increasing, $F_f^{\dag}$ has at most countably many discontinuities and each discontinuity $m$ corresponds to a jump of $F_f^{\dag}$ between two values $t^m_1$ and $t^m_2$. We also know that this jump of $F_f^{\dag}$ corresponds to a flat part of $F_f$ between $t^m_1$ and $t^m_2$. However if $F_f$ has a flat part between $t^m_1$ and $t^m_2$ then $\mathbb{P}(t^m_1 \leq f(X) \leq t^m_2) = F_f(t^m_2) - F_f(t^m_1) = 0$ (recall that $F_f$ is continuous as $f$ has no flat parts). This gives
\begin{equation*}
\int_{\{u, t^m_1 \leq f(u) \leq t^m_2 \}} f(u) \mathrm{d}u = 0 \, .
\end{equation*}
Thus $f = 0$ $\lambda$-almost everywhere on $\{u, t^m_1 \leq f(u) \leq t^m_2 \}$. This implies that $\lambda(\{x, t^m_1 \leq f(x) \leq t^m_2 \} \cap \supp(f)) = 0$. Let $\overline{\mathcal{Z}_0}$ be the union of such sets over $m \geq 0$:
\begin{equation*}
\overline{\mathcal{Z}_0} = \bigcup_{m \geq 0} \{x, t^m_1 \leq f(x) \leq t^m_2 \}
\end{equation*}
and let $\mathcal{Z}_1 = \mathcal{Z}_0 \cap \mathcal{Z}_f$. We have $\lambda(\overline{\mathcal{Z}_1} \cap \supp(f)) = 0$ and for all $x \in \mathcal{Z}_1 \cap \supp(f)$, there exists $\alpha' \in (0,1]$ such that $f(x) = F_f^{\dag}(\alpha')$ because $F_f^{\dag}$ is continuous at $\alpha' = F_f(f(x))$ (otherwise this would imply that $x$ belongs to a region corresponding to a flat part of $F_f$, \textit{i.e.}~one of the sets $\{x, t_1^m \leq f(x) \leq t_2^m\}$).

Similarly, we can show that there exists a set $\mathcal{Z}_2$ such that $\lambda(\overline{\mathcal{Z}_2} \cap \supp(f)) = 0$ and for all $x \in \mathcal{Z}_2 \cap \supp(f)$, there exists $\alpha' \in (0,1]$ such that $s(x) = F_s^{\dag}(\alpha')$. The only difference with $f$ is that we here have to consider $\mathcal{Z}_{\mathbb{Q}}$ so that $F_s(s(x)) = F_f(f(x))$.

Let $\mathcal{Z} = \mathcal{Z}_{\mathbb{Q}} \cap \mathcal{Z}_1 \cap \mathcal{Z}_2$. We have $\lambda(\overline{\mathcal{Z}} \cap \supp(f)) = 0$. Let $x_1, x_2 \in \mathcal{Z} \cap \supp(f)$ such that $f(x_1) > f(x_2)$ and let $\Omega_1 = \{x \in \mathcal{Z}, f(x) \geq f(x_1)\}$. There exists $\alpha_1 \in [0, 1)$ such that $F_f^{\dag}(1-\alpha_1) = f(x_1)$. Let's first consider the case where $\alpha_1 \in (0, 1)$. We have
\begin{equation}
\label{eq:set_equality}
\Omega_1 = \{x \in \mathcal{Z}, f(x) \geq F_f^{\dag}(1-\alpha_1)\} = \{x \in \mathcal{Z}, s(x) \geq F_s^{\dag}(1-\alpha_1)\} \, .
\end{equation}
As $x_1 \in \Omega_1$, $x_1 \in \{x \in \mathcal{Z}, s(x) \geq F_s^{\dag}(1-\alpha_1)\}$ and $s(x_1) \geq F_s^{\dag}(1-\alpha_1)$. Analogously, as $x_2 \notin \Omega_1$, $x_2 \notin \{x \in \mathcal{Z}, s(x) \geq F_s^{\dag}(1-\alpha_1)\}$ and $s(x_2) < F_s^{\dag}(1-\alpha_1)$. Thus $s(x_1) > s(x_2)$.
Let's now consider the case $\alpha_1 = 0$. We do not know that \eqref{eq:set_equality} still holds. However we can always assume without loss of generality that $s$ is bounded. Indeed, if $s$ is not bounded, consider the function $s' = \arctan \circ s$. As $\arctan$ is strictly increasing, thanks to Proposition 3, $\mv_s = \mv_{s'}$. Besides, as $s$ takes its values in $\mathbb{R}^+$, $s'$ takes its values in $[0 , \pi/2)$ and is bounded. Finally, if we show that $s' = T \circ f$ with $T$ strictly increasing then we also have that $s$ is a strictly increasing transform of $f$. Then as $s$ is bounded, $F_s^{\dag}(1-\alpha_1)$ is defined at $\alpha_1 = 0$ and we can replace $\mv_s = \mv^*$ on $(0,1)$ by $\mv_s = \mv^*$ on $[0, 1)$ in the proposition. We can then show that $\eqref{eq:set_equality}$ holds for $\alpha_1=0$.

Similarly, $s(x_1) > s(x_2)$ implies $f(x_1) > f(x_2)$. Therefore for all $x_1, x_2 \in \mathcal{Z} \cap \supp(f)$,
\begin{equation*}
s(x_1) > s(x_2) \iff f(x_1) > f(x_2) \, .
\end{equation*}
and hence the existence of $T$. One can for instance take $T: f(x) \in f(\mathcal{Z} \cap \supp(f)) \mapsto s(x) \in s(\mathcal{Z} \cap \supp(f))$.

\paragraph{Third step.} Finally, assertion $(iv)$ derives from the fact that for $x \in \mathcal{Z}_{\mathbb{Q}} \cap \overline{\supp(f)}$ we have $F_s(s(x)) = F_f(f(x)) = 0$.
\end{proof}

\begin{proof}[Proof of Proposition \ref{prop:convex_derivative}]
To prove the convexity of $\mv^*$ we show that the slopes are increasing. Let $\alpha_1 < \alpha_2 < \alpha_3$ be in $[0, 1)$,
\begin{equation*}
\begin{aligned}
\mv^*(\alpha_2) - \mv^*(\alpha_1) &= \lambda(\Omega^*_{\alpha_2}) - \lambda(\Omega^*_{\alpha_1}) \\
&= \lambda\{ f \geq F_f^{\dag}(1-\alpha_2)\} - \lambda\{ f \geq F_f^{\dag}(1-\alpha_1)\} \\
&= \lambda\{ x, F_f(f(x)) \geq 1-\alpha_2\} - \lambda\{ x, F_f(f(x)) \geq 1-\alpha_1\} \\
&= \lambda\{x,  1-\alpha_2 \leq F_f(f(x)) < 1-\alpha_1 \} \\
&= \int_{\mathcal{X}} \mathbb{I}{\{x, 1-\alpha_2 \leq F_f(f(x)) < 1-\alpha_1\}}dx \\
&= \int_{\mathcal{X}} \mathbb{I}{\{x, 1-\alpha_2 \leq F_f(f(x)) < 1-\alpha_1\}}\frac{1}{f(x)}f(x)dx
\end{aligned}
\end{equation*}
where the third equality holds because for any $x$ and any $\alpha \in [0, 1)$, $f(x) \geq F^{\dag}_{f}(1-\alpha)$ if and only if $F_f(f(x)) \geq 1-\alpha$ (see \textit{e.g.} Property 2.3 in \citep{Embrechts2013}). Now as $x \in \{x, F_f^{\dag}(1-\alpha_2) \leq f(x) < F_f^{\dag}(1-\alpha_1)\}$, we obtain
\begin{equation*}
\begin{aligned}
\mv^*(\alpha_2) - \mv^*(\alpha_1) &\leq \frac{1}{F_f^{\dag}(1-\alpha_2)} \mathbb{P}(F_f(f(X)) \in [1-\alpha_2, 1-\alpha_1)) \\
& = \frac{\alpha_2 - \alpha_1}{F_f^{\dag}(1-\alpha_2)}
\end{aligned}
\end{equation*}
because the distribution of $F_f(f(X))$ is the uniform distribution on $(0,1)$ (as $F_f$ is continuous since $f$ has no flat parts). Similarly, one can show that $\mv^*(\alpha_3) - \mv^*(\alpha_2) \geq (\alpha_3 - \alpha_2)/F_f^{\dag}(1-\alpha_2)$.
Hence, we finally have
\begin{equation*}
\frac{\mv^*(\alpha_2) - \mv^*(\alpha_1)}{\alpha_2 - \alpha_1} \leq \frac{1}{F_f^{\dag}(1-\alpha_2)} \leq \frac{\mv^*(\alpha_3) - \mv^*(\alpha_2)}{\alpha_3 - \alpha_2} \, .
\end{equation*}

We now prove the differentiability of $\mv^*$ and the formula of its derivative. First as $F_f$ is invertible on $(a, F_f^{\dag}(1)]$, $F_f^{\dag}$ is the ordinary inverse of $F_f$ on $(0, 1]$. Let $\alpha \in (0, 1)$ and $h > 0$. Proceeding similarly as above we can show that
\begin{equation*}
\frac{1}{F_f^{\dag}(1-\alpha)}\leq \frac{\mv^*(\alpha + h) - \mv^*(\alpha)}{h} \leq \frac{1}{F_f^{\dag}(1-(\alpha + h))} \, .
\end{equation*}
Therefore as $F_f^{\dag}$ is continuous (because $F_f^{\dag}$ is the ordinary inverse of $F_f$),
\begin{equation*}
\lim_{h \rightarrow 0^+} \frac{\mv^*(\alpha + h) - \mv^*(\alpha)}{h} = \frac{1}{F_f^{\dag}(1-\alpha)} \, .
\end{equation*}
Analogously, we also have
\begin{equation*}
\lim_{h \rightarrow 0^+} \frac{\mv^*(\alpha - h) - \mv^*(\alpha)}{-h} = \frac{1}{F_f^{\dag}(1-\alpha)}
\end{equation*}
which implies that $\mv'^{*}(\alpha) = 1/F_f^{\dag}(1-\alpha)$.
For $\alpha=0$, we have $\mv^*(0)=0$ and proceeding as for $\alpha \in (0,1)$, if $h > 0$,
\begin{equation*}
\mv^*(h) = \int_{\X}\mathbb{I}\{x, F_f(f(x)) \geq 1-h\}\frac{1}{f(x)}f(x)dx
\end{equation*}
and
\begin{equation*}
\frac{h}{F_f^{\dag}(1)}\leq \mv^*(h) \leq \frac{h}{F_f^{\dag}(1-\alpha)} \, .
\end{equation*}
Therefore $\lim_{h \rightarrow 0^+} \mv^*(h)/h = 1/F_f^{\dag}(1)$.
Thus, as $Q^*(\alpha) = F_f^{\dag}(1-\alpha)$ for all $\alpha \in [0, 1)$ we obtain the formula of the derivative of $\mv^*$.
\end{proof}

\subsection{Statistical Estimation of the MV curve}

\subsubsection{Strong approximation: proof of Theorem \ref{thm:estimation}}

To prove Theorem \ref{thm:estimation} we need the following lemma.
\begin{lemma}
\label{lem:dkw_taylor_remainder}
Under assumptions \ref{as:bounded_score}-\ref{as:density_regularity_positive} and \ref{as:limit_alpha_1} there exists a constant $C > 0$ such that we almost surely have for $n$ large enough,
\begin{equation*}
\sup_{\alpha' \in [\varepsilon, 1]}\vert \widehat F^{\dag}_s(\alpha') - F^{\dag}_s(\alpha') \vert^2 \leq C \frac{\log n}{n} \, .
\end{equation*}
\end{lemma}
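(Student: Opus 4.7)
My plan is to combine a uniform Dvoretzky--Kiefer--Wolfowitz (DKW) bound for the empirical CDF $\widehat F_s$ with a one-step Taylor inversion, exploiting the fact that assumptions \ref{as:density_regularity_positive} and \ref{as:limit_alpha_1} together force $f_s$ to be bounded below by a positive constant on a compact subinterval of $(a,b]$ containing every quantile $F_s^{\dag}(\alpha')$ for $\alpha' \in [\varepsilon,1]$. The DKW step first: the inequality $\mathbb{P}(\sup_t |\widehat F_s(t) - F_s(t)| > \lambda) \leq 2e^{-2n\lambda^2}$ evaluated at $\lambda_n = C_0\sqrt{\log n/n}$, with $C_0$ chosen so that $2n^{-2C_0^2}$ is summable, yields by Borel--Cantelli the almost sure bound $\sup_t |\widehat F_s(t) - F_s(t)| \leq C_0\sqrt{\log n/n}$ for all $n$ large enough.

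Next, I would fix the compact interval on which $f_s$ is uniformly positive. By \ref{as:density_regularity_positive} the density $f_s$ is continuous and strictly positive on $(a,b)$; by \ref{as:limit_alpha_1} it extends continuously to $b$ with $f_s(b)=A>0$, so the infimum $c_\varepsilon := \inf_{J_\varepsilon} f_s$ over the compact set $J_\varepsilon := [F_s^{\dag}(\varepsilon/2), b]$ is strictly positive. A Glivenko--Cantelli argument together with the continuity of $F_s^{\dag}$ on $[\varepsilon/2,1]$ (which holds because $F_s$ is strictly increasing on $(a,b)$ by \ref{as:density_regularity_positive}) then guarantees that, almost surely for $n$ large enough, both $F_s^{\dag}(\alpha')$ and $\widehat F_s^{\dag}(\alpha')$ lie in $J_\varepsilon$ for every $\alpha' \in [\varepsilon,1]$.

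Now the Taylor inversion. Fix $\alpha' \in [\varepsilon,1]$ and set $t = F_s^{\dag}(\alpha')$, $\hat t = \widehat F_s^{\dag}(\alpha')$. Continuity of $F_s$ gives $F_s(t) = \alpha'$; and because the $s(X_i)$ are almost surely distinct (by continuity of $F_s$) the empirical CDF has jumps of exactly $1/n$, so $|\widehat F_s(\hat t) - \alpha'| \leq 1/n$. Combining with the DKW step yields
\[
|F_s(\hat t) - F_s(t)| \leq |F_s(\hat t) - \widehat F_s(\hat t)| + |\widehat F_s(\hat t) - \alpha'| \leq C_0\sqrt{\log n/n} + 1/n.
\]
A first-order Taylor expansion of $F_s$ on $J_\varepsilon$ produces $\xi$ between $t$ and $\hat t$ with $F_s(\hat t) - F_s(t) = f_s(\xi)(\hat t - t)$; dividing by $f_s(\xi) \geq c_\varepsilon$, squaring, and taking the supremum over $\alpha' \in [\varepsilon,1]$ delivers the claimed $O(\log n/n)$ rate.

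The most delicate point is the right endpoint $\alpha'=1$, where $\hat t = \max_i s(X_i)$ can approach $b$ and a vanishing of $f_s$ at $b$ would destroy the uniform lower bound needed in the Taylor step. Assumption \ref{as:limit_alpha_1} is precisely what rules this out by guaranteeing $f_s(b^-) = A > 0$, so that $c_\varepsilon$ is a genuine positive lower bound on $J_\varepsilon$ right up to $b$; this is what allows the supremum in the statement of the lemma to range over $[\varepsilon,1]$ rather than being restricted to a proper subinterval of $[\varepsilon, 1)$.
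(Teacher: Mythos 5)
Your proof is correct, and it follows the same skeleton as the paper's argument (Dvoretzky--Kiefer--Wolfowitz plus Borel--Cantelli, a mean value step, and a uniform positive lower bound on $f_s$ coming from \ref{as:density_regularity_positive} and \ref{as:limit_alpha_1}), but it implements that skeleton in a dual and slightly more elementary way. The paper first pushes everything through the uniform variables $U_i = F_s(s(X_i))$: it applies DKW to the uniform empirical process, invokes the exact identity $\sup_{\alpha}|U_n(\alpha)|=\sup_{\alpha}|u_n(\alpha)|$ to transfer the bound to the uniform \emph{quantile} process, exploits the exact algebraic relation $F_s(\widehat F_s^{\dag}(\alpha'))=\widehat U^{\dag}(\alpha')$, and then applies the mean value theorem to the quantile function $F_s^{\dag}$, whose derivative is $1/(f_s\circ F_s^{\dag})$. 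You instead apply DKW directly to $\widehat F_s$, use the $O(1/n)$ discretization bound $|\widehat F_s(\widehat F_s^{\dag}(\alpha'))-\alpha'|\le 1/n$ (legitimate since the $s(X_i)$ are a.s.\ distinct), and apply the mean value theorem to $F_s$ itself rather than to its inverse. The trade-off: the paper's identity $F_s(\widehat F_s^{\dag})=\widehat U^{\dag}$ makes the $1/n$ term disappear, giving a slightly cleaner computation, whereas your version avoids passing to the uniform quantile process and the $\sup|U_n|=\sup|u_n|$ fact, at the cost of carrying an extra (harmless, lower-order) $1/n$ term. Both arguments use compactness in the same place, namely to get $\inf_{J_\varepsilon} f_s > 0$ on a compact interval reaching up to $b$, which is exactly where \ref{as:limit_alpha_1} is needed. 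One small point of rigor worth tightening: your claim that $\widehat F_s^{\dag}(\alpha')\in J_\varepsilon$ eventually a.s.\ for all $\alpha'\in[\varepsilon,1]$ follows already from the a.s.\ DKW bound (which forces $\widehat F_s^{\dag}(\varepsilon)\ge F_s^{\dag}(\varepsilon-\eta_n)\ge F_s^{\dag}(\varepsilon/2)$ once $\eta_n<\varepsilon/2$, together with $\widehat F_s^{\dag}(\alpha')\le b$ a.s.), so the appeal to ``a Glivenko--Cantelli argument'' can be made precise along the same lines the paper uses to establish $\widehat U^{\dag}(\varepsilon)\ge \varepsilon-c_1>0$.
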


\begin{proof}[Proof of Lemma \ref{lem:dkw_taylor_remainder}]
Let $U_i = F_s(s(X_i))$ for all $i \in \{1,\dots,n\}$. As $F_s$ is continuous, the random variables $U_1, \dots, U_n$ are \textit{i.i.d.} with uniform distribution on $(0,1)$. Their empirical cumulative distribution function is given by $\widehat U(\alpha) = 1/n\sum_{i=1}^n\mathbb{I}\{U_i \leq \alpha\}$ for all $\alpha \in [0,1]$. Furthermore we have for all $\alpha \in (0,1]$, $F_s(\widehat F_s^{\dag}(\alpha)) = \widehat U^{\dag}(\alpha)$. We also define $\widehat U^{\dag}(0) = 0$. Let $U_n$ denote the uniform empirical process defined for all $\alpha \in [0,1]$ by $U_n(\alpha) = \sqrt{n}(\widehat U(\alpha) - \alpha)$ and $u_n$ denote the uniform empirical quantile process defined for all $\alpha \in [0,1]$ by $u_n(\alpha)=\sqrt{n}(\widehat U^{\dag}(\alpha) - \alpha)$.
Applying the Dvoretzky-Kiefer-Wolfowitz (DKW) inequality \citep{Massart1990} gives for all $z > 0$ and all $n \geq 1$,
\begin{equation}
\label{eq:DKW_uniform}
\mathbb{P}(\sup_{\alpha \in [0,1]} \vert U_n(\alpha) \vert \geq z) \leq 2\exp(-2z^2) \, .
\end{equation}

As $\sup_{0 \leq \alpha \leq 1}\vert U_n(\alpha) \vert = \sup_{0 \leq \alpha \leq 1}\vert u_n(\alpha) \vert$ for all $\omega$ in the sample space (see equation (1.4.5) in \citep{Csorgo1983}), the DKW inequality \eqref{eq:DKW_uniform} holds when replacing $U_n$ by $u_n$. Taking $z=\sqrt{\log n}$, we obtain for all $n \geq 1$,
\begin{equation}
\label{eq:dkw_quantile_uniform_squared}
\mathbb{P}\Bigl(\sup_{\alpha \in [0,1]} \bigl\vert\widehat U^{\dag}(\alpha) - \alpha \bigr\vert^2 \geq \frac{\log n}{n}\Bigr) \leq \frac{2}{n^2} \, .
\end{equation}
As $\sum_{n=1}^{\infty} 1/n^2 < \infty$, by the Borel-Cantelli lemma, we almost surely have for $n$ large enough,
\begin{equation}
\label{eq:emp_quantile_as_bound}
\sup_{\alpha \in [0,1]} \bigl\vert\widehat U^{\dag}(\alpha) - \alpha \bigr\vert^2 \leq \frac{\log n}{n} \, .
\end{equation}
Using the mean value theorem we can write for $\alpha' \in [\varepsilon, 1]$,
\begin{equation*}
\begin{aligned}
\vert \widehat F^{\dag}_s(\alpha') - F^{\dag}_s(\alpha') \vert &=\vert F^{\dag}_s(\widehat U^{\dag}(\alpha'))) - F^{\dag}_s(\alpha') \vert \\
&= \left\vert \frac{1}{f_s(F_s^{\dag}(\xi))}\right\vert \cdot \vert \widehat U^{\dag}(\alpha') - \alpha' \vert
\end{aligned}
\end{equation*}
where $\xi$ is between $\alpha'$ and $\widehat U^{\dag}(\alpha')$. As $\widehat U^{\dag}$ is increasing we have for all $\alpha' \in [\varepsilon,1]$, $\widehat U^{\dag}(\varepsilon) \leq \widehat U^{\dag}(\alpha') \leq 1$. From \eqref{eq:emp_quantile_as_bound} we know that there exists a constant $c_1 > 0$ such that we almost surely have for $n$ large enough, $\widehat U^{\dag}(\varepsilon) \geq \varepsilon - c_1 > 0$. Therefore, almost surely and for $n$ large enough,
\begin{equation*}
f_s(F_s^{\dag}(\xi)) \geq \inf_{y' \in [\varepsilon - c_1, 1]} f_s(F_s^{\dag}(y')) = \inf_{y \in [0, 1-\varepsilon + c_1]} f_s(\alpha_s^{-1}(y)) > 0 \, .
\end{equation*}
Note that the last infimum is strictly positive because as $f_s \circ \alpha_s^{-1}$ is continuous, the infimum is reached at a point $y_0 \in [0, 1-\varepsilon + c_1] \subset [0,1)$ and as $f_s \circ \alpha_s^{-1} > 0$ on $[0, 1)$ (assumptions \ref{as:density_regularity_positive} and \ref{as:limit_alpha_1}), $f_s(\alpha_s^{-1}(y_0)) > 0$.

This eventually leads to: almost surely and for $n$ large enough,
\begin{equation*}
\begin{aligned}
\sup_{\alpha' \in [\varepsilon, 1]}\vert \widehat F^{\dag}_s(\alpha') - F^{\dag}_s(\alpha') \vert^2 &\leq \frac{1}{\left(\inf_{[0, 1-\varepsilon + c_1]} f_s \circ \alpha_s^{-1}\right)^2}\frac{\log n}{n} \, .
\end{aligned}
\end{equation*}
\end{proof}

\noindent \textbf{Proof of assertion $(i)$ of Theorem \ref{thm:estimation}.}

\begin{proof}
Assume that assumptions \ref{as:bounded_score}-\ref{as:density_regularity_positive} and \ref{as:limit_alpha_1}-\ref{as:regulartiy_lambda} are fulfilled. Using the mean value theorem we can write for all $\alpha \in [0,1-\varepsilon]$
\begin{equation*}
\widehat\mv_s(\alpha) - \mv_s(\alpha) = \lambda_s(\widehat \alpha_s^{-1}(\alpha)) - \lambda_s(\alpha_s^{-1}(\alpha)) = \lambda'_s(\xi)(\widehat \alpha_s^{-1}(\alpha) - \alpha_s^{-1}(\alpha))
\end{equation*}
where $\xi$ is between $\widehat\alpha_s^{-1}(\alpha)$ and $\alpha_s^{-1}(\alpha)$. Therefore,
\begin{equation*}
\sup_{\alpha \in [0,1-\varepsilon]} \vert \widehat\mv_s(\alpha) - \mv_s(\alpha) \vert = \sup_{\alpha \in [0,1-\varepsilon]} \vert \lambda'_s(\xi) \vert \cdot \vert \widehat \alpha_s^{-1}(\alpha) - \alpha_s^{-1}(\alpha) \vert \, .
\end{equation*}

Now for all $\alpha \in [0, 1-\varepsilon]$, $\alpha_s^{-1}(\alpha)$ and $\widehat \alpha_s^{-1}(\alpha)$ belong to the support of $f_s$, \textit{i.e.}, $[0, \Vert s \Vert_{\infty}]$. This is true for $\widehat \alpha_s^{-1}(\alpha)$ because it is equal to one of the $s(X_i)$, $1\leq i \leq n$. Thus $\xi \in [0, \Vert s \Vert_{\infty}]$. As $\lambda_s'$ is continuous, we have
\begin{equation*}
\sup_{\alpha \in [0,1-\varepsilon]} \vert \lambda_s'(\xi)\vert \leq \sup_{[0,\Vert s\Vert_{\infty}]} \vert \lambda_s' \vert < +\infty
\end{equation*}
and
\begin{equation*}
\begin{aligned}
\sup_{\alpha \in [0,1-\varepsilon]} \vert \widehat\mv_s(\alpha) - \mv_s(\alpha) \vert &\leq \sup_{[0,\Vert s\Vert_{\infty}]} \vert \lambda_s' \vert \cdot \sup_{\alpha \in [0,1-\varepsilon]} \vert \widehat \alpha_s^{-1}(\alpha) - \alpha_s^{-1}(\alpha) \vert \\
&= \sup_{[0,\Vert s\Vert_{\infty}]} \vert \lambda_s' \vert \cdot \sup_{\alpha' \in [\varepsilon,1]} \vert \widehat F_s^{\dag}(\alpha') - F_s^{\dag}(\alpha') \vert \, .
\end{aligned}
\end{equation*}

From Lemma \ref{lem:dkw_taylor_remainder} we know that there exists a constant $C > 0$ such that we almost surely have for $n$ large enough,
\begin{equation*}
\sup_{\alpha' \in [\varepsilon, 1]} \vert \widehat F_s^{\dag}(\alpha') - F_s^{\dag}(\alpha')\vert \leq \sqrt{C \frac{\log n}{n}} \, .
\end{equation*}
Therefore $\sup_{\alpha' \in [\varepsilon, 1]} \vert \widehat F_s^{\dag}(\alpha') - F_s^{\dag}(\alpha')\vert$ converges almost surely to $0$ and so is $\sup_{\alpha \in [0,1-\varepsilon]} \vert \widehat\mv_s(\alpha) - \mv_s(\alpha) \vert$.
\end{proof}

\noindent \textbf{Proof of assertion $(ii)$ of Theorem \ref{thm:estimation}.}

\begin{proof}[Proof] For all $n \geq 1$, let $v_n = n^{-1/2}\log n$. Assume that assumptions \ref{as:bounded_score}-\ref{as:regulartiy_lambda} are fulfilled. By virtue of Theorem 3.1.2 in \citep{Csorgo1983}, there exists a sequence of Brownian bridges $\{B_n^1(\alpha), \alpha \in [0,1]\}_{n \geq 1}$ such that, we almost surely have:
\begin{equation}
\label{eq:strong_approx_uniform_quantile_process}
\sup_{0 \leq \alpha^{\prime} \leq 1} \vert u_n(\alpha^{\prime}) - B_n^1(\alpha^{\prime}) \vert = O(v_n)
\end{equation}
where $u_n$ is the uniform quantile process as defined in the proof of Lemma \ref{lem:dkw_taylor_remainder}. From (3.3) of Theorem 3 in \citep{CR78}, we also have almost surely, for $n$ large enough,
\begin{equation*}
\sup_{a_n \leq \alpha^{\prime} \leq 1 - a_n} \vert f_s\bigl(F_s^{\dag}(\alpha^{\prime})\bigr)\sqrt{n}\bigl(\widehat{F}_s^{\dag}(\alpha^{\prime}) - F_s^{\dag}(\alpha^{\prime})\bigr) - u_n(\alpha^{\prime}) \vert \leq C_1n^{-1/2}\log\log n
\end{equation*}
where $C_1=40c10^c$ and $a_n = 25n^{-1} \log \log n$. Hence for $n$ large enough we have $a_n \leq \varepsilon$ and therefore we almost surely have
\begin{equation}
\label{eq:diff_quantile_a_n_epsilon}
\sup_{\varepsilon \leq \alpha^{\prime} \leq 1 - a_n} \vert f_s\bigl(F_s^{\dag}(\alpha^{\prime})\bigr)\sqrt{n}\bigl(\widehat{F}_s^{\dag}(\alpha^{\prime}) - F_s^{\dag}(\alpha^{\prime})\bigr) - u_n(\alpha^{\prime}) \vert = O(n^{-1/2}\log\log n)
\end{equation}
Now from the proof of theorem 3.2.1 in \citep{Csorgo1983}, we also almost surely have,
\begin{equation}
\label{eq:diff_quantile_a_n_1}
\sup_{1 - a_n \leq \alpha^{\prime} \leq 1} \vert f_s\bigl(F_s^{\dag}(\alpha^{\prime})\bigr)\sqrt{n}\bigl(\widehat{F}_s^{\dag}(\alpha^{\prime}) - F_s^{\dag}(\alpha^{\prime})\bigr) - u_n(\alpha^{\prime}) \vert = O(n^{-1/2}\log\log n) \, .
\end{equation}
Combining \eqref{eq:strong_approx_uniform_quantile_process}, \eqref{eq:diff_quantile_a_n_epsilon} and \eqref{eq:diff_quantile_a_n_1}, we almost surely have:
\begin{equation*}
\sup_{\varepsilon \leq \alpha^{\prime} \leq 1}\vert f_s\bigl(F_s^{\dag}(\alpha^{\prime})\bigr)\sqrt{n}\bigl(\widehat{F}_s^{\dag}(\alpha^{\prime}) - F_s^{\dag}(\alpha^{\prime})\bigr) - B_n^1(\alpha^{\prime}) \vert = O(v_n) \, .
\end{equation*}

As one can show that $F_s^{\dag}(\alpha) = \alpha_s^{-1}(1-\alpha)$ and $\widehat{F}_s^{\dag}(\alpha) = \widehat{\alpha}_s^{-1}(1-\alpha)$ for all $\alpha \in (0,1]$ we almost surely have
\begin{equation*}
\sup_{\varepsilon \leq \alpha^{\prime} \leq 1}\vert f_s\bigl(\alpha_s^{-1}(1-\alpha^{\prime})\bigr)\sqrt{n}\bigl(\widehat{\alpha}_s^{-1}(1-\alpha^{\prime}) - \alpha_s^{-1}(1-\alpha^{\prime})\bigr) - B_n^1(\alpha^{\prime}) \vert = O(v_n) \, .
\end{equation*}
With the change of variable $\alpha = 1-\alpha'$ this leads to
\begin{equation*}
\sup_{0 \leq \alpha \leq 1-\varepsilon}\vert f_s\bigl(\alpha_s^{-1}(\alpha)\bigr)\sqrt{n}\bigl(\widehat{\alpha}_s^{-1}(\alpha) - \alpha_s^{-1}(\alpha)\bigr) - B_n(\alpha) \vert = O(v_n)
\end{equation*}
where $\{B_n(\alpha), \alpha \in [0,1]\}_{n \geq 1}$ is the sequence of stochastic processes defined by $\{B_n(\alpha), \alpha \in [0,1]\}_{n \geq 1} = \{B_n^1(1-\alpha), \alpha \in [0,1]\}_{n \geq 1}$. Notice that for all $n \geq 1$, $B_n$ is a Brownian Bridge as it has the same distribution as $B_n^1$. Thus there exists a constant $C_2$ independent of $\alpha$ such that almost surely for $n$ large enough and for all $\alpha \in [0, 1-\varepsilon]$,
\begin{equation*}
\vert f_s\bigl(\alpha_s^{-1}(\alpha)\bigr)\sqrt{n}\bigl(\widehat{\alpha}_s^{-1}(\alpha) - \alpha_s^{-1}(\alpha)\bigr) - B_n(\alpha) \vert \leq C_2v_n
\end{equation*} 
Hence, dividing by $f_s\bigl(\alpha_s^{-1}(\alpha)\bigr)$ which is strictly positive for all $\alpha \in [0, 1-\varepsilon]$ (assumptions \ref{as:density_regularity_positive} and \ref{as:limit_alpha_1}), we almost surely have, for $n$ large enough and for all $\alpha \in [0,1-\varepsilon]$,
\begin{equation}
\label{eq:quantile_process_mass}
\Bigl\vert \sqrt{n}\bigl(\widehat{\alpha}_s^{-1}(\alpha) - \alpha_s^{-1}(\alpha)\bigr) - \frac{B_n(\alpha)}{f_s\bigl(\alpha_s^{-1}(\alpha)\bigr)} \Bigr\vert
\leq \frac{C_2}{\inf_{[0, 1-\varepsilon]} f_s \circ \alpha_s^{-1}}v_n
\end{equation}
where $\inf_{0 \leq \alpha \leq 1-\varepsilon} f_s(\alpha_s^{-1}(\alpha))$ is strictly positive by the same argument as the one given in the proof of Lemma \ref{lem:dkw_taylor_remainder}. Now using a Taylor expansion of $\lambda_s$ which is of class $\mathcal{C}^2$ (assumption \ref{as:regulartiy_lambda}) we can write for all $\alpha \in [0,1-\varepsilon]$:
\begin{equation*}
\lambda_s(\widehat{\alpha}_s^{-1}(\alpha)) = \lambda_s(\alpha_s^{-1}(\alpha)) + \lambda_s^{\prime}(\alpha_s^{-1}(\alpha))(\widehat{\alpha}_s^{-1}(\alpha)-\alpha_s^{-1}(\alpha)) + R_2(\alpha)
\end{equation*}
where $R_2(\alpha)=\lambda_s^{''}(\xi)(\widehat{\alpha}_s^{-1}(\alpha)-\alpha_s^{-1}(\alpha))^2/2$ is the remainder of the Taylor expansion, $\xi$ being between $\alpha_s^{-1}(\alpha)$ and $\widehat{\alpha}_s^{-1}(\alpha)$. We thus have
\begin{equation*}
\begin{aligned}
\sqrt{n}(&\widehat{\mv}_s(\alpha) - \mv_s(\alpha)) - \frac{\lambda_s^{\prime}(\alpha_s^{-1}(\alpha))}{f_s\bigl(\alpha_s^{-1}(\alpha)\bigr)}B_n(\alpha)\\
&= \lambda_s^{\prime}(\alpha_s^{-1}(\alpha))\left(\sqrt{n}(\widehat{\alpha}_s^{-1}(\alpha)-\alpha_s^{-1}(\alpha)) - \frac{B_n(\alpha)}{f_s\bigl(\alpha_s^{-1}(\alpha)\bigr)}\right) + \sqrt{n}R_2(\alpha) \\
&= C_n(\alpha) + D_n(\alpha)
\end{aligned}
\end{equation*}
For the first term we almost surely have, for all $\alpha \in [0, 1-\varepsilon]$ and for $n$ large enough, 
\begin{equation*}
\vert C_n(\alpha) \vert \leq \vert \lambda_s^{\prime}(\alpha_s^{-1}(\alpha)) \vert \frac{C_2}{\inf_{[0, 1-\varepsilon]} f_s \circ \alpha_s^{-1}}v_n \, .
\end{equation*}
For all $\alpha \in [0, 1-\varepsilon]$, $\alpha_s^{-1}(\alpha) \in [0, \Vert s\Vert_{\infty}]$. As $\lambda_s'$ is continuous, $\vert \lambda_s^{\prime}(\alpha_s^{-1}(\alpha)) \vert$ is bounded by $\sup_{[0, \Vert s \Vert_{\infty}]} \vert \lambda_s'\vert$.

We treat the second term as follows. For all $\alpha \in [0, 1-\varepsilon]$, $\alpha_s^{-1}(\alpha)$ and $\widehat{\alpha}_s^{-1}(\alpha)$ are in $[0, \Vert s\Vert_{\infty}]$. Therefore $\xi \in [0, \Vert s\Vert_{\infty}]$ and as $\lambda_s''$ is continuous, $\vert \lambda_s''(\xi) \vert$ is bounded by $\sup_{[0, \Vert s \Vert_{\infty}]} \vert \lambda_s''\vert$. Thus,
\begin{equation*}
\begin{aligned}
\sup_{\alpha \in [0, 1-\varepsilon]}\vert D_n(\alpha) \vert &\leq \sup_{[0, \Vert s \Vert_{\infty}]} \vert \lambda_s''\vert \frac{\sqrt{n}}{2} \sup_{\alpha \in [0, 1-\varepsilon]} \vert \widehat{\alpha}_s^{-1}(\alpha)-\alpha_s^{-1}(\alpha) \vert^2 \\
&\leq \sup_{[0, \Vert s \Vert_{\infty}]} \vert \lambda_s''\vert\frac{C}{2} \frac{\log n}{\sqrt{n}} \quad \text{(thanks to Lemma \ref{lem:dkw_taylor_remainder})}
\end{aligned}
\end{equation*}
where the last inequality holds almost surely and for $n$ large enough and where $C > 0$ is the constant of Lemma \ref{lem:dkw_taylor_remainder}.

Eventually, combining the bounds on $\sup_{[0, 1-\varepsilon]}\vert C_n \vert$ and $\sup_{[0, 1-\varepsilon]}\vert D_n \vert$ we almost surely have
\begin{equation*}
\sup_{\alpha \in [0, 1-\varepsilon]}\left\vert \sqrt{n}(\widehat{\mv}_s(\alpha) - \mv_s(\alpha)) - \frac{\lambda_s^{\prime}(\alpha_s^{-1}(\alpha))}{f_s\bigl(\alpha_s^{-1}(\alpha)\bigr)}B_n(\alpha) \right\vert = O(v_n) \, .
\end{equation*}
This concludes the proof.
\end{proof}

\subsubsection{Bootstrap consistency: Proof of Theorem \ref{thm:bootstrap}}

Recall that for all $n \geq 1$, $v_n = n^{-1/2}\log n$ and $w_n = \sqrt{\log(h_n^{-1})/(nh_n)} + h_n^2$. Let $y_s(\alpha)$ and $\widetilde y_s(\alpha)$ be respectively defined for all $\alpha \in [\varepsilon, 1-\varepsilon]$ as $y_s(\alpha) = \lambda_s'(\alpha_s^{-1}(\alpha))/f_s(\alpha_s^{-1}(\alpha))$ and $\widetilde y_s(\alpha)= \lambda_s'(\widetilde \alpha_s^{-1}(\alpha))/\widetilde f_s(\widetilde \alpha_s^{-1}(\alpha))$.

\paragraph{Sketch of proof.} The argument is based on a strong approximation type inequality for $r_n$ (see Lemma \ref{lem:strong_approx_ineq}) and $r_n^*$ (see Lemma \ref{lem:strong_approx_bootstrap_ineq}). These inequalities can then be used to obtain a result on the rate of convergence of the cumulative distribution function of $\sup_{[\varepsilon, 1-\varepsilon]} \vert r_n \vert $ (respectively, the conditional cumulative distribution function of $\sup_{[\varepsilon, 1-\varepsilon]} \vert r_n^* \vert$) towards the supremum of a Gaussian process which depends on $y_s$ (respectively the smoothed empirical version $\widetilde y_s$ of $y_s$). We finally obtain the result using the rate of the strong uniform convergence of $\widetilde y_s/y_s$ towards $1$ on $[\varepsilon, 1-\varepsilon]$ (see Lemma \ref{lemma:gine_guillou}). The technical proofs of Lemma \ref{lem:strong_approx_ineq}, Lemma \ref{lem:strong_approx_bootstrap_ineq} and Lemma \ref{lemma:gine_guillou} are deferred to Appendix~\ref{sec:appendix_technical_results} for the sake of clarity of the proof of the main result. \newline

We first need the following strong approximation result.
\begin{lemma}\label{lem:strong_approx_ineq}
Let $\varepsilon \in (0,1)$. There exists a positive constant $C$ such that
\begin{equation*}
\mathbb{P}\left(\sup_{\alpha \in [\varepsilon, 1-\varepsilon]} \vert r_n(\alpha) - Z_n(\alpha) \vert > Cv_n\right) = O(v_n)
\end{equation*}
\end{lemma}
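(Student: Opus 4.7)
The plan is to mirror the proof of Theorem~\ref{thm:estimation}(ii), replacing every almost-sure rate by a probability tail bound at level $O(v_n)$. Starting from the same decomposition, write
\begin{equation*}
r_n(\alpha) - Z_n(\alpha) = C_n(\alpha) + D_n(\alpha),
\end{equation*}
where
\begin{equation*}
C_n(\alpha) = \lambda'_s(\alpha_s^{-1}(\alpha))\Bigl(\sqrt{n}(\widehat{\alpha}_s^{-1}(\alpha) - \alpha_s^{-1}(\alpha)) - \frac{B_n(\alpha)}{f_s(\alpha_s^{-1}(\alpha))}\Bigr)
\end{equation*}
and $D_n(\alpha) = \sqrt{n}\, R_2(\alpha)$ denotes the second-order Taylor remainder of $\lambda_s$ at $\alpha_s^{-1}(\alpha)$. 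It is enough to bound each of $\mathbb{P}(\sup_{[\varepsilon,1-\varepsilon]}|C_n| > Cv_n)$ and $\mathbb{P}(\sup_{[\varepsilon,1-\varepsilon]}|D_n| > Cv_n)$ by $O(v_n)$ and then take a union bound.

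For $C_n$, the plan is to invoke the two strong-approximation results used in Theorem~\ref{thm:estimation}(ii), but in their non-asymptotic (probabilistic) forms. First, the KMT-type inequality (cf.\ Theorem~3.1.2 in \citet{Csorgo1983}) yields constants $A, B_1, B_2 > 0$ and, for each $n$, a Brownian bridge $B_n^1$ with the property that
\begin{equation*}
\mathbb{P}\Bigl(\sup_{\alpha \in [0,1]}|u_n(\alpha) - B_n^1(\alpha)| > (A\log n + x)/\sqrt{n}\Bigr) \leq B_1 e^{-B_2 x}
\end{equation*}
for all $x > 0$; taking $x = (1/B_2)\log n$ produces a probability of order $n^{-1} = o(v_n)$. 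Second, the Cs\"org\H o--R\'ev\'esz inequality underlying (3.3) of \citet{CR78} gives, with the same exponential tail, that $\sup_{\alpha' \in [a_n, 1-a_n]} |f_s(F_s^\dag(\alpha'))\sqrt{n}(\widehat{F}_s^\dag(\alpha') - F_s^\dag(\alpha')) - u_n(\alpha')|$ is $O(n^{-1/2}\log\log n)$ on an event of probability at least $1 - O(n^{-c})$. Since $a_n \to 0$ while $\varepsilon$ is fixed, for $n$ large both controls hold uniformly on $[\varepsilon, 1-\varepsilon]$. Combining them, and using that $\inf_{[\varepsilon,1-\varepsilon]} f_s \circ \alpha_s^{-1} > 0$ by assumptions~\ref{as:density_regularity_positive} and \ref{as:limit_alpha_1} together with $\sup_{[0,\Vert s\Vert_\infty]}|\lambda'_s| < \infty$ by assumption~\ref{as:regulartiy_lambda}, we deduce $\mathbb{P}(\sup_{[\varepsilon,1-\varepsilon]}|C_n| > Cv_n) = O(n^{-1}) = O(v_n)$.

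For $D_n$, the plan is to reuse the DKW step of Lemma~\ref{lem:dkw_taylor_remainder}: estimate \eqref{eq:dkw_quantile_uniform_squared} is already a non-asymptotic probability bound, namely $\mathbb{P}(\sup_{[\varepsilon,1]}|\widehat{F}_s^\dag(\alpha') - \alpha'|^2 > \log n / n) \leq 2/n^2$, which via the mean value theorem as in the proof of Lemma~\ref{lem:dkw_taylor_remainder} translates into an analogous bound on $\sup_{[\varepsilon,1-\varepsilon]}|\widehat{\alpha}_s^{-1} - \alpha_s^{-1}|^2$. Since $|\lambda''_s|$ is bounded on $[0,\Vert s\Vert_\infty]$ by assumption~\ref{as:regulartiy_lambda}, this yields $\sup_{[\varepsilon,1-\varepsilon]}|\sqrt{n}R_2(\alpha)| \leq C' \log n/\sqrt{n} = C' v_n$ with probability at least $1 - 2/n^2 = 1 - o(v_n)$. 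A union bound on the two resulting events proves the claim. The main obstacle is not conceptual but bookkeeping: one must verify that the probabilistic versions of the KMT and Cs\"org\H o--R\'ev\'esz approximations (as opposed to the Borel--Cantelli consequences exploited in the proof of Theorem~\ref{thm:estimation}(ii)) deliver the required uniform tail control on $[\varepsilon, 1-\varepsilon]$, which becomes routine once $n$ is large enough that $a_n \leq \varepsilon$.
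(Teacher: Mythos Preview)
Your decomposition $r_n - Z_n = C_n + D_n$ and the overall strategy of upgrading each almost-sure step of Theorem~\ref{thm:estimation}(ii) to a tail bound of order $O(v_n)$ is exactly the route the paper takes. The handling of $D_n$ via the DKW inequality is also essentially the paper's.

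The one place where your sketch is too loose is the control of $C_n$, specifically the passage from the uniform quantile process $u_n$ to the general one $q_n(\alpha')=f_s(F_s^{\dag}(\alpha'))\sqrt{n}(\widehat F_s^{\dag}(\alpha')-F_s^{\dag}(\alpha'))$. You invoke ``the Cs\"org\H{o}--R\'ev\'esz inequality underlying (3.3) of \citep{CR78}, with the same exponential tail'' and conclude with probability $1-O(n^{-c})$; but (3.3) is stated as an almost-sure rate, you do not name the exponential inequality you mean, and nothing guarantees that the resulting $c$ satisfies $c\geq 1/2$ (which is needed for $n^{-c}=O(v_n)$). The paper fills exactly this gap: via the mean value theorem it writes
\[
q_n(\alpha) - B_n^1(\alpha) \;=\; \bigl(u_n(\alpha) - B_n^1(\alpha)\bigr) \;+\; u_n(\alpha)\Bigl(\frac{f_s(F_s^{\dag}(\alpha))}{f_s(F_s^{\dag}(\xi))} - 1\Bigr),
\]
bounds the first piece by KMT (your step), $|u_n|$ by DKW, and the density ratio by Theorems~1.4.3 and~1.5.1 of \citep{Csorgo1983}, which give explicit exponential tails depending on the constant $c$ of assumption~\ref{as:tail}. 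Each of these bounds carries a free parameter ($z_1,\dots,z_4$ in the paper), and the paper then \emph{chooses} them (e.g.\ $z_1\sim\log n$, $z_2=\sqrt{\log n}$, $z_3\sim\sqrt{\log n/n}$, $z_4=2$) so that every tail probability is $O(v_n)$. The same ratio control is reused for the Taylor remainder. So your plan is sound, but the ``underlying inequality'' you gesture at is precisely this density-ratio machinery, and the tuning of constants to hit $O(v_n)$ is not automatic and must be carried out.
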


We also need the counterpart of Lemma \ref{lem:strong_approx_ineq} conditionally on the data set $\mathcal{D}_n$.

\begin{lemma} \label{lem:strong_approx_bootstrap_ineq}
Let $\varepsilon \in (0,1)$. Under assumptions \ref{as:bounded_score}-\ref{as:tail}, \ref{as:regulartiy_lambda} and \ref{as:density_bias}-\ref{as:bandwidth_derivative}, there exists a constant $C$ such that we $\mathbb{P}$-almost surely have,
\begin{equation*}
\mathbb{P}^*\left(\sup_{\alpha \in [\varepsilon, 1-\varepsilon]} \left \vert r_n^*(\alpha) - Z^*_n(\alpha) \right \vert > Cv_n\right) =  O(v_n)
\end{equation*}
where
\begin{equation*}
\forall \alpha \in [\varepsilon, 1-\varepsilon], \quad Z_n^*(\alpha) = \frac{\lambda_s'(\widetilde\alpha_s^{-1}(\alpha))}{\widetilde f_s(\widetilde \alpha_s^{-1}(\alpha))}B_n^*(\alpha)
\end{equation*}
and where for each $n \geq 1$ the distribution of $B_n^*$ conditionally to $\mathcal{D}_n$ is the one of a Brownian bridge.
\end{lemma}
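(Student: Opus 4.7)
The plan is to mimic the proof of Lemma \ref{lem:strong_approx_ineq} conditionally on the data set $\mathcal{D}_n$, with the smoothed cdf $\widetilde{F}_s$ playing the role that $F_s$ played in the unconditional argument. All the subsequent statements will be made on an event of $\mathbb{P}$-probability one on which the uniform convergence statements provided by Lemma \ref{lemma:gine_guillou} hold, together with the analogous ones for the derivative $\widetilde{f}_s'$.

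First, conditionally on $\mathcal{D}_n$, the transformed variables $U_i^{*} = \widetilde{F}_s(s(X_i^{*}))$, $1 \leq i \leq n$, are i.i.d. with uniform distribution on $(0,1)$, as $\widetilde{F}_s$ is continuous. Applying Theorem 3.1.2 in \citep{Csorgo1983} to these bootstrap uniforms produces, on a suitably enriched probability space, a sequence of Brownian bridges $\{B_n^{*}\}_{n \geq 1}$ (measurable with respect to the $\sigma$-algebra generated by the bootstrap sample given $\mathcal{D}_n$) such that the uniform quantile process $u_n^{*}$ of $(U_i^{*})_{1 \leq i \leq n}$ admits an $O(v_n)$ strong approximation by $B_n^{*}$. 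The next step is to invoke (3.3) of Theorem 3 in \citep{CR78}, now applied to the bootstrap quantile process associated with $\widetilde{F}_s$; combined with the tail argument from the proof of Theorem 3.2.1 in \citep{Csorgo1983} handling the right neighborhood of $1$, this upgrades the preceding bound into a strong approximation of $\sqrt{n}((\alpha_s^{Boot})^{-1}(\alpha) - \widetilde{\alpha}_s^{-1}(\alpha))$ by $B_n^{*}(\alpha)/\widetilde{f}_s(\widetilde{\alpha}_s^{-1}(\alpha))$, uniformly in $\alpha \in [0, 1-\varepsilon]$, still at the rate $O(v_n)$. A second-order Taylor expansion of $\lambda_s$ around $\widetilde{\alpha}_s^{-1}(\alpha)$, combined with a bootstrap version of Lemma \ref{lem:dkw_taylor_remainder} to control the quadratic remainder, then yields the announced bound on $\sup_{\alpha \in [\varepsilon, 1-\varepsilon]} \vert r_n^{*}(\alpha) - Z_n^{*}(\alpha)\vert$.

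The main obstacle lies in the fact that the constants appearing in the CR78 and DKW estimates depend on analytical features of the underlying cdf, namely a lower bound on the density on the relevant range, the tail quantity $\sup F(t)(1-F(t))\vert f'(t)/f(t)^2 \vert$ appearing in \ref{as:tail}, and the $C^2$ character of the quantile function. Here these quantities must be read off $\widetilde{F}_s$ and are therefore random, so the real content of the argument is to check that, on an event of $\mathbb{P}$-probability one and for $n$ sufficiently large, they admit deterministic upper bounds independent of $n$. This is precisely what assumptions \ref{as:density_bias}--\ref{as:bandwidth_derivative} are designed to ensure: combining the standard kernel bias estimates with the uniform variance bounds of \citep{Gine} yields $\sup_{\mathbb{R}}\vert \widetilde{f}_s - f_s\vert \to 0$ and $\sup_{\mathbb{R}}\vert \widetilde{f}_s' - f_s'\vert \to 0$ $\mathbb{P}$-almost surely. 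Consequently, for $n$ large enough, $\widetilde{f}_s \circ \widetilde{\alpha}_s^{-1}$ stays bounded below by a strictly positive constant on $[0, 1-\varepsilon]$, $\widetilde{F}_s$ satisfies a uniform version of \ref{as:tail} (with, say, constant $2c$), and $\widetilde{\alpha}_s^{-1}$ remains in a compact subset of the support of $f_s$. Once these uniform controls are secured, the chain of inequalities from the proof of Lemma \ref{lem:strong_approx_ineq} can be replayed conditionally on $\mathcal{D}_n$ with a single deterministic constant $C$, yielding the desired $O(v_n)$ bound on the $\mathbb{P}^{*}$-probability.
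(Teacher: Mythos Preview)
Your overall strategy—rerun the argument of Lemma~\ref{lem:strong_approx_ineq} conditionally on $\mathcal{D}_n$ with $\widetilde F_s$ in place of $F_s$, after securing deterministic control of the random constants via kernel-density convergence—is exactly the paper's approach. However, two points in your execution do not deliver the stated conclusion.

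\textbf{Almost-sure results do not give the $O(v_n)$ rate on the probability.} You invoke Theorem~3.1.2 in \citep{Csorgo1983} and (3.3) of Theorem~3 in \citep{CR78}, both of which are almost-sure $O(v_n)$ statements. From these you would only obtain that $\mathbb{P}^*(\sup_{[\varepsilon,1-\varepsilon]}|r_n^*-Z_n^*|>Cv_n)\to 0$, with no rate. The lemma asks for this probability to be $O(v_n)$. The paper's proof of Lemma~\ref{lem:strong_approx_ineq} is deliberately written with explicit probability inequalities (Theorem~1 in \citep{CR78}, DKW, Theorems~1.4.3 and~1.5.1 in \citep{Csorgo1983}), so that each exceptional event has probability bounded by an explicit function of free parameters $z_1,\dots,z_4$; choosing these of order $\log n$ or $\sqrt{\log n}$ then forces every term to be $O(v_n)$. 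The bootstrap proof replays this, not the almost-sure route.

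\textbf{The global tail condition \ref{as:tail} for $\widetilde F_s$ is not established.} You assert that $\widetilde F_s$ satisfies \ref{as:tail} ``with, say, constant $2c$''. Condition~\ref{as:tail} is a supremum over the whole support, and near the edges of $\mathrm{supp}(\widetilde f_s)$ the ratio $\widetilde F_s(1-\widetilde F_s)|\widetilde f_s'|/\widetilde f_s^2$ is governed by the kernel, not by $f_s$; there is no reason it should be close to $c$. Since (3.3) of Theorem~3 in \citep{CR78} requires this global condition, applying it as a black box is not justified. The paper avoids this entirely: it first confines $\xi$ (the point arising from the mean value theorem) to a deterministic compact interval $[\varepsilon-c_1,1-\varepsilon+c_1]$ via a DKW event of probability $\leq 2/n^2$, and then proves directly (its Lemma on $\mathbb{P}^*(\sup|\widetilde f_s(\widetilde F_s^\dag(\alpha))/\widetilde f_s(\widetilde F_s^\dag(\xi))|>z)$) the conditional analogues of the Csörg\H{o} ratio inequalities on that compact interval, using the uniform convergence of $\widetilde F_s(1-\widetilde F_s)|\widetilde f_s'|/\widetilde f_s^2$ to $F_s(1-F_s)|f_s'|/f_s^2$ on compacts. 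This yields a local constant $c^*$ (not $2c$) that is deterministic for $n$ large, and feeds into the same union-bound scheme as in Lemma~\ref{lem:strong_approx_ineq}.

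As a minor point, the lemma is stated on $[\varepsilon,1-\varepsilon]$, not $[0,1-\varepsilon]$; the endpoint $\alpha=0$ is excluded precisely because the lower bound on $\widetilde f_s\circ\widetilde\alpha_s^{-1}$ cannot be secured there without \ref{as:limit_alpha_1}, and the appeal to ``the right neighborhood of $1$'' is unnecessary since $\alpha$ stays away from $1$.
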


Eventually, to control the distance between the distribution of $\sup_{[\varepsilon, 1-\varepsilon]} \vert Z_n \vert$ and the conditional distribution of $\sup_{[\varepsilon, 1-\varepsilon]} \vert Z_n^* \vert$ we need the following lemma.

\begin{lemma}
\label{lemma:gine_guillou}
Under assumptions \ref{as:bounded_score} and \ref{as:density_bias}-\ref{as:kernel_gine} there exists a constant $C > 0$ such that we almost surely have, for $n$ large enough,
\begin{equation*}
\sup_{\alpha \in [\varepsilon, 1-\varepsilon]} \left\vert \frac{\widetilde y_s(\alpha)}{y_s(\alpha)} - 1 \right\vert \leq C w_n \, .
\end{equation*}
\end{lemma}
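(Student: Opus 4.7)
The plan is to reduce the bound on the ratio $\widetilde y_s/y_s - 1$ to two uniform closeness estimates at rate $w_n$: that of $\widetilde f_s$ to $f_s$, and that of $\widetilde \alpha_s^{-1}$ to $\alpha_s^{-1}$ on $[\varepsilon,1-\varepsilon]$.

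First, I would establish $\sup_{t \in \mathbb{R}}|\widetilde f_s(t) - f_s(t)| = O(w_n)$ almost surely via a standard bias--variance split. The bias $\sup_t|\mathbb{E}[\widetilde f_s(t)] - f_s(t)| = O(h_n^2)$ follows from expanding $f_s$ to second order inside $\int K(u)f_s(t - h_n u)\,du$, using $f_s \in \mathcal{C}^3$ with bounded derivatives (assumption \ref{as:density_bias}) and the kernel moment conditions of assumption \ref{as:kernel_bias}. The stochastic deviation $\sup_t|\widetilde f_s(t) - \mathbb{E}[\widetilde f_s(t)]| = O(\sqrt{\log(h_n^{-1})/(nh_n)})$ almost surely is exactly the uniform concentration inequality of Gin\'e and Guillou, available thanks to the VC-type form of $K$ (assumption \ref{as:kernel_gine}) and the bandwidth regime of assumption \ref{as:bandwidth_stute}. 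Combining the bias estimate for $\mathbb{E}[\widetilde F_s] - F_s$ with the DKW inequality then yields $\sup_t|\widetilde F_s(t) - F_s(t)| = O(w_n)$ a.s., and using the identity $\widetilde F_s(\widetilde F_s^{\dag}(u)) = u = F_s(F_s^{\dag}(u))$ together with the mean value theorem
\[
f_s(\xi_u)\cdot\bigl(\widetilde F_s^{\dag}(u) - F_s^{\dag}(u)\bigr) = F_s(\widetilde F_s^{\dag}(u)) - \widetilde F_s(\widetilde F_s^{\dag}(u))
\]
yields $\sup_{\alpha \in [\varepsilon,1-\varepsilon]}|\widetilde \alpha_s^{-1}(\alpha) - \alpha_s^{-1}(\alpha)| = O(w_n)$ almost surely, thanks to the fact that for such $\alpha$ the point $\alpha_s^{-1}(\alpha)$ lies in a fixed compact subinterval of $(a,b)$ on which $f_s$ is bounded away from $0$ (available from the ambient assumption \ref{as:density_regularity_positive} of Theorem \ref{thm:bootstrap}).

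With these two estimates in hand, I would decompose
\[
\widetilde y_s(\alpha) - y_s(\alpha) = \frac{\lambda_s'(\widetilde \alpha_s^{-1}(\alpha)) - \lambda_s'(\alpha_s^{-1}(\alpha))}{\widetilde f_s(\widetilde \alpha_s^{-1}(\alpha))} + \lambda_s'(\alpha_s^{-1}(\alpha))\cdot\frac{f_s(\alpha_s^{-1}(\alpha)) - \widetilde f_s(\widetilde \alpha_s^{-1}(\alpha))}{\widetilde f_s(\widetilde \alpha_s^{-1}(\alpha))\,f_s(\alpha_s^{-1}(\alpha))}.
\]
The first numerator is $O(w_n)$ by Lipschitz continuity of $\lambda_s'$ (from $\lambda_s \in \mathcal{C}^2$, assumption \ref{as:regulartiy_lambda}) combined with the quantile bound above. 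The second numerator is $O(w_n)$ after one further triangle inequality splitting $f_s(\alpha_s^{-1}) - \widetilde f_s(\widetilde \alpha_s^{-1})$ into a Lipschitz part $|f_s(\alpha_s^{-1}) - f_s(\widetilde \alpha_s^{-1})|$ and a density-estimation part $|f_s(\widetilde \alpha_s^{-1}) - \widetilde f_s(\widetilde \alpha_s^{-1})|$. Since both denominators stay bounded below uniformly in $n$ (see the obstacle below) and $|y_s| = |\lambda_s'|/f_s$ is bounded below on $[\varepsilon,1-\varepsilon]$ by continuity and the strict negativity of $\lambda_s'$ on the interior of $\supp(f_s)$, dividing the absolute-difference bound by $|y_s|$ produces the claimed bound on $|\widetilde y_s/y_s - 1|$.

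The main obstacle is ensuring that $\widetilde f_s(\widetilde \alpha_s^{-1}(\alpha))$ stays bounded below away from $0$, uniformly for $\alpha \in [\varepsilon,1-\varepsilon]$ and for $n$ large. This follows by combining the almost-sure uniform convergence $\widetilde f_s \to f_s$ with the quantile bound: $\widetilde \alpha_s^{-1}(\alpha)$ remains, for $n$ large, inside a slightly enlarged compact subinterval of $(a,b)$ on which $f_s$ is bounded below by some $c > 0$; hence $\widetilde f_s \geq c/2$ there, and the two denominators are controlled. Everything else reduces to routine Lipschitz bookkeeping already available from $f_s \in \mathcal{C}^3$.
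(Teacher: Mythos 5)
Your proposal follows essentially the same architecture as the paper's proof: reduce to the two building-block rates $\sup_t|\widetilde f_s - f_s| = O(w_n)$ and $\sup_{\alpha \in [\varepsilon,1-\varepsilon]}|\widetilde\alpha_s^{-1}(\alpha) - \alpha_s^{-1}(\alpha)| = O(w_n)$, bound $|1/y_s|$ on $[\varepsilon,1-\varepsilon]$ by compactness, and then do a two-term decomposition of $\widetilde y_s - y_s$. Two minor divergences are worth noting. First, your decomposition inserts $\lambda_s'(\alpha_s^{-1}(\alpha))/\widetilde f_s(\widetilde\alpha_s^{-1}(\alpha))$ as the intermediate term and handles the first difference by Lipschitz continuity of $\lambda_s'$ alone, whereas the paper inserts $\lambda_s'(\widetilde\alpha_s^{-1}(\alpha))/f_s(\widetilde\alpha_s^{-1}(\alpha))$ and applies the mean value theorem directly to the $\mathcal{C}^1$ composite $\lambda_s'/f_s$; both are sound. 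Second, for $\sup_t|\widetilde F_s - F_s| = O(w_n)$ you propose a bias--variance split with DKW controlling the stochastic part, which requires an integration-by-parts step (writing $\widetilde F_s - \mathbb{E}\widetilde F_s$ as an integral of $\widehat F_s - F_s$ against the rescaled kernel) that you do not spell out; the paper's Lemma \ref{lem:unif_strong_rate_distribution} instead just integrates $|\widetilde f_s - f_s|$ over the compact support, which is shorter given that Lemma \ref{lemma:rate_kerneldensity} has already been established.

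The one place where your argument is genuinely thinner than the paper's is the justification that $|y_s|$ is bounded away from zero on $[\varepsilon, 1-\varepsilon]$. You invoke "continuity and the strict negativity of $\lambda_s'$ on the interior of $\supp(f_s)$", but strict negativity of $\lambda_s'$ pointwise is not a consequence of the stated assumptions: $\lambda_s$ is strictly decreasing on $(a,b)$ because $f_s > 0$ and $F$ is absolutely continuous, but a strictly decreasing $\mathcal{C}^2$ function can still have $\lambda_s'(t_0) = 0$ at isolated points. The paper closes this gap via Lemma \ref{lemma:y_s_positive}, which shows directly that $\mv_s'(\alpha) = -y_s(\alpha) \geq 1/\Vert f \Vert_\infty > 0$ for all $\alpha \in (0,1)$ using a change-of-measure bound on the difference quotient of $\mv_s$ (this in turn relies on $\Vert f \Vert_\infty < \infty$, i.e.~assumption \ref{as:bounded_density}, which your sketch does not mention). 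You should replace your appeal to strict negativity of $\lambda_s'$ with this argument, or at least note that the lower bound on $|y_s|$ requires the boundedness of $f$.
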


We can now proof the main result.

\begin{proof}[Proof of Theorem \ref{thm:bootstrap}] Let $\varepsilon \in (0,1)$. Let $C > 0$ be the constant of Lemma \ref{lem:strong_approx_ineq}. If $\sup_{\alpha \in [\varepsilon, 1-\varepsilon]}\vert r_n(\alpha) - Z_n(\alpha) \vert \leq Cv_n$ then
\begin{equation*}
\Bigl \vert \sup_{\alpha \in [\varepsilon, 1-\varepsilon]} \vert r_n(\alpha) \vert - \sup_{\alpha \in [0, 1-\varepsilon]} \vert Z_n(\alpha) \vert \Bigr \vert \leq Cv_n \, .
\end{equation*}
Thus thanks to the result of Lemma \ref{lem:strong_approx_ineq}
\begin{equation*}
\mathbb{P}\left( \Bigl \vert \sup_{[\varepsilon, 1-\varepsilon]} \vert r_n \vert - \sup_{[\varepsilon, 1-\varepsilon]} \vert Z_n \vert \Bigr \vert > Cv_n\right) \leq \mathbb{P}\left(\sup_{[\varepsilon, 1-\varepsilon]} \vert r_n - Z_n \vert > Cv_n\right) = O(v_n) \, .
\end{equation*}

Therefore using a result of \citet{Sargan1971}, we have for all $t \in \mathbb{R}$,
\begin{equation*}
\begin{aligned}
\Bigl\vert &\mathbb{P}\Bigl(\sup_{[\varepsilon, 1-\varepsilon]} \vert r_n \vert \leq t \Bigr) - \mathbb{P}\Bigl(\sup_{[\varepsilon, 1-\varepsilon]} \vert Z_n \vert \leq t \Bigr) \Bigr\vert \\
&\leq \mathbb{P}\Bigl(\Bigl\vert \sup_{[\varepsilon, 1-\varepsilon]} \vert r_n \vert - \sup_{[\varepsilon, 1-\varepsilon]} \vert Z_n \vert \Bigr\vert > Cv_n\Bigr)
+ \mathbb{P}\Bigl(\Bigl\vert \sup_{[\varepsilon, 1-\varepsilon]} \vert Z_n \vert - t \Bigr\vert < Cv_n\Bigr) \\
&= \mathbb{P}\Bigl(\Bigl\vert \sup_{[\varepsilon, 1-\varepsilon]} \vert Z_n \vert - t \Bigr\vert < Cv_n\Bigr) + O(v_n)
\end{aligned}
\end{equation*}
and
\begin{equation*}
\begin{aligned}
\mathbb{P}\Bigl(\Bigl\vert \sup_{[\varepsilon, 1-\varepsilon]} \vert Z_n \vert - t \Bigr\vert < Cv_n\Bigr)
&= \mathbb{P}\Bigl(t - Cv_n < \sup_{[\varepsilon, 1-\varepsilon]} \vert Z_1 \vert < t + Cv_n\Bigr) \\
&= \int_{t - Cv_n}^{t + Cv_n} \phi_{\bar Z}(x)dx \leq 2DCv_n
\end{aligned}
\end{equation*}
where $\phi_{\bar Z}$ denotes the density of $\sup_{[\varepsilon, 1-\varepsilon]} \vert Z_1 \vert$ and where for the first equality we use the fact that for all $n \geq 1$ the random variable $Z_n(\alpha)$ is equal in distribution to $Z_1(\alpha)$ and for the last inequality the result from \citep{Pitt1979} stating that $\sup_{\alpha \in [\varepsilon, 1-\varepsilon]} \vert Z_1(\alpha) \vert$ has a density bounded by a constant $D > 0$ as the supremum of a Gaussian process $\{ Z_1(\alpha), \varepsilon \leq \alpha \leq 1-\varepsilon \}$ such that for all $\alpha \in [\varepsilon, 1-\varepsilon]$, $\inf_{\alpha \in [\varepsilon, 1-\varepsilon]}\text{Var}[Z_1(\alpha)] > 0$ and $\mathbb{P}(\sup_{\alpha \in [\varepsilon, 1-\varepsilon]}\vert Z_1(\alpha)\vert < \infty)=1$. Indeed, as $B_1$ is a Brownian bridge it is almost surely continuous on $[0,1]$ and as $y_s$ is continuous on $[\varepsilon, 1-\varepsilon]$, $Z_1$ is almost surely continuous on $[\varepsilon, 1-\varepsilon]$ and therefore almost surely $\sup_{\alpha \in [\varepsilon, 1-\varepsilon]}\vert Z_1(\alpha)\vert < \infty$. Furthermore, $\alpha \in [\varepsilon, 1-\varepsilon] \mapsto \text{Var}[Z_1(\alpha)] = \alpha(1-\alpha)y_s^2(\alpha)$ is continuous and thanks to Lemma \ref{lemma:y_s_positive}, for all $\alpha \in [\varepsilon, 1-\varepsilon]$, $\text{Var}[Z_1(\alpha)] > 0$. Therefore $\inf_{\alpha \in [\varepsilon, 1-\varepsilon]}\text{Var}[Z_1(\alpha)]$ is attained and is strictly positive.

Therefore,
\begin{equation}
\label{eq:first_bootstrap_sup}
\sup_{t\in \mathbb{R}}\Bigl\vert \mathbb{P}\Bigl(\sup_{\alpha \in [\varepsilon, 1-\varepsilon]} \vert r_n(\alpha) \vert \leq t \Bigr) - \mathbb{P}\Bigl(\sup_{\alpha \in [\varepsilon, 1-\varepsilon]} \vert Z_1(\alpha) \vert \leq t \Bigr) \Bigr\vert = O(v_n)
\end{equation}

Reasoning similarly as above, thanks to the result of Lemma \ref{lem:strong_approx_bootstrap_ineq}, there exists a constant $C_1$ such that, $\mathbb{P}$-almost surely, as $n \rightarrow \infty$,
\begin{equation*}
\mathbb{P}^*\left( \Bigl \vert \sup_{\alpha \in [\varepsilon, 1-\varepsilon]} \vert r_n^*(\alpha) \vert - \sup_{\alpha \in [0, 1-\varepsilon]} \vert Z_n^*(\alpha) \vert \Bigr \vert > C_1v_n\right) = O(v_n) \, .
\end{equation*}

Therefore using the result of \citep{Sargan1971}, we $\mathbb{P}$-almost surely have, for all $t \in \mathbb{R}$,
\begin{equation*}
\begin{aligned}
\Bigl\vert &\mathbb{P}^*\Bigl(\sup_{[\varepsilon, 1-\varepsilon]} \vert r_n^*\vert \leq t \Bigl) - \mathbb{P}^*\Bigl(\sup_{[\varepsilon, 1-\varepsilon]} \vert Z^*_n \vert \leq t \Bigr) \Bigr\vert \\
&= \mathbb{P}^*\Bigl(t - C_1v_n < \sup_{[\varepsilon, 1-\varepsilon]} \vert Z^*_1 \vert < t + C_1v_n\Bigr) + O(v_n) \\
&= \mathbb{P}^*\Bigl( \sup_{[\varepsilon,1-\varepsilon]} \vert Z^*_1 \vert < t + C_1v_n\Bigr) - \mathbb{P}\Bigl( \sup_{[\varepsilon,1-\varepsilon]} \vert Z_1 \vert < t + C_1v_n\Bigr) \\
&\phantom{+}+ \mathbb{P}\Bigl( \sup_{[\varepsilon,1-\varepsilon]} \vert Z_1 \vert < t + C_1v_n\Bigr) - \mathbb{P}\Bigl( \sup_{[\varepsilon,1-\varepsilon]} \vert Z_1 \vert < t - C_1v_n\Bigr) \\
&\phantom{+}+ \mathbb{P}\Bigl( \sup_{[\varepsilon,1-\varepsilon]} \vert Z_1 \vert < t - C_1v_n\Bigr) - \mathbb{P}^*\Bigl(\sup_{[\varepsilon, 1-\varepsilon]} \vert Z^*_1\vert < t -C_1v_n\Bigr) + O(v_n) \\
&\leq 2 \sup_{t\in \mathbb{R}}\Bigl\vert\mathbb{P}^*\Bigl( \sup_{[\varepsilon, 1-\varepsilon]} \vert Z_1^* \vert \leq t\Bigr) - \mathbb{P}\Bigl( \sup_{[\varepsilon, 1-\varepsilon]} \vert Z_1 \vert \leq t\Bigr) \Bigr\vert \\
&\phantom{+} + \mathbb{P}\Bigl(t - C_1v_n < \sup_{[\varepsilon, 1-\varepsilon]} \vert Z_1 \vert < t + C_1v_n\Bigr) + O(v_n)\\
&\leq 2 \sup_{t\in \mathbb{R}}\Bigl\vert\mathbb{P}^*\Bigl( \sup_{[\varepsilon, 1-\varepsilon]} \vert Z_1^* \vert \leq t\Bigr) - \mathbb{P}\Bigl( \sup_{[\varepsilon, 1-\varepsilon]} \vert Z_1 \vert \leq t\Bigr) \Bigr\vert + 2DC_1v_n + O(v_n)\, .
\end{aligned}
\end{equation*}

We now need the following result.

\begin{lemma}
\label{lemma:rate_bootstrap_distribution}
The exists a constant $C_2 > 0$ such that we $\mathbb{P}$-almost surely have for $n$ large enough,
\begin{equation*}
\sup_{t\in \mathbb{R}}\Bigl\vert\mathbb{P}^*\Bigl( \sup_{\alpha \in [\varepsilon, 1-\varepsilon]} \vert Z_1^*(\alpha) \vert \leq t\Bigr) - \mathbb{P}\Bigl( \sup_{\alpha \in [\varepsilon, 1-\varepsilon]} \vert Z_1(\alpha) \vert \leq t\Bigr) \Bigr\vert \leq C_2w_n
\end{equation*}
\end{lemma}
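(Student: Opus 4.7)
The plan is to exploit the fact that, conditional on $\mathcal{D}_n$, the function $\widetilde y_s$ is deterministic and $B_n^*$ is a Brownian bridge, so that the conditional law of $\sup_{\alpha\in[\varepsilon,1-\varepsilon]}|Z_1^*(\alpha)|$ coincides with the unconditional law of $M^* := \sup_{\alpha\in[\varepsilon,1-\varepsilon]}|\widetilde y_s(\alpha) B(\alpha)|$, where $B$ is a standard Brownian bridge and $\widetilde y_s$ is fixed at its realized value. Simultaneously, $\sup_\alpha|Z_1(\alpha)|$ is distributed as $M := \sup_{\alpha\in[\varepsilon,1-\varepsilon]}|y_s(\alpha) B(\alpha)|$. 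Since both expressions are functionals of an arbitrary Brownian bridge, I can use a common $B$ to compare them pathwise, reducing the problem to comparing two suprema that differ only through the multiplicative factor $\widetilde y_s/y_s$.

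First, I would invoke Lemma \ref{lemma:gine_guillou}: almost surely, for $n$ large, $\sup_{\alpha \in [\varepsilon, 1-\varepsilon]}|\widetilde y_s(\alpha)/y_s(\alpha) - 1| \leq Cw_n$. Multiplying by $|y_s(\alpha) B(\alpha)|$ and taking the supremum over $\alpha$ gives the pathwise sandwich $(1-Cw_n)M \leq M^* \leq (1+Cw_n)M$. Translating this to distribution functions, for every $t \geq 0$,
\begin{equation*}
\mathbb{P}\!\left(M \leq \tfrac{t}{1+Cw_n}\right) \leq \mathbb{P}(M^* \leq t) \leq \mathbb{P}\!\left(M \leq \tfrac{t}{1-Cw_n}\right),
\end{equation*}
so that
\begin{equation*}
|\mathbb{P}(M^* \leq t) - \mathbb{P}(M \leq t)| \leq \mathbb{P}\!\left(M \in \left[\tfrac{t}{1+Cw_n}, \tfrac{t}{1-Cw_n}\right]\right).
\end{equation*}

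Second, I would bound this interval probability using the density bound $\phi_{\bar Z} \leq D$ obtained from Pitt's theorem (already used in the proof of Theorem \ref{thm:bootstrap}), yielding a contribution of order $Dtw_n/(1-Cw_n)$. This is satisfactory only when $t$ stays bounded, so to obtain a uniform-in-$t$ estimate I would split at a threshold $T_n = c_1\sqrt{\log(1/w_n)}$, with $c_1$ chosen large. For $t \in [0, T_n]$ the above estimate yields $O(T_n w_n)$, which is absorbed into the $O(w_n)$ rate of the paper (since $w_n$ itself already contains a $\sqrt{\log h_n^{-1}}$ factor). For $t > T_n$, I would dominate $|\mathbb{P}(M^* \leq t) - \mathbb{P}(M \leq t)|$ by $\mathbb{P}(M^* > T_n) + \mathbb{P}(M > T_n)$ and control both tails via the Gaussian concentration (Borell--TIS) inequality applied to the centered Gaussian suprema $M$ and $M^*$, whose variances are uniformly bounded since $y_s$ is bounded and $\widetilde y_s$ is almost surely within a multiplicative factor $1+Cw_n$ of $y_s$ for $n$ large; this produces $\exp(-cT_n^2)$ decay, and picking $c_1$ large enough makes it $O(w_n)$.

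The main obstacle is precisely the scale sensitivity of the multiplicative sandwich: the density bound alone delivers a $t$-dependent error, so it must be coordinated with a Gaussian tail bound on the supremum to balance the two regimes and obtain a uniform-in-$t$ estimate. A secondary technical point is that the Borell--TIS inequality has to apply to $M^*$ with constants not depending on the (random) data realization; this is guaranteed by the a.s.\ closeness of $\widetilde y_s$ to $y_s$ for $n$ large, which makes the variance parameter of $M^*$ a.s.\ bounded independently of $n$.
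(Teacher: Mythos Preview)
Your coupling and multiplicative sandwich are exactly what the paper does, and the resulting bound
\[
|\mathbb{P}(M^*\le t)-\mathbb{P}(M\le t)|\le \mathbb{P}\Bigl(M\in\Bigl[\tfrac{t}{1+Cw_n},\tfrac{t}{1-Cw_n}\Bigr]\Bigr)
\]
is correct. The gap is in how you make this uniform in $t$. Your splitting at $T_n=c_1\sqrt{\log(1/w_n)}$ yields $O(T_nw_n)$ on $[0,T_n]$, and your claim that this ``is absorbed into the $O(w_n)$ rate since $w_n$ already contains a $\sqrt{\log h_n^{-1}}$ factor'' is simply wrong: $T_n\to\infty$, so $T_nw_n/w_n\to\infty$ and you only get $O(w_n\sqrt{\log(1/w_n)})$, not $O(w_n)$. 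The lemma as stated demands exactly $C_2w_n$, so this is a genuine loss.

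The paper avoids the split entirely by a sharper use of the density. Writing the Taylor increment as $\phi_{\bar Z}(\xi)\cdot t\cdot\frac{C_3w_n}{1-C_3w_n}$ with $\xi\in(t,t/(1-C_3w_n))$, one has $t<\xi$ and hence $\phi_{\bar Z}(\xi)\,t\le \phi_{\bar Z}(\xi)\,\xi\le \sup_{x\ge 0}x\,\phi_{\bar Z}(x)$. The point is that $\sup_{x\ge 0}x\,\phi_{\bar Z}(x)<\infty$: Pitt gives boundedness of $\phi_{\bar Z}$ near $0$, and a remark of Tsirel'son shows that for large $x$ the density of the supremum of a bounded-variance Gaussian process is dominated by a shifted Gaussian density, so $x\phi_{\bar Z}(x)\to 0$. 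This delivers the uniform-in-$t$ bound $O(w_n)$ directly, with no splitting and no Borell--TIS step. Replacing your threshold argument by this observation closes the gap.
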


\begin{proof}[Proof of Lemma \ref{lemma:rate_bootstrap_distribution}]
Let $t \in \mathbb{R}$. If $t<0$ both probabilities are equal to 0 as the two random variables involved are positive. Hence we have (for all $\omega$ in the sample space)
\begin{equation}
\label{eq:negative_t}
\sup_{t < 0}\Bigl\vert\mathbb{P}^*\Bigl( \sup_{\alpha \in [\varepsilon, 1-\varepsilon]} \vert Z_1^*(\alpha) \vert \leq t\Bigr) - \mathbb{P}\Bigl( \sup_{\alpha \in [\varepsilon, 1-\varepsilon]} \vert Z_1(\alpha) \vert \leq t\Bigr) \Bigr\vert = 0 \leq C_2w_n
\end{equation}
Let $t \geq 0$. Thanks to Lemma \ref{lemma:gine_guillou} there exists a constant $C_3 > 0$ independent of $\alpha$ such that we almost surely have for $n$ large enough:
\begin{equation}
\label{eq:from_lemma_gine_guillou}
\forall \alpha \in [\varepsilon, 1-\varepsilon], 1 - C_3w_n \leq \left\vert \frac{\widetilde y_s(\alpha)}{y_s(\alpha)} \right\vert \leq 1 + C_3w_n \, .
\end{equation}

Furthermore, we can assume, given $n$ large enough, that there exists a constant $C_4 > 0$ such that $1-C_3w_n > C_4$ as $w_n$ tends towards $0$. Observe that
\begin{equation*}
\begin{aligned}
\mathbb{P}^*&\Bigl( \sup_{\alpha \in [\varepsilon, 1-\varepsilon]} \vert Z_1^*(\alpha) \vert \leq t \Bigr) = \mathbb{P}^*\Bigl( \sup_{\alpha \in [\varepsilon, 1-\varepsilon]} \left\vert \frac{\widetilde y_s(\alpha)}{y_s(\alpha)}y_s(\alpha)B_1^*(\alpha) \right\vert \leq t \Bigr) \\
&= \mathbb{P}^*\left( \forall \alpha \in [\varepsilon, 1-\varepsilon], \left\vert \frac{\widetilde y_s(\alpha)}{y_s(\alpha)} \right\vert \left\vert y_s(\alpha)B_1^*(\alpha) \right\vert \leq t \right) \, .
\end{aligned}
\end{equation*}
and decompose the last term as follows
\begin{equation}
\label{eq:decomp_variance}
\begin{aligned}
\mathbb{P}^*&\left( \left\{ \forall \alpha \in [\varepsilon, 1-\varepsilon], \left\vert \frac{\widetilde y_s(\alpha)}{y_s(\alpha)} \right\vert \left\vert y_s(\alpha)B_1^*(\alpha) \right\vert \leq t \right\}, \mathcal{Z} \right) \\
&\quad \quad \quad + \mathbb{P}^*\left( \left\{ \forall \alpha \in [\varepsilon, 1-\varepsilon], \left\vert \frac{\widetilde y_s(\alpha)}{y_s(\alpha)} \right\vert \left\vert y_s(\alpha)B_1^*(\alpha) \right\vert \leq t \right\}, \overline{\mathcal{Z}} \right) \, .
\end{aligned}
\end{equation}
where the event $\mathcal{Z}$ is defined as
\begin{equation*}
\mathcal{Z} = \left\{ \forall \alpha \in [\varepsilon, 1-\varepsilon], \left\vert \frac{\widetilde y_s(\alpha)}{y_s(\alpha)} \right\vert \geq 1 - C_3w_n \right\}
\end{equation*}
and its complementary is given by
\begin{equation*}
\overline{\mathcal{Z}} = \left\{ \exists \alpha \in [\varepsilon, 1-\varepsilon], \left\vert \frac{\widetilde y_s(\alpha)}{y_s(\alpha)} \right\vert < 1 - C_3w_n \right\} \, .
\end{equation*}
The first term of \eqref{eq:decomp_variance} is lower than
\begin{equation*}
\mathbb{P}^*\left((1 - C_3w_n) \sup_{[\varepsilon, 1-\varepsilon]} \left\vert y_sB_1^* \right\vert \leq t \right)
= \mathbb{P}\left( \sup_{[\varepsilon, 1-\varepsilon]} \left\vert y_sB_1 \right\vert \leq \frac{t}{1 - C_3w_n} \right)
\end{equation*}
where the last equality stands from the fact that the law of $B^*_1$ conditionally on $\mathcal{D}_n$ is equal to the law of $B_1$. The second term of \eqref{eq:decomp_variance} is lower than
\begin{equation*}
\mathbb{P}^*\left( \exists \alpha \in [\varepsilon, 1-\varepsilon], \left\vert \widetilde y_s(\alpha)/y_s(\alpha) \right\vert < 1 - C_3w_n \right)
\end{equation*}
which is $\mathbb{P}$-almost surely equal to $0$ thanks to \eqref{eq:from_lemma_gine_guillou}. Note that the fact that this holds $\mathbb{P}$-almost surely is independent of $t$. Therefore we $\mathbb{P}$-almost surely have: for all $t \geq 0$,
\begin{equation*}
\begin{aligned}
\mathbb{P}^*\Bigl( \sup_{\alpha \in [\varepsilon, 1-\varepsilon]} \vert Z_1^*(\alpha) \vert \leq t \Bigr) &\leq \mathbb{P}\left( \sup_{\alpha \in [\varepsilon, 1-\varepsilon]} \left\vert y_s(\alpha)B_1(\alpha) \right\vert \leq \frac{t}{1 - C_3w_n} \right) \\
&=\mathbb{P}\left( \sup_{\alpha \in [\varepsilon, 1-\varepsilon]} \left\vert Z_1(\alpha) \right\vert \leq t \right) + \phi_{\bar Z}(\xi)t\frac{C_3w_n}{1-C_3w_n}
\end{aligned}
\end{equation*}
where we use a Taylor expansion and where $\xi$ is between $t$ and $t/(1-C_3w_n)$.

For $n$ large enough, we have $1 > 1 - C_3w_n > C_4 > 0$. On the one hand this gives $1 < 1/(1-C_3w_n)$ and, as $t \geq 0$, $t < t/(1-C_3w_n)$. Therefore, as $\xi$ is between $t$ and $t/(1-C_3w_n)$, we have in fact, $t < \xi < t/(1-C_3w_n)$. Thus $\phi_{\bar Z}(\xi)t \leq \phi_{\bar Z}(\xi)\xi$. On the other hand, we also have $1/(1-C_3w_n) < 1/C_4$. This eventually gives $\mathbb{P}$-almost surely, for all $t \geq 0$,
\begin{equation}
\label{eq:upper_bound_sup_dib}
\mathbb{P}^*\Biggl( \sup_{[\varepsilon, 1-\varepsilon]} \vert Z_1^* \vert \leq t \Biggr)
\leq \mathbb{P}\left( \sup_{[\varepsilon, 1-\varepsilon]} \left\vert Z_1 \right\vert \leq t \right) + \sup_{x \in \mathbb{R}^+}\vert \phi_{\bar Z}(x)x \vert \frac{C_3w_n}{C_4} \, .
\end{equation}

A lower bound can be obtained in a similar fashion: $\mathbb{P}$-almost surely, for all $t \geq 0$,
\begin{equation}
\label{eq:lower_bound_sup_dib}
\mathbb{P}\left( \sup_{[\varepsilon, 1-\varepsilon]} \vert Z_1 \vert \leq t \right) 
\leq \mathbb{P}^*\left( \sup_{[\varepsilon, 1-\varepsilon]} \left\vert Z^*_1 \right\vert \leq t \right) + 2\sup_{x \in \mathbb{R}^+}\vert x\phi_{\bar Z}(x)\vert C_3w_n \, .
\end{equation}

Combining the two inequalities \eqref{eq:upper_bound_sup_dib} and \eqref{eq:lower_bound_sup_dib} we obtain that there exists a constant $C_5 > 0$ such that we $\mathbb{P}$-almost surely have
\begin{equation*}
\sup_{t \geq 0}\left\vert \mathbb{P}^*\left( \sup_{[\varepsilon, 1-\varepsilon]} \left\vert Z^*_1 \right\vert \leq t \right) - \mathbb{P}\left( \sup_{[\varepsilon, 1-\varepsilon]} \vert Z_1 \vert \leq t \right) \right\vert \leq C_5 \sup_{x \in \mathbb{R}^+}\vert x\phi_{\bar Z}(x)\vert w_n \, .
\end{equation*}

Let $\varphi$ be the density of the Gaussian distribution $\mathcal{N}(0,1)$. Using a remark in the paper of \citep[p. 854]{Tsirelson1976}, there exists $M > 0$ such that for all $x \geq M$,
\begin{equation}
\label{eq:tsirelson_remark}
x\phi_{\bar Z}(x) \leq \frac{x}{v_1}\varphi\left(\frac{x-a-b}{v_1}\right) \, ,
\end{equation}
where $v_1$ is such that $v_1^2 = \sup_{\alpha \in [\varepsilon, 1-\varepsilon]} \text{Var}[\vert Z_1(\alpha) \vert]$, $a > 0$ is arbitrary and $b \geq 0$ only depends on $v_1$, $\phi_{\bar Z}$ and the cumulative distribution function of the Gaussian distribution $\mathcal{N}(0,1)$. Now the function $x \in \mathbb{R}^+ \mapsto x/v_1 \cdot \varphi((x-a-b)/v_1)$ is bounded on $\mathbb{R}^+$ as it is continuous and tends towards $0$ when $x$ tends to $+\infty$. Therefore from \eqref{eq:tsirelson_remark} we have that $\sup_{x \geq M} \vert x\phi_{\bar Z}(x)\vert < +\infty$. And as from \citep{Pitt1979} $\phi_{\bar Z}$ is bounded on $\mathbb{R}$, so is $x \mapsto x\phi_{\bar Z}(x)$ on $[0, M]$. Eventually, $\sup_{x \in \mathbb{R}^+} \vert x\phi_{\bar Z}(x)\vert < +\infty$ and there exists a constant $C_2 > 0$ such that $\mathbb{P}$-almost surely,

\begin{equation}
\label{eq:positive_t}
\sup_{t\geq 0}\left\vert \mathbb{P}^*\left( \sup_{\alpha \in [\varepsilon, 1-\varepsilon]} \left\vert Z^*_1(\alpha) \right\vert \leq t \right) - \mathbb{P}\left( \sup_{\alpha \in [\varepsilon, 1-\varepsilon]} \vert Z_1(\alpha) \vert \leq t \right) \right\vert \leq C_2 w_n
\end{equation}

Combining \eqref{eq:negative_t} and \eqref{eq:positive_t} we obtain the result of the Lemma.
\end{proof}

From Lemma \ref{lemma:rate_bootstrap_distribution}, we $\mathbb{P}$-almost surely have
\begin{equation*}
\sup_{t \in \mathbb{R}}\Bigl\vert \mathbb{P}^*\Bigl(\sup_{\alpha \in [\varepsilon, 1-\varepsilon]} \vert r_n^*(\alpha)\vert \leq t \Bigl) - \mathbb{P}^*\Bigl(\sup_{\alpha \in [\varepsilon, 1-\varepsilon]} \vert Z^*_n(\alpha) \vert \leq t \Bigr) \Bigr\vert = O(w_n)
\end{equation*}

and thus
\begin{equation*}
\begin{aligned}
\sup_{t \in \mathbb{R}}&\Bigl\vert \mathbb{P}^*\Bigl(\sup_{\alpha \in [\varepsilon, 1-\varepsilon]} \vert r_n^*(\alpha)\vert \leq t \Bigl) - \mathbb{P}\Bigl(\sup_{\alpha \in [\varepsilon, 1-\varepsilon]} \vert Z_1(\alpha) \vert \leq t \Bigr) \Bigr\vert \\
& \leq \sup_{t \in \mathbb{R}}\Bigl\vert \mathbb{P}^*\Bigl(\sup_{\alpha \in [\varepsilon, 1-\varepsilon]} \vert r_n^*(\alpha)\vert \leq t \Bigl) - \mathbb{P}^*\Bigl(\sup_{\alpha \in [\varepsilon, 1-\varepsilon]} \vert Z^*_n(\alpha) \vert \leq t \Bigr) \Bigr\vert \\
&\phantom{\leq} + \sup_{t \in \mathbb{R}}\Bigl\vert \mathbb{P}^*\Bigl(\sup_{\alpha \in [\varepsilon, 1-\varepsilon]} \vert Z_n^*(\alpha)\vert \leq t \Bigl) - \mathbb{P}\Bigl(\sup_{\alpha \in [\varepsilon, 1-\varepsilon]} \vert Z_1(\alpha) \vert \leq t \Bigr) \Bigr\vert \\
&= O(w_n)
\end{aligned}
\end{equation*}

Finally combining this last inequality with \eqref{eq:first_bootstrap_sup} we obtain the result of the theorem.

\end{proof}

\subsection{A-Rank Algorithm}

\subsubsection{Proof of Theorem \ref{thm:rate_bound}}
% \subsection{Without the bias of the MV set estimation procedure}
The proof follows closely the one of theorem 2 in \citep{ClemVay09}.
For clarity, we start off with recalling the following result.
\begin{proposition}\label{prop:SN06}(\cite{ScottNowak06}) Suppose that the assumptions of Theorem \ref{thm:rate_bound} are fulfilled. For any $\alpha \in (0,1)$, let $\widehat \Omega_{\alpha} \in \mathcal{G}$ be a solution of the constrained optimization problem: 
$$
\min_{\Omega\in \mathcal{G}}\lambda(\Omega) \text{ subject to } \widehat{F}(\Omega)\geq \alpha-\phi_n(\delta).
$$
Let $\delta \in (0,1)$. With probability at least $1-\delta$: $\forall \alpha \in (0,1)$, $\forall n\geq 1$,
\begin{equation}
\alpha-2\phi_n(\delta) \leq F(\widehat \Omega_{\alpha}) \leq \alpha \text{ and } \mv^*(\alpha)-\frac{2\phi_n(\delta)}{Q^*(\alpha)} \leq \lambda(\widehat \Omega_{\alpha})\leq \mv^*(\alpha).
\end{equation}
\end{proposition}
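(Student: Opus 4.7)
My plan is to derive everything on the single uniform-deviation event
\[
\mathcal{E}_n = \Bigl\{\sup_{\Omega\in\mathcal{G}}|\widehat F(\Omega)-F(\Omega)|\leq \phi_n(\delta)\Bigr\},
\]
which, by the Rademacher/bounded-differences control recalled just above \eqref{eq:phi_penalty}, carries probability at least $1-\delta$. Since $\mathcal{E}_n$ does not depend on $\alpha$, uniformity in $\alpha\in(0,1)$ will come for free; all four inequalities will then be established deterministically on $\mathcal{E}_n$.

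I would treat the two ``upper'' bounds first. For $\lambda(\widehat\Omega_\alpha)\leq \mv^*(\alpha)$, assumption \ref{as:mvset_in_class} places $\Omega^*_\alpha$ inside the candidate class $\mathcal{G}$, and on $\mathcal{E}_n$ the identity $F(\Omega^*_\alpha)=\alpha$ upgrades to $\widehat F(\Omega^*_\alpha)\geq \alpha-\phi_n(\delta)$, so $\Omega^*_\alpha$ is feasible for the empirical problem and optimality of $\widehat\Omega_\alpha$ closes the argument. The bound $F(\widehat\Omega_\alpha)\leq \alpha$ is the step I expect to be the main obstacle, because the deviation event alone yields only the weaker $F(\widehat\Omega_\alpha)\leq \widehat F(\widehat\Omega_\alpha)+\phi_n(\delta)$, with no \emph{a priori} control on $\widehat F(\widehat\Omega_\alpha)$. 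I would instead argue that $\widehat\Omega_\alpha$ is itself a candidate for the \emph{true} minimum-volume problem \eqref{eq:mvpb} at level $F(\widehat\Omega_\alpha)$, so $\lambda(\widehat\Omega_\alpha)\geq \mv^*(F(\widehat\Omega_\alpha))$; chaining with the already-proved $\lambda(\widehat\Omega_\alpha)\leq \mv^*(\alpha)$ gives $\mv^*(F(\widehat\Omega_\alpha))\leq \mv^*(\alpha)$, and strict monotonicity of $\mv^*$ — a consequence of assumption \ref{as:flat_parts} (density level sets then strictly grow in Lebesgue measure as their level decreases, since the intermediate mass must be carried by a set of positive $\lambda$-measure under \ref{as:bounded_density}), equivalently of $\mv^{*\prime}=1/Q^*>0$ from Proposition \ref{prop:convex_derivative} — forces $F(\widehat\Omega_\alpha)\leq \alpha$.

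The two ``lower'' bounds are then comparatively routine. Feasibility of $\widehat\Omega_\alpha$ in the empirical problem together with $\mathcal{E}_n$ immediately yields $F(\widehat\Omega_\alpha)\geq \widehat F(\widehat\Omega_\alpha)-\phi_n(\delta)\geq \alpha-2\phi_n(\delta)$. Reinjecting this mass lower bound into the inequality $\lambda(\widehat\Omega_\alpha)\geq \mv^*(F(\widehat\Omega_\alpha))$ derived above, and using monotonicity of $\mv^*$, gives $\lambda(\widehat\Omega_\alpha)\geq \mv^*(\alpha-2\phi_n(\delta))$. I would then close the last inequality by invoking Proposition \ref{prop:convex_derivative}: convexity of $\mv^*$ combined with the explicit derivative $\mv^{*\prime}(\alpha)=1/Q^*(\alpha)$ supplies the first-order lower bound $\mv^*(\alpha-2\phi_n(\delta))\geq \mv^*(\alpha)-2\phi_n(\delta)/Q^*(\alpha)$, which is exactly the remaining assertion. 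No union bound over $n$ is needed, since the conclusion is stated at a fixed sample size with a sample-dependent penalty $\phi_n(\delta)$.
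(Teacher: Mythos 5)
Your proof is correct, but it takes a more self-contained route than the paper. The paper imports the three quantitative bounds ($F(\widehat\Omega_\alpha)\geq\alpha-2\phi_n(\delta)$ and the two volume inequalities) directly from Theorem~3 and Lemma~19 of \citet{ScottNowak06}, and only supplies its own argument for the remaining inequality $F(\widehat\Omega_\alpha)\leq\alpha$, by contradiction: if $F(\widehat\Omega_\alpha)>\alpha$ then $\widehat\Omega_\alpha$ is feasible for the population problem at level $\alpha$, the two volume bounds force $\lambda(\widehat\Omega_\alpha)=\lambda(\Omega^*_\alpha)$, and \emph{uniqueness} of the minimum volume set under \ref{as:bounded_density}--\ref{as:flat_parts} gives $F(\widehat\Omega_\alpha)=\alpha$, a contradiction. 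You instead rebuild everything on the uniform-deviation event: your feasibility arguments for $\lambda(\widehat\Omega_\alpha)\leq\mv^*(\alpha)$ and $F(\widehat\Omega_\alpha)\geq\alpha-2\phi_n(\delta)$ coincide with Scott--Nowak's; your replacement of their Lemma~19 by the chain $\lambda(\widehat\Omega_\alpha)\geq\mv^*(F(\widehat\Omega_\alpha))\geq\mv^*(\alpha-2\phi_n(\delta))$ together with the slope bound $\mv^*(\alpha)-\mv^*(\alpha-2\phi_n(\delta))\leq 2\phi_n(\delta)/Q^*(\alpha)$ is a clean substitute (note the proof of Proposition~\ref{prop:convex_derivative} already yields this slope inequality directly from convexity, so you do not even need the derivative formula or assumption \ref{as:llogl}); and your use of strict monotonicity of $\mv^*$ (slope at least $1/\Vert f\Vert_\infty$ under \ref{as:bounded_density}--\ref{as:flat_parts}) in place of uniqueness of the minimizer gives an equivalent, arguably simpler, contradiction for $F(\widehat\Omega_\alpha)\leq\alpha$. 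What your version buys is independence from the external results; what the paper's buys is brevity. Two small points to tidy: when $\alpha\leq 2\phi_n(\delta)$ the quantity $\mv^*(\alpha-2\phi_n(\delta))$ is undefined, but then $\mv^*(\alpha)\leq\alpha/Q^*(\alpha)\leq 2\phi_n(\delta)/Q^*(\alpha)$ makes the last inequality trivial; and in the monotonicity step you should also dispose of the degenerate case $F(\widehat\Omega_\alpha)=1$, where $\lambda(\widehat\Omega_\alpha)\geq\lambda(\supp(f))>\mv^*(\alpha)$ contradicts the already-proved upper volume bound.
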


The major part of this result is given by Theorem 3 and Lemma 19 in \citep{ScottNowak06}. Although they prove the result for a given $\alpha$, it can be extended to all $\alpha \in (0,1)$. The upper bound on $F(\widehat \Omega_{\alpha})$ can be obtained with the following reasoning.
Suppose that we have $F(\widehat \Omega_{\alpha}) > \alpha$, then $F(\widehat \Omega_{\alpha}) \geq \alpha$ and $\lambda(\Omega^*_{\alpha}) \leq \lambda(\widehat \Omega_{\alpha})$ because $\Omega^*_{\alpha}$ minimizes $\lambda$ over all measurable sets $\Omega$ such that $F(\Omega) \geq \alpha$. The upper bound on the volume gives $\lambda(\widehat \Omega_{\alpha}) \leq \lambda(\Omega^*_{\alpha})$, thus $\lambda(\widehat \Omega_{\alpha}) = \lambda(\Omega^*_{\alpha})$. Hence $\widehat \Omega$ also minimizes $\lambda$ over all measurable sets $\Omega$ such that $F(\Omega) \geq \alpha$ and by uniqueness of the solution under assumptions \ref{as:bounded_density} and \ref{as:flat_parts}, we have $F(\widehat \Omega_{\alpha}) = F(\Omega^*_{\alpha}) = \alpha$ which contradicts $F(\widehat \Omega_{\alpha}) > \alpha$.

We now prove the following lemma quantifying the uniform deviation of the empirical local error from the true local error over all dyadic scales.

\begin{lemma}
\label{lemma:uniform_deviation}
Suppose that assumptions \ref{as:mvset_in_class} and \ref{as:rademacher_order} are satisfied . Let $\delta \in (0,1)$. With probability at least $1-\delta$, we have: $\forall n \geq 1$,
\begin{equation*}
\sup_{\substack{0 \leq j \leq j_{\text{max}} \\ 0 \leq k \leq 2^j}}  \vert \widehat{\mathcal{E}}(I_{j,k}) - \mathcal{E}(I_{j,k}) \vert \leq \frac{4\phi_n(\delta)}{Q^*(1 - \varepsilon)} \, .
\end{equation*}
\end{lemma}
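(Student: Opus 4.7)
The plan is to deduce the lemma directly from Proposition \ref{prop:SN06}, exploiting additivity of the local error and the monotonicity of $Q^*$. First, I would apply Proposition \ref{prop:SN06} to obtain a single favourable event, of probability at least $1-\delta$, on which the two-sided bound
\begin{equation*}
\mv^*(\alpha) - \frac{2\phi_n(\delta)}{Q^*(\alpha)} \;\leq\; \lambda(\widehat{\Omega}_{\alpha}) \;\leq\; \mv^*(\alpha)
\end{equation*}
holds simultaneously for every $\alpha\in(0,1)$ and every $n\ge 1$. Since this event is already uniform in $\alpha$, the supremum over dyadic breakpoints will come for free.

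Next, for a dyadic interval $I_{j,k}=[\alpha_{j,k},\alpha_{j,k+1}]$ I would write
\begin{equation*}
\widehat{\mathcal{E}}(I_{j,k})-\mathcal{E}(I_{j,k})
=\bigl[\lambda(\widehat{\Omega}_{\alpha_{j,k+1}})-\mv^*(\alpha_{j,k+1})\bigr]
-\bigl[\lambda(\widehat{\Omega}_{\alpha_{j,k}})-\mv^*(\alpha_{j,k})\bigr],
\end{equation*}
and apply the triangle inequality, so that on the favourable event
\begin{equation*}
\bigl|\widehat{\mathcal{E}}(I_{j,k})-\mathcal{E}(I_{j,k})\bigr|
\;\leq\;\frac{2\phi_n(\delta)}{Q^*(\alpha_{j,k+1})}+\frac{2\phi_n(\delta)}{Q^*(\alpha_{j,k})}.
\end{equation*}

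The final step is to replace both $Q^*$-values by a single uniform lower bound. Because $Q^*(\alpha)=F_f^{\dag}(1-\alpha)$ is a decreasing function of $\alpha$, and every dyadic breakpoint under consideration satisfies $\alpha_{j,k}\le\alpha_{j,k+1}\le 1-\varepsilon$, we have $Q^*(\alpha_{j,k})\ge Q^*(\alpha_{j,k+1})\ge Q^*(1-\varepsilon)$. Substituting this yields the claimed bound $4\phi_n(\delta)/Q^*(1-\varepsilon)$, uniformly in $(j,k)$ and in $n$, on the favourable event. I do not anticipate a real obstacle here: the only subtlety is the boundary index $k=0$ (where $\alpha_{j,0}=0$), but in that case $\widehat{\Omega}_0=\emptyset=\Omega_0^*$ and $\mv^*(0)=0$, so the corresponding bracket is zero and the estimate is trivially satisfied; alternatively, $Q^*(0)=F_f^{\dag}(1)$ is finite under \ref{as:bounded_density} and the same inequality applies.
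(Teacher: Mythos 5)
Your proposal is correct and follows essentially the same route as the paper's proof: decompose $\widehat{\mathcal{E}}(I_{j,k})-\mathcal{E}(I_{j,k})$ into the two volume errors at the endpoints, invoke Proposition~\ref{prop:SN06} for a uniform-in-$\alpha$ two-sided bound, and then use the monotonicity of $Q^*$ to replace $Q^*(\alpha_{j,k})$ and $Q^*(\alpha_{j,k+1})$ by the common lower bound $Q^*(1-\varepsilon)$. Your explicit handling of the boundary index $\alpha_{j,0}=0$ (where Proposition~\ref{prop:SN06} is stated only for $\alpha\in(0,1)$) is a small refinement over the paper, which applies the proposition to all $k$ including $k=0$ without comment, but this does not change the substance of the argument.
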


\begin{proof}
Let $0 \leq j \leq j_{\text{max}}$, $0 \leq k \leq 2^j-1$,
\begin{equation*}
\begin{aligned}
\vert \widehat{\mathcal{E}}(I_{j,k}) - \mathcal{E}(I_{j,k}) \vert &\leq \vert \lambda(\widehat \Omega_{\alpha_{j, k+1}}) - \lambda(\Omega^*_{\alpha_{j, k+1}}) \vert + \vert \lambda(\widehat \Omega_{\alpha_{j, k}}) - \lambda(\Omega^*_{\alpha_{j, k}}) \vert \\
&\leq 2\sup_{\substack{0 \leq j \leq j_{\text{max}} \\ 0 \leq k \leq 2^j}} \vert \lambda(\widehat \Omega_{\alpha_{j, k}}) - \lambda(\Omega^*_{\alpha_{j, k}}) \vert \, .
\end{aligned}
\end{equation*}

From proposition \ref{prop:SN06}, with probability at least $1 - \delta$ we have: for all $0 \leq j \leq j_{\text{max}}, 0 \leq k \leq 2^j$,
\begin{equation*}
\vert \lambda(\widehat \Omega_{\alpha_{j, k}}) - \lambda(\Omega^*_{\alpha_{j, k}}) \vert \leq 2\frac{\phi_n(\delta)}{Q^*(\alpha_{j,k})} \leq \frac{2\phi_n(\delta)}{Q^*(1-\varepsilon)} \, ,
\end{equation*}

therefore with probability at least $1 - \delta$, for all $0 \leq j \leq j_{\text{max}}$, $0 \leq k \leq 2^j-1$,
\begin{equation*}
\vert \widehat{\mathcal{E}}(I_{j,k}) - \mathcal{E}(I_{j,k}) \vert \leq \frac{4\phi_n(\delta)}{Q^*(1 - \varepsilon)}
\end{equation*}
and the result follows.
\end{proof}

From Lemma \ref{lemma:uniform_deviation}, with probability at least $1 - \delta$, for all $0 \leq j \leq j_{\text{max}}$, $0 \leq k \leq 2^j-1$,
\begin{equation}
\label{eq:finer_coarser_ineq}
\mathcal{E}(I_{j,k}) - \frac{4\phi_n(\delta)}{Q^*(1 - \varepsilon)} \leq \widehat{\mathcal{E}}(I_{j,k}) \leq \mathcal{E}(I_{j,k}) + \frac{4\phi_n(\delta)}{Q^*(1 - \varepsilon)} \, .
\end{equation}

Let $E_{\sigma_{\tau}}(\mv^*)$ be the piecewise constant approximant of the optimal $\mv$ curve built from the same recursive strategy as the one implemented in Algorithm \ref{algo:adaptive} but based on the true local error $\mathcal{E}$ and without the condition $j < j_{\text{max}}$, denoting $\sigma_{\tau}$ the associated meshgrid. We thus have, for every $\alpha \in [0, 1 - \varepsilon]$:
\begin{equation*}
E_{\sigma_{\tau}}(\mv^*)(\alpha) = \sum_{\substack{
j,k \\
\alpha_{j,k} \in \sigma_{\tau}}
} \lambda(\Omega^*_{\alpha_{j,k+1}}) \cdot \mathbb{I}\{\alpha \in I_{j,k}  \} \, .
\end{equation*}

We also use the notation $E_{\widehat \sigma_{\tau}}$ to denote the function defined for all $\alpha \in [0, 1-\varepsilon]$ as 
\begin{equation*}
E_{\widehat \sigma_{\tau}}(\mv^*)(\alpha) = \sum_{\substack{
j,k \\
\alpha_{j,k} \in \widehat \sigma_{\tau}}
} \lambda(\Omega^*_{\alpha_{j,k+1}}) \cdot \mathbb{I}\{\alpha \in I_{j,k}  \}
\end{equation*}
where we recall that $\widehat \sigma_{\tau}$ is the meshgrid obtained in Algorithm \ref{algo:adaptive} implemented with the empirical error estimate $\widehat{\mathcal{E}}$.

Choosing $\tau = 5\phi_n(\delta)/Q^*(1-\varepsilon)$, we obtain that with probability at least $1-\delta$, the meshgrid $\widehat \sigma_{\tau}$ is finer than $\sigma_{\widetilde \tau_1}$ where $\widetilde \tau_1 = \tau + (4\phi_n(\delta) + 1/n)/Q^*(1 - \varepsilon)$, and coarser than $\sigma_{\widetilde \tau_0}$ where $\widetilde \tau_0 = \tau - 4\phi_n(\delta)/Q^*(1 - \varepsilon)$. Indeed, thanks to \eqref{eq:finer_coarser_ineq}, $\mathcal{E}(I_{j,k}) > \widetilde \tau_1$ implies that with probability $1-\delta$, $\widehat{\mathcal{E}}(I_{j,k}) > \widetilde \tau_1 - 4\phi_n/Q^*(1-\varepsilon) > \tau$. It also implies that $j < j_{\text{max}}$. Indeed, if $j \geq j_{\text{max}}$, then $I_{j,k} = [\alpha_1, \alpha_2]$ is such that $\alpha_2 - \alpha_1 < 1/n$ which gives, using the mean value theorem, $\mathcal{E}(I_{j,k}) = \lambda(\Omega_{\alpha_2}) - \lambda(\Omega_{\alpha_1}) < 1/(Q^*(1-\varepsilon)n)$ which contradicts $\mathcal{E}(I_{j,k}) > \widetilde \tau_1 > 1/(Q^*(1-\varepsilon)n$. Therefore with probability at least $1-\delta$, splitting $I_{j,k}$ for the true local error and a tolerance $\widetilde \tau_1$ implies splitting $I_{j,k}$ in Algorithm \ref{algo:adaptive}. Analogously, $\widehat{\mathcal{E}}(I_{j,k}) > \tau$ implies, thanks to \eqref{eq:finer_coarser_ineq}, that with probability at least $1-\delta$, $\mathcal{E}(I_{j,k}) > \widetilde \tau_0$. Therefore with probability at least $1-\delta$, splitting $I_{j,k}$ in Algorithm \ref{algo:adaptive} implies splitting $I_{j,k}$ for the true local error and a tolerance $\widetilde \tau_0$.

We now use the following bound
\begin{equation*}
\begin{aligned}
\sup_{\alpha \in [0, 1-\varepsilon]} \vert\widehat{\mv^*}(\alpha) - \mv^*(\alpha) \vert & \leq \sup_{\alpha \in [0, 1-\varepsilon]}\vert \widehat{\mv^*}(\alpha)- E_{\widehat \sigma_{\tau}}(\mv^*)(\alpha) \vert
\\ &+ \sup_{\alpha \in [0, 1-\varepsilon]} \vert E_{\widehat \sigma_{\tau}}(\mv^*)(\alpha) - \mv^*(\alpha) \vert \, .
\end{aligned}
\end{equation*}

The second term is bounded by $\widetilde \tau_1$ with probability at least $1 - \delta$ because with probability at least $1-\delta$ the meshgrid $\widehat \sigma_{\tau}$ is finer than $\sigma_{\widetilde \tau_1}$ and therefore
\begin{equation*}
\begin{aligned}
\sup_{\alpha \in [0, 1-\varepsilon]} \vert E_{\widehat \sigma_{\tau}}(\mv^*)(\alpha) - \mv^*(\alpha) \vert &\leq \sup_{\alpha \in [0, 1-\varepsilon]} \vert E_{\sigma_{\widetilde \tau_1}}(\mv^*)(\alpha) - \mv^*(\alpha) \vert  \\
&=\sup_{\substack{
\alpha \in I \\ I \text{ interval of } \sigma_{\widetilde \tau_1}} } \vert E_{\sigma_{\widetilde \tau_1}}(\mv^*)(\alpha) - \mv^*(\alpha) \vert \\
& \leq \sup_{\substack{
\alpha \in I \\ I \text{ interval of } \sigma_{\widetilde \tau_1}} } \mathcal{E}(I) \leq \widetilde \tau_1 \, .
\end{aligned}
\end{equation*}
On the same event, for the first term we have,
\begin{equation*}
\begin{aligned}
\sup_{\alpha \in [0, 1-\varepsilon]}\vert \widehat{\mv^*}(\alpha)- E_{\widehat \sigma_{\tau}}(\mv^*)(\alpha) \vert &\leq \sup_{\substack{0 \leq j \leq j_{\text{max}} \\ 0 \leq k \leq 2^j}} \vert \lambda(\Omega^*_{\alpha_{j,k+1}}) - \lambda(\widehat \Omega_{\alpha_{j,k+1}}) \vert \\
&\leq \frac{2\phi_n(\delta)}{Q^*(1 - \varepsilon)} \, .
\end{aligned}
\end{equation*}
Thus we finally have
\begin{equation*}
\sup_{\alpha \in [0, 1-\varepsilon]} \vert \widehat{\mv^*}(\alpha) - \mv^*(\alpha) \vert \leq \widetilde \tau_1 + \frac{2\phi_n(\delta)}{Q^*(1 - \varepsilon)} \leq \frac{1}{Q^*(1 - \varepsilon)}\left(11\phi_n(\delta) + \frac{1}{n} \right) \, .
\end{equation*}
For the second part of the theorem on the cardinality of the meshgrid $\widehat \sigma_{\tau}$ obtained with Algorithm \ref{algo:adaptive} we use the following lemma \citep{DeVore98}.
\begin{lemma}
\label{lemma:appro_rate}
Let $\text{\emph{card}}(\sigma_{\tau})$ be the number of terminal nodes obtained when the algorithm is implemented with the true local error $\mathcal{E}(I)$. Suppose that assumptions \ref{as:bounded_density}, \ref{as:flat_parts} and \ref{as:llogl} are fulfilled. There exists a universal constant $C > 0$ such that, for all $\tau > 0$:
\begin{equation*}
\text{\emph{card}}(\sigma_{\tau}) \leq \frac{C}{\tau} \Vert \mv^{*^\prime} \Vert_{L\log L} \, .
\end{equation*}
\end{lemma}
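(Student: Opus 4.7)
The plan is to follow the classical DeVore-type adaptive approximation analysis. Set $g := \mv^{*\prime}$; by Proposition~\ref{prop:convex_derivative}(ii) and assumption \ref{as:llogl}, $g = 1/Q^*$ lies in $L\log L$ and is bounded on $[0,1-\varepsilon]$ by $M := 1/Q^*(1-\varepsilon) < \infty$ (since $Q^*$ is nonincreasing and positive on $[0,1-\varepsilon]$). In particular $\mv^*$ is Lipschitz, hence absolutely continuous, on $[0,1-\varepsilon]$, and the fundamental theorem of calculus gives
\begin{equation*}
\mathcal{E}(I_{j,k}) = \int_{I_{j,k}} g(\alpha) \, d\alpha
\end{equation*}
for every dyadic subinterval. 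Because every internal node of the binary tree has exactly two children, the number of terminal nodes equals one plus the number of internal nodes, so it suffices to bound the latter.

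Let $n_j$ denote the number of internal nodes at depth $j$. By the splitting rule, each such internal $I_{j,k}$ satisfies $\int_{I_{j,k}} g > \tau$, and $|I_{j,k}| = (1-\varepsilon)2^{-j}$. The key tool is the dyadic maximal function $Mg(\alpha) := \sup\{|I|^{-1}\int_I g : \alpha \in I,\ I \text{ dyadic}\}$: every point of an internal $I_{j,k}$ satisfies $Mg(\alpha) > \tau 2^j/(1-\varepsilon)$, so by disjointness of the intervals at a fixed depth,
\begin{equation*}
n_j (1-\varepsilon) 2^{-j} \leq \lambda\bigl(\{Mg > \tau 2^j/(1-\varepsilon)\}\bigr).
\end{equation*}
Combining this with the Hardy--Littlewood weak-$(1,1)$ bound $\lambda(\{Mg > \beta\}) \leq C\beta^{-1}\int_{\{g > \beta/2\}} g$ yields $n_j \leq (C/\tau)\int_{\{g > \tau 2^{j-1}/(1-\varepsilon)\}} g \, d\alpha$.

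Summing over $j \geq 0$ and exchanging summation with integration via the layer-cake formula,
\begin{equation*}
\sum_{j \geq 0} n_j \leq \frac{C}{\tau} \int_0^{1-\varepsilon} g(\alpha) \cdot \#\bigl\{j \geq 0 : g(\alpha) > \tau 2^{j-1}/(1-\varepsilon)\bigr\} \, d\alpha,
\end{equation*}
and the inner counting function is bounded by $2 + \log_2^+\bigl((1-\varepsilon)g(\alpha)/\tau\bigr)$. A routine split of the integration region according to whether $g(\alpha) \lessgtr \tau$, combined with the elementary inequality $\log^+(g/\tau) \leq \log^+ g + \log^+(1/\tau)$ and absorption of the $\log^+(1/\tau) \cdot \|g\|_{L_1}$ contribution into $\|g\|_{L\log L}$ (via Stein's identification of $\|Mg\|_{L_1}$ with the $L\log L$ norm on bounded domains), collapses the right-hand side to $C\|g\|_{L\log L}/\tau$. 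This bounds both the number of internal nodes and hence the number of terminal nodes by a universal constant times $\|\mv^{*\prime}\|_{L\log L}/\tau$.

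The main technical delicacy lies in the final $L\log L$ bookkeeping of the previous paragraph: one must verify that the extra $\log^+(1/\tau)$ factor that would naively appear when turning the maximal-function estimate into a bound controlled by $\|g\|_{L\log L}/\tau$ does not degrade the rate. This is precisely the classical observation that pins down $L\log L$ as the correct Orlicz class for such bounds, which is also the reason why assumption \ref{as:llogl} imposes $L\log L$ integrability on $\mv^{*\prime}$ rather than mere $L_1$ integrability; the argument is standard in adaptive approximation theory and is carried out in Section~3.3 of \citep{DeVore98} (see also \citep{DeVore87}).
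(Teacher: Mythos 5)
Your overall strategy---dyadic maximal function, refined weak-$(1,1)$ estimate, $L\log L$ bookkeeping---is exactly the DeVore argument that the paper only cites (\citep{DeVore98}) without reproving, so the approach is right; but the final step as executed loses a factor of $\log(1/\tau)$ that the proposed ``absorption'' cannot repair. The culprit is the inequality $n_j \leq (C/\tau)\int_{\{g > \tau 2^{j-1}/(1-\varepsilon)\}} g\,d\alpha$: applying the weak-$(1,1)$ estimate at each fixed depth $j$ manufactures a factor $1/\tau$ \emph{for every} $j$, and since the truncation set stays nonempty for roughly $\log_2(1/\tau)$ values of $j$, the subsequent sum over $j$ multiplies this by $\log(1/\tau)$. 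To see that the loss is real, take $g\equiv 1$: the true leaf count is $\asymp 1/\tau$, whereas your chain of inequalities only yields $\asymp (\log(1/\tau))/\tau$. Your closing paragraph then tries to discard the leftover $\log^+(1/\tau)\cdot\|g\|_{L_1}$; that is impossible, because this quantity diverges as $\tau\to 0$ whereas $\|g\|_{L\log L}$ is a fixed $\tau$-independent constant, and Stein's identification $\|Mg\|_{L_1}\asymp\|g\|_{L\log L}$ says nothing about a $\log(1/\tau)$ factor sitting \emph{outside} the Orlicz norm.

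The fix is to apply the weak-type estimate once rather than once per scale. Keep the packing bound in distribution-function form, $n_j(1-\varepsilon)2^{-j}\leq \lambda(\{Mg>\beta_j\})$ with $\beta_j=\tau 2^j/(1-\varepsilon)$, note $2^j/(1-\varepsilon)=\beta_j/\tau$, and write
\begin{equation*}
\sum_{j\geq 0} n_j \;\leq\; \frac{1}{\tau}\sum_{j\geq 0}\beta_j\,\lambda\bigl(\{Mg>\beta_j\}\bigr).
\end{equation*}
Because $\beta_j-\beta_{j-1}=\beta_j/2$ and $\beta\mapsto\lambda(\{Mg>\beta\})$ is nonincreasing, the sum over $j\geq 1$ is dominated by $2\int_0^\infty\lambda(\{Mg>\beta\})\,d\beta=2\|Mg\|_{L_1}$, while the $j=0$ term contributes at most $\tau$; hence $\sum_j n_j\leq 1+2\|Mg\|_{L_1}/\tau$. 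Only now should the refined weak-$(1,1)$ estimate be invoked---namely inside the classical proof that $\|Mg\|_{L_1}\leq C\bigl((1-\varepsilon)+\|g\|_{L\log L}\bigr)$ on the bounded interval $[0,1-\varepsilon]$---and, since $g=\mv^{*\prime}=1/Q^*\geq 1/\Vert f\Vert_\infty>0$ on $[0,1-\varepsilon]$ by Proposition~\ref{prop:convex_derivative}, one has $(1-\varepsilon)\leq\Vert f\Vert_\infty\|g\|_{L_1}\leq\Vert f\Vert_\infty\|g\|_{L\log L}$, which also absorbs the additive constant. This gives $\text{card}(\sigma_\tau)\leq C\|\mv^{*\prime}\|_{L\log L}/\tau$ with no spurious logarithm.
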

Applying this lemma to $\widetilde \tau_0 = \phi_n(\delta)/Q^*(1-\varepsilon)$ we obtain that the number of terminal nodes $\text{card}(\sigma_{\widetilde \tau_0})$ of $\sigma_{\widetilde \tau_0}$ is such that
\begin{equation*}
\text{card}(\sigma_{\widetilde \tau_0}) \leq C Q^*(1-\varepsilon) \frac{\Vert \mv^{*^\prime} \Vert_{L\log L}}{\phi_n(\delta)} \, .
\end{equation*}
As $\widehat \sigma_{\tau}$ is coarser than $\sigma_{\widetilde \tau_0}$ with probability at least $1-\delta$, the number of terminal nodes of the former is bounded by the number of terminal nodes of the latter with probability at least $1-\delta$.

\subsubsection{Proof of Theorem \ref{thm:rate_bound_score}}

We use the following notation: $\widetilde \alpha_k = F(\widetilde \Omega_{\alpha_k})$. We first prove a lemma quantifying the loss coming from the monotonicity step of Algorithm \ref{algo:arank}.

\begin{lemma}
\label{lemma:error_stacking}
Suppose that the assumptions of Theorem \ref{thm:rate_bound_score} are satisfied. Let $\delta \in (0, 1)$. For all $n \geq 1$, with probability at least $1 - \delta$, we have: $\forall k \in \{1, \dots, \widehat K \}$,
\begin{equation}
\vert \widetilde \alpha_k - \alpha_k \vert \leq \max(kC_1 \phi_n(\delta)^{\frac{\gamma}{\gamma +1}}, 2\phi_n(\delta)) 
\end{equation}
and
\begin{equation}
\label{eq:volume_monotonicity}
\vert \lambda(\widetilde \Omega_{\alpha_k}) - \mv^*(\alpha_k) \vert \leq \max(kC_2 \phi_n(\delta)^{\frac{\gamma}{\gamma +1}}, 2\phi_n(\delta)/Q^*(1-\varepsilon)) \, .
\end{equation}
where $C_1 > 0$ and $C_2 > 0$ are constants depending only on $\gamma$, $\Vert f \Vert_{\infty}$ and the constant $C$ of assumption \ref{as:fast_rates}.
\end{lemma}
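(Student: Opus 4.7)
The plan is to prove the lemma by induction on $k$, resting on two ingredients: (i) a tight control of the symmetric difference $\widehat{\Omega}_{\alpha_k} \Delta \Omega^*_{\alpha_k}$ derived from the $\gamma$-exponent assumption \ref{as:fast_rates}, and (ii) a set inclusion showing that the monotonicity step accumulates errors only linearly in $k$.

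First, on the event of probability at least $1-\delta$ on which Proposition \ref{prop:SN06} holds uniformly, I would establish a bound of the form
\begin{equation*}
\lambda\bigl(\widehat{\Omega}_{\alpha_k} \Delta \Omega^*_{\alpha_k}\bigr) \leq C_3\,\phi_n(\delta)^{\gamma/(\gamma+1)}, \qquad F\bigl(\widehat{\Omega}_{\alpha_k} \Delta \Omega^*_{\alpha_k}\bigr) \leq C_4\,\phi_n(\delta)^{\gamma/(\gamma+1)},
\end{equation*}
with $C_3, C_4$ depending only on $\gamma$, $\|f\|_\infty$ and $C$. Introduce the excess risk $R_\alpha(\Omega) = Q^*(\alpha)(\lambda(\Omega) - \mv^*(\alpha)) - (F(\Omega) - \alpha)$, which satisfies the identity $R_\alpha(\Omega) = \int_{\Omega \Delta \Omega^*_\alpha} |Q^*(\alpha) - f(x)|\,dx$ (obtained by writing $\mathbb{I}_\Omega - \mathbb{I}_{\Omega^*_\alpha}$ and using the fact that $\{f \geq Q^*(\alpha)\}$ is, up to $\lambda$-null sets, $\Omega^*_\alpha$). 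Proposition \ref{prop:SN06} yields $R_{\alpha_k}(\widehat{\Omega}_{\alpha_k}) \leq 2\phi_n(\delta)$ uniformly in $k$. Then a standard peeling decomposition with $A_t = \{|f - Q^*(\alpha_k)| \leq t\}$ gives
\begin{equation*}
\lambda\bigl(\widehat{\Omega}_{\alpha_k} \Delta \Omega^*_{\alpha_k}\bigr) \leq \lambda(A_t) + \frac{R_{\alpha_k}(\widehat{\Omega}_{\alpha_k})}{t} \leq C t^{\gamma} + \frac{2\phi_n(\delta)}{t},
\end{equation*}
and the announced rate follows by choosing $t \sim \phi_n(\delta)^{1/(\gamma+1)}$. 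The bound on $F$ follows from $f \leq \|f\|_\infty$.

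The second ingredient is the geometric identity
\begin{equation*}
\widetilde{\Omega}_{\alpha_k} \Delta \Omega^*_{\alpha_k} \subset \bigl(\widetilde{\Omega}_{\alpha_{k-1}} \Delta \Omega^*_{\alpha_{k-1}}\bigr) \cup \bigl(\widehat{\Omega}_{\alpha_k} \Delta \Omega^*_{\alpha_k}\bigr),
\end{equation*}
which combines two elementary facts: since $\widetilde{\Omega}_{\alpha_k} \supset \widehat{\Omega}_{\alpha_k}$, one has $\Omega^*_{\alpha_k} \setminus \widetilde{\Omega}_{\alpha_k} \subset \Omega^*_{\alpha_k} \setminus \widehat{\Omega}_{\alpha_k}$; and since $\Omega^*_{\alpha_{k-1}} \subset \Omega^*_{\alpha_k}$, one has $\widetilde{\Omega}_{\alpha_{k-1}} \setminus \Omega^*_{\alpha_k} \subset \widetilde{\Omega}_{\alpha_{k-1}} \setminus \Omega^*_{\alpha_{k-1}}$, so that the expansion of $\widetilde{\Omega}_{\alpha_k} = \widehat{\Omega}_{\alpha_k} \cup \widetilde{\Omega}_{\alpha_{k-1}}$ only introduces discrepancies from either of the two symmetric differences on the right-hand side.

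Combining the two ingredients, setting $V_k = \lambda(\widetilde{\Omega}_{\alpha_k} \Delta \Omega^*_{\alpha_k})$ and $M_k = F(\widetilde{\Omega}_{\alpha_k} \Delta \Omega^*_{\alpha_k})$, we obtain the recursions $V_k \leq V_{k-1} + C_3 \phi_n(\delta)^{\gamma/(\gamma+1)}$ and $M_k \leq M_{k-1} + C_4 \phi_n(\delta)^{\gamma/(\gamma+1)}$. Iteration from the base value ($V_0 = M_0 = 0$, taking $\widehat{\Omega}_{\alpha_0} = \emptyset$) yields $V_k \leq k C_3 \phi_n(\delta)^{\gamma/(\gamma+1)}$ and $M_k \leq k C_4 \phi_n(\delta)^{\gamma/(\gamma+1)}$. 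Finally, $|\widetilde{\alpha}_k - \alpha_k| \leq M_k$ and $|\lambda(\widetilde{\Omega}_{\alpha_k}) - \mv^*(\alpha_k)| \leq V_k$, while for the alternate terms in the $\max$ we invoke Proposition \ref{prop:SN06} directly whenever it is tighter (yielding $|F(\widehat{\Omega}_{\alpha_k}) - \alpha_k| \leq 2\phi_n(\delta)$ and $|\lambda(\widehat{\Omega}_{\alpha_k}) - \mv^*(\alpha_k)| \leq 2\phi_n(\delta)/Q^*(1-\varepsilon)$, the second using monotonicity of $Q^*$). The main obstacle is Step 1, the peeling argument: it requires carefully rewriting the excess risk as an integral over the symmetric difference and checking that the constants in assumption \ref{as:fast_rates} transfer cleanly into the final rate.
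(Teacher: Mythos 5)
Your proof is correct and achieves the same conclusion, but via a genuinely different route at two key junctures. For the $\gamma$-exponent rate $F(\widehat{\Omega}_{\alpha} \Delta \Omega^*_{\alpha}) = O(\phi_n^{\gamma/(\gamma+1)})$, the paper simply cites the estimate from the proof of Lemma~3.2 in \citet{Polonik97}, whereas you re-derive it from scratch by introducing the excess risk $R_\alpha(\Omega)$, writing it as an integral of $|Q^*(\alpha) - f|$ over the symmetric difference, bounding $R_{\alpha_k}(\widehat{\Omega}_{\alpha_k}) \leq 2\phi_n(\delta)$ from Proposition~\ref{prop:SN06}, and peeling with the threshold set $A_t$. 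Both give the same rate, but yours is self-contained. For the accumulation of error through the monotonicity step, the paper directly decomposes $\widetilde{\Omega}_{\alpha_k} \setminus \Omega^*_{\alpha_k} \subset \bigcup_{j\leq k} (\widehat{\Omega}_{\alpha_j} \setminus \Omega^*_{\alpha_j})$ to get the factor $k$ in the upper bound, and obtains the $2\phi_n(\delta)$ lower bound separately from the inclusion $\widetilde{\Omega}_{\alpha_k} \supset \widehat{\Omega}_{\alpha_k}$ — hence the $\max$ in the statement. Your one-step recursion $\widetilde{\Omega}_{\alpha_k} \Delta \Omega^*_{\alpha_k} \subset (\widetilde{\Omega}_{\alpha_{k-1}} \Delta \Omega^*_{\alpha_{k-1}}) \cup (\widehat{\Omega}_{\alpha_k} \Delta \Omega^*_{\alpha_k})$ is cleaner and bounds $|\widetilde\alpha_k - \alpha_k|$ and $|\lambda(\widetilde{\Omega}_{\alpha_k}) - \mv^*(\alpha_k)|$ \emph{simultaneously} and two-sidedly by $k\cdot O(\phi_n^{\gamma/(\gamma+1)})$, so the $\max$ becomes a trivially weaker restatement rather than the sharpest bound.

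One small confusion in the last step: you say that ``for the alternate terms in the $\max$'' you would invoke Proposition~\ref{prop:SN06} directly, citing bounds on $F(\widehat{\Omega}_{\alpha_k})$ and $\lambda(\widehat{\Omega}_{\alpha_k})$. But the lemma concerns $\widetilde{\Omega}_{\alpha_k}$, which strictly contains $\widehat{\Omega}_{\alpha_k}$ for $k\geq 2$, so the SN06 bounds on $\widehat{\Omega}_{\alpha_k}$ do not transfer directly to $\widetilde{\Omega}_{\alpha_k}$ (they only give one-sided control, as in the paper's lower-bound argument). This step is however unnecessary in your proof, since the recursion already yields $|\widetilde\alpha_k - \alpha_k| \leq kC_1\phi_n^{\gamma/(\gamma+1)} \leq \max(kC_1\phi_n^{\gamma/(\gamma+1)}, 2\phi_n(\delta))$ automatically; you should just drop the remark rather than justify the second argument of the $\max$ separately.
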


\begin{proof}
First, from proposition \ref{prop:SN06}, we have with probability at least $1-\delta$, $\forall k \in \{0, \dots, \widehat K \}$,
\begin{equation}
\label{eq:SN_with_abs}
\vert \alpha_k - F(\widehat \Omega_{\alpha_k}) \vert \leq 2\phi_n(\delta) \text{ and } \vert \lambda(\widehat \Omega_{\alpha_k}) - \lambda(\Omega^*_{\alpha_k}) \vert \leq \frac{2\phi_n(\delta)}{Q^*(1-\varepsilon)} \, .
\end{equation}
For $k=0$, we have $\alpha_0 = 0$ and thus $\widetilde \Omega_{\alpha_0} = \widehat \Omega_{\alpha_0} = \emptyset$ and the inequalities of the Lemma are trivially satisfied.
For $k=1$, we have $\widetilde \Omega_{\alpha_1} = \widehat \Omega_{\alpha_1}$ and with \eqref{eq:SN_with_abs} the inequalities of the Lemma are satisfied.
For all $k \in \{0, \dots, \widehat K \}$,
\begin{equation*}
F(\widetilde \Omega_{\alpha_k}) = F\Bigl(\bigcup_{j=0}^k \widehat \Omega_{\alpha_j}\Bigr) \geq F(\widehat \Omega_{\alpha_k}) \geq \alpha_k - 2\phi_n(\delta)
\end{equation*}
and $\widetilde \alpha_k - \alpha_k \geq -2\phi_n(\delta)$.
The same argument gives
\begin{equation*}
\lambda(\widetilde \Omega_{\alpha_k}) - \lambda(\Omega^*_{\alpha_k}) \geq -\frac{2\phi_n(\delta)}{Q^*(1-\varepsilon)} \, .
\end{equation*}

We show how to obtain the upper bound for the case $k=2$. We have $\widetilde \Omega_{\alpha_2} = \widehat \Omega_{\alpha_1} \cup \widehat \Omega_{\alpha_2}$ and
\begin{equation*}
F(\widetilde \Omega_{\alpha_2}) - \alpha_2 = F(\widetilde \Omega_{\alpha_2}) - F(\Omega^*_{\alpha_2}) \leq F(\widetilde \Omega_{\alpha_2}) - F(\Omega^*_{\alpha_2} \cap \widetilde \Omega_{\alpha_2}) = F(\widetilde \Omega_{\alpha_2} \setminus \Omega^*_{\alpha_2}) \, .
\end{equation*}
Now,
\begin{equation*}
\widetilde \Omega_{\alpha_2} \setminus \Omega^*_{\alpha_2} = \{ \widehat \Omega_{\alpha_1} \setminus \Omega^*_{\alpha_2} \} \cup \{\widehat \Omega_{\alpha_2} \setminus \Omega^*_{\alpha_2} \} \subset \{ \widehat \Omega_{\alpha_1} \setminus \Omega^*_{\alpha_1} \} \cup \{\widehat \Omega_{\alpha_2} \setminus \Omega^*_{\alpha_2} \}
\end{equation*}
where the last inclusion comes from the fact that $\Omega^*_{\alpha_1} \subset \Omega^*_{\alpha_2}$. Hence,
\begin{equation*}
F(\widetilde \Omega_{\alpha_2}) - \alpha_2 \leq F(\widehat \Omega_{\alpha_1} \setminus \Omega^*_{\alpha_1}) + F(\widehat \Omega_{\alpha_2} \setminus \Omega^*_{\alpha_2})
\end{equation*}
and thus
\begin{equation*}
	F(\widetilde \Omega_{\alpha_2}) - \alpha_2 \leq F(\widehat \Omega_{\alpha_1} \Delta \Omega^*_{\alpha_1}) + F(\widehat \Omega_{\alpha_2} \Delta \Omega^*_{\alpha_2}) \, .
\end{equation*}

Using assumption \ref{as:fast_rates}, for any $\alpha \in (0,1)$, if $\widehat \Omega_{\alpha}$ denotes the solution of the optimization problem \eqref{eq:empminvolset}, we can as in \cite{Polonik97} (see proof of Lemma 3.2) bound $F(\widehat \Omega_{\alpha} \Delta \Omega^*_{\alpha})$: with probability at least $1-\delta$,
\begin{equation*}
F(\Omega^*_{\alpha} \Delta \widehat \Omega_{\alpha}) \leq \Vert f \Vert_{\infty}Ct^{\gamma} + 2\frac{\Vert f \Vert_{\infty}}{t}\phi_n(\delta)
\end{equation*}
Minimizing the right-hand side with respect to $t$ gives
\begin{equation*}
F(\widehat \Omega \Delta \Omega^*_{\alpha}) \leq C_1\phi_n(\delta)^{\frac{\gamma}{\gamma +1}} \, ,
\end{equation*}
where $C_1 > 0$ is a constant depending only on $\gamma$, $C$ and $\Vert f \Vert_{\infty}$.
We thus finally have, with probability at least $1-\delta$, 
\begin{equation*}
\widetilde \alpha_2 - \alpha_2 \leq 2C_1 \phi_n(\delta)^{\frac{\gamma}{\gamma +1}} 
\end{equation*}
and therefore,
\begin{equation*}
\vert \widetilde \alpha_2 - \alpha_2 \vert \leq \max(2C_1 \phi_n(\delta)^{\frac{\gamma}{\gamma +1}}, 2\phi_n(\delta)) \, .
\end{equation*}
For $k \geq 2$, we can obtain in a similar fashion,
\begin{equation*}
F(\widetilde \Omega_{\alpha_k}) - \alpha_k \leq \sum_{j=1}^k F(\widehat \Omega_{\alpha_j} \setminus \Omega^*_{\alpha_j}) \leq \sum_{j=1}^k F(\widehat \Omega_{\alpha_j} \Delta \Omega^*_{\alpha_j})
\end{equation*}
and derive the following result: with probability at least $1-\delta$, for all $k \geq 2$, 
\begin{equation*}
\vert \widetilde \alpha_k - \alpha_k \vert \leq \max(kC_1 \phi_n(\delta)^{\frac{\gamma}{\gamma +1}}, 2\phi_n(\delta)) \, .
\end{equation*}

For \eqref{eq:volume_monotonicity}, first note that for all $\alpha \in [0, 1-\varepsilon]$, $\lambda(\Omega^*_{\alpha} \Delta \widehat \Omega_{\alpha}) \leq C_2\phi_n(\delta)^{\frac{\gamma}{\gamma + 1}}$ where $C_2 > 0$ depends only on $\gamma$ and $C$. Therefore the same procedure leads to: with probability $1-\delta$, for all $k \geq 0$,
\begin{equation*}
\vert \lambda(\widetilde \Omega_{\alpha_k}) - \mv^*(\alpha_k) \vert \leq \max(kC_2 \phi_n(\delta)^{\frac{\gamma}{\gamma +1}}, 2\phi_n(\delta)/Q^*(1-\varepsilon)) \, .
\end{equation*}
\end{proof}
We can now prove Theorem \ref{thm:rate_bound_score}. We first write
\begin{equation*}
\begin{aligned}
\sup_{\alpha \in [0, 1- \varepsilon]} \Bigl\{ \mv_{\hat s}(\alpha) - \mv^*(\alpha) \Bigr\} &\leq \sup_{\alpha \in [0, 1- \varepsilon]} \Bigl\{ \mv_{\hat s}(\alpha) - E_{\widehat \sigma_{\tau}}(\mv^*)(\alpha) \Bigr\}
\\&+  \sup_{\alpha \in [0, 1- \varepsilon]} \Bigl\{ E_{\widehat \sigma_{\tau}}(\mv^*)(\alpha) - \mv^*(\alpha) \Bigr\}\, .
\end{aligned}
\end{equation*}
Taking $\tau = 4\phi_n(\delta)/Q^*(1-\varepsilon) + c_n$, $c_n > 0$ and using the same argument as in the proof of Theorem \ref{thm:rate_bound}, the second term of the right-hand side is bounded by $\widetilde \tau_1 = \tau + (4\phi_n(\delta) + 1/n)/Q^*(1-\varepsilon)$. We now deal with the first term. Let $\alpha \in [0, 1 - \varepsilon]$,
\begin{equation*}
\mv_{\hat s}(\alpha) - E_{\widehat \sigma_{\tau}}(\mv^*)(\alpha) = \sum_{k=0}^{\widehat K-1} \lambda(\widetilde \Omega_{\alpha_{k+1}}) \cdot \mathbb{I}\{ \alpha \in [\widetilde \alpha_k, \widetilde \alpha_{k+1})\} -  E_{\widehat \sigma_{\tau}}(\mv^*)(\alpha) \, .
\end{equation*}
But one can observe that
\begin{equation*}
E_{\widehat \sigma_{\tau}}(\mv^*)(\alpha) = \sum_{k=0}^{\widehat K-1} E_{\widehat \sigma_{\tau}}(\mv^*)(\alpha) \cdot \mathbb{I}\{ \alpha \in [\widetilde \alpha_k, \widetilde \alpha_{k+1})\}
\end{equation*}
to obtain
\begin{equation*}
\begin{aligned}
\mv_{\hat s}(\alpha) - E_{\widehat \sigma_{\tau}}(\mv^*)(\alpha) &= \sum_{k=0}^{\widehat K-1} \lambda(\widetilde \Omega_{\alpha_{k+1}}) \cdot \mathbb{I}\{ \alpha \in [ \widetilde \alpha_k, \widetilde \alpha_{k+1})\} -  E_{\widehat \sigma_{\tau}}(\mv^*)(\alpha) \\
&= \sum_{k=0}^{\widehat K-1} (\lambda(\widetilde \Omega_{\alpha_{k+1}}) - E_{\widehat \sigma_{\tau}}(\mv^*)(\alpha) ) \cdot \mathbb{I}\{ \alpha \in [ \widetilde \alpha_k, \widetilde \alpha_{k+1})\} \, .
\end{aligned}
\end{equation*}
Now $\alpha \mapsto E_{\widehat \sigma_{\tau}}(\mv^*)(\alpha)$ is increasing, therefore on $[ \widetilde \alpha_k, \widetilde \alpha_{k+1} )$, $E_{\widehat \sigma_{\tau}}(\mv^*)(\alpha) \geq E_{\widehat \sigma_{\tau}}(\mv^*)(\widetilde \alpha_k)$ and
\begin{equation*}
\begin{aligned}
\mv_{\hat s}(\alpha) -& E_{\widehat \sigma_{\tau}}(\mv^*)(\alpha) \leq 
\sum_{k=0}^{\widehat K-1} (\lambda(\widetilde \Omega_{\alpha_{k+1}}) - E_{\widehat \sigma_{\tau}}(\mv^*)(\widetilde \alpha_k) ) \cdot \mathbb{I}\{ \alpha \in [ \widetilde \alpha_k, \widetilde \alpha_{k+1})\} \\
&\leq \max_{0 \leq k \leq \widehat K -1} \vert \lambda(\widetilde \Omega_{\alpha_{k+1}}) - E_{\widehat \sigma_{\tau}}(\mv^*)(\widetilde \alpha_k) \vert \sum_{k=0}^{\widehat K-1} \mathbb{I}\{ \alpha \in [ \widetilde \alpha_k, \widetilde \alpha_{k+1})\} \\
&= \max_{0 \leq k \leq \widehat K -1} \vert \lambda(\widetilde \Omega_{\alpha_{k+1}}) - E_{\widehat \sigma_{\tau}}(\mv^*)(\widetilde \alpha_k) \vert \, .
\end{aligned}
\end{equation*}
We thus have
\begin{equation*}
\begin{aligned}
\sup_{[0, 1- \varepsilon]} \Bigl\{ \mv_{\hat s} - E_{\widehat \sigma_{\tau}}(\mv^*) \Bigr\} &\leq \max_{0 \leq k \leq \widehat K -1} \vert \lambda(\widetilde \Omega_{\alpha_{k+1}}) - E_{\widehat \sigma_{\tau}}(\mv^*)(\widetilde \alpha_k) \vert \\
&\leq \max_{0 \leq k \leq \widehat K -1} \vert \lambda(\widetilde \Omega_{\alpha_{k+1}}) - E_{\widehat \sigma_{\tau}}(\mv^*)(\alpha_k) \vert \\
&+ \max_{0 \leq k \leq \widehat K -1} \vert E_{\widehat \sigma_{\tau}}(\mv^*)(\alpha_k) - E_{\widehat \sigma_{\tau}}(\mv^*)(\widetilde \alpha_k) \vert \, .
\end{aligned}
\end{equation*}
The first term is equal to $\max_{0 \leq k \leq \widehat K -1} \vert \lambda(\widetilde \Omega_{\alpha_{k+1}}) - \mv^*(\alpha_{k+1}) \vert$ and thanks to Lemma \ref{lemma:error_stacking} we have with probability at least $1-\delta$
\begin{equation}
\label{eq:max_volume_stacking}
\max_{0 \leq k \leq \widehat K -1} \vert \lambda(\widetilde \Omega_{\alpha_{k+1}}) - \mv^*(\alpha_{k+1}) \vert \leq \max(\widehat K C_2 \phi_n(\delta)^{\frac{\gamma}{\gamma +1}}, 2\phi_n(\delta)/Q^*(1-\varepsilon)) \, .
\end{equation}
For the second term we write
\begin{equation*}
\begin{aligned}
\Bigl\vert E_{\widehat \sigma_{\tau}}(\mv^*)&(\alpha_k) - E_{\widehat \sigma_{\tau}}(\mv^*)(\widetilde \alpha_k) \Bigr\vert \\
&= \Bigl\vert \mv^*(\alpha_{k+1}) - \sum_{j=0}^{\widehat K-1} \mv^*(\alpha_{j+1})\cdot \mathbb{I}\{\widetilde \alpha_k \in [\alpha_j, \alpha_{j+1}) \} \Bigr\vert \\
&= \Bigl\vert \mv^*(\alpha_{k+1}) - \mv^*(\alpha_{j_0+1})\Bigr\vert \, ,
\end{aligned}
\end{equation*}
where $j_0 \in \{0, \dots, \widehat K -1 \}$ is such that $\widetilde \alpha_k \in [\alpha_{j_0}, \alpha_{j_0+1})$. We bound $\vert \mv^*(\alpha_{k+1}) - \mv^*(\alpha_{j_0+1})\vert$ by
\begin{equation*}
\Bigl\vert \mv^*(\alpha_{k+1}) - \mv^*(\alpha_{k})\Bigr\vert + \Bigl\vert \mv^*(\alpha_k) - \mv^*(\widetilde \alpha_k)\Bigr\vert + \Bigl\vert \mv^*(\widetilde \alpha_k) - \mv^*(\alpha_{j_0+1})\Bigr\vert \, .
\end{equation*}
We bound the first term as follows. Let $I=[\alpha_k, \alpha_{k+1}]$. If the depth $j_{\text{max}}$ has been reached, $\alpha_{k+1} - \alpha_{k} < 1/n$. Therefore the mean value theorem gives:
\begin{equation*}
\vert \mv^*(\alpha_{k+1}) - \mv^*(\alpha_{k}) \vert \leq \frac{1}{nQ^*(1-\varepsilon)} \, .
\end{equation*}
Otherwise, this means that $\widehat{\mathcal{E}}(I) \leq \tau$ and we decompose $\vert \mv^*(\alpha_{k+1}) - \mv^*(\alpha_{k})\vert = \mathcal{E}(I)$ into
\begin{equation*}
\mathcal{E}(I) = \bigl(\mathcal{E}(I) - \widehat{\mathcal{E}}(I)\bigr) + \widehat{\mathcal{E}}(I) .
\end{equation*}
Using Lemma \ref{lemma:error_stacking}, the right-hand side is bounded by $4\phi_n(\delta)/Q^*(1-\varepsilon) +  \tau$. Eventually,
\begin{equation*}
\vert \mv^*(\alpha_{k+1}) - \mv^*(\alpha_{k})\vert \leq \max\left(\frac{4\phi_n(\delta)}{Q^*(1-\varepsilon)} +  \tau, \frac{1}{nQ^*(1-\varepsilon)}\right)
\end{equation*}

Using the mean value theorem and Lemma \ref{lemma:error_stacking}, the second term can be bounded by
\begin{equation*}
\frac{1}{Q^*(1-\varepsilon)}\vert \alpha_k - \widetilde \alpha_k \vert \leq \frac{1}{Q^*(1-\varepsilon)} \max(\widehat K C_1 \phi_n(\delta)^{\frac{\gamma}{\gamma +1}}, 2\phi_n(\delta)) \, .
\end{equation*}
As $\widetilde \alpha_k \in [\alpha_{j_0}, \alpha_{j_0 +1})$ and as $\mv^*$ is increasing, the third term is bounded by $\vert \lambda(\Omega^*_{\alpha_{j_0+1}}) - \lambda(\Omega^*_{\alpha_{j_0}})\vert$ which is also bounded as the first term by $\max(4\phi_n(\delta)/Q^*(1-\varepsilon) +  \tau, 1/(nQ^*(1-\varepsilon)))$. We thus obtain
\begin{equation*}
\begin{aligned}
\Bigl\vert E_{\widehat \sigma_{\tau}}(\mv^*)(\alpha_k) - E_{\widehat \sigma_{\tau}}(\mv^*)(\widetilde \alpha_k) \Bigr\vert &\leq 2\max\left(\frac{4\phi_n(\delta)}{Q^*(1-\varepsilon)} +  \tau, \frac{1}{nQ^*(1-\varepsilon)}\right) \\
&\phantom{\leq} + \frac{\max(\widehat K C_1 \phi_n(\delta)^{\frac{\gamma}{\gamma +1}}, 2\phi_n(\delta))}{Q^*(1-\varepsilon)} \, .
\end{aligned}
\end{equation*}
and combining this inequality with \eqref{eq:max_volume_stacking}, we have
\begin{equation*}
\begin{aligned}
\sup_{\alpha \in [0, 1- \varepsilon]} \Bigl\{ \mv_{\hat s}(\alpha) - E_{\widehat \sigma_{\tau}}(\mv^*)(\alpha) \Bigr\} &\leq \max(\widehat K C_2 \phi_n(\delta)^{\frac{\gamma}{\gamma +1}}, 2\phi_n(\delta)/Q^*(1-\varepsilon)) \\
&\phantom{\leq}+ 2\max\left(\frac{4\phi_n(\delta)}{Q^*(1-\varepsilon)} +  \tau, \frac{1}{nQ^*(1-\varepsilon)}\right) \\
&\phantom{\leq} + \frac{\max(\widehat K C_1 \phi_n(\delta)^{\frac{\gamma}{\gamma +1}}, 2\phi_n(\delta))}{Q^*(1-\varepsilon)} \, .
\end{aligned}
\end{equation*}
Eventually, with probability at least $1 - \delta$,
\begin{equation}
\label{eq:bound_score}
\begin{aligned}
\sup_{\alpha \in [0, 1- \varepsilon]} \Bigl\{ \mv_{\hat s}(\alpha) - \mv^*(\alpha) \Bigr\} &\leq \max(\widehat K C_2 \phi_n(\delta)^{\frac{\gamma}{\gamma +1}}, 2\phi_n(\delta)/Q^*(1-\varepsilon)) \\
&+ 2\max\left(\frac{4\phi_n(\delta)}{Q^*(1-\varepsilon)} +  \tau, \frac{1}{nQ^*(1-\varepsilon)}\right) \\
&+ \tau + \frac{4\phi_n(\delta)}{Q^*(1-\varepsilon)} + \frac{1}{nQ^*(1-\varepsilon)}\\
&+ \frac{\max(\widehat K C_1 \phi_n(\delta)^{\frac{\gamma}{\gamma +1}}, 2\phi_n(\delta))}{Q^*(1-\varepsilon)}\, .
\end{aligned}
\end{equation}

Using the same argument as in the proof of Theorem \ref{thm:rate_bound}, on the same event, which holds with probability at least $1-\delta$, $\widehat \sigma_{\tau}$ is coarser than $\sigma_{\widetilde{\tau_0}}$ where $\widetilde{\tau_0} = \tau - 4\phi_n(\delta)/Q^*(1-\varepsilon) = c_n$ and therefore $\widehat K$ is of the order of $c_n$. Using assumption \ref{as:rademacher_order}, for $n$ large enough, $\phi_n(\delta) \leq \phi_n(\delta)^{\gamma/(\gamma + 1)}$. Therefore the right-hand side of \eqref{eq:bound_score} is of the order of:
\begin{equation*}
\frac{1}{c_n}\left(\frac{1}{n}\right)^{\frac{\gamma}{2(\gamma +1)}} + c_n \, .
\end{equation*}

Choosing $c_n=(1/n)^{\gamma/(4(1+\gamma))}$ to balance the two terms of this expression, we finally obtain that with probability at least $1-\delta$,
\begin{equation*}
\sup_{\alpha \in [0, 1- \varepsilon]} \Bigl\{ \mv_{\hat s}(\alpha) - \mv^*(\alpha) \Bigr\} = O\left(n^{-\frac{\gamma}{4(1+\gamma)}}\right) \, .
\end{equation*}

\section{Technical results}
\label{sec:appendix_technical_results}

This section contains the proofs of technical results that have been postponed for the sake of clarity.

\subsection{Proof of Lemma \ref{lem:support_z_f}}

\begin{proof} We recall here that $\mathcal{Z}_f = \{x, F_f(f(x))>0\}$ and $\supp(f) = \{x, f(x) > 0 \}$. If $f(x)=0$ then $F_f(f(x)) = 0$ thus $\overline{\supp(f)} \subset \overline{\mathcal{Z}_f}$ and $\overline{\mathcal{Z}_f} \Delta \overline{\supp(f)} = \overline{\mathcal{Z}_f} \setminus \overline{\supp(f)} = \overline{\mathcal{Z}_f} \cap \supp(f)$. Therefore we have to prove $\lambda(\overline{\mathcal{Z}_f} \cap \supp(f)) = 0$. Let's assume $\lambda(\overline{\mathcal{Z}_f} \cap \supp(f)) > 0$ and let $f^* = \sup_{u \in \overline{\mathcal{Z}_f} \cap \supp(f)} f(u)$. The intersection $\overline{\mathcal{Z}_f} \cap \supp(f)$ is not empty. Otherwise we would have $\lambda(\overline{\mathcal{Z}_f} \cap \supp(f)) = 0$. Therefore there exists $u \in \overline{\mathcal{Z}_f} \cap \supp(f)$ and we have $f^* \geq f(u) > 0$ as $u \in \supp(f)$. We also have $F_f(f^*) = 0$. Indeed, $f^*$ is in the closure of $\{f(u), u \in \overline{\mathcal{Z}_f} \cap \supp(f)\}$ and therefore there exists a sequence $(f_m)_{m \geq 0}$ of elements of $\{f(u), u \in \overline{\mathcal{Z}_f} \cap \supp(f)\}$ which converges towards $f^*$ as $m$ tends to infinity. Thus $F_f(f_m) = 0$ for all $m \geq 0$ and as $F_f$ is continuous, $F_f(f^*) = 0$, i.e.
\begin{equation*}
\int_{\{u, 0 \leq f(u) \leq f^* \}} f(u)\mathrm{d}u = 0 \, .
\end{equation*}
This implies that $f$ is equal to 0 $\lambda$-almost everywhere on $\{u, 0 \leq f(u) \leq f^* \}$. However, if $u \in \overline{\mathcal{Z}_f} \cap \supp(f)$ then, by definition of $f^*$, $f(u) \leq f^*$. Therefore $\overline{\mathcal{Z}_f} \cap \supp(f) \subset \{u, 0 \leq f(u) \leq f^* \}$ and as $\lambda(\overline{\mathcal{Z}_f} \cap \supp(f)) > 0$ and $f > 0$ on $\supp(f)$, $\{u, 0 \leq f(u) \leq f^* \}$ contains a subset of non null $\lambda$-measure on which $f > 0$. This is a contradiction with the fact that $f$ is equal to 0 $\lambda$-almost everywhere on $\{u, 0 \leq f(u) \leq f^* \}$ and thus $\lambda(\overline{\mathcal{Z}_f} \Delta \overline{\supp(f)}) = 0$, which is equivalent to $\lambda(\mathcal{Z}_f \Delta \supp(f)) = 0$.
\end{proof}

\subsection{Proof of Lemma \ref{lem:strong_approx_ineq}}

\begin{proof}[Proof of Lemma \ref{lem:strong_approx_ineq}]
The proof is based on Remark 3.2.4 in \citep{Csorgo1983}. Let $q_n$ denote the quantile process defined for all $\alpha \in [\varepsilon, 1- \varepsilon]$ by
\begin{equation*}
q_n(\alpha) = \sqrt{n}f_s(F_s^{\dag}(\alpha))(\widehat F_s^{\dag}(\alpha) - F_s^{\dag}(\alpha)) \, .
\end{equation*}
Using the mean value theorem we can write for all $\alpha \in [\varepsilon, 1- \varepsilon]$,
\begin{equation*}
\begin{aligned}
q_n(\alpha) &= \sqrt{n}f_s(F_s^{\dag}(\alpha))\bigl(F_s^{\dag}(\widehat U^{\dag}(\alpha)) - F_s^{\dag}(\alpha)\bigr) \\
&= \frac{f_s(F_s^{\dag}(\alpha))}{f_s(F_s^{\dag}(\xi))}\sqrt{n}(\widehat U^{\dag}(\alpha) - \alpha)
\end{aligned}
\end{equation*}
where $\xi$ is between $\widehat U^{\dag}(\alpha)$ and $\alpha$ and where $\widehat U^{\dag}(\alpha)$ is defined as in the proof of Lemma \ref{lem:dkw_taylor_remainder}. Let $u_n$ be the uniform quantile process defined in Lemma \ref{lem:dkw_taylor_remainder}. For all $\alpha \in [\varepsilon, 1-\varepsilon]$, $q_n(\alpha) = u_n(\alpha)f_s(F_s^{\dag}(\alpha))/f_s(F_s^{\dag}(\xi))$.

Now we know from Theorem 1 in \citep{CR78} that for all $z_1$ and for all $n$,
\begin{equation}
\label{eq:strong_approx_ineq_unif}
\mathbb{P}\Bigl(\sup_{\alpha \in [0,1]}\vert u_n(\alpha) - B_n^1(\alpha) \vert > \frac{C_1\log n + z_1}{\sqrt{n}}\Bigr) \leq C_2\exp(-C_3z_1)
\end{equation}
where $C_1, C_2$ and $C_3$ are positive absolute constants and where $\{B_n^1(\alpha), \alpha \in [0,1]\}_{n \geq 1}$ is the same sequence of Brownian Bridges as the one of the proof of assertion $(ii)$ of Theorem \ref{thm:estimation}. For all $\alpha \in [\varepsilon, 1-\varepsilon]$, we can write
\begin{equation}
\label{eq:decom_quantile_brownian}
q_n(\alpha) - B_n^1(\alpha) = \bigl(u_n(\alpha) - B_n^1(\alpha)\bigr) + u_n(\alpha)\left(\frac{f_s(F_s^{\dag}(\alpha))}{f_s(F_s^{\dag}(\xi))} - 1\right) \, .
\end{equation}
Using a Taylor expansion of $\lambda_s$ as in the proof of part $(ii)$ of Theorem \ref{thm:estimation} we have for all $\alpha \in [\varepsilon, 1-\varepsilon]$,
\begin{equation*}
\begin{aligned}
r_n(\alpha) - Z_n(\alpha) &= \frac{\lambda_s'(\alpha_s^{-1}(\alpha))}{f_s(\alpha_s^{-1}(\alpha))}\Bigl(\sqrt{n}f_s(\alpha_s^{-1}(\alpha))(\widehat \alpha_s^{-1}(\alpha) - \alpha_s^{-1}(\alpha)) - B_n^1(1-\alpha) \Bigr) \\
&\quad \quad + \sqrt{n}\frac{\lambda_s''(\xi_1)}{2}(\widehat \alpha_s^{-1}(\alpha) - \alpha_s^{-1}(\alpha))^2
\end{aligned}
\end{equation*}
where $\xi_1$ is between $\widehat \alpha_s^{-1}(\alpha)$ and $\alpha_s^{-1}(\alpha)$. With the change of variable $\alpha' = 1-\alpha$, we obtain for all $\alpha' \in [\varepsilon,1-\varepsilon]$ that $\xi_1$ is between $\widehat F_s^{\dag}(\alpha')$ and $F_s^{\dag}(\alpha')$ and that
\begin{equation*}
r_n(\alpha') - Z_n(\alpha') = \frac{\lambda_s'(F_s^{\dag}(\alpha'))}{f_s(F_s^{\dag}(\alpha'))}(q_n(\alpha') - B_n^1(\alpha')) + \sqrt{n}\frac{\lambda_s''(\xi_1)}{2}(\widehat F_s^{\dag}(\alpha') - F_s^{\dag}(\alpha'))^2 \, .
\end{equation*}
and therefore
\begin{equation}
\label{eq:decom_rn_Zn}
\vert r_n(\alpha') - Z_n(\alpha') \vert \leq \frac{\vert \lambda_s'(F_s^{\dag}(\alpha'))\vert}{f_s(F_s^{\dag}(\alpha'))} \vert q_n(\alpha') - B_n^1(\alpha')\vert + \sqrt{n}\frac{\vert \lambda_s''(\xi_1) \vert }{2}\vert \widehat F_s^{\dag}(\alpha') - F_s^{\dag}(\alpha')\vert^2 \, .
\end{equation}
To bound the right-hand side of the previous inequality we need the following results. From the DKW inequality (see proof of Lemma \ref{lem:dkw_taylor_remainder}) we have for all $z_2 > 0$ and for all $n$,
\begin{equation}
\label{eq:dkw_unif_quantile}
\mathbb{P}(\sup_{\alpha \in [0,1]} \vert u_n(\alpha) \vert \geq z_2) \leq 2\exp(-2z_2^2) \, .
\end{equation}
From Theorem 1.5.1 in \citep{Csorgo1983} we have for all $z_3 > 0$ and for all $n$,
\begin{equation}
\label{eq:ratio_taylor_ineq_1}
\begin{aligned}
\mathbb{P}\left(\sup_{\alpha \in [\varepsilon, 1-\varepsilon]} \left\vert \frac{f_s(F_s^{\dag}(\alpha))}{f_s(F_s^{\dag}(\xi))} - 1 \right\vert > z_3\right) \leq 4 & (\lfloor c \rfloor  + 1)\Bigl(e^{-n\varepsilon h\bigl((1+z_3)^{1/2(\lfloor c \rfloor +1)}\bigr)} \\
&+ e^{-n\varepsilon h\bigl((1+z_3)^{-1/2(\lfloor c \rfloor +1)}\bigr)}\Bigr)
\end{aligned}
\end{equation}
where for all $z > 0$, $h(z) = z + \log(1/z) -1$ and where $x \mapsto \lfloor x \rfloor$ denotes the floor part function. Finally from the proof of Theorem 1.4.3 in \citep{Csorgo1983} we have for all $z_4 \geq 1$ and for all $n$,
\begin{equation}
\label{eq:ratio_taylor_ineq}
\begin{aligned}
\mathbb{P}\left(\sup_{\alpha \in [\varepsilon, 1-\varepsilon]} \left\vert \frac{f_s(F_s^{\dag}(\alpha))}{f_s(F_s^{\dag}(\xi))} \right\vert > z_4\right) \leq 2 & (\lfloor c \rfloor + 1)\Bigl(e^{-n\varepsilon h\bigl(z_4^{1/2(\lfloor c \rfloor+1)}\bigr)} \\
&+ e^{-n\varepsilon h\bigl(z_4^{-1/2(\lfloor c \rfloor+1)}\bigr)}\Bigr) \, .
\end{aligned}
\end{equation}

Let $z_1, z_2$ and $z_3$ be positive and let $z_4 \geq 1$. We consider the four following events:
\begin{equation*}
\begin{aligned}
\mathcal{Z}_1 &= \left\{ \sup_{\alpha \in [0,1]}\vert u_n(\alpha) - B_n^1(\alpha) \vert > (C_1\log n + z_1)/\sqrt{n}\right\}, \\
\mathcal{Z}_2 &= \left\{ \sup_{\alpha \in [0,1]} \vert u_n(\alpha) \vert \geq z_2\right\}, \\
\mathcal{Z}_3 &= \left\{ \sup_{\alpha \in [\varepsilon, 1-\varepsilon]} \left\vert f_s(F_s^{\dag}(\alpha))/f_s(F_s^{\dag}(\xi)) - 1 \right\vert > z_3 \right\} \text{ and } \\
\mathcal{Z}_4 &= \left\{ \sup_{\alpha \in [\varepsilon, 1-\varepsilon]} \left\vert f_s(F_s^{\dag}(\alpha))/f_s(F_s^{\dag}(\xi)) \right\vert > z_4 \right\} \, .
\end{aligned}
\end{equation*}
Let $\omega \in \overline{\mathcal{Z}_1} \cap \overline{\mathcal{Z}_2} \cap \overline{\mathcal{Z}_3} \cap \overline{\mathcal{Z}_4}$ where $\overline{\mathcal{Z}}$ denotes the complement of a set $\mathcal{Z}$.
We first deal with the first term of the right-hand side of \eqref{eq:decom_rn_Zn}. From \eqref{eq:decom_quantile_brownian} we have for all $\alpha' \in [\varepsilon, 1-\varepsilon]$,
\begin{equation*}
\begin{aligned}
\vert q_n(\alpha') - B_n^1(\alpha') \vert &\leq \vert u_n(\alpha') - B_n^1(\alpha') \vert + \vert u_n(\alpha') \vert \left\vert \frac{f_s(F_s^{\dag}(\alpha'))}{f_s(F_s^{\dag}(\xi))} - 1 \right\vert \\
&\leq \frac{C_1\log n + z_1}{\sqrt{n}} + z_2z_3
\end{aligned}
\end{equation*}
where the second inequality holds because $\omega \in \overline{\mathcal{Z}_1} \cap \overline{\mathcal{Z}_2} \cap \overline{\mathcal{Z}_3}$. Now for all $\alpha' \in [\varepsilon, 1-\varepsilon]$, $F_s^{\dag}(\alpha') \in [0, \Vert s \Vert_{\infty}]$. As $\lambda_s'$ is continuous $\vert \lambda_s'(F_s^{\dag}(\alpha'))\vert \leq \sup_{[0, \Vert s \Vert_{\infty}]} \vert \lambda_s' \vert < \infty$. Furthermore, for all $\alpha' \in [\varepsilon, 1-\varepsilon]$, $f_s(F_s^{\dag}(\alpha')) \geq \inf_{\alpha \in [\varepsilon, 1-\varepsilon]} f_s(F_s^{\dag}(\alpha'))$ where the infimum is strictly positive. Therefore for all $\alpha' \in [\varepsilon, 1-\varepsilon]$,
\begin{equation*}
\frac{\vert \lambda_s'(F_s^{\dag}(\alpha'))\vert}{f_s(F_s^{\dag}(\alpha'))} \vert q_n(\alpha') - B_n^1(\alpha')\vert \leq \frac{\sup_{[0, \Vert s \Vert_{\infty}]} \vert \lambda_s' \vert}{\inf_{[\varepsilon, 1-\varepsilon]} f_s \circ F_s^{\dag}}\left(\frac{C_1\log n + z_1}{\sqrt{n}} + z_2z_3 \right)
\end{equation*}

We now deal with the second term of the right-hand side of \eqref{eq:decom_rn_Zn}. For all $\alpha \in [\varepsilon, 1-\varepsilon]$, $\alpha_s^{-1}(\alpha)$ and $\widehat{\alpha}_s^{-1}(\alpha)$ are in $[0, \Vert s\Vert_{\infty}]$. Therefore $\xi_1 \in [0, \Vert s\Vert_{\infty}]$ and as $\lambda_s''$ is continuous, $\vert \lambda_s''(\xi_1) \vert \leq \sup_{[0, \Vert s \Vert_{\infty}]} \vert \lambda_s''\vert$. Now, as in the proof of Lemma \ref{lem:dkw_taylor_remainder}, using the mean value theorem we can write for $\alpha' \in [\varepsilon, 1]$,
\begin{equation*}
\begin{aligned}
\vert \widehat F^{\dag}_s(\alpha') - F^{\dag}_s(\alpha') \vert &= \frac{1}{f_s(F_s^{\dag}(\xi))} \cdot \vert \widehat U^{\dag}(\alpha') - \alpha' \vert \\
&= \frac{1}{f_s(F_s^{\dag}(\alpha'))}\left\vert\frac{f_s(F_s^{\dag}(\alpha'))}{f_s(F_s^{\dag}(\xi))}\right\vert \cdot \vert u_n(\alpha') \vert n^{-1/2}
\end{aligned}
\end{equation*}
where $\xi$ is the one of \eqref{eq:decom_quantile_brownian}. As $\omega \in \overline{\mathcal{Z}_2} \cap \overline{\mathcal{Z}_4}$ and bounding $1/f_s(F_s^{\dag}(\alpha'))$ by $1/\inf_{[\varepsilon, 1-\varepsilon]} f_s \circ F_s^{\dag}$ as previously we obtain
\begin{equation*}
\sqrt{n}\frac{\lambda_s''(\xi_1)}{2}\vert \widehat F^{\dag}_s(\alpha') - F^{\dag}_s(\alpha') \vert^2 \leq \sup_{[0, \Vert s \Vert_{\infty}]} \vert \lambda_s''\vert\frac{z_2^2z_4^2}{2\sqrt{n}(\inf_{[\varepsilon, 1-\varepsilon]} f_s \circ F_s^{\dag})^2} \, .
\end{equation*}
We eventually have
\begin{equation*}
\begin{aligned}
\sup_{\alpha \in [\varepsilon, 1-\varepsilon]} \vert r_n(\alpha) - Z_n(\alpha) \vert &\leq \frac{\sup_{[0, \Vert s \Vert_{\infty}]} \vert \lambda_s' \vert}{\inf_{[\varepsilon, 1-\varepsilon]} f_s \circ F_s^{\dag}}\left(\frac{C_1\log n + z_1}{\sqrt{n}} + z_2z_3 \right) \\
& \quad \quad + \sup_{[0, \Vert s \Vert_{\infty}]} \vert \lambda_s''\vert\frac{z_2^2z_4^2}{2\sqrt{n}(\inf_{[\varepsilon, 1-\varepsilon]} f_s \circ F_s^{\dag})^2} \, .
\end{aligned} 
\end{equation*}
Choosing, $z_1 = c_1\log n$ with $c_1 > 0$ arbitrary, $z_2 = \sqrt{\log n}$, $z_3 = c_3\sqrt{\log n /n}$ with $c_3 > 0$ arbitrary, and $z_4 = 2$ we obtain that,
\begin{equation*}
\begin{aligned}
\sup_{\alpha \in [\varepsilon, 1-\varepsilon]} \vert r_n(\alpha) - Z_n(\alpha) \vert
&\leq \frac{\sup_{[0, \Vert s \Vert_{\infty}]} \vert \lambda_s' \vert}{\inf_{[\varepsilon, 1-\varepsilon]} f_s \circ F_s^{\dag}}(C_1+c_1+c_3)\frac{\log n}{\sqrt{n}}\\
& \quad \quad + 2\sup_{[0, \Vert s \Vert_{\infty}]} \vert \lambda_s''\vert\frac{\log n}{\sqrt{n}(\inf_{[\varepsilon, 1-\varepsilon]} f_s \circ F_s^{\dag})^2} \\
&= c_4v_n
\end{aligned} 
\end{equation*}
where the constant $c_4$ is given by
\begin{equation*}
c_4 = \frac{\sup_{[0, \Vert s \Vert_{\infty}]} \vert \lambda_s' \vert}{\inf_{[\varepsilon, 1-\varepsilon]} f_s \circ F_s^{\dag}}(C_1+c_1+c_3) + \frac{2\sup_{[0, \Vert s \Vert_{\infty}]} \vert \lambda_s''\vert}{(\inf_{[\varepsilon, 1-\varepsilon]} f_s \circ F_s^{\dag})^2} \, .
\end{equation*}
Let $\mathcal{Z}_5$ be the event defined by
\begin{equation*}
\mathcal{Z}_5 = \left\{ \sup_{\alpha \in [\varepsilon, 1-\varepsilon]} \vert r_n(\alpha) - Z_n(\alpha) \vert > c_4v_n \right\} \, .
\end{equation*}

We have shown that $\omega \in \overline{\mathcal{Z}_5}$ therefore
\begin{equation*}
\begin{aligned}
\mathbb{P}(\mathcal{Z}_5) &\leq \mathbb{P}(\mathcal{Z}_1) + \mathbb{P}(\mathcal{Z}_2) + \mathbb{P}(\mathcal{Z}_3) + \mathbb{P}(\mathcal{Z}_4) \\
&\leq \frac{C_2}{n^{C_3c_1}} + \frac{2}{n^2} + 4(\lfloor c \rfloor + 1)\Bigl(e^{-n\varepsilon h\bigl((1+c_3\sqrt{\log n/n})^{1/2(\lfloor c \rfloor+1)}\bigr)} \\
&\quad \quad \quad \quad \quad + e^{-n\varepsilon h\bigl((1+c_3\sqrt{\log n/n})^{-1/2(\lfloor c \rfloor+1)}\bigr)}\Bigr) \\
&\quad \quad + 2 (\lfloor c \rfloor + 1)\Bigl(e^{-n\varepsilon h\bigl(2^{1/2(\lfloor c \rfloor+1)}\bigr)}+ e^{-n\varepsilon h\bigl(2^{-1/2(\lfloor c \rfloor+1)}\bigr)}\Bigr) \, .
\end{aligned}
\end{equation*}
We can choose $c_1$ such that $n^{-C_3c_1} = O(v_n)$. Studying the function $h$ we can also show that $h(2^{1/2(\lfloor c \rfloor+1)})$ and $h(2^{-1/2(\lfloor c \rfloor+1)})$ are strictly positive and if $a = 1/2(\lfloor c \rfloor + 1)$, as $n \rightarrow \infty$,
\begin{equation*}
\exp(-n\varepsilon h((1+ c_3\sqrt{\log n/n})^a)) = \exp\left(-\varepsilon c_3^2a^2\log n/2  + O((\log n)^{3/2}/\sqrt{n})\right) \, .
\end{equation*}
Similarly we can show that
\begin{equation*}
\exp(-n\varepsilon h((1+ c_3\sqrt{\log n/n})^{-a})) = \exp\left(-\varepsilon c_3^2a^2\log n/2  + O((\log n)^{3/2}/\sqrt{n})\right)
\end{equation*}
and we can choose $c_3$ such that the right-hand sides of the two previous equations are of order $O(v_n)$. This finally gives the result of the lemma.
\end{proof}

\subsection{Proof of Lemma \ref{lem:strong_approx_bootstrap_ineq}}

\begin{proof}[Proof of Lemma \ref{lem:strong_approx_bootstrap_ineq}]
To prove this lemma we carefully follow the proof of Lemma \ref{lem:strong_approx_ineq}. From Lemma \ref{lemma:rate_kerneldensity}, Lemma \ref{lemma:rate_kerneldensityderivative} and Lemma \ref{lem:unif_strong_rate_quantile} we know that there exist constants $C_1$ and $C_2$ both strictly positive such that for $\mathbb{P}$-almost all $\mathcal{D}_n$, for $n$ large enough, $\sup_{t\in \mathbb{R}} \vert \widetilde f_s(t) - f_s(t) \vert \leq C_1w_n$, $\sup_{\alpha \in [\varepsilon, 1-\varepsilon]} \vert \widetilde{F_s}^{\dag}(\alpha) - F_s^{\dag}(\alpha) \vert \leq C_2w_n$ and $\sup_{t\in \mathbb{R}} \vert \widetilde{f_s}'(t) - f_s'(t) \vert$ converges to $0$ as $n$ tends to $\infty$. Let $\mathcal{D}_n$ be such that these assertions are fulfilled. We denote by $q_n^*$ the quantile process given for all $\alpha' \in (0, 1)$ by
\begin{equation*}
q_n^*(\alpha') = \widetilde f_s(\widetilde F^{\dag}_s(\alpha')) \sqrt{n}\left((F_s^{Boot})^{\dag}(\alpha') - \widetilde F^{\dag}_s(\alpha')\right)
\end{equation*}
where $(F_s^{Boot})^{\dag}$ is defined on $(0,1)$ as the generalized inverse of the empirical cumulative distribution function $F_s^{Boot}$ which is based on a bootstrap sample $S_1^*, \dots, S_n^*$ which are \textit{i.i.d.} random variables with distribution $\widetilde F_s$.

We have for all $\alpha \in [\varepsilon, 1-\varepsilon]$,
\begin{equation*}
q_n^*(\alpha) = \widetilde f_s(\widetilde F^{\dag}_s(\alpha)) \sqrt{n}\left(\widetilde F_s^{\dag}(\widetilde U^{\dag}(\alpha)) - \widetilde F^{\dag}_s(\alpha)\right)
\end{equation*}
where $\widetilde U^{\dag}(\alpha) = \widetilde F_s((F_s^{Boot})^{\dag}(\alpha))$ for all $\alpha \in (0, 1)$. Let $u^*_n$ be the associated uniform quantile process defined for all $\alpha \in (0,1)$ by $u_n^*(\alpha) = \sqrt{n}\left(\widetilde U^{\dag}(\alpha) - \alpha\right)$.

Let $\varepsilon' \in (0,1)$. We first prove the differentiability of $\widetilde F_s^{\dag}$ on $[\varepsilon', 1-\varepsilon']$ for $n$ large enough. Given assumption \ref{as:kernelderivative_gine} the kernel $K$ is differentiable on $\mathbb{R}$. Therefore $\widetilde f_s$ is differentiable on $\mathbb{R}$. For $n$ large enough and for all $\alpha \in [\varepsilon', 1-\varepsilon']$, $\widetilde{f_s}(\widetilde F_s^{\dag}(\alpha)) = \widetilde{f_s}(\widetilde{\alpha}_s^{-1}(1-\alpha)) > 0$. Indeed, we can write
\begin{equation*}
\inf_{\alpha \in [\varepsilon', 1-\varepsilon']} \widetilde{f_s}(\widetilde{\alpha_s}^{-1}(\alpha)) \geq \inf_{t \in [\widetilde{\alpha_s}^{-1}(1-\varepsilon'),\widetilde{\alpha_s}^{-1}(\varepsilon')]} \widetilde{f_s}(t) \, .
\end{equation*}
As $\sup_{\alpha \in [\varepsilon', 1-\varepsilon']} \vert \widetilde{F_s}^{\dag}(\alpha) - F_s^{\dag}(\alpha) \vert$ converges to $0$, $\widetilde \alpha_s^{-1}(\varepsilon')$ and $\widetilde \alpha_s^{-1}(1-\varepsilon')$ converge respectively towards $\alpha_s^{-1}(\varepsilon')$ and $\alpha_s^{-1}(1-\varepsilon')$. Therefore there exists a constant $c_1 > 0$, depending only on $\alpha_s^{-1}(1-\varepsilon')$ and $\alpha_s^{-1}(\varepsilon')$, such that for $n$ large enough, $\widetilde{\alpha_s}^{-1}(1-\varepsilon') \geq \alpha_s^{-1}(1-\varepsilon') - c_1 > a$ and $\widetilde{\alpha_s}^{-1}(\varepsilon') < \alpha_s^{-1}(\varepsilon') + c_1 < b$, \textit{i.e.}, $[\widetilde{\alpha_s}^{-1}(1-\varepsilon'),\widetilde{\alpha_s}^{-1}(\varepsilon')] \subset [\alpha_s^{-1}(1-\varepsilon') - c_1, \alpha_s^{-1}(\varepsilon') + c_1]$. Therefore for $n$ large enough,
\begin{equation*}
\inf_{t \in [\widetilde{\alpha_s}^{-1}(1-\varepsilon'),\widetilde{\alpha_s}^{-1}(\varepsilon')]} \widetilde{f_s}(t) \geq \inf_{t \in [\alpha_s^{-1}(1-\varepsilon')-c_1,\alpha_s^{-1}(\varepsilon')+c_1]} \widetilde{f_s}(t)
\end{equation*}
and as $\widetilde f_s$ converges uniformly towards $f_s$, there exists a constant $c_2 > 0$ such that for $n$ large enough,
\begin{equation*}
\inf_{t \in [\alpha_s^{-1}(1-\varepsilon')-c_1,\alpha_s^{-1}(\varepsilon')+c_1]} \widetilde{f_s}(t) \geq \inf_{t \in [\alpha_s^{-1}(1-\varepsilon')-c_1,\alpha_s^{-1}(\varepsilon')+c_1]} f_s(t) - c_2 > 0 \, .
\end{equation*}
Note that $c_2$ can be chosen such that the strict positivity holds because the infimum of $f_s$ on $[\alpha_s^{-1}(1-\varepsilon')-c_1,\alpha_s^{-1}(\varepsilon')+c_1]$ is attained and thus strictly positive (assumption \ref{as:density_regularity_positive}).
Eventually, combining the three previous inequalities, for $n$ large enough we have: for all $\alpha \in [\varepsilon', 1-\varepsilon']$, $\widetilde f_s(\widetilde \alpha_s^{-1}(\alpha)) > 0$.

Therefore, $\widetilde F_s$ is differentiable on $[\widetilde \alpha_s^{-1}(1-\varepsilon'), \widetilde \alpha_s^{-1}(\varepsilon')]$ with strictly positive derivative. This gives that $\widetilde F_s^{\dag}: [\varepsilon', 1-\varepsilon'] \rightarrow [\widetilde\alpha_s^{-1}(1-\varepsilon'), \widetilde\alpha_s^{-1}(\varepsilon')]$ is equal to the ordinary inverse of $\widetilde F_s$ on $[\varepsilon', 1-\varepsilon']$ and $\widetilde F_s^{\dag}$ is differentiable on $[\varepsilon', 1-\varepsilon']$. \newline

We now need to invoke the same argument as in the beginning of the proof of Lemma \ref{lem:tail_condition_boot}. This argument says that there exists a constant $c_3$ such that for $n$ large enough, if $\mathcal{Z}_0$ denotes the event $\mathcal{Z}_0 = \{ [\widetilde U^{\dag}(\varepsilon), \widetilde U^{\dag}(1-\varepsilon)] \not\subset [\varepsilon - c_3, 1-\varepsilon + c_3]\}$ then $\mathbb{P}^*(\mathcal{Z}_0) \leq 2/n^2$ (refer to the proof of Lemma \ref{lem:tail_condition_boot} for details). Let $\omega \in \overline{\mathcal{Z}}_0$. Assuming $n$ large enough, for all $\alpha \in [\varepsilon, 1-\varepsilon]$, $\widetilde U^{\dag}(\alpha) \in [\varepsilon - c_3, 1-\varepsilon + c_3]$. Furthermore for $n$ large enough, we proved that $\widetilde F_s^{\dag}$ is differentiable on $[\varepsilon', 1-\varepsilon']$ for all $\varepsilon' \in (0,1)$. Therefore we can apply the mean value theorem to write for all $\alpha \in [\varepsilon, 1-\varepsilon]$,
\begin{equation*}
q_n^*(\alpha) = \frac{\widetilde f_s(\widetilde F_s^{\dag}(\alpha))}{\widetilde f_s(\widetilde F_s^{\dag}(\xi))} \sqrt{n}(\widetilde U^{\dag}(\alpha) - \alpha) = \frac{\widetilde f_s(\widetilde F_s^{\dag}(\alpha))}{\widetilde f_s(\widetilde F_s^{\dag}(\xi))} u^*_n(\alpha)
\end{equation*}
where $\xi$ is between $\widetilde U^{\dag}(\alpha')$ and $\alpha'$.

Applying Theorem 1 in \citep{CR78} conditionally on $\mathcal{D}_n$, there exists a sequence of Brownian Bridges $\{B^{*1}_n(\alpha), \alpha \in [0,1]\}_{n \geq 1}$ such that for all $z_1$ and for all $n$,
\begin{equation}
\label{eq:boot_strong_approx_ineq_unif}
\mathbb{P}^*\Bigl(\sup_{\alpha \in [0,1]}\vert u_n^*(\alpha) - B_n^{*1}(\alpha) \vert > \frac{C_3\log n + z_1}{\sqrt{n}}\Bigr) \leq C_4\exp(-C_5z_1)
\end{equation}
where $C_3, C_4$ and $C_5$ are positive absolute constants. Let $B_n^{*}(\alpha) = B_n^{*1}(1-\alpha)$ for all $\alpha \in [0, 1]$. Reasoning similarly as in the proof of Lemma \ref{lem:strong_approx_ineq}, we have for all $\alpha' \in [\varepsilon, 1-\varepsilon]$,
\begin{equation}
\label{eq:boot_decom_rn_Zn}
\begin{aligned}
\vert r_n^*(\alpha') - Z_n^*(\alpha') \vert &\leq \frac{\vert \lambda_s'(\widetilde F_s^{\dag}(\alpha'))\vert}{\widetilde f_s(\widetilde F_s^{\dag}(\alpha'))} \vert q_n^*(\alpha') - B_n^{*1}(\alpha')\vert \\
& \phantom{\leq}+ \sqrt{n}\frac{\vert \lambda_s''(\xi_1) \vert }{2}\vert (F_s^{Boot})^{\dag}(\alpha') - \widetilde F_s^{\dag}(\alpha')\vert^2
\end{aligned}
\end{equation}
where $\xi_1$ is between $(F_s^{Boot})^{\dag}(\alpha')$ and $\widetilde F_s^{\dag}(\alpha')$.

Still following the proof of Lemma \ref{lem:strong_approx_ineq} we need the following results. Conditionally on $\mathcal{D}_n$, the DKW inequality gives for all $z_2 > 0$ and for all $n$,
\begin{equation}
\label{eq:boot_dkw_unif_quantile}
\mathbb{P}^*(\sup_{\alpha \in [0,1]} \vert u_n^*(\alpha) \vert \geq z_2) \leq 2\exp(-2z_2^2) \, .
\end{equation}
Using the result of Lemma \ref{lem:tail_condition_boot} there exists a constant $c^* > 0$ such that we have, conditionally on $\mathcal{D}_n$, for all $z_4 \geq 1$ and for $n$ large enough,
\begin{equation}
\label{eq:boot_ratio_taylor_ineq}
\begin{aligned}
\mathbb{P}^*&\left(\sup_{\alpha \in [\varepsilon, 1-\varepsilon]} \left\vert \frac{\widetilde f_s(\widetilde F_s^{\dag}(\alpha))}{\widetilde f_s(\widetilde F_s^{\dag}(\xi))} \right\vert > z_4\right) \\
& \quad \quad \leq \frac{2}{n^2} + 2 (\lfloor c^* \rfloor + 1)\Bigl(e^{-n\varepsilon h\bigl(z_4^{1/2(\lfloor c^* \rfloor+1)}\bigr)} + e^{-n\varepsilon h\bigl(z_4^{-1/2(\lfloor c^* \rfloor+1)}\bigr)}\Bigr)
\end{aligned}
\end{equation}
where we recall that for all $z > 0$, $h(z) = z + \log(1/z) -1$.
Following the proof of Lemma \ref{lem:tail_condition_boot} and with calculations similar as the ones used in the proof of Theorem 1.4.3 in \citep{Csorgo1983} it can also be shown that conditionally on $\mathcal{D}_n$ we have for all $z_3 > 0$ and for $n$ large enough,
\begin{equation}
\label{eq:boot_ratio_taylor_ineq_1}
\begin{aligned}
\mathbb{P}^*&\left(\sup_{\alpha \in [\varepsilon, 1-\varepsilon]} \left\vert \frac{\widetilde f_s(\widetilde F_s^{\dag}(\alpha))}{\widetilde f_s(\widetilde F_s^{\dag}(\xi))} - 1 \right\vert > z_3\right) \\
&\leq 4(\lfloor c^* \rfloor + 1)\Bigl(e^{-n\varepsilon h\bigl((1+z_3)^{1/2(\lfloor c^* \rfloor+1)}\bigr)} + e^{-n\varepsilon h\bigl((1+z_3)^{-1/2(\lfloor c^* \rfloor+1)}\bigr)}\Bigr) + \frac{2}{n^2} \, .
\end{aligned}
\end{equation}

Now as for all $\alpha' \in [\varepsilon, 1-\varepsilon]$, $\widetilde F_s^{\dag}(\alpha')$ is in the support of $\widetilde f_s$ which has shown in the proof of Lemma \ref{lem:unif_strong_rate_distribution}, is included in an interval of the form $[-c_4, c_4 + \Vert s \Vert_{\infty}]$ where $c_4$ is a strictly positive constant that depends only on the kernel $K$. As $\lambda_s'$ is continuous, $\vert \lambda_s'(\widetilde F_s^{\dag}(\alpha'))\vert \leq \sup_{[-c_4, c_4 + \Vert s \Vert_{\infty}]} \vert \lambda_s' \vert < \infty$. Furthermore, as proved above, there exist constants $c_5$ and $c_6$ both strictly positive such that $[F_s^{\dag}(\varepsilon)-c_5,F_s^{\dag}(1-\varepsilon)+c_5] \subset (a, b)$ and such that for $n$ large enough,
\begin{equation*}
\inf_{\alpha' \in [\varepsilon, 1-\varepsilon]} \widetilde{f_s}(\widetilde F_s^{\dag}(\alpha')) \geq \inf_{t \in [F_s^{\dag}(\varepsilon)-c_5,F_s^{\dag}(1-\varepsilon)+c_5]} f_s(t) - c_6 > 0 \, .
\end{equation*}
For all $\alpha \in [\varepsilon, 1-\varepsilon]$, $(F_s^{Boot})^{\dag}(\alpha)$ and $\widetilde F_s^{\dag}(\alpha)$ are in the support of $\widetilde f_s$ which is included in $[-c_4, c_4 + \Vert s \Vert_{\infty}]$. Therefore $\xi_1 \in [-c_4, c_4 + \Vert s \Vert_{\infty}]$ and as $\lambda_s''$ is continuous, $\vert \lambda_s''(\xi_1) \vert \leq \sup_{[-c_4, c_4 + \Vert s \Vert_{\infty}]} \vert \lambda_s''\vert < +\infty$. Thus following the steps of the proof of Lemma \ref{lem:strong_approx_ineq} we have for $n$ large enough,
\begin{equation*}
\begin{aligned}
\mathbb{P}^*&\left(\sup_{\alpha \in [\varepsilon, 1-\varepsilon]} \vert r_n^*(\alpha) - Z_n^*(\alpha) \vert > Cv_n\right) \\
&\leq \frac{C_4}{n^{C_5c_7}} + \frac{8}{n^2} + 4(\lfloor c^* \rfloor + 1)\Bigl(\exp\bigl(-n\varepsilon h\bigl((1+c_8\sqrt{\log n/n})^{1/2(\lfloor c^* \rfloor+1)}\bigr)\bigr) \\
&\quad \quad + \exp\bigl(-n\varepsilon h\bigl((1+c_8\sqrt{\log n/n})^{-1/2(\lfloor c^* \rfloor+1)}\bigr)\bigr)\Bigr) \\
&\quad \quad + 2 (\lfloor c^* \rfloor + 1)\Bigl(\exp\bigl(-n\varepsilon h\bigl(2^{1/2(\lfloor c^* \rfloor+1)}\bigr)\bigr) + \exp\bigl(-n\varepsilon h\bigl(2^{-1/2(\lfloor c^* \rfloor+1)}\bigr)\bigr)\Bigr) \, .
\end{aligned}
\end{equation*}
where
\begin{equation*}
\begin{aligned}
C = \frac{\sup_{[-c_4, c_4 + \Vert s \Vert_{\infty}]} \vert \lambda_s' \vert}{\inf_{[F_s^{\dag}(\varepsilon)-c_5,F_s^{\dag}(1-\varepsilon)+c_5]} f_s - c_6}&(C_4+c_7+c_8) \\
&+ \frac{2\sup_{[-c_4, c_4 + \Vert s \Vert_{\infty}]} \vert \lambda_s''\vert}{(\inf_{[F_s^{\dag}(\varepsilon)-c_5,F_s^{\dag}(1-\varepsilon)+c_5]} f_s - c_6)^2} \, .
\end{aligned}
\end{equation*}
Proceeding analogously as in Lemma \ref{lem:strong_approx_ineq}, we can choose $c_7$ and $c_8$ such that the right-hand side of the previous inequality is of order $O(v_n)$.
This finally gives the result of the lemma.
\end{proof}

\subsection{Proof of Lemma \ref{lemma:gine_guillou}}

\begin{proof}[Proof of Lemma \ref{lemma:gine_guillou}]
Notice first that thanks to Lemma \ref{lemma:y_s_positive} $y_s(\alpha) \neq 0$ for all $\alpha \in (0,1)$. Let $\alpha \in [\varepsilon, 1-\varepsilon]$,
\begin{equation*}
\frac{\widetilde y_s(\alpha)}{y_s(\alpha)} -1 = \frac{\widetilde y_s(\alpha) - y_s(\alpha)}{y_s(\alpha)} \, .
\end{equation*}

As the function $\vert 1/y_s(\cdot) \vert$ is continuous and therefore bounded over $[\varepsilon,1-\varepsilon]$, we only need to control $\sup_{\alpha \in [\varepsilon, 1-\varepsilon]}\vert \widetilde y_s(\alpha) - y_s(\alpha) \vert$ which can be bounded by the following decomposition
\begin{equation}
\label{eq:dec_gine_guillou}
\begin{aligned}
\vert y_s(\alpha) - \widetilde y_s(\alpha) \vert
% &= \left\vert \frac{\lambda_s'(\alpha_s^{-1}(\alpha))}{f_s(\alpha_s^{-1}(\alpha))} - \frac{\lambda_s'(\widetilde \alpha_s^{-1}(\alpha)}{\widetilde f_s(\widetilde \alpha_s^{-1}(\alpha))} \right\vert \\
&\leq \left\vert \frac{\lambda_s'(\alpha_s^{-1}(\alpha))}{f_s(\alpha_s^{-1}(\alpha))} - \frac{\lambda_s'(\widetilde \alpha_s^{-1}(\alpha))}{f_s(\widetilde \alpha_s^{-1}(\alpha))} \right\vert + \left\vert \frac{\lambda_s'(\widetilde\alpha_s^{-1}(\alpha))}{f_s(\widetilde\alpha_s^{-1}(\alpha))} - \frac{\lambda_s'(\widetilde \alpha_s^{-1}(\alpha)}{\widetilde f_s(\widetilde \alpha_s^{-1}(\alpha))} \right\vert \, .
\end{aligned}
\end{equation}

As $\lambda_s$ is of class $\mathcal{C}^2$ and $f_s$ is of class $\mathcal{C}^1$ and strictly positive on $(a,b)$, $\lambda_s^{\prime}/f_s$ is of class $\mathcal{C}^1$ on $(a,b)$. Thus using the mean value theorem, we can write, for all $\alpha \in [\varepsilon, 1-\varepsilon]$,
\begin{equation*}
\left\vert \frac{\lambda_s'(\alpha_s^{-1}(\alpha))}{f_s(\alpha_s^{-1}(\alpha))} - \frac{\lambda_s'(\widetilde \alpha_s^{-1}(\alpha))}{f_s(\widetilde \alpha_s^{-1}(\alpha))} \right\vert \leq \left\vert\left(\frac{\lambda'_s}{f_s}\right)'(\xi) \right\vert \cdot \vert \alpha_s^{-1}(\alpha)- \widetilde{\alpha}_s^{-1}(\alpha)\vert
\end{equation*}
where $\xi$ is between $\alpha_s^{-1}(\alpha)$ and $\widetilde \alpha_s^{-1}(\alpha)$. From Lemma \ref{lem:unif_strong_rate_quantile} we know that there exists a constant $c_1 > 0$ such that almost surely, for $n$ large enough, $\widetilde \alpha_s^{-1}(\alpha) \in [\alpha_s^{-1}(1-\varepsilon) - c_1, \alpha_s^{-1}(\varepsilon)+ c_1] \subset (a,b)$. Therefore, as $(\lambda_s'/f_s)'$ is continuous, almost surely and for $n$ large enough,
\begin{equation*}
\left\vert\left(\frac{\lambda'_s}{f_s}\right)'(\xi) \right\vert \leq \sup_{t \in [\alpha_s^{-1}(1-\varepsilon) - c_1, \alpha_s^{-1}(\varepsilon)+ c_1]} \left\vert\left(\frac{\lambda'_s}{f_s}\right)'(t) \right\vert < +\infty \, .
\end{equation*}
Furthermore,
\begin{equation*}
\sup_{\alpha \in [\varepsilon, 1-\varepsilon]}\vert \alpha_s^{-1}(\alpha)- \widetilde{\alpha}_s^{-1}(\alpha)\vert = \sup_{\alpha' \in [\varepsilon, 1-\varepsilon]} \vert F_s^{\dag}(\alpha') - \widetilde F_s^{\dag}(\alpha')\vert \, .
\end{equation*}
Therefore, using Lemma \ref{lem:unif_strong_rate_quantile}, there exists a constant $C_1 > 0$ such that we almost surely have, for $n$ large enough,
\begin{equation}
\label{eq:first_term_variance_lemma}
\left\vert \frac{\lambda_s'(\alpha_s^{-1}(\alpha))}{f_s(\alpha_s^{-1}(\alpha))} - \frac{\lambda_s'(\widetilde \alpha_s^{-1}(\alpha))}{f_s(\widetilde \alpha_s^{-1}(\alpha))} \right\vert \leq C_1w_n \, .
\end{equation}

For the second term of \eqref{eq:dec_gine_guillou} we write
\begin{equation*}
\left\vert \frac{\lambda_s'(\widetilde\alpha_s^{-1}(\alpha))}{f_s(\widetilde\alpha_s^{-1}(\alpha))} - \frac{\lambda_s'(\widetilde \alpha_s^{-1}(\alpha))}{\widetilde f_s(\widetilde \alpha_s^{-1}(\alpha))} \right\vert
= \vert \lambda^{\prime}_s(\widetilde \alpha_s^{-1}(\alpha))\vert \left\vert \frac{1}{f_s(\widetilde \alpha_s^{-1}(\alpha))} - \frac{1}{\widetilde f_s(\widetilde \alpha_s^{-1}(\alpha))}\right\vert
\end{equation*}
and as shown in the proof of Lemma \ref{lem:unif_strong_rate_distribution}, $\widetilde F_s$ has a finite support included in an interval of the form $[-c_2, c_2 + \Vert s \Vert_{\infty}]$. Therefore $\widetilde \alpha_s^{-1}(\alpha) \in [-c_2, c_2 + \Vert s \Vert_{\infty}]$ and as $\lambda_s'$ is continuous, it is bounded on this interval. Therefore there exists $C_2 > 0$ such that for all $\alpha \in [\varepsilon, 1-\varepsilon]$, $\vert \lambda^{\prime}_s(\widetilde \alpha_s^{-1}(\alpha))\vert \leq C_2$
and
\begin{equation}
\left\vert \frac{\lambda_s'(\widetilde\alpha_s^{-1}(\alpha))}{f_s(\widetilde\alpha_s^{-1}(\alpha))} - \frac{\lambda_s'(\widetilde \alpha_s^{-1}(\alpha))}{\widetilde f_s(\widetilde \alpha_s^{-1}(\alpha))} \right\vert
\leq C_2\frac{\sup_{x \in \mathbb{R}}\vert \widetilde f_s(x) - f_s(x) \vert}{\widetilde f_s(\widetilde \alpha_s^{-1}(\alpha))f_s(\widetilde\alpha_s^{-1}(\alpha))} \, .
\end{equation}
We can proceed as in the proof of Lemma \ref{lem:strong_approx_bootstrap_ineq} to show that there exists constants $c_3$ and $c_4$ both strictly positive such that $[\alpha_s^{-1}(1-\varepsilon) - c_3, \alpha_s^{-1}(\varepsilon) + c_3] \subset (a,b)$ and such that almost surely, for $n$ large enough, for all $\alpha \in [\varepsilon, 1-\varepsilon]$,
\begin{equation*}
\widetilde f_s(\widetilde \alpha_s^{-1}(\alpha)) \geq \inf_{t \in [\alpha_s^{-1}(1-\varepsilon) - c_3, \alpha_s^{-1}(\varepsilon) + c_3]} f_s(t) - c_4 > 0 \, .
\end{equation*}
We also almost surely have, for $n$ large enough and for all $\alpha \in [\varepsilon, 1-\varepsilon]$,
\begin{equation*}
\begin{aligned}
f_s(\widetilde \alpha_s^{-1}(\alpha)) &\geq \inf_{\alpha' \in [\varepsilon,1-\varepsilon]} f_s(\widetilde \alpha_s^{-1}(\alpha')) = \inf_{t \in [\widetilde \alpha_s^{-1}(1-\varepsilon),\widetilde \alpha_s^{-1}(\varepsilon)]} f_s(t) \\
&\geq \inf_{t \in [\alpha_s^{-1}(1-\varepsilon) - c_3,\alpha_s^{-1}(\varepsilon) + c_3]} f_s(t) > 0 \, .
\end{aligned}
\end{equation*}
Let $A_{\varepsilon}=[\alpha_s^{-1}(1-\varepsilon) - c_3,\alpha_s^{-1}(\varepsilon) + c_3]$. We almost surely have, for $n$ large enough,
\begin{equation*}
\Bigl\vert \frac{\lambda_s'(\widetilde\alpha_s^{-1}(\alpha))}{f_s(\widetilde\alpha_s^{-1}(\alpha))} - \frac{\lambda_s'(\widetilde \alpha_s^{-1}(\alpha))}{\widetilde f_s(\widetilde \alpha_s^{-1}(\alpha))} \Bigr\vert
\leq C_2\frac{\sup_{t \in \mathbb{R}}\vert \widetilde f_s(t) - f_s(t) \vert}{\inf_{t \in A_{\varepsilon}} f_s(t)\cdot (\inf_{t \in A_{\varepsilon}} f_s(t) - c_4)} \, .
\end{equation*}
Thanks to Lemma \ref{lemma:rate_kerneldensity} and combining the last inequality with \eqref{eq:first_term_variance_lemma}, we obtain the result of the lemma.
\end{proof}

\subsection{Other technical results}

\begin{lemma} \label{lemma:y_s_positive}
For all $\alpha \in (0, 1)$, $y_s(\alpha) \neq 0$.
\end{lemma}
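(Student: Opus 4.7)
The plan is to argue directly that both factors in $y_s(\alpha)=\lambda_s'(\alpha_s^{-1}(\alpha))/f_s(\alpha_s^{-1}(\alpha))$ are nonzero. Fix $\alpha \in (0,1)$ and set $t_0 := \alpha_s^{-1}(\alpha) = F_s^{\dag}(1-\alpha)$. Since $F_s$ is continuous and strictly increasing on $(a,b)$ by assumption \ref{as:density_regularity_positive}, one has $t_0 \in (a,b)$, whence $f_s(t_0) > 0$. It therefore suffices to prove that $\lambda_s'(t_0) \neq 0$.

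The heart of the proof, and essentially the only delicate point, is a one-sided comparison between the probability mass and the Lebesgue volume of the thin slab $\{x \in \X : t_0 \leq s(x) < t_0+\epsilon\}$ for small $\epsilon > 0$. Using boundedness of $f$ (assumption \ref{as:bounded_density}),
\begin{equation*}
\alpha_s(t_0) - \alpha_s(t_0+\epsilon) = \int_{\{t_0 \leq s < t_0+\epsilon\}} f(x)\,dx \leq \|f\|_{\infty}\bigl(\lambda_s(t_0) - \lambda_s(t_0+\epsilon)\bigr),
\end{equation*}
where the rightmost identity uses that, by \ref{as:regulartiy_lambda}, $\lambda_s \in \mathcal{C}^2$ has no jumps, so the level set $\{s = t_0+\epsilon\}$ contributes nothing to volume. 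Dividing by $\epsilon > 0$, letting $\epsilon \to 0^+$, and using $\alpha_s'(t_0) = -f_s(t_0)$ together with differentiability of $\lambda_s$ at $t_0$, one obtains
\begin{equation*}
f_s(t_0) \leq -\|f\|_{\infty}\,\lambda_s'(t_0).
\end{equation*}

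Combined with $f_s(t_0) > 0$, this forces $\lambda_s'(t_0) \leq -f_s(t_0)/\|f\|_{\infty} < 0$, and in particular $y_s(\alpha) = \lambda_s'(t_0)/f_s(t_0) < 0 \neq 0$, as desired. The only nontrivial step is the volume comparison displayed above; it rests crucially on $\|f\|_{\infty} < +\infty$, which is assumption \ref{as:bounded_density}, a standing hypothesis of the paper.
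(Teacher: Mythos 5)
Your proof is correct and rests on the same key inequality as the paper's: the mass of a thin slab $\{t_0 \leq s < t_0+\epsilon\}$ is at most $\Vert f \Vert_\infty$ times its Lebesgue volume, which after dividing by $\epsilon$ and passing to the limit forces $\lambda_s'(t_0) \leq -f_s(t_0)/\Vert f\Vert_\infty < 0$; the paper derives the equivalent bound $\mv_s'(\alpha) \geq 1/\Vert f\Vert_\infty$ in the $\alpha$-parametrization. Note, though, that your parenthetical appeal to $\lambda_s \in \mathcal{C}^2$ to justify $\lambda\bigl(\{t_0 \leq s < t_0+\epsilon\}\bigr) = \lambda_s(t_0) - \lambda_s(t_0+\epsilon)$ is unnecessary — this is a plain set-theoretic identity holding for any $s$.
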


\begin{proof}[Proof of Lemma \ref{lemma:y_s_positive}]
First notice that as $\lambda_s$ is of class $\mathcal{C}^2$ and $F_s'= f_s > 0$ on $(a, b)$, $\mv_s = \lambda_s \circ \alpha_s^{-1}$ is differentiable on $(0,1)$ and for all $\alpha \in (0, 1)$,
\begin{equation}
\label{eq:derivative_mv_s_y_s}
\mv_s'(\alpha) = -\frac{\lambda_s(\alpha_s^{-1}(\alpha))}{f_s(\alpha_s^{-1}(\alpha))} = -y_s(\alpha) \, .
\end{equation}
Now, following the steps of the proof of the formula of the derivative of $\mv^*$ we also have, for all $\alpha \in (0,1)$ and $h > 0$,
\begin{equation*}
\begin{aligned}
\frac{\mv_s(\alpha + h) - \mv_s(\alpha)}{h} &=\frac{1}{h}(\lambda\{s \geq F_s^{-1}(1-(\alpha + h)) \} - \lambda\{s \geq F_s^{-1}(1-\alpha) \}) \\
&= \frac{1}{h}\lambda\{F_s^{-1}(1-(\alpha + h)) \leq s \leq F_s^{-1}(1-\alpha)\} \\
&= \frac{1}{h}\lambda\{1-(\alpha + h) \leq F_s\circ s \leq 1-\alpha\} \\
&= \frac{1}{h}\int_{\X} \mathbb{I}\{x, 1-(\alpha + h) \leq F_s(s(x)) \leq 1-\alpha\} \frac{f(x)}{f(x)}dx \\
&\geq \frac{1}{h\Vert f \Vert_{\infty}} \mathbb{P}(F_s(s(X)) \in [1-(\alpha + h), 1-\alpha]) \\
&= \frac{1}{\Vert f \Vert_{\infty}} \, .
\end{aligned}
\end{equation*}
where we used the fact that the distribution of $F_s(s(X))$ is the uniform distribution on $(0,1)$. Similarly,
\begin{equation*}
\forall \alpha \in (0,1), \quad \frac{\mv_s(\alpha - h) - \mv_s(\alpha)}{-h} \geq \frac{1}{\Vert f \Vert_{\infty}} \, .
\end{equation*}
Therefore, for all $\alpha \in (0,1)$, $\mv_s'(\alpha) \geq 1/\Vert f \Vert_{\infty} > 0$. Using \eqref{eq:derivative_mv_s_y_s}, this implies that $y_s(\alpha) \neq 0$ for all $\alpha \in (0,1)$.
\end{proof}

\begin{lemma} \label{lemma:rate_kerneldensity} Under assumptions \ref{as:bounded_score} and \ref{as:density_bias}-\ref{as:kernel_gine} there exists a constant $C > 0$, depending only on $f_s$ and the kernel $K$ such that we almost surely have, for $n$ large enough,
\begin{equation*}
\sup_{t \in \mathbb{R}} \left\vert \widetilde f_s(t) - f_s(t) \right\vert \leq Cw_n \, .
\end{equation*}
Note that this result does not require $K$ to have a finite support.
\end{lemma}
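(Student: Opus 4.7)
The plan is to perform the classical bias-variance decomposition for kernel density estimation:
\begin{equation*}
\widetilde{f}_s(t) - f_s(t) \;=\; \bigl(\widetilde{f}_s(t) - \mathbb{E}[\widetilde{f}_s(t)]\bigr) + \bigl(\mathbb{E}[\widetilde{f}_s(t)] - f_s(t)\bigr),
\end{equation*}
and to bound each piece uniformly in $t \in \mathbb{R}$. Taking the supremum over $t$ and applying the triangle inequality reduces the statement to controlling the supremum of the stochastic deviation and the supremum of the bias separately.

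First I would handle the bias term. Since $f_s$ is of class $\mathcal{C}^3$ and bounded (assumption B1), write $\mathbb{E}[\widetilde{f}_s(t)] = \int K(u) f_s(t - h_n u)\,du$, expand $f_s(t - h_n u)$ by Taylor's formula up to order two with integral remainder, and use the moment conditions on $K$ (assumption B3): $\int K = 1$, $\int u K(u)\,du = 0$, and $\int u^2 K(u)\,du < +\infty$. The zero-th and first order terms cancel and give $f_s(t)$, so only the second-order remainder survives, which is bounded uniformly in $t$ by a constant times $h_n^2 \cdot \|f_s''\|_\infty \cdot \int u^2 K(u)\,du$. Here one uses that $f_s''$ is bounded: this follows from $f_s \in \mathcal{C}^3$ combined with assumption B1 (bounded density), or alternatively by observing that the support of $f_s$ is bounded (a consequence of A3), so continuity of $f_s''$ gives boundedness. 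Thus $\sup_t |\mathbb{E}[\widetilde{f}_s(t)] - f_s(t)| = O(h_n^2)$.

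Second I would handle the stochastic fluctuation term. This is exactly the setting of the uniform almost sure rate result of \citet{Gine} for kernel density estimators. Under assumption B4 (the kernel is of the VC-like form $K(y) = \Phi_1(P_1(y))$ with $\Phi_1$ bounded and of bounded variation, and $\|K\|_2 < \infty$) together with the bandwidth assumption B2 ($nh_n \to \infty$, $nh_n/\log(h_n^{-1}) \to \infty$, and $\log(h_n^{-1})/\log\log n \to \infty$), the Giné-Guillou theorem gives
\begin{equation*}
\sup_{t \in \mathbb{R}} \bigl| \widetilde{f}_s(t) - \mathbb{E}[\widetilde{f}_s(t)] \bigr| \;=\; O\!\left(\sqrt{\frac{\log(h_n^{-1})}{n h_n}}\right) \quad \text{almost surely}.
\end{equation*}
Note that the density $f_s$ is bounded (which follows from B1), which is the standard hypothesis for that theorem; the finite support of $f_s$ (from A3) makes the class of functions $\{K_{h_n}(s(\cdot)-t) : t \in \mathbb{R}\}$ effectively indexed by a compact set, but Giné-Guillou applies equally well over all of $\mathbb{R}$.

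Combining the two bounds yields $\sup_{t \in \mathbb{R}}|\widetilde{f}_s(t) - f_s(t)| \leq C\bigl(\sqrt{\log(h_n^{-1})/(nh_n)} + h_n^2\bigr) = C w_n$ almost surely for $n$ large enough, with $C$ depending only on $f_s$ and the kernel $K$. The main obstacle is simply citing the Giné-Guillou uniform-in-bandwidth theorem with its precise hypotheses and verifying that assumption B4 matches their measurability/VC-type structural condition on $K$; the bias analysis is entirely routine thanks to the smoothness and kernel-moment assumptions.
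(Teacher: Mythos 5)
Your proof is correct and takes essentially the same approach as the paper: the identical bias--variance decomposition, Giné--Guillou's uniform almost-sure deviation bound (Theorem 2.3) for the stochastic term under assumptions B2 and B4, and a second-order Taylor expansion with the kernel moment conditions of B3 for the bias term, yielding $\sup_t |\mathbb{E}[\widetilde f_s(t)]-f_s(t)| \leq \tfrac{h_n^2}{2}\|f_s''\|_\infty \int y^2 K(y)\,dy$. You additionally spell out why $\|f_s''\|_\infty$ is finite (continuity from B1 plus compact support of $f_s$ via A3), a point the paper leaves implicit.
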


\begin{proof}[Proof of Lemma \ref{lemma:rate_kerneldensity}]
We first bound $\sup_{t \in \mathbb{R}} \vert \widetilde f_s(t) - f_s(t) \vert$ by the sum of a stochastic term and a deterministic term:
\begin{equation*}
\sup_{t \in \mathbb{R}} \left\vert \widetilde f_s(t) - f_s(t) \right\vert \leq \sup_{t \in \mathbb{R}} \left\vert \widetilde f_s(t) - \mathbb{E}[\widetilde f_s(t)] \right\vert + \sup_{t \in \mathbb{R}} \left\vert \mathbb{E}[\widetilde f_s(t)] - f_s(t) \right\vert \, .
\end{equation*}
Under the stipulated conditions, the first term is controlled thanks to Theorem 2.3 in \citep{Gine}: we almost surely have, for $n$ large enough,
\begin{equation*}
\sup_{t \in \mathbb{R}} \left\lvert \widetilde f_s(t) - \mathbb{E}[\widetilde f_s(t)] \right\rvert \leq M^2\Vert f_s \Vert_{\infty} \Vert K \Vert_2^2 \sqrt{\frac{\log h_n^{-1}}{nh_n}}
\end{equation*}
where $M$ is a constant that depends on the VC characteristics of $K$.

It can then be shown with classical calculations that the second term is bounded by
\begin{equation*}
\sup_{t \in \mathbb{R}}\left\vert \mathbb{E}[\widetilde f_s(t)] - f_s(t) \right\vert \leq \frac{h_n^2}{2}\Vert f_s'' \Vert_{\infty}\int_{-\infty}^{+\infty}y^2K(y) \, .
\end{equation*}
Combining this last inequality with the one on the first term we obtain the result of the lemma.
\end{proof}

\begin{lemma} \label{lemma:rate_kerneldensityderivative} Under assumptions \ref{as:bounded_score}, \ref{as:density_bias}-\ref{as:kernel_bias} and \ref{as:kernelderivative_gine} there exists a constant $C > 0$, depending only on $f_s$ and the kernel $K$ such that we almost surely have, for $n$ large enough,
\begin{equation*}
\sup_{t \in \mathbb{R}} \left\vert \widetilde{f_s}'(t) - f_s'(t) \right\vert \leq C\left(\sqrt{\frac{\log h_n^{-1}}{nh_n^3}} + h_n^2\right) \, .
\end{equation*}
Therefore, as $h_n \rightarrow 0$, if assumption \ref{as:bandwidth_derivative} is also fulfilled then we almost surely have,
\begin{equation*}
\sup_{t \in \mathbb{R}} \left\vert \widetilde{f_s}'(t) - f_s'(t) \right\vert \underset{n \rightarrow \infty}{\longrightarrow} 0 \, .
\end{equation*}
Note that this result does not require $K$ to have a finite support.
\end{lemma}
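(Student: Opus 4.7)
The plan is to mirror the bias-variance decomposition used in the proof of Lemma \ref{lemma:rate_kerneldensity}. First I would write
\begin{equation*}
\sup_{t \in \mathbb{R}} \bigl\vert \widetilde{f_s}'(t) - f_s'(t) \bigr\vert \leq \sup_{t \in \mathbb{R}} \bigl\vert \widetilde{f_s}'(t) - \mathbb{E}[\widetilde{f_s}'(t)] \bigr\vert + \sup_{t \in \mathbb{R}} \bigl\vert \mathbb{E}[\widetilde{f_s}'(t)] - f_s'(t) \bigr\vert \, ,
\end{equation*}
and observe that differentiating the kernel estimator yields
\begin{equation*}
\widetilde{f_s}'(t) = -\frac{1}{nh_n^2}\sum_{i=1}^n K'\!\left(\frac{s(X_i)-t}{h_n}\right) = -\frac{1}{h_n}\cdot\frac{1}{nh_n}\sum_{i=1}^n K'\!\left(\frac{s(X_i)-t}{h_n}\right).
\end{equation*}
The second factor is formally a kernel estimator based on the kernel $K'$, so the strategy will be to treat the two terms separately.

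For the stochastic term, since by assumption \ref{as:kernelderivative_gine} the function $K'$ satisfies the VC-type condition required by Theorem 2.3 of \citet{Gine} and $\Vert K' \Vert_2 < +\infty$, I would apply that theorem to the class $\{K'((\cdot-t)/h_n) : t \in \mathbb{R}\}$ to obtain almost surely, for $n$ large enough, a bound of order $\sqrt{\log(h_n^{-1})/(nh_n)}$ on $\sup_t\vert (nh_n)^{-1}\sum K'((s(X_i)-t)/h_n) - \mathbb{E}[\cdot]\vert$. Multiplying by $1/h_n$ then produces the desired rate $\sqrt{\log(h_n^{-1})/(nh_n^3)}$ for the stochastic term. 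The hypotheses \ref{as:bounded_score} and the boundedness of $f_s$ (from \ref{as:density_bias}) are used exactly as in Lemma \ref{lemma:rate_kerneldensity} to guarantee the applicability of the Giné--Guillou result.

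For the deterministic bias term, since $K$ has finite support (assumption \ref{as:kernel_bias}), integration by parts is legitimate and, after the change of variable $y=(s-t)/h_n$, gives
\begin{equation*}
\mathbb{E}[\widetilde{f_s}'(t)] = -\frac{1}{h_n}\int K'(y)f_s(t+h_n y)\,dy = \int K(y)f_s'(t+h_n y)\,dy \, .
\end{equation*}
A Taylor expansion of $f_s'$ to second order combined with the moment conditions $\int K(y)dy=1$ and $\int yK(y)dy=0$ from \ref{as:kernel_bias} then yields
\begin{equation*}
\bigl\vert \mathbb{E}[\widetilde{f_s}'(t)] - f_s'(t) \bigr\vert \leq \frac{h_n^2}{2}\Vert f_s''' \Vert_{\infty}\int y^2 K(y)\,dy \, ,
\end{equation*}
where $\Vert f_s''' \Vert_{\infty}$ is finite since, by assumption \ref{as:bounded_score}, $f_s$ is compactly supported and, by \ref{as:density_bias}, of class $\mathcal{C}^3$. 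Summing the two bounds gives the desired rate. The final claim follows directly: assumption \ref{as:bandwidth_derivative} guarantees $\log(h_n^{-1})/(nh_n^3)\to 0$, while $h_n\to 0$ from \ref{as:bandwidth_stute}.

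The main obstacle is the stochastic step: one must carefully track how the extra $h_n^{-1}$ prefactor coming from differentiating $K_{h_n}$ interacts with the standard Giné--Guillou bound, and verify that the VC-type assumption \ref{as:kernelderivative_gine} on $K'$ is precisely what is needed for the empirical-process inequality to apply uniformly over $t\in\mathbb{R}$. The bias computation is routine once integration by parts is justified by the compact support of $K$.
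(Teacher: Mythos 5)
Your proof follows the paper's essentially line by line: the bias-variance decomposition, the application of Gin\'e--Guillou (Theorem 2.3 in \citep{Gine}) to the kernel $K'$ with the extra $h_n^{-1}$ prefactor to produce the $\sqrt{\log(h_n^{-1})/(nh_n^3)}$ rate, and the integration by parts followed by a second-order Taylor expansion of $f_s'$ for the bias, all match. The one small slip is that you justify the integration by parts via the compact support of $K$, which is at odds with the remark in the lemma statement that finite support of $K$ is not actually needed; what makes the boundary terms vanish is the compact support of $f_s$ (forced by \ref{as:bounded_score}, since $s$ is bounded), and that is precisely what lets the result hold without a compactly supported kernel.
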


\begin{proof}[Proof of Lemma \ref{lemma:rate_kerneldensityderivative}]
We proceed as in the proof of Lemma \ref{lemma:rate_kerneldensity}. We first bound $\sup_{t \in \mathbb{R}} \vert \widetilde{f_s}'(t) - f_s'(t) \vert$ by the sum of a stochastic term and a deterministic term:
\begin{equation*}
\sup_{t \in \mathbb{R}} \left\vert \widetilde{f_s}'(t) - f_s'(t) \right\vert \leq \sup_{t \in \mathbb{R}} \left\vert \widetilde{f_s}'(t) - \mathbb{E}[\widetilde{f_s}'(t)] \right\vert + \sup_{t \in \mathbb{R}} \left\vert \mathbb{E}[\widetilde{f_s}'(t)] - f_s'(t) \right\vert \, .
\end{equation*}
We first deal with the first term. Let $t \in \mathbb{R}$,
\begin{equation*}
\widetilde{f_s}'(t) - \mathbb{E}[\widetilde{f_s'}(t)] =\frac{1}{h_n}\left(\frac{1}{nh_n}\sum_{i=1}^{n}K'\left(\frac{t-s(X_i)}{h_n} \right) - \mathbb{E}\left[\frac{1}{nh_n}\sum_{i=1}^{n}K'\left(\frac{t-s(X_i)}{h_n} \right)\right]\right) \, .
\end{equation*}
Therefore $K'$ satisfying the conditions required for the result of \citep{Gine} we almost surely have, for $n$ large enough,
\begin{equation*}
\sup_{t \in \mathbb{R}}\left\vert \widetilde{f_s}'(t) - \mathbb{E}[\widetilde{f_s'}(t)] \right\vert \leq M'^2\Vert f_s \Vert_{\infty} \Vert K' \Vert_2^2 \sqrt{\frac{\log h_n^{-1}}{nh^3_n}}
\end{equation*}
where $M'$ is a constant that depends on the VC characteristics of $K'$.

We now deal with the second term as follows. Let $t \in \mathbb{R}$,
\begin{equation*}
\begin{aligned}
\mathbb{E}[\widetilde{f_s}'(t)] &= \frac{1}{nh_n^2}\sum_{i=1}^n\mathbb{E}\left[K'\left(\frac{t-s(X_i)}{h_n}\right)\right] = \frac{1}{h_n^2}\mathbb{E}\left[K'\left(\frac{t-s(X_1)}{h_n}\right)\right] \\
&= \frac{1}{h_n}\int_{-\infty}^{+\infty}\frac{1}{h_n}K'\left(\frac{t-u}{h_n}\right)f_s(u)du \, .
\end{aligned}
\end{equation*}
With an integration by parts we obtain,
\begin{equation*}
\mathbb{E}[\widetilde{f_s}'(t)] - f_s'(t) = \frac{1}{h_n}\int_{-\infty}^{+\infty}K\left(\frac{t-u}{h_n}\right)f'_s(u)du - f_s'(t)
\end{equation*}
and we can deal with this term as we dealt with $\mathbb{E}[\widetilde{f_s}(t)] - f_s(t)$ in the proof of Lemma \ref{lemma:rate_kerneldensity}, to obtain
\begin{equation*}
\sup_{t \in \mathbb{R}}\left\vert \mathbb{E}[\widetilde{f_s}'(t)] - f_s'(t) \right\vert \leq \frac{h_n^2}{2}\Vert f_s''' \Vert_{\infty}\int_{-\infty}^{+\infty}y^2K(y) \, .
\end{equation*}
Combining this last inequality with the one on the first term we obtain the result of the lemma.

\end{proof}

\begin{lemma} \label{lem:unif_strong_rate_distribution}
Under the conditions of Lemma \ref{lemma:rate_kerneldensity} there exists a constant $C > 0$ such that we almost surely have for $n$ large enough,
\begin{equation*}
\sup_{t \in \mathbb{R}} \left\vert \widetilde F_s(t) - F_s(t) \right\vert \leq Cw_n \, .
\end{equation*}
\end{lemma}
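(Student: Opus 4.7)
The plan is to derive the supremum bound on $|\widetilde F_s - F_s|$ by integrating the already-established uniform bound on the density difference from Lemma \ref{lemma:rate_kerneldensity}. Since $\widetilde f_s$ is the kernel density estimate with bandwidth $h_n$, we have $\widetilde F_s(t) = \int_{-\infty}^t \widetilde f_s(u)\, du$, and consequently for every $t \in \mathbb{R}$,
\begin{equation*}
\widetilde F_s(t) - F_s(t) = \int_{-\infty}^{t}\bigl(\widetilde f_s(u) - f_s(u)\bigr)\,du.
\end{equation*}

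The first step is to localize the integration interval. Under assumption \ref{as:kernel_bias}, the kernel $K$ has finite support, say contained in $[-M,M]$ for some $M>0$. Since $s$ takes values in $[0,\|s\|_\infty]$ by assumption \ref{as:bounded_score}, the estimate $\widetilde f_s$ vanishes outside $[-Mh_n,\|s\|_\infty + Mh_n]$, while $f_s$ vanishes outside $[0,\|s\|_\infty]$. Hence the integrand $\widetilde f_s - f_s$ is supported in $I_n := [-Mh_n,\|s\|_\infty + Mh_n]$, whose Lebesgue measure $|I_n| = \|s\|_\infty + 2Mh_n$ is uniformly bounded by a constant $L$ (recall $h_n \to 0$, so $h_n \leq 1$ for $n$ large enough). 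This gives, uniformly in $t$,
\begin{equation*}
\bigl|\widetilde F_s(t) - F_s(t)\bigr| \;\leq\; \int_{I_n}\bigl|\widetilde f_s(u)- f_s(u)\bigr|\,du \;\leq\; L\cdot\sup_{u \in \mathbb{R}}\bigl|\widetilde f_s(u)- f_s(u)\bigr|.
\end{equation*}

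The second step is to invoke Lemma \ref{lemma:rate_kerneldensity}, whose hypotheses coincide with the standing assumptions: there exists a constant $C_0 > 0$ such that $\sup_{u \in \mathbb{R}} |\widetilde f_s(u) - f_s(u)| \leq C_0 w_n$ almost surely for $n$ large enough. Combining with the previous display and taking the supremum over $t$ yields
\begin{equation*}
\sup_{t \in \mathbb{R}}\bigl|\widetilde F_s(t) - F_s(t)\bigr| \leq L C_0 w_n,
\end{equation*}
which is the claim with $C = L C_0$.

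There is no real obstacle beyond checking the support bookkeeping; the argument is purely a triangle-inequality-plus-integration reduction to Lemma \ref{lemma:rate_kerneldensity}. The only subtlety is to verify that both $\widetilde f_s$ and $f_s$ have a common ambient support of bounded length, which relies crucially on the finite support of $K$ (assumption \ref{as:kernel_bias}) and on $s$ being bounded (assumption \ref{as:bounded_score}); without these, one would have to handle the tails separately and a direct integration bound would fail.
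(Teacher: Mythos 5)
Your proof is correct and follows essentially the same route as the paper: write the difference of distribution functions as the integral of the difference of densities, use the finite support of $K$ and the boundedness of $s$ to confine the integrand to an interval of uniformly bounded length, and then pull out the supremum density bound from Lemma \ref{lemma:rate_kerneldensity}. The only cosmetic difference is that you make the $h_n$-dependent support endpoints explicit, whereas the paper just asserts a constant $c_1$; the substance is the same.
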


\begin{proof}[Proof of Lemma \ref{lem:unif_strong_rate_distribution}]
Let $t \in \mathbb{R}$,
\begin{equation*}
\left\vert F_s(t) - \widetilde F_s(t) \right\vert = \left\vert \int_{-\infty}^{t} f_s(u)du - \int_{-\infty}^{t} \widetilde f_s(u)du \right\vert \leq \int_{-\infty}^{t} \vert f_s(u) - \widetilde f_s(u) \vert du \, .
\end{equation*}
As $f_s$ has a support included in $[0, \Vert s \Vert_{\infty}]$, as $K$ has a finite support and as the sequence $(h_n)_{n\geq 1}$ decreases towards $0$, $\widetilde f_s$ also has a finite support included in an interval of the form $[-c_1, \Vert s \Vert_{\infty} + c_1]$ where $c_1 >0$. Thus,
\begin{equation*}
\int_{-\infty}^{t} \vert f_s(u) - \widetilde f_s(u) \vert du \leq (2c_1 + \Vert s \Vert_{\infty}) \sup_{t \in \mathbb{R}} \left\vert f_s(t) - \widetilde f_s(t) \right\vert\, .
\end{equation*}
Thanks to Lemma \ref{lemma:rate_kerneldensity}, we finally obtain the result.
\end{proof}

\begin{lemma} \label{lem:unif_strong_rate_quantile}
Let $\varepsilon \in (0,1)$. Under the conditions of Lemma \ref{lemma:rate_kerneldensity}, there exists a constant $C > 0$ such that we almost surely have for $n$ large enough,
\begin{equation*}
\sup_{y \in [\varepsilon,1-\varepsilon]} \left\vert \widetilde F_s^{\dag}(y) - F_s^{\dag}(y) \right\vert \leq Cw_n \, .
\end{equation*}
This lemma only requires $\widetilde F_s$ to be continuous.
\end{lemma}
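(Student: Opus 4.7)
The plan is to transfer the uniform control of Lemma \ref{lem:unif_strong_rate_distribution} from the distribution scale to the quantile scale via the mean value theorem applied to $F_s$. Concretely, since both $F_s$ and $\widetilde F_s$ are continuous cumulative distribution functions (the latter being the primitive of the continuous density $\widetilde f_s$), we have $F_s(F_s^{\dag}(y))=y$ and $\widetilde F_s(\widetilde F_s^{\dag}(y))=y$ for every $y\in(0,1)$. Lemma \ref{lem:unif_strong_rate_distribution} thus yields, almost surely for $n$ large enough,
\begin{equation*}
\bigl\vert F_s(\widetilde F_s^{\dag}(y)) - y \bigr\vert = \bigl\vert F_s(\widetilde F_s^{\dag}(y)) - \widetilde F_s(\widetilde F_s^{\dag}(y)) \bigr\vert \leq \sup_{t\in\mathbb{R}}\vert F_s(t) - \widetilde F_s(t)\vert \leq C_1 w_n,
\end{equation*}
uniformly in $y\in[\varepsilon,1-\varepsilon]$. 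Since $F_s(F_s^{\dag}(y))=y$, this rewrites as $\vert F_s(\widetilde F_s^{\dag}(y))-F_s(F_s^{\dag}(y))\vert \leq C_1 w_n$.

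The next step is to invert $F_s$ on this inequality. Pick $c>0$ small enough that the compact interval $A_\varepsilon = [F_s^{\dag}(\varepsilon)-c, F_s^{\dag}(1-\varepsilon)+c]$ is contained in $(a,b)$; by assumptions \ref{as:density_regularity_positive} and \ref{as:limit_alpha_1}, $f_s$ is continuous and strictly positive on $A_\varepsilon$, so $m := \inf_{t\in A_\varepsilon} f_s(t) > 0$. The key substep is to show that, almost surely for $n$ large enough, $\widetilde F_s^{\dag}(y) \in A_\varepsilon$ for every $y\in[\varepsilon,1-\varepsilon]$. For this, observe that $F_s(F_s^{\dag}(\varepsilon)-c) < \varepsilon$ and $F_s(F_s^{\dag}(1-\varepsilon)+c) > 1-\varepsilon$ (strictly, because $F_s$ is strictly increasing on $(a,b)$ as $f_s>0$ there). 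Lemma \ref{lem:unif_strong_rate_distribution} then implies that, almost surely for $n$ large, $\widetilde F_s(F_s^{\dag}(\varepsilon)-c) < \varepsilon$ and $\widetilde F_s(F_s^{\dag}(1-\varepsilon)+c) > 1-\varepsilon$, so the generalized inverse $\widetilde F_s^{\dag}(y)$ lies in $A_\varepsilon$ for every $y\in[\varepsilon,1-\varepsilon]$.

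Once this confinement is secured, we apply the mean value theorem to $F_s$ between $\widetilde F_s^{\dag}(y)$ and $F_s^{\dag}(y)$: there exists $\xi_y\in A_\varepsilon$ such that
\begin{equation*}
F_s(\widetilde F_s^{\dag}(y))-F_s(F_s^{\dag}(y)) = f_s(\xi_y)\bigl(\widetilde F_s^{\dag}(y)-F_s^{\dag}(y)\bigr),
\end{equation*}
and therefore $\vert \widetilde F_s^{\dag}(y)-F_s^{\dag}(y)\vert \leq m^{-1} C_1 w_n$ uniformly in $y\in[\varepsilon,1-\varepsilon]$. Setting $C = C_1/m$ yields the claim.

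The main obstacle is the confinement step: one must rule out the possibility that $\widetilde F_s^{\dag}(y)$ drifts outside a compact subinterval of $(a,b)$ where the mean value theorem argument would degenerate (because $f_s$ could approach $0$ near the boundary of the support). This is handled by exploiting strict monotonicity of $F_s$ on $(a,b)$ together with the uniform closeness of $\widetilde F_s$ and $F_s$ given by Lemma \ref{lem:unif_strong_rate_distribution}. The remaining pieces are routine once this is in place.
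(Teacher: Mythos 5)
Your proposal is correct and takes essentially the same route as the paper's proof: reduce to controlling $F_s(\widetilde F_s^{\dag}(y))-F_s(F_s^{\dag}(y))$ via Lemma \ref{lem:unif_strong_rate_distribution}, confine $\widetilde F_s^{\dag}(y)$ to a compact subinterval of $(a,b)$ on which $f_s$ is bounded below, and then invert $F_s$ by the mean value theorem. The only cosmetic difference is in the confinement step, where you directly compare $\widetilde F_s$ with $F_s$ at the fixed endpoints $F_s^{\dag}(\varepsilon)-c$ and $F_s^{\dag}(1-\varepsilon)+c$ and use the definition of the generalized inverse, whereas the paper shows $\widetilde F_s^{\dag}(\varepsilon)\to F_s^{\dag}(\varepsilon)$ a.s.\ by composing with the continuous $F_s^{\dag}$; both arguments rest on the same ingredients and give the same conclusion.
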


\begin{proof}[Proof of Lemma \ref{lem:unif_strong_rate_quantile}]
First we recall that $\widetilde F_s$ being continuous, for all $y \in (0,1)$, $F_s(F_s^{\dag}(y)) = y$. Let $y \in [\varepsilon,1-\varepsilon]$,
\begin{equation*}
\begin{aligned}
\vert \widetilde F_s(\widetilde F_s^{\dag}(y)) - F_s(\widetilde F_s^{\dag}(y)) \vert &= \vert y - F_s(\widetilde F_s^{\dag}(y)) \vert = \vert F_s(F_s^{\dag}(y)) - F_s(\widetilde F_s^{\dag}(y)) \vert \\
&= \vert f_s(\xi) \vert \cdot \vert F_s^{\dag}(y) - \widetilde F_s^{\dag}(y) \vert
\end{aligned}
\end{equation*}
with $\xi$ between $F_s^{\dag}(y)$ and $\widetilde F_s^{\dag}(y)$. As $F_s^{\dag}$ and $\widetilde F_s^{\dag}$ are increasing, $F_s^{\dag}(\varepsilon) \leq F_s^{\dag}(y) \leq F_s^{\dag}(1-\varepsilon)$ and $\widetilde F_s^{\dag}(\varepsilon) \leq \widetilde F_s^{\dag}(y) \leq \widetilde F_s^{\dag}(1-\varepsilon)$. Now
\begin{equation*}
\vert \varepsilon - F_s(\widetilde F_s^{\dag}(\varepsilon))\vert = \vert \widetilde F_s(\widetilde F_s^{\dag}(\varepsilon)) - F_s(\widetilde F_s^{\dag}(\varepsilon)) \vert \leq \sup_{t \in \mathbb{R}} \vert \widetilde F_s(t) - F_s(t) \vert \, .
\end{equation*}
Therefore from Lemma \ref{lem:unif_strong_rate_distribution} as $w_n \rightarrow 0$ when $n$ tends to $+\infty$, $F_s(\widetilde F_s^{\dag}(\varepsilon))$ converges almost surely to $\varepsilon$. And as $F_s^{\dag}$ is continuous, we almost surely have, $F_s^{\dag}(F_s(\widetilde F_s^{\dag}(\varepsilon))) \underset{n \rightarrow \infty}{\longrightarrow} F_s^{\dag}(\varepsilon)$, i.e., $\widetilde F_s^{\dag}(\varepsilon) \underset{n \rightarrow \infty}{\longrightarrow} F_s^{\dag}(\varepsilon)$. Similarly, we almost surely have $\widetilde F_s^{\dag}(1-\varepsilon) \underset{n \rightarrow \infty}{\longrightarrow} F_s^{\dag}(1-\varepsilon)$. Therefore there exists a constant $c_1 > 0$ such that we almost surely have, for $n$ large enough, $\widetilde F_s^{\dag}(y) \in [F_s^{\dag}(\varepsilon) - c_1, F_s^{\dag}(1-\varepsilon) + c_1] \subset (a,b)$. This implies that $\xi$ is between $F_s^{\dag}(\varepsilon) - c_1 > a$ and $F_s^{\dag}(1-\varepsilon) + c_1 < b$ and thus,
\begin{equation*}
f_s(\xi) \geq \inf_{t \in [F_s^{\dag}(\varepsilon) - c_1, F_s^{\dag}(1-\varepsilon) + c_1]} f_s(t) \, .
\end{equation*}
We thus almost surely have, for $n$ large enough,
\begin{equation*}
\vert \widetilde F_s(\widetilde F_s^{\dag}(y)) - F_s(\widetilde F_s^{\dag}(y)) \vert \geq \vert F_s^{\dag}(y) - \widetilde F_s^{\dag}(y) \vert \inf_{t \in [F_s^{\dag}(\varepsilon) - c_1, F_s^{\dag}(1-\varepsilon) + c_1]} f_s(t)
\end{equation*}
Now as $f_s$ is strictly positive on $(a, b)$ and continuous, the infimum of the right-hand side is strictly positive. We can thus write almost surely and for $n$ large enough,
\begin{equation*}
\begin{aligned}
\vert F_s^{\dag}(y) - \widetilde F_s^{\dag}(y) \vert &\leq \frac{1}{\inf_{t \in [F_s^{\dag}(\varepsilon) - c_1, F_s^{\dag}(1-\varepsilon) + c_1]} f_s(t)} \vert \widetilde F_s(\widetilde F_s^{\dag}(y)) - F_s(\widetilde F_s^{\dag}(y)) \vert \\
&\leq \frac{C}{\inf_{t \in [F_s^{\dag}(\varepsilon) - c_1, F_s^{\dag}(1-\varepsilon) + c_1]} f_s(t)}w_n
\end{aligned}
\end{equation*}
where we used Lemma \ref{lem:unif_strong_rate_distribution} for the last inequality. This concludes the proof.
\end{proof}

\begin{lemma} \label{lem:unif_cv_ratio}
Let $a', b' \in \mathbb{R}$ with $a < a' < b' < b$. Under assumptions \ref{as:bounded_score}-\ref{as:density_regularity_positive} and \ref{as:density_bias}-\ref{as:bandwidth_derivative},
\begin{equation*}
\sup_{t \in [a', b']} \left\vert \widetilde F_s(t)(1- \widetilde F_s(t))\frac{\vert \widetilde{f_s}'(t)\vert}{\widetilde{f_s}^2(t)} - F_s(t)(1- F_s(t))\frac{\vert f_s'(t)\vert}{f_s^2(t)}\right\vert \underset{n \rightarrow +\infty}{\longrightarrow} 0
\end{equation*}
\end{lemma}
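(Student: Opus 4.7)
The plan is to reduce this statement to the uniform approximation results already established in Lemmas \ref{lemma:rate_kerneldensity}, \ref{lemma:rate_kerneldensityderivative} and \ref{lem:unif_strong_rate_distribution}, after showing that on the compact interval $[a',b']\subset(a,b)$ all denominators involved stay uniformly bounded away from zero, $\mathbb{P}$-almost surely, for $n$ large enough.

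First I would set up three uniform quantitative bounds on $[a',b']$. By assumption \ref{as:density_regularity_positive} and continuity of $f_s$, the quantity $m := \inf_{t\in[a',b']} f_s(t) > 0$ and $M := \sup_{t\in[a',b']} f_s(t) < +\infty$. Likewise, since $f_s$ is $\mathcal{C}^3$ (assumption \ref{as:density_bias}), $M' := \sup_{t\in[a',b']}|f_s'(t)| < +\infty$. Applying Lemma \ref{lemma:rate_kerneldensity} and Lemma \ref{lemma:rate_kerneldensityderivative}, there exists an event of probability one on which, for $n$ large enough,
\begin{equation*}
\inf_{t\in[a',b']}\widetilde f_s(t) \geq m/2, \quad \sup_{t\in[a',b']}\widetilde f_s(t) \leq 2M, \quad \sup_{t\in[a',b']}|\widetilde{f_s}'(t)| \leq 2M'.
\end{equation*}
This is the main technical prerequisite; it is not hard but is what makes the rest go through.

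Next I would write the integrand as $G(u,v,w) = u(1-u)|w|/v^2$ on the domain $D = [0,1]\times[m/2,2M]\times[-2M',2M']$ and bound each of the partial differences by the triangle inequality:
\begin{align*}
\bigl|G(\widetilde F_s,\widetilde f_s,\widetilde{f_s}') - G(F_s,f_s,f_s')\bigr|
&\leq \bigl|G(\widetilde F_s,\widetilde f_s,\widetilde{f_s}') - G(F_s,\widetilde f_s,\widetilde{f_s}')\bigr| \\
&\quad + \bigl|G(F_s,\widetilde f_s,\widetilde{f_s}') - G(F_s,f_s,\widetilde{f_s}')\bigr| \\
&\quad + \bigl|G(F_s,f_s,\widetilde{f_s}') - G(F_s,f_s,f_s')\bigr|.
\end{align*}
On the event above, $G$ is Lipschitz in each variable separately over $D$ with constants depending only on $m$, $M$, $M'$. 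The first term is then $O(\sup_{[a',b']}|\widetilde F_s - F_s|)$, the second is $O(\sup_{[a',b']}|\widetilde f_s - f_s|)$ and the third is $O(\sup_{[a',b']}|\widetilde{f_s}' - f_s'|)$.

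Finally, each of the three suprema converges to $0$ almost surely: $\sup|\widetilde F_s - F_s|$ and $\sup|\widetilde f_s - f_s|$ by Lemmas \ref{lem:unif_strong_rate_distribution} and \ref{lemma:rate_kerneldensity} respectively, and $\sup|\widetilde{f_s}' - f_s'|$ by the second assertion of Lemma \ref{lemma:rate_kerneldensityderivative} (which is precisely where assumption \ref{as:bandwidth_derivative} enters). Taking $n\to\infty$ on the bounded event yields the claimed uniform convergence on $[a',b']$. The main obstacle, as noted above, is obtaining the uniform lower bound $\widetilde f_s \geq m/2$ on $[a',b']$, which hinges on the fact that we have restricted to a compact interval strictly inside $(a,b)$ — without this restriction, $f_s$ could approach $0$ at the endpoints and the denominator $\widetilde f_s^2$ could not be controlled.
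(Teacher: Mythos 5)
Your proof is correct and follows essentially the same route as the paper: decompose the difference by the triangle inequality, use the uniform lower bound $\inf_{[a',b']}\widetilde f_s \geq m/2 > 0$ (valid almost surely for $n$ large because $[a',b']$ is compact inside $(a,b)$) to control the denominators, and invoke Lemmas \ref{lemma:rate_kerneldensity}, \ref{lemma:rate_kerneldensityderivative} and \ref{lem:unif_strong_rate_distribution} for the three uniform approximation errors. The paper's version uses a two-term rather than a three-term telescoping decomposition and leaves the denominator control implicit, but the underlying argument is the same.
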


\begin{proof}[Proof of Lemma \ref{lem:unif_cv_ratio}]
We have for all $t \in [a', b']$,
\begin{equation*}
\begin{aligned}
\Bigl\vert \widetilde F_s(t)&(1- \widetilde F_s(t))\frac{\vert \widetilde{f_s}'(t)\vert}{\widetilde{f_s}^2(t)} - F_s(t)(1- F_s(t))\frac{\vert f_s'(t)\vert}{f_s^2(t)}\Bigr\vert \\
&\leq \vert \widetilde F_s(t)(1- \widetilde F_s(t))\vert \left\vert \frac{\vert\widetilde{f_s}'(t)\vert}{\widetilde{f_s}^2(t)} - \frac{\vert f_s'(t)\vert}{f_s^2(t)}\right\vert \\
&\phantom{\leq} + \left\vert\frac{\widetilde{f_s}'(t)}{\widetilde{f_s}^2(t)} \right\vert \vert \widetilde F_s(t)(1- \widetilde F_s(t)) - F_s(t)(1- F_s(t)) \vert \\
&\leq \left\vert \frac{\vert\widetilde{f_s}'(t)\vert}{\widetilde{f_s}^2(t)} - \frac{\vert f_s'(t)\vert}{f_s^2(t)}\right\vert + \left\vert\frac{\widetilde{f_s}'(t)}{\widetilde{f_s}^2(t)} \right\vert \vert \widetilde F_s(t)(1- \widetilde F_s(t)) - F_s(t)(1- F_s(t)) \vert
\end{aligned}
\end{equation*}
where the third inequality holds because for all $t$, $\widetilde F_s(t) \in [0,1]$. The result can then be derived thanks to Lemma \ref{lemma:rate_kerneldensity}, Lemma \ref{lemma:rate_kerneldensityderivative} and Lemma \ref{lem:unif_strong_rate_distribution}.
\end{proof}

\begin{lemma} \label{lem:tail_condition_boot}
Let $\mathcal{D}_n$ be such that $\sup_{t\in \mathbb{R}} \vert \widetilde f_s(t) - f_s(t) \vert \leq C_1w_n$, $\sup_{\alpha \in [\varepsilon, 1-\varepsilon]} \vert \widetilde{F_s}^{\dag}(\alpha) - F_s^{\dag}(\alpha) \vert \leq C_2w_n$ and $\sup_{t\in \mathbb{R}} \vert \widetilde{f_s}'(t) - f_s'(t) \vert$ converges to $0$ as $n$ tends to $\infty$. There exists a constant $c^* > 0$ such that at $\mathcal{D}_n$ fixed, for all $z_4 \geq 1$ and for $n$ large enough,
\begin{equation*}
\begin{aligned}
\mathbb{P}^*\left(\sup_{\alpha \in [\varepsilon, 1-\varepsilon]} \left\vert \frac{\widetilde f_s(\widetilde F_s^{\dag}(\alpha))}{\widetilde f_s(\widetilde F_s^{\dag}(\xi))} \right\vert > z_4\right) \leq \frac{2}{n^2} + 2 & (\lfloor c^* \rfloor + 1)\Bigl(\exp\bigl(-n\varepsilon h\bigl(z_4^{1/2(\lfloor c^* \rfloor+1)}\bigr)\bigr) \\
&+ \exp\bigl(-n\varepsilon h\bigl(z_4^{-1/2(\lfloor c^* \rfloor+1)}\bigr)\bigr)\Bigr) \, .
\end{aligned}
\end{equation*}
where $\xi$ is between $\widetilde U^{\dag}(\alpha)$ and $\alpha$ with $\alpha \in [\varepsilon, 1-\varepsilon]$.
\end{lemma}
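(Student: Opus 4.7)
The plan is to mirror the proof of Theorem 1.4.3 in \citep{Csorgo1983}, but applied to the smoothed distribution $\widetilde F_s$ (which plays the role of $F_s$) with a tail constant $c^*$ uniformly valid for $n$ large enough. The conditional probability $\mathbb{P}^*$ treats $\widetilde F_s$ as a fixed distribution function, so the argument of Cs\"org\H{o}--R\'ev\'esz can be transported once one checks that (i) $\widetilde F_s$ satisfies an analogue of assumption~\ref{as:tail} on a compact interval containing all relevant points, and (ii) the empirical quantile $\widetilde U^{\dag}$ stays in such a compact interval with high probability.

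\textbf{Step 1 (localization of $\widetilde U^{\dag}$).} Conditionally on $\mathcal{D}_n$, the bootstrap sample is i.i.d.\ $\widetilde F_s$, so the variables $\widetilde F_s(S_i^*)$ are i.i.d.\ uniform on $(0,1)$ and $\widetilde U(\alpha)=n^{-1}\sum_i \mathbb{I}\{\widetilde F_s(S_i^*)\leq \alpha\}$ has uniform empirical quantile $\widetilde U^{\dag}$. Applying the DKW inequality with $z=\sqrt{\log n}$ and the identity $\sup_{\alpha}|\widetilde U(\alpha)-\alpha|=\sup_{\alpha}|\widetilde U^{\dag}(\alpha)-\alpha|$ yields
\begin{equation*}
\mathbb{P}^*\Bigl(\sup_{\alpha\in[0,1]}|\widetilde U^{\dag}(\alpha)-\alpha| > \sqrt{\log n/n}\Bigr)\leq \frac{2}{n^2}.
\end{equation*}
For $n$ large enough, $\sqrt{\log n/n}<c_3$ for any fixed $c_3\in(0,\varepsilon/2)$, so on an event of conditional probability at least $1-2/n^2$, the point $\xi$ (being between $\widetilde U^{\dag}(\alpha)$ and $\alpha$ for $\alpha\in[\varepsilon,1-\varepsilon]$) lies in the compact interval $[\varepsilon-c_3,\;1-\varepsilon+c_3]\subset (0,1)$.

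\textbf{Step 2 (uniform tail condition for $\widetilde F_s$).} Fix $c_3$ as above. Since $\sup_{[\varepsilon,1-\varepsilon]}|\widetilde F_s^{\dag}-F_s^{\dag}|\to 0$ (Lemma \ref{lem:unif_strong_rate_quantile}) and $F_s^{\dag}$ is continuous on $(0,1)$, for $n$ large enough the set $\widetilde F_s^{\dag}([\varepsilon-c_3,1-\varepsilon+c_3])$ is contained in a fixed compact interval $[a',b']\subset(a,b)$. On $[a',b']$, Lemma \ref{lem:unif_cv_ratio} gives
\begin{equation*}
\sup_{t\in[a',b']}\Bigl|\widetilde F_s(t)(1-\widetilde F_s(t))\frac{|\widetilde{f_s}'(t)|}{\widetilde{f_s}^2(t)} - F_s(t)(1-F_s(t))\frac{|f_s'(t)|}{f_s^2(t)}\Bigr| \to 0,
\end{equation*}
and together with assumption \ref{as:tail} (with constant $c$) this yields, for $n$ large enough, the uniform bound
\begin{equation*}
\sup_{t\in[a',b']}\widetilde F_s(t)(1-\widetilde F_s(t))\frac{|\widetilde{f_s}'(t)|}{\widetilde{f_s}^2(t)}\leq c^*
\end{equation*}
for any fixed $c^*>c$; fix such a $c^*$ once and for all. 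Equivalently, $\sup_{\alpha'\in[\varepsilon-c_3,1-\varepsilon+c_3]}\alpha'(1-\alpha')\bigl|\widetilde{f_s}'(\widetilde F_s^{\dag}(\alpha'))/\widetilde{f_s}^2(\widetilde F_s^{\dag}(\alpha'))\bigr|\leq c^*$.

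\textbf{Step 3 (Cs\"org\H{o}--R\'ev\'esz argument for $\widetilde F_s$).} On the event of Step 1, $\alpha$ and $\widetilde U^{\dag}(\alpha)$ both lie in $[\varepsilon-c_3,1-\varepsilon+c_3]$; hence the uniform tail condition from Step 2 applies at every $\xi$ involved. One can now repeat verbatim the proof of Theorem 1.4.3 in \citep{Csorgo1983}: integrating the differential inequality $|(\log \widetilde f_s\circ\widetilde F_s^{\dag})'(\alpha')|\leq c^*/(\alpha'(1-\alpha'))$ along the segment from $\alpha$ to $\widetilde U^{\dag}(\alpha)$ gives
\begin{equation*}
\Bigl|\frac{\widetilde f_s(\widetilde F_s^{\dag}(\alpha))}{\widetilde f_s(\widetilde F_s^{\dag}(\xi))}\Bigr|
\leq \Bigl(\frac{\widetilde U^{\dag}(\alpha)\,(1-\widetilde U^{\dag}(\alpha))}{\alpha\,(1-\alpha)}\Bigr)^{c^*}\quad\text{or its reciprocal,}
\end{equation*}
so controlling the ratio by $z_4$ amounts to controlling $\widetilde U^{\dag}(\alpha)(1-\widetilde U^{\dag}(\alpha))/\alpha(1-\alpha)$ by $z_4^{\pm 1/(2(\lfloor c^*\rfloor+1))}$. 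Applying the exponential inequality for the uniform empirical process (the same Chernoff bound for Binomial tails used in \citep{Csorgo1983}, controlled by Cram\'er's function $h(z)=z+\log(1/z)-1$) produces the two exponential terms $\exp(-n\varepsilon\,h(z_4^{\pm 1/(2(\lfloor c^*\rfloor+1))}))$ with multiplicative factor $2(\lfloor c^*\rfloor+1)$. Adding the $2/n^2$ bound from Step 1 for the complement event closes the argument.

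\textbf{Main obstacle.} The delicate point is Step 2: one must show that the tail constant $c^*$ can be chosen \emph{uniformly in $n$} (for $n$ large), which requires uniform convergence of the ratio $|\widetilde{f_s}'|/\widetilde{f_s}^2$ to $|f_s'|/f_s^2$ on a compact interval strictly contained in $(a,b)$. This in turn rests on the almost-sure uniform convergence $\widetilde{f_s}\to f_s$ and $\widetilde{f_s}'\to f_s'$ (Lemmas \ref{lemma:rate_kerneldensity} and \ref{lemma:rate_kerneldensityderivative}) and on the strict positivity of $f_s$ on the enlarged interval $[a',b']$, which is why localizing $\widetilde U^{\dag}$ within $[\varepsilon-c_3,1-\varepsilon+c_3]$ with $c_3<\varepsilon/2$ (rather than on all of $[0,1]$) is essential.
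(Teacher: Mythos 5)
Your proof takes essentially the same route as the paper's: DKW to localize $\widetilde U^{\dag}$ in a compact neighborhood of $[\varepsilon,1-\varepsilon]$ (yielding the $2/n^2$ term), then Lemmas \ref{lem:unif_strong_rate_quantile} and \ref{lem:unif_cv_ratio} together with assumption \ref{as:tail} to produce an $n$-uniform tail constant $c^*$ for $\widetilde F_s$, and finally a conditional transport of the Cs\"org\H{o}--R\'ev\'esz exponential bound (Theorem~1.4.3 and Lemma~1.4.1 in \citep{Csorgo1983}). You also correctly identify the real subtlety, namely making $c^*$ uniform in $n$. There is, however, an algebraic slip in Step~3: integrating $\lvert(\log \widetilde f_s \circ \widetilde F_s^{\dag})'(\alpha')\rvert \leq c^*/(\alpha'(1-\alpha'))$ between $\alpha$ and $\widetilde U^{\dag}(\alpha)$ gives
\[
\left(\frac{\max\bigl(\alpha,\widetilde U^{\dag}(\alpha)\bigr)\,\bigl(1-\min(\alpha,\widetilde U^{\dag}(\alpha))\bigr)}{\min\bigl(\alpha,\widetilde U^{\dag}(\alpha)\bigr)\,\bigl(1-\max(\alpha,\widetilde U^{\dag}(\alpha))\bigr)}\right)^{c^*},
\]
since the antiderivative of $1/(y(1-y))$ is $\log\bigl(y/(1-y)\bigr)$. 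The bound you display, $\bigl(\widetilde U^{\dag}(\alpha)(1-\widetilde U^{\dag}(\alpha))/(\alpha(1-\alpha))\bigr)^{c^*}$, is strictly smaller than the correct one whether $\widetilde U^{\dag}(\alpha)>\alpha$ or $\widetilde U^{\dag}(\alpha)<\alpha$, so the displayed inequality does not in fact hold. This is immaterial to the final conclusion---you defer to Cs\"org\H{o}'s proof for the Binomial tail estimates, and the paper works with exactly the corrected ratio---but the intermediate formula should be fixed.
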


\begin{proof}[Proof of Lemma \ref{lem:tail_condition_boot}]
From the DKW inequality (see proof of Lemma \ref{lem:dkw_taylor_remainder}), we have for all $n$,
\begin{equation*}
\mathbb{P}^*\left(\sup_{\alpha \in [0,1]} \vert \widetilde U^{\dag}(\alpha) - \alpha \vert \geq \sqrt{\frac{\log n}{n}}\right) \leq \frac{2}{n^2} \, .
\end{equation*}
Therefore, if $\omega' \in \{\sup_{\alpha \in [0,1]} \vert \widetilde U^{\dag}(\alpha) - \alpha \vert < \sqrt{\log n/n} \}$, we have,
\begin{equation*}
\vert \widetilde U^{\dag}(\varepsilon) - \varepsilon \vert \leq \sup_{\alpha \in [0,1]} \vert \widetilde U^{\dag}(\alpha) - \alpha \vert < \sqrt{\log n/n}
\end{equation*}
and similarly,
\begin{equation*}
\vert \widetilde U^{\dag}(1-\varepsilon) - (1-\varepsilon) \vert \leq \sup_{\alpha \in [0,1]} \vert \widetilde U^{\dag}(\alpha) - \alpha \vert < \sqrt{\log n/n} \, .
\end{equation*}
Thus there exists a constant $c_1$ such that for $n$ large enough (independent of $\omega'$), $\widetilde U^{\dag}(\varepsilon) \geq \varepsilon - c_1 > 0$ and $\widetilde U^{\dag}(1-\varepsilon) \leq 1-\varepsilon + c_1 < 1$. Now as $\alpha \in [\varepsilon,1-\varepsilon]$, this implies that for $n$ large enough, $\xi \in [\varepsilon - c_1, 1-\varepsilon + c_1]$. Eventually, for $n$ large enough, $\omega' \in \{ \xi \in [\varepsilon - c_1, 1-\varepsilon + c_1]\}$. Let $\mathcal{Z}_1 = \{ \xi \notin [\varepsilon - c_1, 1-\varepsilon + c_1]\}$. We thus have for $n$ large enough, $\mathbb{P}^*(\mathcal{Z}_1) \leq \mathbb{P}^*(\sup_{\alpha \in [0,1]} \vert u_n^*(\alpha) \vert \geq \sqrt{\log n}) \leq 2/n^2$.

Now, from Lemma \ref{lem:unif_strong_rate_quantile} and Lemma \ref{lem:unif_cv_ratio} there exist positive constants $c_2$ and $C_3$ such that for $n$ high enough,
\begin{equation*}
\begin{aligned}
\sup_{\alpha \in [\varepsilon - c_1, 1-\varepsilon + c_1]}& \alpha(1-\alpha)\frac{\vert \widetilde{f_s}'(\widetilde F_s^{\dag}(\alpha))}{\widetilde f_s^2(\widetilde F_s^{\dag}(\alpha))} \\
&\leq \sup_{t \in [F_s^{\dag}(\varepsilon - c_1) - c_2, F_s^{\dag}(1-\varepsilon + c_1) + c_2]} F_s(t)(1-F_s(t))\frac{\vert f_s'(t)\vert}{f_s^2(t)} + C_3\\
&\leq c + C_3
\end{aligned}
\end{equation*}
where $c_2$ is such that
\begin{equation*}
[\widetilde F_s^{\dag}(\varepsilon - c_1), \widetilde F_s^{\dag}(1-\varepsilon + c_1)] \subset [F_s^{\dag}(\varepsilon - c_1) - c_2, F_s^{\dag}(1-\varepsilon + c_1) + c_2] \subset (0, \Vert s \Vert_{\infty}) \, .
\end{equation*}

Let $c^* = c + C_3$ and let $\alpha_1, \alpha_2 \in [\varepsilon - c_1, 1-\varepsilon + c_1]$. Following the proof of Lemma 1.4.1 in \citep{Csorgo1983}, we have for $n$ high enough,
\begin{equation*}
\frac{\widetilde f_s(\widetilde F_s^{\dag}(\alpha_1))}{\widetilde f_s(\widetilde F_s^{\dag}(\alpha_2))} \leq \left(\frac{\max(\alpha_1, \alpha_2)}{\min(\alpha_1, \alpha_2)}\frac{1-\min(\alpha_1, \alpha_2)}{1-\max(\alpha_1, \alpha_2)}\right)^{c^*} \, .
\end{equation*}

Let $z_4 \geq 1$ et let $\mathcal{Z}_2$ be the following event
\begin{equation*}
\mathcal{Z}_2 = \left\{ \sup_{\alpha \in [\varepsilon,1-\varepsilon]}\left(\frac{\max(\widetilde U^{\dag}(\alpha), \alpha)}{\min(\widetilde U^{\dag}(\alpha), \alpha)}\frac{1-\min(\widetilde U^{\dag}(\alpha), \alpha)}{1-\max(\widetilde U^{\dag}(\alpha), \alpha)}\right)^{c^*} \geq z_4\right\} \, .
\end{equation*}
Let $\omega' \in \overline{\mathcal{Z}_1} \cap \overline{\mathcal{Z}_2}$. We therefore have
\begin{equation*}
\sup_{\alpha \in [\varepsilon, 1-\varepsilon]} \frac{\widetilde f_s(\widetilde F_s^{\dag}(\alpha))}{\widetilde f_s(\widetilde F_s^{\dag}(\xi))} \leq \sup_{\alpha \in [\varepsilon,1-\varepsilon]}\left(\frac{\max(\widetilde U^{\dag}(\alpha), \alpha)}{\min(\widetilde U^{\dag}(\alpha), \alpha)}\frac{1-\min(\widetilde U^{\dag}(\alpha), \alpha)}{1-\max(\widetilde U^{\dag}(\alpha), \alpha)}\right)^{c^*} \leq z_4 \, .
\end{equation*}
Therefore $\omega' \in \overline{\mathcal{Z}_3}$ defined as the complement of
\begin{equation*}
\mathcal{Z}_3 = \left\{ \sup_{\alpha \in [\varepsilon, 1-\varepsilon]} \frac{\widetilde f_s(\widetilde F_s^{\dag}(\alpha))}{\widetilde f_s(\widetilde F_s^{\dag}(\xi))} \geq z_4 \right\} \, .
\end{equation*}
Thus $\mathbb{P}^*(\overline{\mathcal{Z}_1} \cap \overline{\mathcal{Z}_2}) \leq \mathbb{P}^*(\overline{\mathcal{Z}_3})$ which implies that for $n$ large enough,
\begin{equation*}
\mathbb{P}^*\left(\sup_{\alpha \in [\varepsilon, 1-\varepsilon]} \frac{\widetilde f_s(\widetilde F_s^{\dag}(\alpha))}{\widetilde f_s(\widetilde F_s^{\dag}(\xi))} \geq z_4\right) = \mathbb{P}^*(\mathcal{Z}_3) \leq \mathbb{P}^*(\mathcal{Z}_1) + \mathbb{P}^*(\mathcal{Z}_2) \leq \frac{2}{n^2} + \mathbb{P}^*(\mathcal{Z}_2) \, .
\end{equation*}
Now from the proof of Theorem 1.4.3 in \citep{Csorgo1983} we can bound $\mathbb{P}^*(\mathcal{S}_2)$ and obtain the result of the lemma.
\end{proof}

\bibliographystyle{plainnat}
\bibliography{mvset}

\end{document}